\documentclass[12pt]{article}
\usepackage{flushend}
\usepackage{wrapfig,url}
\usepackage[colorlinks,bookmarksopen,bookmarksnumbered,citecolor=red,urlcolor=red]{hyperref}
\usepackage{wrapfig}
\usepackage[export]{adjustbox}
\usepackage{setspace}
\usepackage{times}
\usepackage{amsmath}
\allowdisplaybreaks[4]

\usepackage[usenames]{color}
\usepackage{enumerate}
\usepackage{url}
\usepackage{subfig}
\usepackage{amsthm}
\usepackage{amsfonts,mathrsfs}
\usepackage{bbm}
\usepackage{amssymb,amsmath}
\usepackage{verbatim}
\usepackage{acronym}
\usepackage{mathtools}
\usepackage{cite}
\usepackage{graphicx}
\usepackage{algorithm}
\usepackage[noend]{algpseudocode}

\newtheorem{theorem}{Theorem}

\newtheorem{assumption}{Assumption}

\newtheorem{corollary}{Corollary}

\newtheorem{definition}{Definition}

\newtheorem{lemma}{Lemma}

\newtheorem{remark}{Remark}

\usepackage[inline,shortlabels]{enumitem} 
\makeatletter
\newcommand{\inlineitem}[1][]{%
\ifnum\enit@type=\tw@
    {\descriptionlabel{#1}}
  \hspace{\labelsep}%
\else
  \ifnum\enit@type=\z@
       \refstepcounter{\@listctr}\fi
    \quad\@itemlabel\hspace{\labelsep}%
\fi}
\makeatother


\usepackage[margin=0.9in]{geometry}
\newlength{\noteWidth}
\long\def\notes#1{\ifinner
           {\footnotesize #1}
           \else
           \marginpar{\parbox[t]{\noteWidth}{\raggedright\footnotesize #1}}
       \fi\typeout{#1}}

\def\argmin{\mathop{\rm argmin}}
 
\newcommand{\E}{\mathbb E}
\newcommand{\one}{\mathbbm 1}
\newcommand{\TV}{\mathrm{TV}}

\title{High-Probability Nash Regret for Decentralized Learning in Markov $\alpha$-Potential Games: Episodic and Fully Online Asynchronous Algorithms with Applications to Markov Congestion Games}

\author{
S. Rasoul Etesami\\
\small Department of Industrial and Systems Engineering\\
\small Coordinated Science Laboratory\\
\small University of Illinois Urbana-Champaign, Urbana, IL, USA\\ 
\small \texttt{etesami1@illinois.edu}
}

\date{}

\begin{document}

\maketitle

\begin{abstract}
We study decentralized learning of Nash equilibria (NE) in infinite-horizon discounted Markov games under bandit feedback, focusing on Markov $\alpha$-potential games. We develop KL-projected natural policy gradient (NPG) algorithms in two settings: an episodic setting with frozen policies during sampling, and a fully online setting in which each player receives a single realized cost sample per time step and policies are updated asynchronously along a continuing trajectory. For both settings, we establish finite-time high-probability bounds on the time-averaged NE gap (NE regret). Crucially, our bounds do not involve a distribution-mismatch coefficient, which can scale prohibitively with the size of the state space. In the episodic setting, we obtain an NE regret bound of order $\widetilde O(T^{-1/4})$, plus separate terms accounting for the potential approximation error $\alpha$, fixed estimation-oracle bias $L_{\widehat{A}}$, and transition-kernel sensitivity to a unilateral action change $\delta_P$. Thus, our framework accommodates multiple sources of approximation error within a unified finite-time analysis. The fully online setting introduces additional challenges from asynchronous state visitation, drifting state occupancies, simultaneous policy adaptation, and one-sample importance-weighted estimation. We address these through stopping-time, coupling, charging, and dynamic-tracking arguments, obtaining an NE regret bound of order $\widetilde O(T^{-2/15})$ plus analogous fixed approximation terms. We further identify a state-wise potential structure yielding sharper guarantees in which the potential approximation error $\alpha$ enters additively, avoiding amplification through online tracking. We then specialize our framework to independent-resource Markov congestion games (IMCGs), a class of dynamic congestion games with independently evolving resource states. We establish their approximate-potential and transition-sensitivity properties, construct decentralized estimation oracles from realized costs, and derive episodic and fully online guarantees. Finally, as an application, we introduce a novel strategic online job-scheduling problem for stochastic machines and show that our framework yields a scalable decentralized algorithm for learning stable dispatching policies. Overall, our results address several gaps in the literature by providing the first finite-time high-probability NE regret guarantees for fully online asynchronous decentralized learning in Markov $\alpha$-potential games, eliminating distribution-mismatch coefficients from the regret bounds, accommodating fixed estimation-oracle bias, and developing scalable decentralized learning algorithms with finite-time NE regret guarantees for IMCGs.

\end{abstract}

\section{Introduction}
\label{sec:introduction}

Learning equilibrium behavior in strategic multi-agent systems is a fundamental problem at the intersection of game theory and machine learning. In a noncooperative system, each player seeks to optimize its own objective while facing an environment that evolves as the other players learn simultaneously. In this regard, Nash equilibrium (NE) provides a natural stability benchmark~\cite{introNash1951}: at an NE, no player can improve its objective through a unilateral deviation. In large-scale systems, practical equilibrium learning requires three properties that are often in tension: \emph{convergence}, \emph{scalability}, and \emph{independence}. Ideally, a learning rule should converge within a controlled time, avoid exponential dependence on the number of players when the game structure permits, and allow each player to learn from local observations without exchanging policy or payoff information with others. These requirements become particularly challenging under bandit feedback, where each player observes only its own realized cost rather than the full cost table.

Markov (or stochastic) games, introduced by Shapley~\cite{introShapley1953}, extend normal-form games to sequential environments in which the state evolves stochastically as a function of the players' joint actions. They model a wide range of dynamic strategic interactions, including wireless communication~\cite{altman2008constrained,qin2023scalable}, transportation and routing systems~\cite{fox2022independent}, energy markets~\cite{etesami2018stochastic,etesami2024learning}, and cyber-physical security systems~\cite{etesami2019dynamic}. Relative to repeated static games, a central difficulty is that current actions affect not only instantaneous costs but also the distribution of future states, coupling strategic interaction with long-run state occupancies. This dynamic coupling, together with the general computational intractability of equilibrium computation~\cite{introDaskalakis2009}, motivates structured game classes in which learning can exploit additional geometry. Potential games~\cite{introMondererShapley1996} provide a canonical example in the static setting, where unilateral changes in individual players' objectives are aligned with changes in a common potential function. Their dynamic counterparts, Markov potential games (MPGs), were introduced in~\cite{leonardos2022global}, where global convergence of independent policy-gradient methods to NE policies was established. Since then, MPGs have become an important model class for provably convergent multi-agent reinforcement learning.

While potential structure facilitates convergence, establishing finite-time performance guarantees under genuinely \emph{independent} learning remains considerably more challenging. In our setting, players exchange neither policies, gradients, nor cost functions, and need not observe the actions of others. Instead, they independently randomize their actions, observe only their own realized costs, and learn from these local samples while interacting within the same evolving Markov environment. From each player's perspective, the resulting environment is inherently nonstationary because all other players adapt simultaneously. This difficulty is amplified in the fully online setting, where the system is not reset between updates, states are visited asynchronously, and policies evolve along a single continuing trajectory. Our goal is therefore to develop finite-time, high-probability guarantees on the \emph{time-averaged NE gap}, which we refer to interchangeably as \emph{NE regret}, under precisely this decentralized, fully online, bandit-feedback regime.

\subsection{Related Literature}
\label{subsec:intro-related-work}
Here, we review the literature most closely related to our work and discuss how our contributions address several gaps in the existing literature.

\smallskip
{\bf Linear-quadratic games.} Linear-quadratic (LQ) control provides an important benchmark for understanding the guarantees and limitations of policy-gradient methods in dynamic games. In the single-agent setting, Fazel et al.~\cite{introFazelLQR2018} established global convergence and polynomial sample and computational complexity for policy-gradient methods in the LQ setting, demonstrating that a nonconvex policy-optimization landscape can nevertheless possess favorable global geometry. The multi-agent setting is substantially more delicate: Mazumdar et al.~\cite{introMazumdar2020} constructed general-sum LQ games in which policy-gradient dynamics need not even converge locally to an NE. Under additional stochastic excitation, Hambly et al.~\cite{introHambly2023} proved global convergence of natural policy gradient to an NE in finite-horizon $N$-player general-sum LQ games. More recently, Hosseinirad et al.~\cite{introHosseinirad2026} characterized when LQ games admit an exact potential structure, showing that exact potentiality can be restrictive under full-state feedback but becomes richer under decoupled dynamics and information structures. Plank and Zhang~\cite{introPlankZhang2026} study distributed LQ stochastic differential games through an $\alpha$-potential framework and establish linear convergence to exact or approximate equilibria depending on the symmetry of interactions. These results demonstrate both the power and limitations of structural assumptions in continuous-state LQ models. Our focus is complementary: we study finite-state, finite-action Markov games under bandit feedback, allow a general approximate Markov-potential structure, and establish high-probability finite-time guarantees for decentralized episodic and fully online learning.

\smallskip
{\bf Markov potential and Markov $\alpha$-potential games.} A substantial recent literature has established increasingly sharp convergence guarantees for policy-gradient-type methods in exact MPGs. Fox et al.~\cite{introFox2022} proved last-iterate asymptotic convergence of independent natural policy gradient (NPG) to an NE policy in MPGs with constant learning rates. Ding et al.~\cite{introDing2022} obtained global nonasymptotic rates for independent policy gradient, including an $O(1/\epsilon^2)$ exact-gradient iteration bound and sample-based guarantees with function approximation. Zhang et al.~\cite{introZhangNeurIPS2022} analyzed decentralized softmax gradient and natural-gradient play, with and without log-barrier regularization, and quantified the role of trajectory-dependent constants. Zhang et al.~\cite{introZhangRenLi2024} further studied stationary points, local stability, and sample complexity for gradient play in stochastic games, obtaining global guarantees for MPGs. Sun et al.~\cite{introSun2023} sharpened the exact-policy-evaluation complexity of independent NPG to $O(1/\epsilon)$ under a suboptimality-gap condition. Alatur et al.~\cite{introAlatur2024} showed that Kullback–Leibler (KL) regularized policy mirror descent can improve the dependence on the number of players. Zhang et al.~\cite{introZhangDecoupled2023} considered finite-horizon Markov games with decoupled agent dynamics, deriving price-of-anarchy results and a distributed soft policy-iteration method with sample-complexity guarantees for the MPG subclass. Dong et al.~\cite{introDong2024} treated stochastic-cost and bandit-feedback potential games and MPGs using a Frank--Wolfe method with exploration and episodic gradient estimation, obtaining $O(T^{4/5})$ Nash regret for MPGs.

Several complementary extensions further illustrate the breadth of independent policy-gradient methods for structured Markov games. Cheng et al.~\cite{introCheng2024} move beyond the discounted setting and establish global convergence of independent policy gradient and NPG for infinite-horizon average-reward MPGs. Given access to a gradient or differential-$Q$ oracle, their policy-gradient, proximal-$Q$, and NPG methods reach an $\epsilon$-NE in $O(1/\epsilon^2)$ iterations; they also provide a sample-based policy-gradient guarantee with $\widetilde O(1/\epsilon^5)$ sample complexity. Jordan et al.~\cite{introJordan2024} study constrained MPGs and develop an independent policy-gradient method for computing approximate constrained NEs based on proximal-point-type updates. Thus, although these works substantially broaden the scope of independent learning in MPGs, their algorithmic and sampling models remain different from the fully online regime considered here, in which each player receives a single realized cost sample per time step and updates \emph{asynchronously}, i.e., updates its policy only at the visited state, along a single evolving trajectory.

The works closest to our fully online information structure are nevertheless materially different. Maheshwari et al.~\cite{introMaheshwari2025} study independent and decentralized learning in infinite-horizon discounted MPGs, where players observe the realized states and their own realized payoffs without knowing the underlying model. Their actor--critic dynamics operate on two timescales, with the $Q$-estimates evolving faster than the policies, and their main guarantee is asymptotic convergence based on asynchronous stochastic approximation. In contrast, our fully online algorithm applies to the more general class of \emph{Markov $\alpha$-potential games}, introduced in~\cite{introGuo2026}, where the change in a player's objective under a unilateral deviation is approximated, up to an error $\alpha$, by the change in a common potential function. Moreover, our algorithm accommodates additional sources of error in the one-sample estimation oracle and provides an explicit high-probability finite-time bound on the NE regret.

Guo et al.~\cite{introGuo2026} showed that Markov $\alpha$-potential games encompass important structured models, including Markov congestion games and perturbed team games, and derived Nash-regret guarantees for projected gradient ascent and sequential maximum-improvement schemes under exact $Q$-function oracle access. More broadly, Guo et al.~\cite{introGuoLiZhang2025} develop an $\alpha$-potential framework for general $N$-player dynamic games, characterize approximate potentiality through asymmetries of second-order derivatives, and specialize it to stochastic differential games with distributed and mean-field interactions. While these works provide complementary perspectives on approximate potentiality, our focus is on finite-state, finite-action Markov games under substantially weaker feedback: players learn independently from bandit samples, including in a fully online asynchronous regime. Moreover, under the \emph{state-wise potential} structure of Section~\ref{sec:statewise-potential-sharp}, i.e., when the potential function itself can be expressed as the value function of a base function, our guarantees isolate the approximation error as a single additive $\alpha$ term, avoiding its amplification by extra factors and thereby remaining informative over a broader approximate-potential regime.

A closely related structural direction considers stochastic games with independent or decoupled dynamics. Etesami~\cite{etesami2024learning} studies $n$-player stochastic games in which players have independently evolving state-action processes but remain coupled through their payoffs, importantly without imposing an MPG assumption. Using only their own realized payoffs, players employ online mirror descent in the dual space of occupancy measures to update their policies. For general reward functions, their episodic algorithm approaches the set of $\epsilon$-NE policies with high probability and polynomial time and sample complexity, under the weaker notion of \emph{averaged} NE regret. Jordan et al.~\cite{introJordanKamgarpour2026} consider partially observable Markov potential games (POMGs) with decoupled transition and observation dynamics. Under filter stability, they approximate the partially observable game by a finite-history superstate Markov game, show that this surrogate inherits a near-potential structure, and obtain communication-free approximate-NE learning with quasi-polynomial sample and computational complexity. Their work addresses the additional difficulty of partial observability through finite-window model estimation, whereas our setting assumes full state observability but targets a more primitive online interaction model without requiring decoupled transition dynamics.

\smallskip
{\bf Congestion games and Markov congestion games.} As an important special case of our general results for Markov $\alpha$-potential games, in the last section we consider \emph{independent-resource Markov congestion games} (IMCGs)~\cite{introCui2022}, which generalize static congestion games to dynamic stochastic environments with independent evolving resource states. In the static setting, congestion games are classical examples of exact potential games~\cite{rosenthal1973class,introMondererShapley1996}, and their structure has long made them attractive models for routing, scheduling, and resource sharing. For instance, Altman et al.~\cite{introAltman2011} study decentralized load balancing in processor-sharing systems and exploit a potential formulation to characterize equilibria and compare decentralized and centralized performance. Li et al.~\cite{introLiLiuLi2014} formulate selfish scheduling of deteriorating jobs on parallel machines as a game and develop a game-theoretic approximation algorithm that converges to a pure NE. More recently, Fardno and Etesami~\cite{introFardnoEtesami2026} develop static and dynamic game models for distributed load balancing: the static scheduling game admits a potential function with a low price of anarchy, while dynamic best-response updates yield polynomial-time convergence to balanced behavior under their queueing dynamics. Yao and Ding~\cite{introYaoDing2022} formulate data-center load balancing as an MPG and propose a fully distributed MARL method, supported primarily by simulations and real-system experiments.

Finite-sample learning in congestion games has also received growing attention. Cui et al.~\cite{introCui2022} establish polynomial sample complexity for static congestion games under bandit and semi-bandit feedback and introduce IMCGs to capture stochastic resource dynamics. Their learning algorithm for the IMCG setting achieves sublinear NE regret; however, it is episodic and reset-based, as well as centralized and computationally demanding, requiring the computation of an $\epsilon$-NE of a large matrix game to update the policies at each iteration. In contrast, by exploiting the resource-local stochastic structure of IMCGs, we show that they fall within the scope of our more general Markov $\alpha$-potential framework. This allows us to develop a fully decentralized one-sample online algorithm in which players update simultaneously using only their own realized costs, together with an explicit high-probability sublinear NE regret. Thus, our results provide the first decentralized finite-time learning theory for IMCGs without episodic resets or costly centralized equilibrium computation.

Table~\ref{tab:intro-comparison} summarizes the distinctions most relevant to our work. It compares information structures and guarantees rather than ranking the results: several existing methods achieve faster rates under stronger oracle access, for more restrictive game classes, or in expectation rather than with high probability, whereas our emphasis is on combining approximate-potential structure, bandit feedback, decentralization, and fully online asynchronous learning. As we will see later, another notable distinction between our bounds and existing NE-regret bounds is that our guarantees completely eliminate the mismatch coefficient, which can be prohibitively large and may scale with the size of the state space.

\begin{table}[t]
\centering
\caption{Representative comparison with closely related equilibrium-learning results. ``Finite-time'' refers to an explicit nonasymptotic equilibrium/NE-gap or Nash-regret guarantee.}
\label{tab:intro-comparison}
\renewcommand{\arraystretch}{1.35}
\setlength{\tabcolsep}{4pt}
\resizebox{\textwidth}{!}{%
\begin{tabular}{lcccccc}
\hline
Work & Game class & Feedback/oracle & Independent & Fully online & Finite-time & Main distinction \\
\hline
Fox et al.~\cite{introFox2022}
& MPG
& \shortstack{exact policy\\quantities}
& Yes & No & No
& \shortstack{last-iterate NPG\\convergence} \\
\hline

Ding et al.~\cite{introDing2022}
& MPG
& \shortstack{exact / sampled\\gradients}
& Yes & No & Yes
& \shortstack{global PG rates;\\function approximation} \\
\hline

Cheng et al.~\cite{introCheng2024}
& \shortstack{average-reward\\MPG}
& \shortstack{oracle / sampled\\gradients}
& Yes & No & Yes
& \shortstack{independent PG/NPG\\for average reward} \\
\hline

Jordan et al.~\cite{introJordan2024}
& \shortstack{constrained\\MPG}
& \shortstack{stochastic\\gradients}
& Yes & No & Yes
& \shortstack{independent learning\\with constraints} \\
\hline

Zhang et al.~\cite{introZhangNeurIPS2022}
& MPG
& gradient access
& Yes & No & Yes
& \shortstack{softmax PG/NPG\\rates} \\
\hline

Sun et al.~\cite{introSun2023}
& MPG
& \shortstack{exact policy\\evaluation}
& Yes & No & Yes
& \shortstack{$O(1/\epsilon)$ under\\suboptimality gap} \\
\hline

Dong et al.~\cite{introDong2024}
& MPG
& stochastic bandit
& Yes & No & Yes
& \shortstack{episodic sampling;\\$O(T^{4/5})$ Nash regret} \\
\hline

Etesami~\cite{etesami2024learning}
& \shortstack{independent-chain\\stochastic game}
& \shortstack{realized own\\payoffs}
& Yes & No & Yes$^\dagger$
& \shortstack{dual averaging/mirror descent;\\decentralized observations} \\
\hline

Jordan--Kamgarpour~\cite{introJordanKamgarpour2026}
& \shortstack{decoupled POMG /\\near-potential}
& \shortstack{local trajectories;\\model estimation}
& Yes & No & Yes
& \shortstack{partial observability;\\finite-history approximation} \\
\hline

Maheshwari et al.~\cite{introMaheshwari2025}
& MPG
& \shortstack{one-stage payoff\\samples}
& Yes & Yes & No
& \shortstack{two-timescale\\asynchronous convergence} \\
\hline

Guo et al.~\cite{introGuo2026}
& \shortstack{Markov\\$\alpha$-potential}
& \shortstack{policy-improvement\\oracle}
& Partly & No & Yes
& \shortstack{approximate-potential\\regret analysis} \\
\hline

Cui et al.~\cite{introCui2022}
& \shortstack{congestion /\\Markov congestion}
& \shortstack{bandit /\\semi-bandit}
& \shortstack{static: Yes;\\Markov: No} & No & Yes
& \shortstack{centralized Markov-\\congestion learning} \\
\hline

\textbf{This work}
& \shortstack{Markov $\alpha$-potential /\\state-wise / IMCG}
& \shortstack{bandit; one\\sample online}
& \textbf{Yes} & \textbf{Yes} & \textbf{Yes}
& \shortstack{high-probability average NE-gap;\\additive $\alpha$ in state-wise case} \\
\hline
\end{tabular}%
}

\vspace{1mm}
{\footnotesize
$^\dagger$For general reward functions, polynomial-time, high-probability sublinear regret is established in terms of the weaker \emph{weighted} NE regret criterion, which becomes a sublinear NE regret guarantee under additional reward structure.
}
\end{table}

\subsection{Contributions}
\label{subsec:intro-contributions}

The main objective of this paper is to develop a finite-time theory for independent learning of stationary Nash policies in structured Markov $\alpha$-potential games under information constraints that are substantially weaker than exact policy-evaluation or full-gradient access. The analysis is organized around a progression from a general approximate-potential model to sharper state-wise structure and, finally, to a concrete dynamic Markov congestion class. The main contributions are as follows.

\begin{enumerate}

\vspace{-0.1cm}
\item \textbf{A general decentralized KL-projected NPG framework for Markov $\alpha$-potential games.} We consider infinite-horizon discounted $n$-player Markov $\alpha$-potential games under bandit feedback and introduce a KL-projected NPG update that can be executed independently by each player. To this end, we develop an extensive analysis that explicitly tracks the sensitivity of discounted occupancies, values, and marginalized advantages to unilateral policy changes, enabling the potential-based analysis to accommodate both imperfect potential alignment ($\alpha>0$) and imperfect advantage estimation $(L_{\widehat{A}}>0)$.

\vspace{-0.3cm}
\item \textbf{High-probability NE regret under episodic bandit estimation.} Section~\ref{sec:Episodic} studies an episodic, reset-free implementation in which the policy is held fixed while samples are collected. We derive a high-probability bound on the NE regret for general Markov $\alpha$-potential games, explicitly separating the statistical term, the estimation-oracle bias term $L_{\widehat A}$, the potential approximation error $\alpha$, and the transition-sensitivity term $\delta_P$. The proof combines KL geometry, potential improvement, concentration bounds for importance-weighted estimators, and a conversion from local policy certificates to the global NE gap.

\vspace{-0.3cm}
\item \textbf{High-probability NE regret under fully online asynchronous bandit estimation.} Section~\ref{sec:fully-online-general-occupancy} removes the episodic frozen-policy structure. At every primitive time step, each player receives only a single realized cost sample and updates its policy only at the currently visited state. Because different states are updated at random, asynchronous times, and because the state distribution itself drifts as the policies evolve, standard episodic arguments no longer apply. We introduce stopping-time local indices, occupancy tracking, coupling and delayed-window charging arguments, and martingale concentration to control these interactions. The resulting theorem provides an explicit high-probability bound on the NE regret. This fills the gap between earlier fully decentralized asymptotic actor--critic results and finite-time oracle-based or episodic analyses.

\vspace{-0.3cm}
\item \textbf{Sharper NE regret bounds under state-wise potential structure.} Section~\ref{sec:statewise-potential-sharp} identifies a stronger state-wise potential structure in which the underlying Markov $\alpha$-potential function can be expressed as the discounted value function of a base potential function and develops matching episodic and fully online potential-advantage oracles. The key benefit is both conceptual and quantitative: the potential approximation parameter $\alpha$ enters the final guarantees as a \emph{single additive $\alpha$ term}. Unlike in the general online $\alpha$-potential analysis, $\alpha$ is not amplified by a tracking factor. Consequently, the resulting bounds remain informative over a substantially broader approximate-potential regime.

\vspace{-0.3cm}
\item \textbf{Decentralized learning for independent-resource Markov congestion games (IMCGs).} Section~\ref{sec:IMCG-model} specializes the theory to IMCGs, where resources have local stochastic states and their transition kernels factor across resources conditional on congestion. We establish the required transition-sensitivity and state-wise approximate-potential properties, construct episodic and one-sample estimation oracles from realized player costs, and obtain an explicit fully decentralized online NE regret guarantee. Theorem~\ref{thm:imcg-online-ne-regret} shows, in particular, that the NE regret decays at rate $\widetilde O(T^{-2/15})$, up to terms controlled by the resource-coupling parameter $\delta\ll 1$; as $\delta\to0$, the approximation floor vanishes. This provides a concrete interpretation of weak dynamic coupling as a quantitative source of approximate potentiality. As a concrete application, Section~\ref{subsec:imcg-job-scheduling} maps IMCGs to decentralized strategic scheduling on stochastic machines, where players are strategic job owners, machines are congestible resources, and machine states represent stochastic processing conditions or queue/load states.\footnote{To the best of our knowledge, dynamic job scheduling in queueing models has not been studied under this strategic Markov-game formulation, offering a fresh perspective on strategic queueing systems that is of independent interest.} The resulting guarantee shows that strategic users can learn stable approximate dispatching policies from their own realized delays through simultaneous one-sample updates, without requiring a centralized scheduler or best-response computation. This connects the abstract Markov-game theory to a real-world application.
\end{enumerate}


\subsection{Organization}

The remainder of the paper is organized as follows. Section~\ref{sec:model} introduces the general discounted Markov-game model, stationary policies, the NE-gap criterion, and the approximate-potential assumptions used throughout. Section~\ref{sec:preliminaries} develops the value, $Q$-function, marginalized-advantage, KL-projected NPG, and sensitivity tools needed for the analysis. Section~\ref{sec:Episodic} presents the episodic online bandit oracle, the decentralized episodic online algorithm, and its high-probability NE-regret analysis. Section~\ref{sec:fully-online-general-occupancy} develops the global/local stopping-time notation, the one-sample online bandit oracle, the asynchronous online algorithm, and the coverage and tracking analyses leading to the fully online NE regret guarantee. Section~\ref{sec:statewise-potential-sharp} exploits the state-wise potential structure to obtain sharper episodic and online bounds with the additive $\alpha$ dependence described above. Section~\ref{sec:IMCG-model} then verifies these assumptions for independent-resource congestion dynamics, derives the corresponding decentralized online guarantee, and concludes with the strategic job-scheduling application. Conclusions are provided in Section~\ref{sec:conclusions}, while omitted technical proofs are collected in the appendices.

\subsection{Notation}
Throughout the paper, we adopt the following notation and conventions. For any positive
integer $n$, we write $[n]:=\{1,\ldots,n\}$. For a finite set $\mathcal A$,
we denote its cardinality by $|\mathcal A|$ and the probability simplex over
$\mathcal A$ by $\Delta(\mathcal A)$. Random variables are denoted by capital
letters and their realizations by the corresponding lowercase letters. For
instance, we denote the random state at time $t$ by $S^t$ and its realization
by $s^t$. For a vector $x=(x_1,\ldots,x_n)$, we write $x_{-i}$ for the vector
obtained by removing its $i$th component; the same convention is used for
action and policy profiles. For a collection of scalars $\{x(s,a)\}_{s,a}$,
we use $x(s,\cdot)$, or simply $x(s)$ when unambiguous, for the vector indexed
by $a$ at state $s$, and $x$ for the full collection across states and actions.
We write $\langle x,y\rangle$ for the Euclidean inner product, $\|x\|_p$ for
the standard $\ell_p$ norm, and $\|\cdot\|_{\mathrm{TV}}$ for total variation
distance. For a state-indexed collection $x=\{x(s)\}_{s\in\mathcal S}$, we use
the mixed-norm notation
$\|x\|_{p,\infty}:=\max_{s\in\mathcal S}\|x(s)\|_p$ and
$\|x\|_{\mathrm{TV},\infty}:=
\max_{s\in\mathcal S}\|x(s)\|_{\mathrm{TV}}$.
For multi-player policy profiles
$\pi=\{\pi_i(\cdot\mid s):i\in[n],\,s\in\mathcal S\}$ and
$\pi'=\{\pi'_i(\cdot\mid s):i\in[n],\,s\in\mathcal S\}$, the corresponding
mixed norms sum over players; in particular,
$\|\pi-\pi'\|_{\mathrm{TV},\infty}:=
\max_{s\in\mathcal S}\sum_{i=1}^n
\|\pi_i(\cdot\mid s)-\pi_i'(\cdot\mid s)\|_{\mathrm{TV}}$,
with the sum restricted to $j\ne i$ for profiles indexed by $-i$.
The indicator of an event $E$ is denoted by $\mathbf{1}\{E\}$. For $B>0$,
we denote the projection onto the interval $[-B/2,B/2]$ by
$\operatorname{clip}_B(\cdot)$.
Unless otherwise specified, $c>0$ denotes a universal numerical constant
whose value may change from line to line, and $\widetilde O(\cdot)$ suppresses
logarithmic factors.

\section{General Model and Problem Formulation}
\label{sec:model}

We consider an infinite horizon discrete-time finite-state discounted stochastic game (a.k.a. Markov game) with $[n]=\{1,\ldots, n\}$ players, where there is a common finite state space and each player $i\in[n]$ has its own finite set of actions $\mathcal{A}_i$. We use $\mathcal{S}$ and $\mathcal{A}=\mathcal{A}_1\times \cdots \times \mathcal{A}_n$ to denote the state space and the joint action set of players, respectively. At any discrete time $t=0,1,2,\ldots$, we use $s^t\in \mathcal{S}$ and $a_i^t\in \mathcal{A}_i$, respectively, to denote the global state and the action of player $i$ at time $t$. We also use $a^t=(a_1^t,\ldots,a_n^t)\in \mathcal{A}$ to denote the joint action profile of all players at time $t$. Moreover, we let $c_i(s^t,a^t)$ be the cost received by player $i$ at time $t$, where, without loss of generality, we assume that the costs are normalized such that $c_i:\mathcal{S}\times \mathcal{A}\to  [0, 1], \forall i\in [n]$. 

At any time $t$, the information available to player $i$ is given by the history of \emph{realized} states, its actions, and its realized costs, i.e., $\mathcal{H}^t_i=\{s^{\ell},a_i^{\ell},c_i(s^{\ell},a^{\ell}): \ell=0,1,\ldots,{t-1}\}\cup\{s^{t}\}$. In particular, we note that player $i$ cannot observe other players' actions, nor can it access the structure of its own cost function $c_i(\cdot)$.  At any time $t$, player $i$ takes an action $a_i^t$ based on its information set $\mathcal{H}^t_i$ and receives an instantaneous cost $c_i(s^t,a^t)$, which depends on the state and all other players' actions. After that, the state of the system changes from $s^t$ to a new state $s^{t+1}$ with probability $P(s^{t+1}\mid s^{t},a^t)$. We refer to $P$ as the (primitive) transition probability kernel of the game.  

A general policy for player $i$ is a sequence of probability measures $\pi_i=\{\pi^t_i, t=0,1,2,\ldots\}$ over $\mathcal{A}_i$ that at each time $t$ selects an action $a_i\in \mathcal{A}_i$ based on past observations $\mathcal{H}^t_i$ with probability $\pi^t_i(a_i\mid \mathcal{H}_i^t)$. Use of general policies is often computationally expensive, and in practical applications, players are interested in easily implementable policies. In that regard, the class of \emph{stationary} policies constitutes the most well-known class of simple policies, as defined next.

\begin{definition}
A policy $\pi_i:\mathcal S\rightarrow \Delta(\mathcal A_i)$ for player $i$ is called \emph{stationary} if the probability of choosing action $a_i$ at time $t$, denoted by $\pi_i^t(a_i\mid \mathcal{H}^t)$, depends only on the current state $s^t=s$ and is independent of $t$. For a stationary policy, we use $\pi_i(a_i\mid s)$ to denote this time-independent probability. 
\end{definition}

For a stationary policy profile $\pi=(\pi_i,\pi_{-i})$ and initial state distribution $\mu\in \Delta(\mathcal{S})$, the infinite-horizon discounted value function for player $i$ is given by
\begin{align}\label{eq:value-function}
V_i^\pi(\mu)
:=\mathbb E_\pi\left[
\sum_{t=0}^{\infty}\gamma^t c_i(S^t,A^t)
\,\middle|\,
S^0\sim \mu
\right],
\end{align}
where $\gamma\in (0, 1)$ is the discount factor, and the expectation is with respect to the randomness of the state transitions and the policy profile $\pi$. Finally, a discounted Markov game is defined by the tuple $\mathcal G
=
\big([n], \mathcal S, P, \{\mathcal A_i\}_{i=1}^n, \{c_i\}_{i=1}^{m}, \gamma, \mu \big).$

\begin{definition}
Given an initial state distribution $\mu$, a stationary policy profile
$\pi^*=(\pi_i^*,\pi_{-i}^*)$ is called a stationary Nash equilibrium, or simply
a Nash equilibrium (NE), if
$V_i^{(\pi_i^*,\pi_{-i}^*)}(\mu)
\leq V_i^{(\pi_i,\pi_{-i}^*)}(\mu)$
for every player $i$ and every stationary policy $\pi_i$. Moreover, for
$\epsilon>0$, $\pi^*$ is called an $\epsilon$-Nash equilibrium
($\epsilon$-NE) if
$V_i^{(\pi_i^*,\pi_{-i}^*)}(\mu)
\leq V_i^{(\pi_i,\pi_{-i}^*)}(\mu)+\epsilon$
for every player $i$ and every stationary policy $\pi_i$. 
\end{definition}

It is known that Markov games always admit a NE within the class of stationary policies \cite{shapley1953stochastic}, and our main objective in this work is to develop online learning algorithms that can compute an $\epsilon$-NE with provable convergence-rate guarantees. However, computing or approximating a stationary NE in general Markov games is computationally intractable \cite{daskalakis2009complexity}, even under more relaxed equilibrium notions such as stationary coarse correlated equilibria \cite{daskalakis2023complexity}. Thus, to devise a scalable independent learning algorithm for computing a stationary NE, one must impose additional assumptions. To describe the assumptions adopted in this work, we first introduce the following definitions.

\begin{definition}\label{def:transition-sensitivity}
The (unilateral) transition sensitivity of the Markov game is defined as
\[
\delta_P
:=
\max_{i\in[n]}
\max_{\substack{s\in\mathcal S,\;a_i,a_i'\in\mathcal A_i\\
a_{-i}\in\mathcal A_{-i}}}
\left\|
P(\cdot\mid s,a_i,a_{-i})
-
P(\cdot\mid s,a_i',a_{-i})
\right\|_{\mathrm{TV}}.
\]
\end{definition}

Since the total variation distance between
any two probability distributions is at most one, we always have
$\delta_P\in[0,1]$. The parameter $\delta_P$ quantifies the maximum influence that a unilateral
change in one player's action can have on the distribution of the next
state, while holding the state and the other players' actions fixed. Thus,
smaller values of $\delta_P$ correspond to games in which the state dynamics
are less sensitive to individual players' actions, while $\delta_P=0$ means that no unilateral action change affects the transition distribution
(e.g., in a mean-field regime).

\begin{definition}\label{def:discounted-state-occupancy}
For any stationary policy profile $\pi$ and initial distribution
$\mu\in\Delta(\mathcal S)$, let $\mathbb P_\pi$ denote the probability law
induced by $\pi$ and the transition kernel $P$. The discounted state occupancy measure induced by $\pi$ from $\mu$ is
defined as
\[
d_\mu^\pi(s)
:=
(1-\gamma)\sum_{t=0}^\infty
\gamma^t\mathbb P_\pi\!\left(S^t=s\mid S^0\sim\mu\right),
\qquad s\in\mathcal S.
\]
Thus, $d_\mu^\pi\in\Delta(\mathcal S)$, with $d_\mu^\pi(s)$ representing
the normalized discounted frequency of visits to state $s$.
\end{definition}

Next, we consider the following definition of Markov $\alpha$-potential game from \cite{guo2025markov}. 

\begin{definition}
\label{def:alpha-markov-potential-game-uniform}
Given $\alpha\ge 0$, a Markov game is called an $\alpha$-potential game if there exists a
potential function $\Psi$ on the space of stationary policy profiles such that, for
every player $i\in[n]$, every pair of stationary policy profiles
$\pi=(\pi_i,\pi_{-i})$ and $\pi'=(\pi_i',\pi_{-i})$ that differ only in
player $i$'s policy, and every initial distribution
$\mu\in\Delta(\mathcal S)$,\footnote{Whenever $\mu$ is fixed and
understood from the context, we suppress the dependence of $\Psi$ on $\mu$ and simply write
$\Psi(\pi)$.}
\begin{align}\nonumber
\left|
\bigl(V_i^{\pi'}(\mu)-V_i^\pi(\mu)\bigr)
-
\bigl(\Psi^{\pi'}(\mu)-\Psi^\pi(\mu)\bigr)
\right|
\le
\alpha.
\end{align}
\end{definition}

\begin{remark}
Without loss of generality, we restrict $\alpha\le 1/(1-\gamma)$.
Indeed, since the stage costs are bounded in $[0,1]$, any unilateral
value difference is at most $1/(1-\gamma)$ in absolute value.
Thus, the constant function $\Psi(\pi)\equiv 1/(1-\gamma)$
trivially satisfies the $\alpha$-potential property with
$\alpha=1/(1-\gamma)$.
\end{remark}

For a policy profile $\pi=(\pi_1,\ldots,\pi_n)$, we define the state transition kernel induced by $\pi$ as
\[
\bar{P}^\pi(s' \mid s)
:=\mathbb E_{A\sim\pi(\cdot\mid s)}\left[
P(s'\mid s,A)
\right]=\sum_{a\in \mathcal{A}}P(s'\mid s,a)\prod_{j=1}^n \pi_j(a_j\mid s),
\]
where by abuse of notation, $\pi(a\mid s)=\prod_{j=1}^n \pi_j(a_j\mid s)$. Similarly, for every player $i\in[n]$, state $s\in\mathcal S$, and stationary
opponents' policy $\pi_{-i}$, we define the marginalized transition kernel as
\[
\bar P_i^{\pi_{-i}}(s'\mid s,a_i)
:=\mathbb E_{A_{-i}\sim\pi_{-i}(\cdot\mid s)}
\bigl[P(s'\mid s,a_i,A_{-i})\bigr]=\sum_{a_{-i}\in \mathcal{A}_{-i}}P(s'\mid s,a)\prod_{j\neq i} \pi_j(a_j\mid s).
\]

\begin{remark}\label{rem:marginalized-transition-sensitivity}
Definition~\ref{def:transition-sensitivity} immediately implies $\left\|
\bar P_i^{\pi_{-i}}(\cdot\mid s,a_i)
-
\bar P_i^{\pi_{-i}}(\cdot\mid s,a_i')
\right\|_{\mathrm{TV}}
\le \delta_P$ for every $i\in[n]$, $s\in\mathcal S$, stationary $\pi_{-i}$, and
$a_i,a_i'\in\mathcal A_i$. Indeed, this follows by averaging over
$A_{-i}\sim\pi_{-i}(\cdot\mid s)$ and using the convexity of total
variation distance. Conversely, since deterministic policies
$\pi_{-i}$ are admissible, taking $\pi_{-i}$ concentrated on any fixed
$a_{-i}$ recovers Definition~\ref{def:transition-sensitivity}.
\end{remark}

Throughout this work, we impose the following assumption.

\begin{assumption}\label{ass:alpha-potential-ergodicity}
There exists $\alpha\in (0, \frac{1}{1-\gamma})$ such that the Markov game $\mathcal{G}$ is a Markov $\alpha$-potential game. Moreover, for every stationary policy profile $\pi$, the induced Markov chain over $\mathcal{S}$ with state transition kernel $\bar{P}^\pi$ is irreducible and aperiodic and admits a unique stationary distribution.
\end{assumption}

Finally, to evaluate the convergence rate of our proposed algorithms to an
$\epsilon$-NE, we use the following Nash equilibrium gap function (also known
as the Nikaido--Isoda function), which has become a standard measure of the
distance of a policy profile from a NE
\cite{sun2023provably,zhang2022global,etesami2024learning}. 

\begin{definition}
Given a policy profile $\pi=(\pi_i,\pi_{-i})$, the Nash equilibrium gap function is given by:
\[
\operatorname{Gap}(\pi):=
\max_i \sup_{\pi_i'}
\left[
V_i^{(\pi_i,\pi_{-i})}(\mu)-V_i^{(\pi_i',\pi_{-i})}(\mu)
\right].\]  
\end{definition}

From this definition, it is easy to see that if $\operatorname{Gap}(\pi) \le \epsilon$, then $\pi$ must be an $\epsilon$-NE.

Finally, we consider the following coverage assumption, which imposes a mild exploration condition needed to establish meaningful finite-time NE-regret bounds.

\begin{assumption}\label{ass:coverage}
Given $\zeta>0$, assume that there exist $H_{\mathrm{cov}}\ge1$ and
$p_{\min}>0$ such that, for every stationary policy profile $\pi$ satisfying
$\pi_i(a_i\mid s)>\zeta$ for all $i$, $s$, and $a_i$, and every initial state
$s^0\in\mathcal S$, each prescribed state $s\in\mathcal S$ is visited within
$H_{\mathrm{cov}}$ steps with probability at least $p_{\min}$; that is,
\[
\mathbb P_{\pi}\left(
S^h=s
\text{ for some }1\le h\le H_{\mathrm{cov}} \mid S^0=s^0 \right)
\ge p_{\min}.
\]
\end{assumption}

Assumption~\ref{ass:coverage} is a natural coverage requirement ensuring that each state is visited with positive probability within a finite-length time window. Otherwise, if certain states are never visited or are visited with arbitrarily small probability, then regardless of how effective the algorithm is, it cannot reliably evaluate the costs associated with such states.

\section{Preliminaries}\label{sec:preliminaries}

In this section, we provide some general definitions and preliminary results that will be used throughout the paper. The proofs of all these results are provided in Appendix~\ref{app:3}.

\subsection{Value, Q-Function, Advantage Function, and Bellman Optimality}

Fix the stationary policies $\pi_{-i}$ of all players other than player $i$.
Then player $i$ faces a single-agent discounted Markov decision process (MDP) with
state space $\mathcal S$, action space $\mathcal{A}_i$, discount factor
$\gamma\in(0,1)$, \emph{marginalized stage cost}
\[
\bar{c}^{\pi_{-i}}_i(s,a_i)
:=
\mathbb E_{A_{-i}\sim \pi_{-i}(\cdot\mid s)}
\big[c_i(s,a_i,A_{-i})\big]\ \ \ \forall s\in \mathcal{S}, a_i\in \mathcal{A}_i,
\]
and \emph{marginalized transition kernel} $\bar{P}_i^{\pi_{-i}}(s'\mid s,a_i)$. Then, it is easy to see that the value function of player $i$ must satisfy the recursive \emph{Bellman equation}, given by
\begin{align}\label{eq:bellman}
V_i^\pi(s)
=
\sum_{a_i\in \mathcal{A}_i} \pi_i(a_i\mid s)
\bigg[
 \bar{c}^{\pi_{-i}}_i(s,a_i)
+
\gamma \sum_{s'\in\mathcal S}
\bar{P}^{\pi_{-i}}_i(s'\mid s,a_i)V_i^\pi(s')
\bigg]  \ \ \forall s\in \mathcal{S}.
\end{align}
Moreover, one can define the state--action Q-function and the advantage function for player $i$ as
\[
Q_i^\pi(s,a)
:=
\mathbb{E}_\pi
\bigg[
\sum_{t=0}^{\infty}
\gamma^t c_i(S^t,A^t)
\,\bigg|\,
S^0=s,\,
A^0=a
\bigg], \qquad A_i^\pi(s,a)
:=
Q_i^\pi(s,a)
-
V_i^\pi(s).
\]
Accordingly, for fixed policies of other players, $\pi_{-i}$, one can define the \emph{marginalized Q-function} and the \emph{marginalized advantage function} for a unilateral choice of player $i$'s action $a_i$ as
\begin{align}\label{eq:bellman-Q}
\!\!\!\!\!\!\bar{Q}_i^\pi(s,a_i)
&\!:=\!
\mathbb E_{A_{-i}\sim\pi_{-i}(\cdot\mid s)}
\left[
Q_i^{\pi}(s,a_i,A_{-i})
\right]
=
\bar{c}^{\pi_{-i}}_i(s,a_i)
\!+\!
\gamma \sum_{s'\in\mathcal S}
\bar{P}^{\pi_{-i}}_i(s'\mid s,a_i)V_i^\pi(s'),
\cr
\!\!\!\!\!\!\bar{A}_i^\pi(s,a_i)&\!:=\!
\mathbb E_{A_{-i}\sim\pi_{-i}(\cdot\mid s)}
\left[
A_i^{\pi}(s,a_i,A_{-i})
\right]=
\bar{c}^{\pi_{-i}}_i(s,a_i)
\!+\!
\gamma \sum_{s'\in\mathcal S}
\bar{P}^{\pi_{-i}}_i(s'\mid s,a_i)V_i^\pi(s')
\!-\!
V_i^\pi(s).
\end{align}
Specifically, for fixed policies of other players, $\pi_{-i}$, the marginalized Q-function $\bar{Q}_i^\pi(s,a_i)$ measures the expected total future cost of player $i$ at the current state $s$, if it forces itself to choose action $a_i$ once at the current time, and afterwards goes back to following its policy $\pi_i$. Similarly, for fixed $\pi_{-i}$, the marginalized advantage function measures how much better ($\bar{A}_i^\pi(s,a_i)\leq 0$) or worse ($\bar{A}_i^\pi(s,a_i)>0$) it is for player $i$ to choose action $a_i$ right now, compared to following its current policy $\pi_i$ at state $s$. In fact, the marginalized advantage function is one of the key components of our algorithmic design, as it quantifies the marginal improvement in the expected total cost obtained by selecting a particular action at a given state relative to following the current policy. 

\begin{remark}
The key difference between the full \(Q\)-function (or advantage function) and its marginalized counterpart is that, in the latter, the policies of the other players, \(\pi_{-i}\), are held fixed, and the randomness induced by their actions is treated as part of the Markov environment. Consequently, by averaging over the actions generated by the other players' policies, the marginalized quantities characterize the effective single-agent MDP faced by player \(i\).
\end{remark}

As a first step toward developing a decentralized policy-update algorithm for computing an $\epsilon$-NE, we consider the best-response problem faced by each player. In particular, given a fixed policy profile $\pi_{-i}$ of the other players, player $i$ should be able to compute, or approximately compute, a best response. Since $\pi_i$ is the only decision variable once $\pi_{-i}$ is fixed, the (best-response) 
 \emph{optimal value function} of player $i$ is defined by
\[
V_i^*(s)
:=
\inf_{\pi_i} V_i^{(\pi_i,\pi_{-i})}(s),
\]
which by principle of optimality satisfies the \emph{Bellman optimality equation}
\begin{equation}
V_i^*(s)
=
\min_{a_i\in \mathcal{A}_i}
\left\{
 \bar{c}^{\pi_{-i}}_i(s,a_i)
+
\gamma \sum_{s'\in\mathcal S}
\bar{P}^{\pi_{-i}}_i(s'\mid s,a_i)V_i^*(s')
\right\}.
\end{equation}
Similarly, the \emph{optimal marginalized Q-function} is given by
\begin{equation}
\bar{Q}_i^*(s,a_i)
=
\bar{c}^{\pi_{-i}}_i(s,a_i)
+
\gamma \sum_{s'\in\mathcal S}
\bar{P}^{\pi_{-i}}_i(s'\mid s,a_i)V_i^*(s'),
\end{equation}
and therefore the Bellman optimality equation can be written as  
\begin{align}\label{eq:Bellman-opt}
V_i^*(s)=\min_{a_i\in \mathcal{A}_i} \bar{Q}_i^*(s,a_i)=\sum_{a_i\in \mathcal{A}_i}
\pi_i^*(a_i\mid s)\,
\bar{Q}_i^*(s,a_i),
\end{align}
where $\pi^*_i\in \argmin_{\pi_i} V_i^{(\pi_i,\pi_{-i})}$ denotes the optimal stationary policy for player $i$, and the last equality follows from \eqref{eq:bellman} for the choice of $\pi_i=\pi_i^*$. 

The following lemma characterizes the best response for player $i$, which follows directly from Bellman's optimality condition, but we include a short proof in Appendix \ref{app:belman-proof} for completeness. 

\begin{lemma}\label{lemm:best}
Fix the policies $\pi_{-i}$ of all other players and consider the induced
marginalized MDP faced by player $i$. Then a stationary policy $\pi_i$ is a best response to $\pi_{-i}$ if and only if
\begin{align}
\bar{A}_i^\pi(s,a_i) &\ge 0,
\qquad \forall s,a_i,
\label{eq:adv_nonnegative}
\\
\pi_i(a_i\mid s)>0
&\Longrightarrow
\bar{A}_i^\pi(s,a_i)=0,
\qquad \forall s,a_i.
\label{eq:adv_support}
\end{align}
Thus every action used with positive probability has zero marginalized advantage, while
every action with strictly positive marginalized advantage receives zero probability.\footnote{The best-response conditions in Lemma \ref{lemm:best} can also be written as the complementarity system $\pi_i(a_i\mid s)\ge 0$, $\bar{A}_i^{\pi}(s,a_i)\ge 0$, and $\pi_i(a_i\mid s)\,\bar{A}_i^{\pi}(s,a_i)=0$.}
\end{lemma}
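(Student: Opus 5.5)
Once $\pi_{-i}$ is fixed, player $i$ faces a single-agent discounted MDP with stage cost $\bar c_i^{\pi_{-i}}$ and kernel $\bar P_i^{\pi_{-i}}$. So the plan is to reduce both directions to the standard characterization of optimal policies: $V_i^\pi=V_i^*$ statewise. Here "best response" is read as $V_i^{\pi}(s)\le V_i^{(\pi_i',\pi_{-i})}(s)$ for all $s$ and all $\pi_i'$. If only a fixed $\mu$ is intended, then the statement should be read for states in the support of the occupancy measure. Under Assumption~\ref{ass:alpha-potential-ergodicity} I expect this distinction to be harmless, and I will treat the statewise version.

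First I will record the identity $\sum_{a_i}\pi_i(a_i\mid s)\bar A_i^\pi(s,a_i)=0$ for every $s$. It follows by averaging the expression for $\bar A_i^\pi$ in \eqref{eq:bellman-Q} against $\pi_i(\cdot\mid s)$ and invoking the Bellman equation \eqref{eq:bellman}. Given this identity, \eqref{eq:adv_nonnegative} implies \eqref{eq:adv_support}: a convex combination of nonnegative numbers that equals zero must put zero weight on every strictly positive term. Hence \eqref{eq:adv_nonnegative} alone is equivalent to the pair of conditions, which reduces the lemma to showing that $\pi_i$ is optimal iff $\bar A_i^\pi\ge0$.

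For the ``only if'' direction, assume $\pi_i$ is optimal, so $V_i^\pi=V_i^*$. Then $\bar Q_i^\pi=\bar Q_i^*$, since both are formed from the same cost and kernel applied to the same value function. Using the Bellman optimality equation \eqref{eq:Bellman-opt}, $V_i^\pi(s)=\min_{a_i}\bar Q_i^*(s,a_i)\le \bar Q_i^\pi(s,a_i)$ for every $a_i$. This is exactly $\bar A_i^\pi(s,a_i)\ge0$.

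For the ``if'' direction, assume $\bar A_i^\pi\ge0$. Then $V_i^\pi(s)\le \bar c_i^{\pi_{-i}}(s,a_i)+\gamma\sum_{s'}\bar P_i^{\pi_{-i}}(s'\mid s,a_i)V_i^\pi(s')$ for all $a_i$. Taking the minimum over $a_i$ shows $V_i^\pi\le T V_i^\pi$, where $T$ is the Bellman optimality operator. The reverse inequality $TV_i^\pi\le V_i^\pi$ holds because the $\pi_i$-average of the right-hand side equals $V_i^\pi$ by \eqref{eq:bellman}, and the minimum is at most that average. Therefore $V_i^\pi$ is a fixed point of $T$. Since $T$ is a $\gamma$-contraction in sup-norm, its fixed point is unique, so $V_i^\pi=V_i^*$ and $\pi_i$ is a best response. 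Equivalently, one can argue via the performance-difference lemma: for any $\pi_i'$, $V_i^{(\pi_i',\pi_{-i})}(s)-V_i^\pi(s)=\frac{1}{1-\gamma}\mathbb E_{s'\sim d_s^{\pi'}}\sum_{a_i}\pi_i'(a_i\mid s')\bar A_i^\pi(s',a_i)\ge0$.

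The main obstacle is the ``if'' direction, specifically justifying that the fixed point of $T$ equals the infimum over all stationary policies. This rests on the contraction property together with the standard fact that $V_i^*$ satisfies \eqref{eq:Bellman-opt}, which the paper takes as given. I will cite that fact rather than reprove it.
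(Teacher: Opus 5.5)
Your proof is correct and follows essentially the same Bellman-optimality argument as the paper. Necessity comes from $V_i^\pi=V_i^*$ together with \eqref{eq:Bellman-opt}, and sufficiency comes from showing that $V_i^\pi$ solves the Bellman optimality equation. Your two additions only make rigorous what the paper leaves implicit when it writes ``hence is an optimal stationary policy'': the observation that \eqref{eq:adv_support} already follows from \eqref{eq:adv_nonnegative} via the centering identity $\sum_{a_i}\pi_i(a_i\mid s)\bar A_i^\pi(s,a_i)=0$, and the explicit contraction/uniqueness (or performance-difference) step.
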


The following lemma is a straightforward adaptation of the performance difference lemma for single-agent MDPs \cite[Lemma 2]{agarwal2021theory}, applied to the marginalized MDP faced by player $i$. It evaluates the difference between the value functions induced by two policies of player $i$, while holding the policies of all other players fixed, in terms of the corresponding marginalized advantage function.

\begin{lemma}[Performance Difference \cite{agarwal2021theory}]\label{lemm:performance-difference}
Fix the policies of all players except player $i$, namely
$\pi_{-i}$, and let $\mu$ be any initial state distribution.
Then, for any two policies $\pi_i$ and $\pi_i'$ of player $i$,
\[
V_i^{(\pi_i,\pi_{-i})}(\mu)-V_i^{(\pi_i',\pi_{-i})}(\mu)=
\frac{1}{1-\gamma}
\sum_{s}
d_\mu^{(\pi_i',\pi_{-i})}(s)
\left\langle \pi_i(\cdot \mid s)-\pi'_i(\cdot \mid s), \bar {A}_i^{(\pi_i,\pi_{-i})}(s,\cdot)\right\rangle,
\]
where
$\bar A_i^{(\pi_i,\pi_{-i})}$
is the marginalized advantage function induced by fixing
$\pi_{-i}$.
\end{lemma}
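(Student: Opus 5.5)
The plan is to reduce the statement to the single-agent performance difference lemma applied to the marginalized MDP faced by player $i$. Once $\pi_{-i}$ is fixed, player $i$ faces a stationary MDP with stage cost $\bar c_i^{\pi_{-i}}$ and kernel $\bar P_i^{\pi_{-i}}$. For any stationary $\pi_i$, the value $V_i^{(\pi_i,\pi_{-i})}(s)$ coincides with the value of $\pi_i$ in this MDP. This holds because both satisfy the Bellman equation \eqref{eq:bellman}, which has a unique bounded solution since it is a $\gamma$-contraction. In the same way, $\bar Q_i^{(\pi_i,\pi_{-i})}$ and $\bar A_i^{(\pi_i,\pi_{-i})}$ from \eqref{eq:bellman-Q} are exactly the $Q$- and advantage functions of $\pi_i$ in this MDP. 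Finally, $d_\mu^{(\pi_i',\pi_{-i})}$ is the discounted occupancy of $\pi_i'$ there, because the state process under $(\pi_i',\pi_{-i})$ has kernel $\sum_{a_i}\pi_i'(a_i\mid s)\bar P_i^{\pi_{-i}}(s'\mid s,a_i)=\bar P^{(\pi_i',\pi_{-i})}(s'\mid s)$.

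I would nonetheless write the telescoping argument directly rather than cite it, to fix the sign convention. Run the process under $\pi'=(\pi_i',\pi_{-i})$ from $S^0\sim\mu$ and write
\[
V_i^{\pi'}(\mu)=\mathbb E_{\pi'}\Big[\sum_t\gamma^t c_i(S^t,A^t)\Big].
\]
Add and subtract the telescoping sum $\sum_t\gamma^t\bigl(V_i^{\pi}(S^t)-\gamma V_i^{\pi}(S^{t+1})\bigr)=V_i^{\pi}(S^0)$, where $\pi=(\pi_i,\pi_{-i})$; this is legitimate because everything is bounded by $1/(1-\gamma)$ and the series converges absolutely. This gives
\[
V_i^{\pi'}(\mu)-V_i^{\pi}(\mu)=\mathbb E_{\pi'}\Big[\sum_t\gamma^t\big(c_i(S^t,A^t)+\gamma V_i^{\pi}(S^{t+1})-V_i^{\pi}(S^t)\big)\Big].
\]
Next I condition on $(S^t,A_i^t)$. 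Under $\pi'$, $A_{-i}^t\sim\pi_{-i}(\cdot\mid S^t)$ independently of $A_i^t$, and $S^{t+1}\sim P(\cdot\mid S^t,A^t)$. So the conditional expectation of the bracket equals $\bar c_i^{\pi_{-i}}(S^t,A_i^t)+\gamma\sum_{s'}\bar P_i^{\pi_{-i}}(s'\mid S^t,A_i^t)V_i^\pi(s')-V_i^\pi(S^t)=\bar A_i^{\pi}(S^t,A_i^t)$. Conditioning further on $S^t$ with $A_i^t\sim\pi_i'(\cdot\mid S^t)$, and collecting terms by the definition of $d_\mu^{\pi'}$, yields
\[
V_i^{\pi'}(\mu)-V_i^{\pi}(\mu)=\frac{1}{1-\gamma}\sum_s d_\mu^{\pi'}(s)\big\langle\pi_i'(\cdot\mid s),\bar A_i^\pi(s,\cdot)\big\rangle .
\]

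To match the stated form, note that $\langle\pi_i(\cdot\mid s),\bar A_i^\pi(s,\cdot)\rangle=0$ for every $s$. Indeed, $\sum_{a_i}\pi_i(a_i\mid s)\bar Q_i^\pi(s,a_i)=V_i^\pi(s)$ by \eqref{eq:bellman}. Subtracting this zero term and negating both sides gives the claimed identity with $\langle\pi_i-\pi_i',\bar A_i^{\pi}\rangle$.

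The only step needing care is the conditioning step. It relies on player $i$'s action being drawn independently of $A_{-i}^t$ given $S^t$, which holds because the policies are stationary product policies. That independence is what lets the full advantage collapse to the marginalized one. Everything else is routine bookkeeping of signs and the $1/(1-\gamma)$ normalization.
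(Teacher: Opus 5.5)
Your proposal is correct and follows the same route as the paper. The paper gives no separate proof and simply invokes the single-agent performance difference lemma on the marginalized MDP induced by $\pi_{-i}$, which is exactly the reduction in your first paragraph. Your inline telescoping argument just reproves that cited lemma in this setting: conditioning on $(S^t,A_i^t)$ collapses the full advantage to $\bar A_i^{\pi}$, and the identity $\langle\pi_i(\cdot\mid s),\bar A_i^\pi(s,\cdot)\rangle=0$ fixes the sign.
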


\subsection{Natural Policy Gradient (NPG) and KL-Projected NPG}

Fix the stationary policies $\pi_{-i}$ of all players other than player $i$. Then player $i$ faces a marginalized MDP, whose transition kernel and one-stage cost are induced by $\pi_{-i}$ as defined above. Standard policy-gradient methods can therefore be applied to this marginalized MDP. In particular, Natural Policy Gradient (NPG) preconditions the policy gradient by the inverse Fisher information matrix, thereby accounting for the geometry of the policy space \cite{agarwal2021theory}. Under the softmax parameterization, the resulting NPG update admits a particularly simple closed-form expression, as introduced next.
 
\begin{lemma}[\cite{agarwal2021theory}]
Consider the marginalized MDP induced by fixing $\pi_{-i}$.
Suppose that player $i$ employs a tabular softmax policy
parameterization with exact advantage evaluations.
Then, one iteration of the NPG policy update is given by
\[
\pi_i^{t+1}(a_i\mid s)
=
\frac{
\pi_i^{t}(a_i\mid s)
\exp\!\big(
-\frac{\eta}{1-\gamma}
\bar{A}_i^{\pi^t}(s,a_i)
\big)}
{\sum_{a'_i\in\mathcal A_i}
\pi_i^{t}(a'_i\mid s)
\exp\!\big(
-\frac{\eta}{1-\gamma}
\bar{A}_i^{\pi^t}(s,a'_i)
\big)}.
\]
Furthermore, if $\eta\le (1-\gamma)^2$, then the sequence of policies generated by NPG converges globally to an optimal stationary policy. Moreover, after $O\big(1/((1-\gamma)^2\epsilon)\big)$ iterations, the generated policy is $\epsilon$-optimal in terms of the discounted value function.
\end{lemma}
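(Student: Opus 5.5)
The statement has two parts. The first is the closed form of the NPG step under the tabular softmax parameterization. The second is the global convergence and iteration complexity. Both are the single-agent results of \cite{agarwal2021theory}, applied to the marginalized MDP faced by player $i$. So the plan is to reduce to that MDP and check that every ingredient transfers.

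\textbf{Step 1: the reduction.} Fix $\pi_{-i}$. Player $i$ then faces the MDP with cost $\bar c_i^{\pi_{-i}}$ and kernel $\bar P_i^{\pi_{-i}}$. By the Bellman equation \eqref{eq:bellman}, its value function coincides with $V_i^{(\pi_i,\pi_{-i})}$. By \eqref{eq:bellman-Q}, its $Q$- and advantage functions are exactly $\bar Q_i^\pi$ and $\bar A_i^\pi$. In particular, Lemma~\ref{lemm:performance-difference} is precisely the performance difference lemma for this MDP.

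\textbf{Step 2: the closed form.} Parameterize $\pi_i(a_i\mid s)\propto \exp(\theta_{s,a_i})$. The policy gradient is
\[
\partial V/\partial\theta_{s,a_i}=\tfrac{1}{1-\gamma}\,d^{\pi}_\mu(s)\,\pi_i(a_i\mid s)\,\bar A_i^\pi(s,a_i).
\]
The Fisher matrix $F=\mathbb E_{s\sim d_\mu^\pi,\,a_i\sim\pi_i}[\nabla\log\pi_i\,\nabla\log\pi_i^\top]$ is block diagonal across states, with block $d_\mu^\pi(s)\,(\mathrm{diag}(\pi_i(\cdot\mid s))-\pi_i\pi_i^\top)$. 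I will verify that $F\,\bar A/(1-\gamma)$ equals the gradient. This uses the identity $\sum_{a_i}\pi_i(a_i\mid s)\bar A_i^\pi(s,a_i)=0$, which follows from \eqref{eq:bellman}. Hence $\bar A_i^\pi/(1-\gamma)$ is a valid $F^\dagger\nabla V$, unique up to per-state constants that the softmax normalization cancels. The update $\theta\leftarrow\theta-\eta F^\dagger\nabla V$ (a minus sign, since costs are minimized) then gives the stated multiplicative-weights formula after exponentiating and normalizing. Note that $d_\mu^\pi$ cancels here, which is why no mismatch coefficient enters.

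\textbf{Step 3: the convergence rate.} I will follow the mirror-descent argument of Agarwal et al. Let $Z_t(s)=\sum_{a_i}\pi_i^t(a_i\mid s)\exp(-\tfrac{\eta}{1-\gamma}\bar A_i^{\pi^t}(s,a_i))$. First, by Jensen's inequality and $\sum_{a_i}\pi_i^t\bar A_i^{\pi^t}=0$, we have $\log Z_t(s)\ge 0$. Combining this with Lemma~\ref{lemm:performance-difference} under $\mu=d^{\pi^{t+1}}$ shows monotone improvement, $V^{t+1}(s)\le V^t(s)$ for all $s$. Next, apply Lemma~\ref{lemm:performance-difference} with the comparator $\pi_i^*$ and weights $d^{\pi^*}_\mu$. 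Expressing $-\bar A_i^{\pi^t}$ through $\log(\pi^{t+1}/\pi^t)+\log Z_t$ yields
\[
V^{t}(\mu)-V^*(\mu)=\tfrac{1}{\eta}\,\mathbb E_{s\sim d^*}\big[\mathrm{KL}_s(\pi^*\|\pi^t)-\mathrm{KL}_s(\pi^*\|\pi^{t+1})\big]-\tfrac{1}{\eta}\,\mathbb E_{s\sim d^*}[\log Z_t(s)].
\]
The $\log Z_t$ term is bounded via the improvement step as $(V^{t+1}-V^t)$ evaluated under $d^*$, divided by $1-\gamma$. Summing over $T$ iterations telescopes both the KL terms and the value terms. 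Using $\mathrm{KL}(\pi^*\|\pi^0)\le\log|\mathcal A_i|$ for uniform $\pi^0$, together with monotonicity, gives
\[
V^T-V^*\le \tfrac{\log|\mathcal A_i|}{\eta T}+\tfrac{1}{(1-\gamma)^2T}.
\]

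\textbf{Main obstacle.} The delicate point is the monotone-improvement step, which must hold pointwise in $s$ so that the telescoping can be done under $d^*$ without a mismatch factor. The condition $\eta\le(1-\gamma)^2$ matters here. It keeps the exponent magnitudes $\eta|\bar A|/(1-\gamma)\le\eta/(1-\gamma)^2\le 1$, which controls the $\log Z_t$ bound. With $\eta=(1-\gamma)^2$, the bound becomes $O(1/((1-\gamma)^2T))$ up to the $\log|\mathcal A_i|$ factor. This gives the stated iteration count, and global convergence follows since the bound tends to $0$.
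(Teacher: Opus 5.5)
The paper gives no proof of its own; it cites Agarwal et al. Your proposal reproduces their argument for the marginalized MDP: the compatible-function-approximation derivation of the softmax NPG step, followed by the soft-policy-iteration/mirror-descent convergence proof. That skeleton works and your final estimate is right. However, one displayed identity is wrong, and your account of where the step-size condition enters is incorrect.

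\emph{The sign of the $\log Z_t$ term.} From $-\frac{\eta}{1-\gamma}\bar A_i^{\pi^t}(s,a_i)=\log\frac{\pi_i^{t+1}(a_i\mid s)}{\pi_i^t(a_i\mid s)}+\log Z_t(s)$ and Lemma~\ref{lemm:performance-difference}, one gets $V^t(\mu)-V^*(\mu)=\frac1\eta\mathbb E_{s\sim d^*}[\mathrm{KL}_s(\pi^*\|\pi^t)-\mathrm{KL}_s(\pi^*\|\pi^{t+1})]+\frac1\eta\mathbb E_{s\sim d^*}[\log Z_t(s)]$, with a plus sign. With your minus sign, $\log Z_t\ge0$ would let you drop the term and the improvement step would be unnecessary. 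With the correct sign, this is exactly the term that must be paid for. Mere monotonicity does not suffice for that. You need the quantitative bound $\frac1\eta\mathbb E_{s\sim\nu}[\log Z_t(s)]\le\frac{1}{1-\gamma}(V^t(\nu)-V^{t+1}(\nu))$ for every $\nu$. It follows from the performance-difference identity, $\mathrm{KL}\ge0$, and $d_\nu^{\pi^{t+1}}\ge(1-\gamma)\nu$, and you then apply it with $\nu=d^*$.

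\emph{The role of $\eta\le(1-\gamma)^2$.} None of the ingredients above (Jensen for $\log Z_t\ge0$, performance difference, $\mathrm{KL}\ge0$, $d_\nu^{\pi^{t+1}}\ge(1-\gamma)\nu$) uses a bound on the exponent. Consequently, the estimate $\frac{\log|\mathcal A_i|}{\eta T}+\frac{1}{(1-\gamma)^2T}$ holds for every $\eta>0$. The restriction matters only for the iteration count, and there it cuts the other way. For $\eta\ll(1-\gamma)^2$ the first term dominates, so the count becomes $O(\log|\mathcal A_i|/(\eta\epsilon))$. Hence the claimed $O(1/((1-\gamma)^2\epsilon))$ complexity holds only for $\eta\asymp(1-\gamma)^2$, and then only up to a $\log|\mathcal A_i|$ factor. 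Agarwal et al.\ remove that factor by taking $\eta\ge(1-\gamma)^2\log|\mathcal A_i|$. Your choice $\eta=(1-\gamma)^2$ is therefore the right one, but for this reason, not to control $\log Z_t$.

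Three smaller points. First, identifying $\bar A_i^\pi/(1-\gamma)$ with $F^\dagger\nabla V$ up to per-state constants requires $d_\mu^\pi(s)>0$ for all $s$. Otherwise the Fisher block at $s$ vanishes and the pseudoinverse produces no update there. Assumption~\ref{ass:alpha-potential-ergodicity} supplies full support. Second, the bound presumes a uniform $\pi_i^0$. Third, the bound yields convergence of values, so every limit point of $\{\pi_i^t\}$ is a best response. It does not by itself show that the policy iterates converge.
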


The lemma shows that, after fixing the policies of the other players, NPG asymptotically drives the marginalized advantage function toward the Bellman optimality conditions characterizing an optimal stationary best response of player $i$ to $\pi_{-i}$. One drawback of NPG, however, is that some action probabilities may become arbitrarily small. This can lead to high-variance estimates of the marginalized advantage function and, in turn, make it difficult to obtain high-probability regret bounds. To address this issue, we instead consider a KL-projected variant of NPG. This variant retains the simple, easily implementable structure of NPG while ensuring a uniform lower bound on all action probabilities. More precisely, for each player $i$ and exploration parameter $\zeta\in(0,1)$, define the \emph{truncated simplex}
\[
\Delta_{i,\zeta}
:=
\left\{
p\in\Delta(\mathcal A_i):
p(a_i)\ge
\frac{\zeta}{|\mathcal A_i|},
\quad
\forall a_i\in\mathcal A_i
\right\}.
\]
At iteration $t$, assume player $i$  has a vector $r^t_i(s)\in \mathbb{R}^{|\mathcal{A}_i|}$ (e.g., $r^t_i(s)=\bar{A}^{\pi^t}_i(s,\cdot)$), and denote its current policy by $\pi_i^t$. Then, player $i$ updates its policy to $\pi_i^{t+1}$ according to the update rule
\begin{align}\label{eq:kl-projected-npg-update}
\pi_i^{t+1}(\cdot\mid s)
\in
\argmin_{p\in\Delta_{i,\zeta}}
\left\{
\frac{\eta}{1-\gamma}
\left\langle
p,
r_i^t(s)
\right\rangle
+
D_{\mathrm{KL}}
\left(
p
\middle\|
\pi_i^t(\cdot\mid s)
\right)
\right\} \ \ \ \ \ \ s\in \mathcal{S},
\end{align}
where $D_{\mathrm{KL}}(p\| q)=\sum_{a_i\in\mathcal A_i}
p(a_i)\log(p(a_i)/q(a_i))$ denotes the KL-divergence between two probability distributions $p$ and $q$. In fact, using KKT optimality conditions, the update rule \eqref{eq:kl-projected-npg-update} can also be written in the water-filling closed-form as
\begin{align}\label{eq:water-filling}
\pi_i^{t+1}(a_i\mid s)
=\max\left\{
\frac{\zeta}{|\mathcal A_i|},\ 
\lambda(s)\pi^t_i(a_i|s) \exp\Big(-\frac{\eta}{1-\gamma}r_i^t(s,a_i)\Big)
\right\}\ \ \ \forall a_i\in \mathcal{A}_i,
\end{align}
where $\lambda(s)>0$ is the normalization multiplier induced by the simplex constraint and is chosen so that the probabilities in \eqref{eq:water-filling} sum to one. Thus, the algorithm retains the multiplicative structure of NPG while
guaranteeing $\pi_i^{t+1}(a_i\mid s)
\ge
\frac{\zeta}{|\mathcal A_i|}$.
The following lemma provides a geometric characterization of the policy update under the KL-projected NPG rule \eqref{eq:kl-projected-npg-update}, with its proof given in Appendix~\ref{app:KL-geometry}. 
\begin{lemma}
\label{lem:kl-projected-local-geometry}
Consider the the KL-projected NPG \eqref{eq:kl-projected-npg-update} with
$\|r_i^t\|_\infty\le B$, and let 
\begin{align}\label{eq:KL_symm}
    &\mathcal D_i^t(s)
:=
D_{\mathrm{KL}}
\left(
\pi_i^{t+1}(\cdot\mid s)
\middle\|
\pi_i^t(\cdot\mid s)
\right)
+
D_{\mathrm{KL}}
\left(
\pi_i^t(\cdot\mid s)
\middle\|
\pi_i^{t+1}(\cdot\mid s)
\right)\cr 
&\Delta_i^t(s):=\pi_i^t(\cdot\mid s)-\pi_i^{t+1}(\cdot\mid s).
\end{align}
Then, for any state $s\in \mathcal{S}$, the followings hold:
\begin{itemize}
\item[(i)] $\displaystyle
\frac{\pi_i^{t+1}(a_i\mid s)}
{\pi_i^t(a_i\mid s)}\le e^{\frac{2\eta B}{1-\gamma}}, \quad \forall a_i\in\mathcal A_i.$

\item[(ii)] $\displaystyle
\sum_{a_i\in\mathcal A_i}\frac{\left(
\Delta_i^{t}(s,a_i)
\right)^2}{\pi_i^t(a_i\mid s)}\le e^{\frac{2\eta B}{1-\gamma}}\mathcal D_i^t(s).$

\item[(iii)] $\displaystyle
\left\langle
r_i^t(s), \Delta_i^t(s)
\right\rangle
\ge
\frac{1-\gamma}{\eta}
\mathcal D_i^t(s).$

\item[(iv)] $\displaystyle \left\|
\Delta_i^t(s)
\right\|_1
\le
\frac{\eta B}{1-\gamma}
\quad\Rightarrow\quad \left\|
\Delta_i^t
\right\|_{1,\infty}
\le
\frac{\eta B}{1-\gamma}, \quad
\left\|
\Delta_i^t
\right\|_{\mathrm{TV},\infty}
\le
\frac{\eta B}{2(1-\gamma)}.$
\end{itemize}
\end{lemma}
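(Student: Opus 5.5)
Fix $s$ and write $p=\pi_i^{t+1}(\cdot\mid s)$, $q=\pi_i^t(\cdot\mid s)$, $\beta=\eta/(1-\gamma)$, $r=r_i^t(s)$, and $\underline p=\zeta/|\mathcal A_i|$. The whole argument rests on the water-filling form \eqref{eq:water-filling}, which we take as given. It says $p(a)=\max\{\underline p,\lambda q(a)e^{-\beta r(a)}\}$, or equivalently, via KKT, $\log(p(a)/q(a))=\log\lambda-\beta r(a)+\nu(a)$ with $\nu(a)\ge0$ and $\nu(a)>0$ only where $p(a)=\underline p$. Since $q\in\Delta_{i,\zeta}$ (inductively), we have $q(a)\ge\underline p$.

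\emph{Part (i).} First I bound $\lambda$ from above. Summing the constraint $1=\sum_a p(a)\ge\sum_a\lambda q(a)e^{-\beta r(a)}\ge\lambda e^{-\beta B}$ gives $\lambda\le e^{\beta B}$. Now take any $a$. If $p(a)=\lambda q(a)e^{-\beta r(a)}$, then $p(a)/q(a)\le e^{\beta B}e^{\beta B}=e^{2\beta B}$. If instead $p(a)=\underline p$, then $p(a)/q(a)\le1$ because $q(a)\ge\underline p$. Both cases give (i).

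\emph{Part (iii).} Pair the KKT identity with $q-p$. This gives
\[
\mathcal D_i^t(s)=\langle p-q,\log(p/q)\rangle=\beta\langle r,q-p\rangle+\langle p-q,\nu\rangle,
\]
where the $\log\lambda$ term drops out because $\sum_a(p-q)=0$. On the support of $\nu$ we have $p=\underline p\le q$, so $\langle p-q,\nu\rangle\le0$. Hence $\beta\langle r,\Delta\rangle\ge\mathcal D_i^t(s)$, which is (iii).

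\emph{Part (ii).} Here I compare a $\chi^2$-type quantity with the symmetric KL. Writing $x=p(a)/q(a)$, the summand is $q(a)(1-x)^2$, while $\mathcal D$ has summand $q(a)(x-1)\log x$. The elementary inequality I need is
\[
(x-1)^2\le\max\{1,x\}\,(x-1)\log x.
\]
It follows from $\log x\ge (x-1)/x$ when $x\ge1$, and from $\log x\le x-1$ when $x<1$ (in that case $(x-1)\log x\ge(x-1)^2$). Combining with (i), $\max\{1,x\}\le e^{2\beta B}$, and summing over $a$ yields (ii).

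\emph{Part (iv).} Pinsker's inequality applied to both KL terms gives $\|\Delta\|_1^2\le\mathcal D$. By (iii) and Hölder, $\mathcal D\le\beta\langle r,\Delta\rangle\le\beta B\|\Delta\|_1$. Together these give $\|\Delta\|_1\le\beta B$. Taking the maximum over $s$ gives the $\|\cdot\|_{1,\infty}$ bound, and halving gives the TV bound.

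The main obstacle is the sign bookkeeping for the truncation multiplier in (iii). It is also the only place where the inductive invariant $q\in\Delta_{i,\zeta}$ matters in (i). I will state that invariant explicitly, assuming the initial policy lies in $\Delta_{i,\zeta}$.
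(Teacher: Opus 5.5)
Your proof is correct and matches the paper's argument step by step: the same bound $\lambda\le e^{\beta B}$ and active/inactive case split for (i), the same inequality $(x-1)^2\le\max\{1,x\}(x-1)\log x$ combined with (i) for (ii), and Pinsker plus (iii) and H\"older for (iv). The only cosmetic difference is in (iii), where you use the explicit KKT multiplier $\nu$ (with $p=\underline p\le q$ on its support), whereas the paper uses the first-order variational inequality with the feasible test point $\pi_i^t$; both rely on the same invariant $\pi_i^t\in\Delta_{i,\zeta}$.
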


\subsection{Transition, Advantage, and Value Sensitivity}

Here, we provide several results that quantify the sensitivity of different quantities in the Markov game to changes in the policy profile. We begin with the following transition-sensitivity lemma, whose proof is given in Appendix~\ref{app:sensitivity-distribution}.

\begin{lemma}\label{lemm:occupancy-sensitivity}
Let $\delta_P$ be the unilateral transition sensitivity as in Definition \ref{def:transition-sensitivity} and $L_d:=\frac{\gamma\delta_P}{1-\gamma}$. Then 

\begin{itemize}\setlength{\itemsep}{0pt}\setlength{\parskip}{0pt}
\item[(i)] For two policy profiles $\pi\!=\!(\pi_i, \pi_{-i})$ and $\pi'\!=\!(\pi'_i, \pi_{-i})$ that only differ in player $i$'s policy, 
\begin{align*}
&\big\|\bar{P}^\pi(\cdot\mid s)-\bar{P}^{\pi'}(\cdot\mid s)\big\|_{\TV}
\le \delta_P \|\pi_i(\cdot\mid s)-\pi_i'(\cdot\mid s)\|_{\TV} \ \ \ \forall s\in \mathcal{S},\cr 
&\|d_{\mu}^\pi-d_{\mu}^{\pi'}\|_{\TV}
\le
L_d
\sum_{x\in\mathcal S}
d_\mu^{\pi'}(x)
\left\|
\pi_i'(\cdot\mid x)
-
\pi_i(\cdot\mid x)
\right\|_{\mathrm{TV}}.
\end{align*}
\item[(ii)] For any two policy profiles $\pi$ and $\pi'$ (not necessarily differing only in player $i$'s policy), we have
\begin{align}
\label{eq:online-localized-transition-sensitivity}
&\big\|
\bar{P}^{\pi'}(\cdot\mid s)-\bar{P}^\pi(\cdot\mid s)
\big\|_{\mathrm{TV}}
\le
\delta_P
\sum_{j=1}^n
\|\pi_j'(\cdot\mid s)-\pi_j(\cdot\mid s)\|_{\mathrm{TV}} \ \ \ \forall s\in \mathcal{S}.
\end{align}
Moreover, if $\pi$ and $\pi'$ differ only at a single state $x\in\mathcal S$, then
\begin{align}
\label{eq:online-localized-occupancy-sensitivity}
&\|d_\mu^{\pi'}-d_\mu^\pi\|_{\mathrm{TV}}
\le
L_d\ d_\mu^\pi(x)
\sum_{j=1}^n
\|\pi_j'(\cdot\mid x)-\pi_j(\cdot\mid x)\|_{\mathrm{TV}}.
\end{align}  
\end{itemize}
\end{lemma}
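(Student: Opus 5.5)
The plan is to prove the two kernel bounds by a coupling/telescoping argument over players, and then to turn them into occupancy bounds with a standard resolvent (simulation-lemma) identity for discounted occupancies.

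\emph{Kernel bounds.} For (i), fix $s$ and write
\[
\bar P^\pi(\cdot\mid s)-\bar P^{\pi'}(\cdot\mid s)=\sum_{a_i}\bigl(\pi_i(a_i\mid s)-\pi_i'(a_i\mid s)\bigr)\bar P_i^{\pi_{-i}}(\cdot\mid s,a_i).
\]
Because the coefficients sum to zero, we may subtract any fixed reference $\bar P_i^{\pi_{-i}}(\cdot\mid s,a_i^\star)$ from each term. A cleaner route is to split the coefficient vector into its positive and negative parts, each of mass $\|\pi_i-\pi_i'\|_{\TV}$. The difference then becomes $\|\pi_i-\pi_i'\|_{\TV}$ times the difference of two mixtures of the rows $\bar P_i^{\pi_{-i}}(\cdot\mid s,a_i)$. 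By convexity of TV and Remark~\ref{rem:marginalized-transition-sensitivity}, each such pair of rows is within $\delta_P$, which gives the first bound. For (ii), interpolate through the hybrid profiles $\pi^{(k)}=(\pi_1',\dots,\pi_k',\pi_{k+1},\dots,\pi_n)$. Consecutive hybrids differ only in player $k$, so applying the first bound to each step and using the triangle inequality yields \eqref{eq:online-localized-transition-sensitivity}.

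\emph{Occupancy bounds.} I will use the identity $d_\mu^\pi{}^\top=(1-\gamma)\mu^\top(I-\gamma\bar P^\pi)^{-1}$, viewing $d$ as a row vector. This gives
\[
d_\mu^\pi-d_\mu^{\pi'}=\gamma\, d_\mu^{\pi'}(\bar P^\pi-\bar P^{\pi'})(I-\gamma\bar P^\pi)^{-1},
\]
and you get the same expression by subtracting the two Bellman flow equations $d=(1-\gamma)\mu+\gamma d\bar P$. The signed row vector $v:=d_\mu^{\pi'}(\bar P^\pi-\bar P^{\pi'})$ has zero total mass. Its $\ell_1$ norm is at most $\sum_x d_\mu^{\pi'}(x)\,2\|\bar P^\pi(\cdot\mid x)-\bar P^{\pi'}(\cdot\mid x)\|_{\TV}$. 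The operator $(I-\gamma\bar P^\pi)^{-1}=\sum_t\gamma^t(\bar P^\pi)^t$ acts on row vectors with $\ell_1$ contraction gain at most $1/(1-\gamma)$, since stochastic matrices are $\ell_1$-nonexpansive. Hence
\[
\|d_\mu^\pi-d_\mu^{\pi'}\|_{\TV}\le\frac{\gamma}{1-\gamma}\sum_x d_\mu^{\pi'}(x)\,\|\bar P^\pi(\cdot\mid x)-\bar P^{\pi'}(\cdot\mid x)\|_{\TV},
\]
and plugging in the per-state kernel bound gives (i) with $L_d=\gamma\delta_P/(1-\gamma)$.

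For the localized statement \eqref{eq:online-localized-occupancy-sensitivity}, I run the same identity with the roles swapped:
\[
d_\mu^{\pi'}-d_\mu^\pi=\gamma\, d_\mu^\pi(\bar P^{\pi'}-\bar P^\pi)(I-\gamma\bar P^{\pi'})^{-1}.
\]
Now $\bar P^{\pi'}-\bar P^\pi$ vanishes on every row except $x$, so the sum collapses to $d_\mu^\pi(x)\|\bar P^{\pi'}(\cdot\mid x)-\bar P^\pi(\cdot\mid x)\|_{\TV}$. Bounding this row by (ii) finishes the proof.

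The main obstacle is getting the order of factors right in the resolvent identity, so that the weighting occupancy is $d^{\pi'}$ in (i) and $d^\pi$ in (ii) as stated. I will verify this by expanding $d^\pi(I-\gamma\bar P^\pi)=(1-\gamma)\mu=d^{\pi'}(I-\gamma\bar P^{\pi'})$ and rearranging carefully. The TV factor of $2$ must also be tracked so that no stray constant appears. This works out because the $\ell_1$ norm of $v$ carries a factor $2\cdot$TV, and the final TV norm divides by $2$ again.
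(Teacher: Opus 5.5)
Your proof is correct and follows the paper's argument. The kernel bound uses the same positive/negative-part mixture decomposition, (ii) uses the same hybrid-profile telescoping, and your resolvent identity $d_\mu^\pi-d_\mu^{\pi'}=\gamma\, d_\mu^{\pi'}(\bar P^\pi-\bar P^{\pi'})(I-\gamma\bar P^\pi)^{-1}$ is just the closed form of the paper's telescoping $A^t-B^t=\sum_{k}A^k(A-B)B^{t-1-k}$ (Lemma~\ref{lemm:coupling} with $A=\bar P^{\pi'}$, $B=\bar P^\pi$) summed against $\gamma^t$, with the same $d^{\pi'}$ weighting on the left and the same $\ell_1$-contraction on the right. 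The remaining differences are cosmetic: you work with $\mu$ directly rather than with $d_s$ plus averaging, and you swap roles explicitly for the localized bound.
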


Next, we state the following sensitivity lemma for the marginalized advantage function. This is a standard sensitivity property of finite discounted Markov games, closely related to the standard policy-difference framework; see, e.g., \cite{KakadeLangford2002}. For completeness, we provide a short proof specialized to our setting in Appendix \ref{app:advantage-sensitivity}.

\begin{lemma}\label{lem:online-sensitivity-consequences}
Let $L_{\bar A}:=\frac{4}{1-\gamma}+\frac{4\gamma\delta_P}{(1-\gamma)^2}$. Then for any two stationary policy profiles
$\pi,\pi'$, we have
\begin{equation}
\label{eq:online-advantage-sensitivity}
\max_{i,s}
\big\|
\bar A_{i}^{\pi'}(s,\cdot)
-
\bar A_{i}^{\pi}(s,\cdot)
\big\|_{\infty}
\le
L_{\bar A}
\max_{x\in\mathcal S}
\sum_{j=1}^n
\|\pi_j'(\cdot\mid x)-\pi_j(\cdot\mid x)\|_{\mathrm{TV}}.
\end{equation}
\end{lemma}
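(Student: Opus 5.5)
We bound the change in $\bar A_i^\pi(s,a_i)=\bar c_i^{\pi_{-i}}(s,a_i)+\gamma\sum_{s'}\bar P_i^{\pi_{-i}}(s'\mid s,a_i)V_i^\pi(s')-V_i^\pi(s)$ term by term. Write $\epsilon:=\max_{x}\sum_{j}\|\pi_j'(\cdot\mid x)-\pi_j(\cdot\mid x)\|_{\TV}$, and let $\epsilon_{-i}$ be the same quantity with the sum restricted to $j\ne i$, so $\epsilon_{-i}\le\epsilon$. Then
\begin{align*}
\big|\bar A_i^{\pi'}(s,a_i)-\bar A_i^{\pi}(s,a_i)\big|
\le{}& \big|\bar c_i^{\pi'_{-i}}(s,a_i)-\bar c_i^{\pi_{-i}}(s,a_i)\big|
+\gamma\Big|\sum_{s'}\big(\bar P_i^{\pi'_{-i}}-\bar P_i^{\pi_{-i}}\big)(s'\mid s,a_i)V_i^{\pi'}(s')\Big|\\
&+\gamma\,\|V_i^{\pi'}-V_i^\pi\|_\infty+\|V_i^{\pi'}-V_i^\pi\|_\infty .
\end{align*}

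\textbf{Cost term.} The product measures $\prod_{j\ne i}\pi_j(\cdot\mid s)$ and $\prod_{j\ne i}\pi_j'(\cdot\mid s)$ differ in TV by at most $\sum_{j\ne i}\|\pi_j'-\pi_j\|_{\TV}$, by the standard hybrid (telescoping) argument. Since $c_i\in[0,1]$, the first term is therefore at most $\epsilon_{-i}$.

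\textbf{Transition term.} I would bound this by $2\gamma\,\delta\,\|V\|_\infty$-type terms. It is cleaner to center $V_i^{\pi'}$: its oscillation is at most $1/(1-\gamma)$, so subtracting a constant gives sup-norm at most $1/(2(1-\gamma))$. The TV distance between the marginalized kernels is at most $\epsilon_{-i}$, by the same product argument used for the cost term; one could use $\delta_P$ to refine this, but the crude bound suffices here. So the second term is at most $2\cdot\epsilon_{-i}\cdot\frac{1}{2(1-\gamma)}\cdot\gamma\le\frac{\gamma\epsilon}{1-\gamma}$. Together the first two terms contribute at most $\frac{\epsilon}{1-\gamma}$, comfortably inside the $\frac{4}{1-\gamma}$ budget.

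\textbf{Value term (main obstacle).} We need $\|V_i^{\pi'}-V_i^\pi\|_\infty\lesssim\big(\frac{1}{1-\gamma}+\frac{\gamma\delta_P}{(1-\gamma)^2}\big)\epsilon$ for a \emph{joint} change of all players. I would write
\[
V^{\pi'}-V^{\pi}=(I-\gamma\bar P^{\pi'})^{-1}\big[(c^{\pi'}-c^{\pi})+\gamma(\bar P^{\pi'}-\bar P^{\pi})V^{\pi}\big],
\]
where $c^\pi(s)=\E_{a\sim\pi(\cdot\mid s)}c_i(s,a)$. The resolvent has $\infty$-norm $1/(1-\gamma)$. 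By the product-TV argument, $|c^{\pi'}(s)-c^\pi(s)|\le\epsilon$. By Lemma~\ref{lemm:occupancy-sensitivity}(ii), $\|\bar P^{\pi'}(\cdot\mid s)-\bar P^\pi(\cdot\mid s)\|_{\TV}\le\delta_P\epsilon$, and after centering $V^\pi$ as before this gives $|\gamma(\bar P^{\pi'}-\bar P^\pi)V^\pi(s)|\le\frac{\gamma\delta_P\epsilon}{1-\gamma}$. Hence
\[
\|V_i^{\pi'}-V_i^\pi\|_\infty\le\frac{\epsilon}{1-\gamma}+\frac{\gamma\delta_P\epsilon}{(1-\gamma)^2}.
\]
The factor $(1+\gamma)$ multiplying this is at most $2$, so the value term contributes at most $\frac{2\epsilon}{1-\gamma}+\frac{2\gamma\delta_P\epsilon}{(1-\gamma)^2}$.

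\textbf{Assembly.} Summing the three contributions gives at most $\frac{3\epsilon}{1-\gamma}+\frac{2\gamma\delta_P\epsilon}{(1-\gamma)^2}\le L_{\bar A}\,\epsilon$. Taking the maximum over $i$, $s$ and $a_i$ completes the proof. The only delicate points are the TV bound for product measures and using the centering trick to avoid a stray factor of $2$; the constants in $L_{\bar A}$ leave slack for both.
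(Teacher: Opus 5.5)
Your proof is correct and follows essentially the same route as the paper: the resolvent identity $V_i^{\pi'}-V_i^{\pi}=(I-\gamma\bar P^{\pi'})^{-1}\big[(\bar c_i^{\pi'}-\bar c_i^{\pi})+\gamma(\bar P^{\pi'}-\bar P^{\pi})V_i^{\pi}\big]$ bounds the value difference, then $\bar Q_i=\bar c_i+\gamma\bar P_iV_i$ is bounded term by term, and $V_i$ is subtracted to form $\bar A_i$. Your centering trick and the $[0,1]$-range bound on $c_i$ make the bookkeeping tighter than the paper's sketch, giving $\frac{3}{1-\gamma}+\frac{2\gamma\delta_P}{(1-\gamma)^2}$ rather than the paper's sum of two bounds of $\frac{2}{1-\gamma}+\frac{2\gamma\delta_P}{(1-\gamma)^2}$ (one for $\bar Q_i$, one for $V_i$), so the stated $L_{\bar A}$ holds with room to spare.
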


In our analysis, we also use the following second-order value-function sensitivity lemma to control the potential improvement in our analysis. The proof of this lemma is given in Appendix \ref{app:sensitivity-second-order-advantage}. 

\begin{lemma}
\label{lem:mixed-policy-sensitivity}
For every pair of policies
$\pi_i,\pi_i'$, and every pair of opponent policies
$\sigma_{-i},\tau_{-i}$, we have
\begin{align}\nonumber
\left|
V_i^{(\pi_i',\sigma_{-i})}(\mu)
-
V_i^{(\pi_i,\sigma_{-i})}(\mu)
-
V_i^{(\pi_i',\tau_{-i})}(\mu)
+
V_i^{(\pi_i,\tau_{-i})}(\mu)
\right|
\le L_V
\left\|\pi_i'-\pi_i\right\|_{\mathrm{TV},\infty}
\left\|\sigma_{-i}-\tau_{-i}\right\|_{\mathrm{TV},\infty},
\end{align}
where $L_V:=
\frac{4}{1-\gamma}
+
\frac{12\gamma\delta_P}{(1-\gamma)^2}
+
\frac{8\gamma^2\delta_P^2}{(1-\gamma)^3}$. Here, $\left\|\sigma_{-i}-\tau_{-i}\right\|_{\mathrm{TV},\infty}
\!:=
\max_{s\in\mathcal S}
\sum_{j\ne i}
\left\|
\sigma_j(\cdot\mid s)-\tau_j(\cdot\mid s)
\right\|_{\mathrm{TV}}$ and $\|\pi_i'-\pi_i\|_{\mathrm{TV},\infty}
:=
\max_{s\in\mathcal S}
\left\|
\pi_i'(\cdot\mid s)-\pi_i(\cdot\mid s)
\right\|_{\mathrm{TV}}$.
\end{lemma}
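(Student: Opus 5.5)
The plan is to avoid the performance-difference lemma, since its $1/(1-\gamma)$ prefactor combined with $L_{\bar A}$ would give a leading constant of order $(1-\gamma)^{-2}$. Instead I would work directly with the resolvent representation $V^\rho=R^\rho c^\rho$, where $R^\rho:=(I-\gamma\bar P^\rho)^{-1}$ and $c^\rho(s):=\mathbb E_{A\sim\rho(\cdot\mid s)}c_i(s,A)$. Rows of $R^\rho$ are nonnegative and sum to $1/(1-\gamma)$, so $\|R^\rho\|_{\infty\to\infty}=1/(1-\gamma)$.

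Label the four profiles $a=(\pi_i',\sigma_{-i})$, $b=(\pi_i,\sigma_{-i})$, $c=(\pi_i',\tau_{-i})$, $d=(\pi_i,\tau_{-i})$. The resolvent identity gives
$V^a-V^b=R^a\bigl[(c^a-c^b)+\gamma(\bar P^a-\bar P^b)V^b\bigr]$, and the analogous formula holds for $V^c-V^d$. I will subtract these two expressions and, writing $\Delta c_\sigma:=c^a-c^b$, $\Delta c_\tau:=c^c-c^d$, $\Delta P_\sigma:=\bar P^a-\bar P^b$, $\Delta P_\tau:=\bar P^c-\bar P^d$, split the result into five terms:
\begin{align*}
&R^a(\Delta c_\sigma-\Delta c_\tau)+(R^a-R^c)\Delta c_\tau+\gamma R^a(\Delta P_\sigma-\Delta P_\tau)V^b\\
&\quad+\gamma R^a\Delta P_\tau(V^b-V^d)+\gamma(R^a-R^c)\Delta P_\tau V^d .
\end{align*}
Throughout I will use $R^a-R^c=\gamma R^a(\bar P^a-\bar P^c)R^c$, together with Lemma~\ref{lemm:occupancy-sensitivity}(ii) in the form $\|\bar P^a(\cdot\mid s)-\bar P^c(\cdot\mid s)\|_{\TV}\le\delta_P\|\sigma_{-i}-\tau_{-i}\|_{\mathrm{TV},\infty}$. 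I will also use Lemma~\ref{lemm:occupancy-sensitivity}(i) in the form $\|\Delta P_\tau(\cdot\mid s)\|_{\TV}\le\delta_P\|\pi_i'-\pi_i\|_{\mathrm{TV},\infty}$.

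\textbf{Step 1 (cost term).} At each state, the joint law is a product measure, so the mixed second difference $\Delta c_\sigma-\Delta c_\tau$ equals the integral of $c_i(s,\cdot)\in[0,1]$ against the signed measure $(\pi_i'-\pi_i)\otimes(\sigma_{-i}-\tau_{-i})$. I will telescope $\sigma_{-i}-\tau_{-i}$ over opponents $j\neq i$. Each summand is a product of two zero-mass signed measures, each of mass at most $2\|\cdot\|_{\TV}$. This gives $|\Delta c_\sigma-\Delta c_\tau|\le 4\|\pi_i'-\pi_i\|_{\mathrm{TV},\infty}\|\sigma_{-i}-\tau_{-i}\|_{\mathrm{TV},\infty}$, and hence the leading term $4/(1-\gamma)$ after applying $R^a$.

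\textbf{Step 2 (single-$\delta_P$ terms).} Each of the terms with one transition difference contributes a constant of order $\gamma\delta_P/(1-\gamma)^2$:
\begin{itemize}
\item For $(R^a-R^c)\Delta c_\tau$, I will use $\|\Delta c_\tau\|_\infty\le\|\pi_i'-\pi_i\|_{\mathrm{TV},\infty}$, since costs lie in $[0,1]$.
\item For $\gamma R^a\Delta P_\tau(V^b-V^d)$, I will apply the $\TV$ bound for $\Delta P_\tau$ to the function $V^b-V^d$. I bound the span of $V^b-V^d$ by a first-order sensitivity of $V$ in $\sigma$ versus $\tau$, which is $O(\|\sigma_{-i}-\tau_{-i}\|_{\mathrm{TV},\infty}/(1-\gamma))$ plus a $\delta_P$ correction; that correction produces part of the $\delta_P^2$ term.
\item For $\gamma R^a(\Delta P_\sigma-\Delta P_\tau)V^b$, the mixed second difference of transitions is handled in Step~3.
\end{itemize}
The term $\gamma(R^a-R^c)\Delta P_\tau V^d$ carries two factors of $\delta_P$ and three resolvent or span factors of $1/(1-\gamma)$. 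It therefore contributes to the $8\gamma^2\delta_P^2/(1-\gamma)^3$ term. When pairing a zero-mass signed measure with a value function, I will always use the span of the function, which is at most $1/(1-\gamma)$.

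\textbf{Step 3 (main obstacle).} The delicate part is bounding $(\Delta P_\sigma-\Delta P_\tau)V^b$ with a factor $\delta_P$ rather than a constant. I will write this quantity at state $s$ as
\[
\sum_{a_{-i}}(\sigma_{-i}-\tau_{-i})(a_{-i}\mid s)\sum_{a_i}(\pi_i'-\pi_i)(a_i\mid s)\,\bigl\langle P(\cdot\mid s,a_i,a_{-i}),V^b\bigr\rangle .
\]
The inner sum over $a_i$ is a zero-mass combination, so it equals a difference of two $P$-averages over $a_i$ at fixed $a_{-i}$. By Definition~\ref{def:transition-sensitivity} it is therefore bounded by $2\|\pi_i'-\pi_i\|_{\TV}\,\delta_P\cdot\mathrm{span}(V^b)/2$. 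I will again telescope over the opponents $j\neq i$ to get the factor $2\|\sigma_{-i}-\tau_{-i}\|_{\mathrm{TV},\infty}$. After this, I expect the remaining work to be constant bookkeeping, checking that the three $\delta_P$-linear contributions sum to $12\gamma\delta_P/(1-\gamma)^2$ and the quadratic ones to $8\gamma^2\delta_P^2/(1-\gamma)^3$. I would adjust the telescoping and centering choices if the constants come out slightly larger.
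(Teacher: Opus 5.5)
Your proposal is correct and is essentially the paper's proof: both start from the Bellman/resolvent identities for the four profiles, bound the mixed cost and mixed transition second differences by telescoping over opponents and pairing zero-mass differences with a $\delta_P$ factor, and use the first-order value-sensitivity bound for the cross terms. Your five terms are a regrouping of the paper's four (your two $(R^a-R^c)$ terms combine into its $\gamma R^a(\bar P^a-\bar P^c)(V^c-V^d)$), and the constants your bookkeeping gives fall within $L_V$.
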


\begin{remark}
Lemma~\ref{lem:mixed-policy-sensitivity} is closely related to the
joint-policy decomposition used in \cite{ding2022independent}. Their analysis algebraically decomposes a simultaneous policy update into
unilateral changes and mixed interaction terms. In contrast,
Lemma~\ref{lem:mixed-policy-sensitivity} directly bounds such mixed
interaction terms for discounted value functions, showing that they are
second order in the corresponding policy changes.
\end{remark}

Finally, we conclude this section with the following auxiliary lemma, which we frequently use to obtain sharper bounds on the sup-norm of the marginalized advantage and its inner product with the difference of two probability vectors. The proof is given in Appendix~\ref{app:span-lemma}.

\begin{lemma}\label{lem:span-inner-product}
For any vector $v\in\mathbb R^m$, define its span by $\operatorname{span}(v):=\max_{a\in[m]}v(a)-\min_{a\in[m]}v(a)$. For every player $i$, state
$s$, policy profile $\pi$, and two arbitrary policies $\pi'_i$ and $\pi_i''$,
\begin{equation}\label{eq:advantage-span-policy-difference}
\left|
\left\langle
\pi_i'(\cdot\mid s)-\pi_i''(\cdot\mid s),
\bar A_i^\pi(s)
\right\rangle
\right|
\le \operatorname{span}\big(\bar A_i^\pi(s,\cdot)\big)\left\|
\pi'_i(\cdot\mid s)-\pi''_i(\cdot\mid s)
\right\|_{\mathrm{TV}}.
\end{equation}
Moreover, $\|\bar A_i^\pi(s,\cdot)\|_\infty\leq \operatorname{span}\big(\bar A_i^\pi(s,\cdot)\big)\leq 1+\frac{\gamma\delta_P}{1-\gamma}= 1+L_d$.
\end{lemma}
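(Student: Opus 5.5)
The statement has two parts. The first is an inner-product bound by the span times a total-variation distance. The second bounds both the sup-norm and the span of the marginalized advantage by $1+L_d$.

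\emph{Inner-product bound.} Let $u:=\pi_i'(\cdot\mid s)-\pi_i''(\cdot\mid s)$, so that $\sum_{a_i}u(a_i)=0$. For any constant $c$ we then have $\langle u,v\rangle=\langle u,v-c\mathbf 1\rangle$. Take $c$ to be the midpoint $(\max v+\min v)/2$ of $v=\bar A_i^\pi(s,\cdot)$; then $\|v-c\mathbf 1\|_\infty=\operatorname{span}(v)/2$. Applying H\"older gives
\[
|\langle u,v\rangle|\le \tfrac12\operatorname{span}(v)\,\|u\|_1=\operatorname{span}(v)\,\|u\|_{\mathrm{TV}},
\]
using the convention $\|u\|_{\mathrm{TV}}=\tfrac12\|u\|_1$ that the paper adopts in Lemma~\ref{lem:kl-projected-local-geometry}(iv).

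\emph{Span of the advantage.} By \eqref{eq:bellman-Q}, for two actions $a_i,a_i'$ the term $V_i^\pi(s)$ cancels, leaving
\[
\bar A_i^\pi(s,a_i)-\bar A_i^\pi(s,a_i')=\bigl[\bar c_i^{\pi_{-i}}(s,a_i)-\bar c_i^{\pi_{-i}}(s,a_i')\bigr]+\gamma\bigl\langle \bar P_i^{\pi_{-i}}(\cdot\mid s,a_i)-\bar P_i^{\pi_{-i}}(\cdot\mid s,a_i'),V_i^\pi\bigr\rangle.
\]
The cost difference lies in $[-1,1]$ because costs lie in $[0,1]$. For the transition term, $V_i^\pi$ takes values in $[0,1/(1-\gamma)]$, so its span is at most $1/(1-\gamma)$. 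The same centering argument used for the inner-product bound, now applied to a signed measure of zero mass, shows that the transition term is at most
\[
\gamma\cdot \frac{1}{1-\gamma}\,\bigl\|\bar P_i^{\pi_{-i}}(\cdot\mid s,a_i)-\bar P_i^{\pi_{-i}}(\cdot\mid s,a_i')\bigr\|_{\mathrm{TV}}\le \frac{\gamma\delta_P}{1-\gamma},
\]
where the last step is Remark~\ref{rem:marginalized-transition-sensitivity}. Hence $\operatorname{span}(\bar A_i^\pi(s,\cdot))\le 1+L_d$.

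\emph{Sup-norm versus span.} The key observation is that $\sum_{a_i}\pi_i(a_i\mid s)\bar A_i^\pi(s,a_i)=0$, which follows from \eqref{eq:bellman} and \eqref{eq:bellman-Q}. So a $\pi_i$-weighted average of the entries of $v$ equals zero, which forces $\min v\le 0\le\max v$. Every entry therefore satisfies $|v(a_i)|\le \max v-\min v=\operatorname{span}(v)$, which gives $\|\bar A_i^\pi(s,\cdot)\|_\infty\le\operatorname{span}(\bar A_i^\pi(s,\cdot))$.

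The only point needing care is the TV normalization, namely whether $\|\cdot\|_{\mathrm{TV}}$ is half the $\ell_1$ norm. If instead the paper used the full $\ell_1$ distance, the factor $\tfrac12$ from centering would simply be dropped and the bound would still hold. Otherwise the argument is routine.
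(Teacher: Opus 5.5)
Your proof is correct and follows essentially the same route as the paper: midpoint centering plus H\"older for the inner-product bound, using $\|\cdot\|_{\mathrm{TV}}=\tfrac12\|\cdot\|_1$ (which the paper's proof states explicitly); cancellation of $V_i^\pi(s)$ and the same centering applied to the zero-mass transition difference for the span bound; and the identity $\sum_{a_i}\pi_i(a_i\mid s)\bar A_i^\pi(s,a_i)=0$ for the sup-norm bound. Your caveat about the TV normalization is therefore unnecessary, though harmless.
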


\section{Episodic Decentralized Learning with Bandit Feedback}\label{sec:Episodic}

In this section, we first present an episodic decentralized KL-projected NPG algorithm with bandit feedback and full state-table updates, and then analyze its convergence to an $\epsilon$-Nash equilibrium policy. To this end, we first formally state our episodic estimation-oracle assumption and then describe the proposed algorithm.

\subsection{Episodic Estimation Oracle}\label{sec:filteration-oracle-episode}

In this section, we work with episodes, each consisting of a certain number of global time steps. In each episode $t$, the policy profile $\pi^t$ remains fixed until every state has been visited a prescribed number of times. During the episode, each player observes raw samples obtained from an estimation oracle and uses them to update its policy at the end of the episode. 

More precisely, let $\{\mathcal F_t\}_{t\ge0}$ denote the episodic filtration, where $\mathcal F_t$ contains all information generated before episode $t$, including the current policy profile $\pi^t$. Throughout, we write
$\mathbb E_t[\cdot]:=\mathbb E[\cdot\mid\mathcal F_t]$ and $\mathbb{P}_t(\cdot):=\mathbb{P}(\cdot\mid \mathcal{F}_t)$. Conditional on
$\mathcal F_t$, the policy $\pi^t$ remains \emph{fixed} throughout episode $t$,
while the trajectory and oracle samples generated during the episode remain random. For each state $s$, let $\tau_{t,r}(s)$ denote the (random) time of the $r$th visit
to $s$ during episode $t$, and let $\mathcal F_{\tau_{t,r}(s)}$ contain all
information available immediately before the action and oracle sample at
time $\tau_{t,r}(s)$ are generated. Thus,
$A^{\tau_{t,r}(s)}$ and the corresponding oracle sample are not measurable
with respect to $\mathcal F_{\tau_{t,r}(s)}$, but are included in the
filtration thereafter. In particular, for $r<r'$, the action and oracle
sample generated at $\tau_{t,r}(s)$ are measurable with respect to
$\mathcal F_{\tau_{t,r'}(s)}$. We write
$\mathbb E_{\tau_{t,r}(s)}[\cdot]
:=\mathbb E[\cdot\mid\mathcal F_{\tau_{t,r}(s)}]$. The episode terminates once every state has been
visited at least $L_t$ times. At the first $L_t$ visits to $s$, player $i$
receives scalar raw samples $g_i^{\tau_{t,r}(s)}\!\left(s,A^{\tau_{t,r}(s)}\right)$, $r=1,\ldots,L_t$. Here, the raw oracle is allowed to vary with time and depend on the history available up to the corresponding sampling time, including the current global policy profile. Thus, this formulation accommodates a general adaptive and policy-dependent oracle, provided that the resulting estimator satisfies the oracle assumption below. At each such visit, define the corresponding one-sample importance-weighted, action-centered estimator by
\begin{align}\label{eq:hat-A-episodic}
\widehat A_i^{\tau_{t,r}(s)}(s,a_i)
:&=
g_i^{\tau_{t,r}(s)}\!\left(s,A^{\tau_{t,r}(s)}\right)
\left(
\frac{\mathbf 1\{A_i^{\tau_{t,r}(s)}=a_i\}}
{\pi_i^t(a_i\mid s)}-1
\right),  \ \ \ \ \ a_i\in \mathcal{A}_i.
\end{align}
Thus, $\widehat A_i^{\tau_{t,r}(s)}(s,a_i)$ is the one-sample importance-weighted estimator associated with action $a_i$, centered by the realized raw sample. In particular, it is exactly action-centered under $\pi_i^t(\cdot\mid s)$, i.e.,
\[
\sum_{a_i\in\mathcal A_i}
\pi_i^t(a_i\mid s)\widehat A_i^{\tau_{t,r}(s)}(s,a_i)=0.
\]

\begin{assumption}\label{ass:episodic-oracle}
For every episode $t$, state $s$, and $r=1,\ldots,L_t$, the advantage estimator at sampling time $\tau_{t,r}(s)$ satisfies
\begin{equation}\nonumber
\left\|
\mathbb E_{\tau_{t,r}(s)}\big[
\widehat A_i^{\tau_{t,r}(s)}(s,\cdot)\big]
-\bar A_i^{\pi^t}(s,\cdot)
\right\|_\infty
\le
L_{\widehat A},
\end{equation}
for some deterministic $L_{\widehat A}$. Moreover, the raw sample satisfies $|g_i^\tau\!\left(s,A\right)|\!\leq\! 1$ a.s. for all $i,\tau,s,A$.
\end{assumption}

Assumption~\ref{ass:episodic-oracle} allows the raw oracle to vary across
sampling times and depend adaptively on the history generated within the
current episode. In particular, neither the conditional distribution nor the
conditional mean of the raw sample is required to remain fixed throughout
the episode. The only requirement is that, at every sampling time, the
conditional mean of the resulting action-centered estimator remains within
$L_{\widehat A}$ of the marginalized advantage under the fixed episode policy
$\pi^t$. This sample-time conditional formulation ensures that the centered
(pre-clipping) estimation errors form a martingale-difference sequence within
the episode, so averaging the $L_t$ samples reduces their variance as $L_t$
increases. The fixed-oracle setting is included as a special case.

\subsection{An Episodic Decentralized Algorithm with Bandit Estimation Oracle}

We are now ready to describe our online episodic algorithm. Algorithm~\ref{alg:episodic-kl-projected-npg} proceeds in episodes. At the beginning of episode $t$, each player $i$ fixes its policy $\pi_i^t$ and uses it throughout the episode. During the episode, whenever state $s$ is visited at global time $\tau$, player $i$ samples an action from $\pi_i^t(\cdot\mid s)$, observes the corresponding raw sample $g_i^\tau(s,A^\tau)$, forms the action-centered estimator $\widehat A_i^\tau(s,\cdot)$, and accumulates its contribution. The episode terminates once every state has been visited at least $L_t$ times. At the end of the episode, each player forms the averaged action-centered estimator $\widetilde r_i^t(s,a_i)$ and its clipped version $r^t_i(s,a_i)$ as defined in \eqref{eq:g-random-count-advantage-estimator}. Each player then simultaneously updates its policy at every state using the KL-projected NPG update \eqref{eq:algorithm-kl-projected-npg}. The clipping step prevents potentially large importance-weighted samples from producing unbounded estimation errors and provides the uniform boundedness needed for the policy-update geometry and subsequent high-probability analysis. The resulting policy $\pi_i^{t+1}$ is then kept fixed throughout the next episode, and the procedure is repeated.

\begin{algorithm}[t]
\caption{Episodic KL-Projected NPG with Bandit Estimation Oracle}
\label{alg:episodic-kl-projected-npg}
\begin{algorithmic}[1]
\Require Initial policy
$\pi_i^0(\cdot\mid s)\in\Delta_{i,\zeta}$, where $\Delta_{i,\zeta}:=\big\{
p\in\Delta(\mathcal A_i): p(a_i)\ge\frac{\zeta}{|\mathcal A_i|}, \forall a_i\in\mathcal A_i
\big\}$, step size
$\eta>0$, truncation parameter $\zeta\in(0,1)$, clip parameter $B= 4$, and thresholds
$L_t$.

\For{$t=0,1,2,\ldots$}
    \State Player $i$ keeps its policy $\pi_i^t$ fixed throughout
    episode $t$.

    \State Initialize $N_i^{t,0}(s)=0$, $\widetilde r_i^t(s,a_i)=0\ \forall (s,a_i)$, and set the within-episode time $k\gets0$.

    \While{$\min_{s\in\mathcal S}N_i^{t,k}(s)<L_t$}
        \State Player $i$ observes the current state $S^{t,k}$ and
        independently plays $A_i^{t,k}\sim
        \pi_i^t(\cdot\mid S^{t,k})$.

        \State Player $i$ receives a raw sample $g_i^{t,k}\big(S^{t,k},A^{t,k}\big)$.\footnotemark
        
        \If{$N_i^{t,k}(S^{t,k})<L_t$}
        \begin{align}\nonumber
            &\widehat A_i^{t,k}(S^{t,k},a_i)
            :=
            g_i^{t,k}\big(S^{t,k},A^{t,k}\big)
            \bigg(
            \frac{
            \mathbf 1\{A_i^{t,k}=a_i\}
            }{
            \pi_i^t(a_i\mid S^{t,k})
            }
            -1
            \bigg),\cr
            &\widetilde r_i^t(S^{t,k},a_i)
            \leftarrow
            \widetilde r_i^t(S^{t,k},a_i)
            +
            \frac{1}{L_t}
            \widehat A_i^{t,k}(S^{t,k},a_i),\cr
            &N_i^{t,k+1}(S^{t,k})
            \leftarrow
            N_i^{t,k}(S^{t,k})+1.
            \end{align}
            \ \ \ \ \ \ \ \ \ \ \ \ with all other entries unchanged.
        \Else \ \ Set $N_i^{t,k+1}\leftarrow N_i^{t,k}$.
        \EndIf
        \State $k\gets k+1$.
    \EndWhile

    \State Define the episode length $H_t:=k$.
    \State Let $\operatorname{clip}_{B}(\cdot)$ be  the projection onto $[-\frac{B}{2}, \frac{B}{2}]$. Define the clipped estimated advantage by
    \begin{equation*}
    r_i^t(s,a_i)
    :=
    \operatorname{clip}_{B}\!\left(\widetilde r_i^t(s,a_i)\right)
    \ \ \ \ \forall s,a_i.
    \end{equation*}

    \State For every $s\in\mathcal S$, update
    \begin{equation}\label{eq:algorithm-kl-projected-npg}
    \pi_i^{t+1}(\cdot\mid s)
    :=
    \argmin_{p\in\Delta_{i,\zeta}}
    \left\{
    \frac{\eta}{1-\gamma}\left\langle
    p,
    r_i^t(s)
    \right\rangle
    +
    D_{\mathrm{KL}}
    \left(
    p\,\middle\|\,\pi_i^t(\cdot\mid s)
    \right)
    \right\}.
    \end{equation}
\EndFor
\end{algorithmic}
\end{algorithm}
\footnotetext{Only in Algorithm~\ref{alg:episodic-kl-projected-npg}, for simplicity and with a slight abuse of notation, we write $\theta_i^{t,k}$ for the variable $\theta_i$ at the global time corresponding to within-episode time $k$ of episode $t$.}


Before proceeding with the analysis of the algorithm, we first establish that the bias and second moment of the clipped episodic estimator can be controlled. In particular, the following lemma shows that the clipped estimator admits a decomposition into a predictable component that remains within $L_{\widehat A}$ of the marginalized advantage and an estimation-error component whose occupancy-weighted conditional second moment decays as $1/L_t$. Controlling these two quantities is essential for deriving a high-probability NE-regret bound. The proof of the lemma is given in Appendix~\ref{app:episodic-estimator-noise}.

\begin{lemma}\label{lem:random-count-weighted-variance}
Fix $t$ and suppose that the policy $\pi^t$ remains fixed throughout
episode $t$. Let $\tau_{t,r}(s)$ denote the time of the $r$th visit to $s$
during episode $t$. Consider the averaged estimator and its clipped version
\begin{align}\label{eq:g-random-count-advantage-estimator}
\widetilde r_i^t(s,a_i)
:&=
\frac{1}{L_t}
\sum_{r=1}^{L_t}
\widehat A_i^{\tau_{t,r}(s)}(s,a_i),
\qquad \ \  a_i\in\mathcal A_i,\cr
r_i^t(s,a_i)
:&=
\operatorname{clip}_B\!\left(\widetilde r_i^t(s,a_i)\right),
\end{align}
where we note that $\|r_i^t\|_\infty\le \frac{B}{2}$ almost surely. Moreover, suppose that
$\pi_i^t(a_i\mid s) \ge \zeta/|\mathcal{A}_i|
\ \forall i,s,a_i$, $B\ge 4$, and the episodic estimation oracle Assumption \ref{ass:episodic-oracle} holds. Define
\begin{align}
\xi_i^t(s,a_i)
:=r_i^t(s,a_i)&-
\frac{1}{L_t}\sum_{r=1}^{L_t}
\mathbb E_{\tau_{t,r}(s)}
\left[\widehat A_i^{\tau_{t,r}(s)}(s,a_i)\right].
\label{eq:random-count-noise}
\end{align}
Then, almost surely,
\begin{align}
&\left|
\bar A_i^{\pi^t}(s,a_i)
-
r_i^t(s,a_i)+\xi_i^t(s,a_i)
\right|
\le
L_{\widehat A}
\qquad \forall i,s,a_i,
\label{eq:global-table-bias}\\
&\mathcal{V}_i^t
:=
\mathbb E_t
\bigg[
\sum_{s\in\mathcal S}\sum_{a_i\in\mathcal A_i}
d_\mu^{\pi^t}(s)
\pi_i^t(a_i\mid s)
\left(
\xi_i^t(s,a_i)
\right)^2
\bigg]
\le
\frac{|\mathcal A_i|}{L_t}.
\label{eq:random-count-weighted-variance}
\end{align}
\end{lemma}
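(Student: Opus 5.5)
The plan is to handle the two claims separately. The bias bound \eqref{eq:global-table-bias} is essentially algebraic, and the weighted variance bound \eqref{eq:random-count-weighted-variance} follows from a martingale second-moment computation, using the fact that clipping is nonexpansive toward the predictable mean.

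\emph{Bias.} Write $m_r(s,a_i):=\mathbb E_{\tau_{t,r}(s)}[\widehat A_i^{\tau_{t,r}(s)}(s,a_i)]$ and $\bar m(s,a_i):=\frac1{L_t}\sum_{r=1}^{L_t}m_r(s,a_i)$. By the definition \eqref{eq:random-count-noise}, $r_i^t-\xi_i^t=\bar m$, so
\[
\bar A_i^{\pi^t}(s,a_i)-r_i^t(s,a_i)+\xi_i^t(s,a_i)=\frac1{L_t}\sum_{r=1}^{L_t}\bigl(\bar A_i^{\pi^t}(s,a_i)-m_r(s,a_i)\bigr).
\]
Each summand is at most $L_{\widehat A}$ in absolute value by Assumption~\ref{ass:episodic-oracle}, because $\pi^t$ is frozen during the episode. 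The triangle inequality then gives \eqref{eq:global-table-bias}.

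\emph{Variance.} The first step is to show that $|m_r(s,a_i)|\le 2=B/2$. Conditional on $\mathcal F_{\tau_{t,r}(s)}$, the action $A_i$ is drawn from $\pi_i^t(\cdot\mid s)$ and $|g|\le1$. Hence
\[
\left|\mathbb E\!\left[g\,\mathbf 1\{A_i=a_i\}\right]\right|/\pi_i^t(a_i\mid s)\le 1
\quad\text{and}\quad
|\mathbb E[g]|\le 1 .
\]
It follows that $\bar m(s,a_i)\in[-B/2,B/2]$. Since $\operatorname{clip}_B$ is the Euclidean projection onto this interval, it is $1$-Lipschitz and fixes $\bar m$. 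Therefore
\[
|\xi_i^t(s,a_i)|=|\operatorname{clip}_B(\widetilde r_i^t)-\operatorname{clip}_B(\bar m)|\le |\widetilde r_i^t(s,a_i)-\bar m(s,a_i)|=\Bigl|\frac1{L_t}\sum_{r=1}^{L_t}M_r\Bigr|,
\]
where $M_r:=\widehat A_i^{\tau_{t,r}(s)}(s,a_i)-m_r(s,a_i)$.

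By the measurability conventions for $\mathcal F_{\tau_{t,r}(s)}$ (the sample at visit $r$ is measurable at visit $r'>r$), the sequence $(M_r)_r$ is a martingale-difference sequence. Consequently the cross terms vanish and
\[
\mathbb E_t\Bigl[\bigl(\textstyle\sum_r M_r\bigr)^2\Bigr]=\sum_r\mathbb E_t[M_r^2]\le\sum_r\mathbb E_t\bigl[(\widehat A_i^{\tau_{t,r}(s)})^2\bigr].
\]
Using $|g|\le1$ and $A_i\sim\pi_i^t(\cdot\mid s)$ conditionally, we get
\[
\mathbb E_{\tau_{t,r}(s)}\bigl[(\widehat A_i^{\tau_{t,r}(s)}(s,a_i))^2\bigr]\le\mathbb E\Bigl[\bigl(\tfrac{\mathbf 1\{A_i=a_i\}}{\pi_i^t(a_i\mid s)}-1\bigr)^2\Bigr]=\frac{1-\pi_i^t(a_i\mid s)}{\pi_i^t(a_i\mid s)}.
\]
This yields $\pi_i^t(a_i\mid s)\,\mathbb E_t[(\xi_i^t)^2]\le \frac1{L_t}$. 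Summing over $a_i$ gives at most $|\mathcal A_i|/L_t$ for each $s$. Finally, $d_\mu^{\pi^t}$ is $\mathcal F_t$-measurable and sums to one, so averaging over $s$ against it gives \eqref{eq:random-count-weighted-variance}.

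\emph{Main obstacle.} The delicate step is justifying the martingale orthogonality at the random visit times $\tau_{t,r}(s)$. This requires two things. First, each $\tau_{t,r}(s)$ must be a.s.\ finite for $r\le L_t$; this holds by Assumption~\ref{ass:coverage}, since $\pi^t$ is bounded below by $\zeta/|\mathcal A_i|$, and I would check that this lower bound implies the strict inequality $\pi_i>\zeta$ that the assumption requires. Second, the nested filtrations must be well ordered, and $\mathcal F_t$-conditioning must pass through the tower property. I would handle both by working with the stopped filtration indexed by visit number $r$ for fixed $s$.
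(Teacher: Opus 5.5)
Your proposal is correct and follows essentially the paper's proof. The bias bound is the same averaging and triangle-inequality step. The variance bound uses the same three ingredients: $|m_r|\le 2\le B/2$, so $\operatorname{clip}_B$ fixes the predictable mean; nonexpansiveness of the clip; and orthogonality of the martingale differences along successive visits. The only variation is that you bound each coordinate by $(1-\pi_i^t(a_i\mid s))/\pi_i^t(a_i\mid s)$, whereas the paper uses the action-centered identity $\sum_{a_i}\pi_i^t(a_i\mid s)\bigl(\widehat A_i(s,a_i)\bigr)^2=g^2\bigl(1/\pi_i^t(A_i\mid s)-1\bigr)$. Both give $|\mathcal A_i|/L_t$, and yours in fact gives $(|\mathcal A_i|-1)/L_t$.

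One correction to your last paragraph. The check you propose would fail, since $\zeta/|\mathcal A_i|\le\zeta$, so the truncated-simplex lower bound does not give the hypothesis $\pi_i>\zeta$ of Assumption~\ref{ass:coverage}. Coverage is not needed here anyway: the lemma takes the $L_t$ visits as given. If you want to justify their a.s.\ finiteness, note that with $\pi^t$ frozen the state process is a time-homogeneous chain with kernel $\bar P^{\pi^t}$. By Assumption~\ref{ass:alpha-potential-ergodicity} this chain is irreducible on a finite state space, hence recurrent, so every state is visited infinitely often almost surely.
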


\subsection{Analysis of the Episodic Decentralized Algorithm}

In this subsection, we show that if every player follows Algorithm~\ref{alg:episodic-kl-projected-npg}, then the collective behavior of the players approaches an $\epsilon$-NE in terms of the time-averaged NE gap (NE regret), with polynomial time and sample complexity. The analysis is organized into three parts. First, in Lemma~\ref{lem:kl-projected-potential-improvement}, we establish a descent bound for the potential function in terms of the occupancy-weighted variance of the estimation noise and the KL movement of the policy profiles. Next, in Lemma~\ref{lem:kl-projected-ne-gap}, we derive an upper bound on the NE gap in terms of the same quantities. While variants of the potential-improvement and NE-gap lemmas exist in the literature \cite{guo2025markov,introDing2022,introSun2023,introAlatur2024,introDong2024}, they typically involve distribution-mismatch coefficients, provide guarantees in expectation rather than with high probability, and assume exact-oracle access or Euclidean-projection-based algorithms. We therefore redesign and extend these arguments to the KL-projected NPG setting, where multiplicative updates and bandit estimation errors require more delicate treatment to obtain high-probability regret bounds. Finally, in Theorem~\ref{thm:high-prob-ne-regret}, we combine these results to obtain a high-probability NE-regret bound and tune the free parameters to optimize the resulting rate.

\begin{lemma}\label{lem:kl-projected-potential-improvement}
Let Assumptions \ref{ass:alpha-potential-ergodicity} and \ref{ass:episodic-oracle} hold, and choose a clipping parameter $B\ge 4$. Then, 
\begin{align}
\mathbb E_t\left[
\Psi(\pi^t)-\Psi(\pi^{t+1})\right]&\ge
\frac{1}{4\eta}
\sum_{i=1}^n
\mathbb E_t \bigg[
\sum_{s\in\mathcal S}
d_\mu^{\pi^t}(s)
\mathcal D_i^t(s)\bigg]-
\frac{\eta e^{\frac{2\eta B}{1-\gamma}}}{2(1-\gamma)^2}
\sum_{i=1}^n\mathcal{V}_i^t
\notag\\
&\quad-
\frac{n\eta L_{\widehat A}^2}{(1-\gamma)^2}
-
\frac{L_V n^2\eta^2B^2}
{8(1-\gamma)^2}-
\frac{n\eta^2B^2L_d\left(1+L_d\right)}
{(1-\gamma)^3}
-n\alpha,
\nonumber
\end{align}
where $L_V=
\frac{4}{1-\gamma}
+
\frac{12\gamma\delta_P}{(1-\gamma)^2}
+
\frac{8\gamma^2\delta_P^2}{(1-\gamma)^3}$, $L_d=\frac{\gamma\delta_P}{1-\gamma}$, and
$\mathcal{V}^t_{i}$ is the conditional second moment 
\eqref{eq:random-count-weighted-variance}.
\end{lemma}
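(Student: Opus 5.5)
We will prove the potential-improvement bound by telescoping the simultaneous update $\pi^t\to\pi^{t+1}$ into unilateral steps. Order the players and define the hybrid profiles $\pi^{(k)}:=(\pi^{t+1}_1,\ldots,\pi^{t+1}_k,\pi^t_{k+1},\ldots,\pi^t_n)$, so that $\pi^{(0)}=\pi^t$ and $\pi^{(n)}=\pi^{t+1}$. Then
\[
\Psi(\pi^t)-\Psi(\pi^{t+1})=\sum_{i=1}^n\bigl(\Psi(\pi^{(i-1)})-\Psi(\pi^{(i)})\bigr).
\]
Each summand is a unilateral change of player $i$. By Definition~\ref{def:alpha-markov-potential-game-uniform} it is at least $V_i^{\pi^{(i-1)}}(\mu)-V_i^{\pi^{(i)}}(\mu)-\alpha$, which gives the $-n\alpha$ term.

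Next we replace the opponent profile $\pi^{(i-1)}_{-i}$ by $\pi^t_{-i}$ in this value difference, using Lemma~\ref{lem:mixed-policy-sensitivity} with $\sigma_{-i}=\pi^{(i-1)}_{-i}$ and $\tau_{-i}=\pi^t_{-i}$. By Lemma~\ref{lem:kl-projected-local-geometry}(iv) with $\|r_i^t\|_\infty\le B/2$, we have $\|\pi_i^{t+1}-\pi_i^t\|_{\mathrm{TV},\infty}\le \eta B/(4(1-\gamma))$. The opponent difference is a sum of at most $n-1$ such terms. The replacement error is therefore at most $L_V\cdot n\,\eta^2B^2/(16(1-\gamma)^2)$ per player, and summing over $i$ gives a term of order $L_Vn^2\eta^2B^2/(8(1-\gamma)^2)$, matching the stated constant up to bookkeeping.

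It remains to lower bound $V_i^{(\pi_i^t,\pi^t_{-i})}(\mu)-V_i^{(\pi_i^{t+1},\pi^t_{-i})}(\mu)$. By Lemma~\ref{lemm:performance-difference} with $\pi_i=\pi_i^t$ and $\pi_i'=\pi_i^{t+1}$, this equals
\[
\frac{1}{1-\gamma}\sum_s d_\mu^{(\pi_i^{t+1},\pi^t_{-i})}(s)\,\langle \Delta_i^t(s),\bar A_i^{\pi^t}(s)\rangle.
\]
We handle this in two steps.

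\textbf{Step 1 (change of occupancy).} We swap $d_\mu^{(\pi_i^{t+1},\pi_{-i}^t)}$ for $d_\mu^{\pi^t}$. Lemma~\ref{lemm:occupancy-sensitivity}(i) controls the TV difference of the occupancies by $L_d\cdot\eta B/(4(1-\gamma))$. Lemma~\ref{lem:span-inner-product} gives $|\langle\Delta_i^t(s),\bar A_i^{\pi^t}(s)\rangle|\le(1+L_d)\,\eta B/(4(1-\gamma))$. Multiplying these and dividing by $(1-\gamma)$ produces the term $n\eta^2B^2L_d(1+L_d)/(1-\gamma)^3$, after absorbing constants.

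\textbf{Step 2 (decomposition of the advantage).} Write $\bar A_i^{\pi^t}=r_i^t-\xi_i^t+b_i^t$, where $\|b_i^t\|_\infty\le L_{\widehat A}$ by \eqref{eq:global-table-bias}. For the $r_i^t$ part, Lemma~\ref{lem:kl-projected-local-geometry}(iii) gives the contribution $\frac{1}{\eta}\sum_s d_\mu^{\pi^t}(s)\mathcal D_i^t(s)$.

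For the noise and bias parts, apply Cauchy--Schwarz in the $\pi_i^t$-weighted norm together with Lemma~\ref{lem:kl-projected-local-geometry}(ii):
\[
|\langle\Delta_i^t(s),\xi_i^t(s)-b_i^t(s)\rangle|\le\Bigl(e^{\frac{2\eta B}{1-\gamma}}\mathcal D_i^t(s)\Bigr)^{1/2}\Bigl(\textstyle\sum_{a_i}\pi_i^t(a_i\mid s)(\xi_i^t-b_i^t)^2\Bigr)^{1/2}.
\]
Young's inequality $xy\le \frac{x^2}{c}+\frac{cy^2}{4}$ lets this absorb half (or three quarters) of the $\frac{1}{\eta}\mathcal D$ gain. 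Using $(\xi-b)^2\le2\xi^2+2b^2$ and weighting by $d_\mu^{\pi^t}$, taking $\mathbb E_t$ converts the $\xi^2$ part into $\mathcal V_i^t$, giving $\frac{\eta e^{2\eta B/(1-\gamma)}}{2(1-\gamma)^2}\mathcal V_i^t$. The $b^2$ part gives $\eta L_{\widehat A}^2/(1-\gamma)^2$; the exponential factor on this part can be avoided by bounding it separately through $\|\Delta\|_1$ or by a further split. What remains of the gain is the leading term $\frac{1}{4\eta}\sum_i\mathbb E_t[\sum_s d_\mu^{\pi^t}\mathcal D_i^t]$; the $(1-\gamma)$ factors cancel because (iii) carries a factor $(1-\gamma)/\eta$.

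\textbf{Main obstacle.} The delicate point is the constant bookkeeping in Step 2: apportioning the KL gain so that exactly $\frac{1}{4\eta}$ survives, while the bias term appears without the exponential factor. If that fails, we would accept a factor $e^{2\eta B/(1-\gamma)}$ on $L_{\widehat A}^2$, or split the Young step into three parts with weights $1/4$ each.
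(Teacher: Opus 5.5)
Your proposal follows the paper's proof essentially step for step. Both use the same hybrid-profile telescoping with the $\alpha$-potential property and Lemma~\ref{lem:mixed-policy-sensitivity}, the same performance-difference representation with the occupancy swap to $d_\mu^{\pi^t}$, and the same decomposition $\bar A_i^{\pi^t}=r_i^t-\xi_i^t+b_i^t$ handled via Lemma~\ref{lem:kl-projected-local-geometry}(ii)--(iii). The fallback you list is exactly what the paper does, and it is what the stated constants require. The paper treats the noise term alone by weighted Cauchy--Schwarz and Young, absorbing $\frac{1}{2\eta}$ of the gain and producing $\frac{\eta e^{2\eta B/(1-\gamma)}}{2(1-\gamma)^2}\mathcal V_i^t$. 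It treats the bias term via $L_{\widehat A}\|\Delta_i^t(s)\|_1$, Pinsker, Jensen and Young, absorbing $\frac{1}{4\eta}$ and producing $\frac{\eta L_{\widehat A}^2}{(1-\gamma)^2}$ with no exponential. By contrast, your combined split would give coefficient $2/3$ rather than $1/2$ on $\mathcal V_i^t$, and an exponential factor on $L_{\widehat A}^2$.
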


\begin{proof}
For $i=1,\ldots,n$, define
\[
\widetilde\pi^{t,i}
:=
\left(
\pi_1^{t+1},\ldots,\pi_i^{t+1},
\pi_{i+1}^t,\ldots,\pi_n^t
\right),
\qquad
\widetilde\pi^{t,0}:=\pi^t.
\]
Then
\begin{equation}\nonumber
\Psi(\pi^t)-\Psi(\pi^{t+1})
=
\sum_{i=1}^n
\left[
\Psi(\widetilde\pi^{t,i-1})
-
\Psi(\widetilde\pi^{t,i})
\right].
\end{equation}

For every player $i$, let
$\Delta_i^t(s):=\pi_i^t(\cdot\mid s)-\pi_i^{t+1}(\cdot\mid s)$,
and define
\begin{equation}
\label{eq:G_i-def}
G_i^t
:=
V_i^{(\pi_i^t,\pi_{-i}^t)}(\mu)
-
V_i^{(\pi_i^{t+1},\pi_{-i}^t)}(\mu)
=
\frac{1}{1-\gamma}
\sum_{s\in\mathcal S}
d_\mu^{(\pi_i^{t+1},\pi_{-i}^t)}(s)
\left\langle
\Delta_i^t(s),
\bar A_i^{\pi^t}(s)
\right\rangle,
\end{equation}
where the second equality follows from the performance-difference lemma (Lemma \ref{lemm:performance-difference}) applied to player $i$'s discounted value function. Therefore, by the
$\alpha$-potential property and the mixed-policy sensitivity of player $i$'s discounted value function (Lemma \ref{lem:mixed-policy-sensitivity}), we have
\begin{align}\nonumber
\Psi(\widetilde\pi^{t,i-1})
&-
\Psi(\widetilde\pi^{t,i})
\ge
V_i^{(\pi_i^t,\widetilde\pi_{-i}^{t,i-1})}(\mu)
-
V_i^{(\pi_i^{t+1},\widetilde\pi_{-i}^{t,i-1})}(\mu)
-\alpha
\cr
&=
G_i^t
-\alpha
+
\left[
V_i^{(\pi_i^t,\widetilde\pi_{-i}^{t,i-1})}(\mu)
-
V_i^{(\pi_i^{t+1},\widetilde\pi_{-i}^{t,i-1})}(\mu)
-
V_i^{(\pi_i^t,\pi_{-i}^t)}(\mu)
+
V_i^{(\pi_i^{t+1},\pi_{-i}^t)}(\mu)
\right]
\cr
&\ge
G_i^t
-\alpha
-
L_V
\left\|
\pi_i^{t+1}-\pi_i^t
\right\|_{\mathrm{TV},\infty}
\left\|
\widetilde\pi_{-i}^{t,i-1}
-
\pi_{-i}^t
\right\|_{\mathrm{TV},\infty}.
\end{align}
Moreover,
\[
\left\|
\widetilde\pi_{-i}^{t,i-1}
-
\pi_{-i}^t
\right\|_{\mathrm{TV},\infty}
\le
\sum_{j<i}
\left\|
\pi_j^{t+1}-\pi_j^t
\right\|_{\mathrm{TV},\infty},
\]
and hence
\begin{align}\label{eq:projected-counterfactual-bound}
\Psi(\pi^t)-\Psi(\pi^{t+1})
&\ge
\sum_{i=1}^n G_i^t
-
L_V
\sum_{i=1}^n\sum_{j<i}
\left\|
\pi_i^{t+1}-\pi_i^t
\right\|_{\mathrm{TV},\infty}
\left\|
\pi_j^{t+1}-\pi_j^t
\right\|_{\mathrm{TV},\infty}
-n\alpha\cr 
&\ge
\sum_{i=1}^n G_i^t
-\frac{L_V n^2\eta^2B^2}{8(1-\gamma)^2}
-n\alpha,
\end{align}
where the second inequality follows by using Lemma \ref{lem:kl-projected-local-geometry} (part iv). Using the conditional bias bound \eqref{eq:global-table-bias} and the definition of the mean-error \eqref{eq:random-count-noise}, we have
\begin{align}
\left\langle
\bar A_i^{\pi^t}(s),
\Delta_i^t(s)
\right\rangle
&=
\left\langle
r_i^t(s),
\Delta_i^t(s)
\right\rangle
-
\left\langle
\xi_i^t(s),
\Delta_i^t(s)
\right\rangle+
\left\langle
\bar A_i^{\pi^t}(s)-r_i^t(s)+\xi_i^t(s),
\Delta_i^t(s)
\right\rangle
\notag\\
&\ge
\frac{1-\gamma}{\eta}
\mathcal D_i^t(s)
-
\left|
\left\langle
\xi_i^t(s),
\Delta_i^t(s)
\right\rangle
\right|
-
L_{\widehat A}
\left\|
\Delta_i^t(s)
\right\|_1.
\label{eq:projected-decomposition}
\end{align}
where the inequality follows from Lemma
\ref{lem:kl-projected-local-geometry} (part iii),
\eqref{eq:global-table-bias}, and H\"older's inequality.

We next bound the stochastic inner product in \eqref{eq:projected-decomposition}. Inserting
$\sqrt{\pi_i^t(a_i\mid s)}$ and its reciprocal gives
\begin{align}
\left|
\left\langle
\xi_i^t(s),
\Delta_i^t(s)
\right\rangle
\right|
&=
\left|
\sum_{a_i\in\mathcal A_i}
\left[
\sqrt{\pi_i^t(a_i\mid s)}
\,\xi_i^t(s,a_i)
\right]
\bigg[
\frac{
\Delta_i^t(s,a_i)
}{
\sqrt{\pi_i^t(a_i\mid s)}
}
\bigg]
\right|
\notag\\
&\le
\bigg(
\sum_{a_i\in\mathcal A_i}
\pi_i^t(a_i\mid s)
\left(
\xi_i^t(s,a_i)
\right)^2
\bigg)^{1/2}
\bigg(
\sum_{a_i\in\mathcal A_i}
\frac{
\left(
\Delta_i^t(s,a_i)
\right)^2
}{
\pi_i^t(a_i\mid s)
}
\bigg)^{1/2}
\notag\\
&\le
e^{\eta B/(1-\gamma)}
\bigg(
\sum_{a_i\in\mathcal A_i}
\pi_i^t(a_i\mid s)
\left(
\xi_i^t(s,a_i)
\right)^2
\bigg)^{1/2}
\sqrt{\mathcal D_i^t(s)}
\notag\\
&\le
\frac{1-\gamma}{2\eta}
\mathcal D_i^t(s)
+
\frac{
\eta e^{2\eta B/(1-\gamma)}
}{
2(1-\gamma)
}
\sum_{a_i\in\mathcal A_i}
\pi_i^t(a_i\mid s)
\left(
\xi_i^t(s,a_i)
\right)^2,
\label{eq:projected-weighted-noise}
\end{align}
where the first inequality is by Cauchy--Schwarz, the second inequality uses
Lemma~\ref{lem:kl-projected-local-geometry} (part ii), and the last inequality
follows from Young's inequality. Substituting
\eqref{eq:projected-weighted-noise} into
\eqref{eq:projected-decomposition} yields
\begin{align}\label{eq:projected-Gi-local}
\left\langle
\bar A_i^{\pi^t}(s),
\Delta_i^t(s)
\right\rangle
&\ge
\frac{1-\gamma}{2\eta}
\mathcal D_i^t(s)
-
\frac{
\eta e^{\frac{2\eta B}{1-\gamma}}
}{
2(1-\gamma)
}
\sum_{a_i\in\mathcal A_i}
\pi_i^t(a_i\mid s)
\left(
\xi_i^t(s,a_i)
\right)^2-L_{\widehat A}
\left\|
\Delta_i^t(s)
\right\|_1.
\end{align}

Adding and subtracting $d_\mu^{\pi^t}$ in \eqref{eq:G_i-def},
substituting \eqref{eq:projected-Gi-local}, and taking conditional
expectation, we obtain
\begin{align}
\mathbb E_t[G_i^t]
&\ge
\frac{1}{2\eta}
\mathbb E_t
\bigg[
\sum_{s\in\mathcal S}
d_\mu^{\pi^t}(s)
\mathcal D_i^t(s)\bigg]-
\frac{L_{\widehat A}}{1-\gamma}
\mathbb E_t
\bigg[
\sum_{s\in\mathcal S}
d_\mu^{\pi^t}(s)
\left\|
\Delta_i^t(s)
\right\|_1\bigg]
\notag\\
&\quad-
\frac{
\eta e^{2\eta B/(1-\gamma)}
}{
2(1-\gamma)^2
}
\mathbb E_t
\bigg[
\sum_{s\in\mathcal S}\sum_{a_i}
d_\mu^{\pi^t}(s)
\pi_i^t(a_i\mid s)
\left(
\xi_i^t(s,a_i)
\right)^2\bigg]
-
\frac{\eta^2B^2L_d\left(1+L_d\right)}
{(1-\gamma)^3},
\label{eq:projected-Gi-final}
\end{align}
where the last term in \eqref{eq:projected-Gi-final} is obtained using Lemma~\ref{lemm:occupancy-sensitivity} (part i), Lemma \ref{lem:kl-projected-local-geometry} (part iv), and Lemma \ref{lem:span-inner-product}:
\begin{align}
&\frac{1}{1-\gamma}
\left|
\sum_{s\in\mathcal S}
\left(
d_\mu^{(\pi_i^{t+1},\pi_{-i}^t)}(s)
-
d_\mu^{\pi^t}(s)
\right)
\left\langle
\Delta_i^t(s),
\bar A_i^{\pi^t}(s)
\right\rangle
\right|
\notag\\
&\qquad\le
\frac{2}{1-\gamma}
\left(1+L_d\right)
\left\|
d_\mu^{(\pi_i^{t+1},\pi_{-i}^t)}
-
d_\mu^{\pi^t}
\right\|_{\mathrm{TV}}
\left\|
\Delta_i^t
\right\|_{1,\infty}
\notag\\
&\qquad\le
\frac{L_d(1+L_d)}{1-\gamma}
\left(
\sum_{s\in\mathcal S}
d_\mu^{\pi^t}(s)
\left\|
\Delta_i^t(s)
\right\|_1
\right)
\left\|
\Delta_i^t
\right\|_{1,\infty}
\notag\\
&\qquad\le
\frac{\eta^2B^2L_d(1+L_d)}
{(1-\gamma)^3}.
\nonumber
\end{align}

Finally, the second error term in
\eqref{eq:projected-Gi-final} satisfies
\begin{align}
\frac{L_{\widehat A}}{1-\gamma}
\sum_{s\in\mathcal S}
d_\mu^{\pi^t}(s)
\left\|
\Delta_i^t(s)
\right\|_1
&\le
\frac{L_{\widehat A}}{1-\gamma}
\sqrt{
\sum_{s\in\mathcal S}
d_\mu^{\pi^t}(s)
\mathcal D_i^t(s)
}\le
\frac{1}{4\eta}
\sum_{s\in\mathcal S}
d_\mu^{\pi^t}(s)
\mathcal D_i^t(s)
+
\frac{\eta L_{\widehat A}^2}
{(1-\gamma)^2},
\label{eq:projected-alpha-bias-absorption}
\end{align}
where the first inequality uses Pinsker's inequality $
\left\|
\Delta_i^t(s)
\right\|_1^2\leq \mathcal D_i^t(s)$ together with Jensen
(note that
$\sum_s d_\mu^{\pi^t}(s)=1$),
and the second inequality follows from Young's inequality.
Finally, taking conditional expectation in
\eqref{eq:projected-counterfactual-bound}, and substituting
\eqref{eq:projected-Gi-final} and
\eqref{eq:projected-alpha-bias-absorption}, completes the proof.
\end{proof}

Next, we proceed to derive an upper bound on the NE gap at a generic time $t$ in terms of the same quantities: the weighted conditional second moment $\mathcal{V}_i^t$ and the KL policy movement $\mathcal{D}_i^t(s)$.

\begin{lemma}
\label{lem:kl-projected-ne-gap}
Let $\mathcal{V}_i^t$ and $\mathcal{D}_i^t(s)$ be defined as \eqref{eq:random-count-weighted-variance} and \eqref{eq:KL_symm}, respectively. For the KL-projected NPG,
\begin{align}\label{eq:NE-gap-bound}
\operatorname{Gap}(\pi^t)
&\le\bigg[
\frac{\left(1+L_d\right)}{2(1-\gamma)}
+
\frac{1}{\eta}\sqrt{\frac{2A_{\max}}{\zeta}}
\bigg]
\left\{
\sum_{i=1}^n
\mathbb E_t\bigg[
\sum_s d_\mu^{\pi^t}(s)\mathcal D_i^t(s)
\bigg]
\right\}^{1/2}
\notag\\
&\qquad+
\frac{2L_{\widehat A}}{1-\gamma}
+
\frac{1}{1-\gamma}\sqrt{\frac{2A_{\max}}{\zeta}}\sqrt{\max_{i\in[n]}\mathcal V_i^t}
+
\frac{\zeta+2L_d}{1-\gamma}
\left(1+L_d\right).
\end{align}
\end{lemma}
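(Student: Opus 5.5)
The plan is to fix a player $i$ and a deviation $\pi_i'$, bound $V_i^{\pi^t}(\mu)-V_i^{(\pi_i',\pi^t_{-i})}(\mu)$ by the right-hand side of \eqref{eq:NE-gap-bound}, and then take $\max_i\sup_{\pi_i'}$. Two facts make this legitimate.
\begin{itemize}
\item $\operatorname{Gap}(\pi^t)$ is $\mathcal F_t$-measurable, while $\pi^{t+1}$ and $\xi^t$ are random. So I will prove a pathwise inequality and then apply $\mathbb E_t$.
\item Since $\mathbb E_t[\sum_s d_\mu^{\pi^t}(s)\mathcal D_i^t(s)]\le\sum_j\mathbb E_t[\cdot]$ and $\mathcal V_i^t\le\max_j\mathcal V_j^t$, the per-player bound gives the stated form.
\end{itemize}
The target is a per-state inequality integrated against $d_\mu^{\pi^t}$, which is where the mismatch coefficient is avoided.

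\emph{Step 1: change of measure.} By Lemma~\ref{lemm:performance-difference} (with $\pi_i'$ in the occupancy) the gap equals $\frac{1}{1-\gamma}\sum_s d_\mu^{(\pi_i',\pi_{-i}^t)}(s)\langle \pi_i^t(s)-\pi_i'(s),\bar A_i^{\pi^t}(s)\rangle$. I replace $d_\mu^{(\pi_i',\pi^t_{-i})}$ by $d_\mu^{\pi^t}$. Lemma~\ref{lemm:occupancy-sensitivity}(i) gives total-variation distance at most $L_d$, and Lemma~\ref{lem:span-inner-product} gives each inner product at most $1+L_d$ in absolute value. This costs about $\frac{2L_d(1+L_d)}{1-\gamma}$.

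\emph{Step 2: restricted comparator.} I replace $\pi_i'(s)$ by $p(s):=(1-\zeta)\pi_i'(s)+\zeta\,\mathrm{Unif}(\mathcal A_i)\in\Delta_{i,\zeta}$. By the span bound this costs at most $\frac{\zeta(1+L_d)}{1-\gamma}$, which together with Step 1 gives the last term of \eqref{eq:NE-gap-bound}.

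\emph{Step 3: per-state decomposition.} At each state $s$ I write
\[
\langle \pi_i^t-p,\bar A_i^{\pi^t}\rangle=\langle\Delta_i^t,\bar A_i^{\pi^t}\rangle+\langle \pi_i^{t+1}-p,r_i^t\rangle+\langle\pi_i^{t+1}-p,\bar A_i^{\pi^t}-r_i^t+\xi_i^t\rangle-\langle \pi_i^{t+1}-p,\xi_i^t\rangle .
\]
The four terms are handled as follows.
\begin{itemize}
\item \textbf{Movement term.} It is at most $(1+L_d)\|\Delta_i^t(s)\|_{\mathrm{TV}}\le\frac{1+L_d}{2}\sqrt{\mathcal D_i^t(s)}$ by Lemma~\ref{lem:span-inner-product} and Pinsker.
\item \textbf{Bias term.} It is at most $2L_{\widehat A}$ by \eqref{eq:global-table-bias}.
\item \textbf{Noise term.} I use weighted Cauchy--Schwarz: $|\langle\pi_i^{t+1}-p,\xi_i^t\rangle|\le(\sum_a\pi_i^t\xi^2)^{1/2}(\sum_a(\pi_i^{t+1}-p)^2/\pi_i^t)^{1/2}$. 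The second factor is at most $\sqrt{2A_{\max}/\zeta}$, using $\pi_i^t\ge\zeta/|\mathcal A_i|$ and $\sum_a(\pi_i^{t+1}-p)^2\le 2$.
\item \textbf{Mirror-step term.} The first-order optimality condition of \eqref{eq:algorithm-kl-projected-npg} over the convex set $\Delta_{i,\zeta}$ gives $\langle \pi_i^{t+1}-p,r_i^t\rangle\le\frac{1-\gamma}{\eta}\langle \log(\pi_i^{t+1}/\pi_i^t),p-\pi_i^{t+1}\rangle$.
\end{itemize}

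\emph{Step 4: the log-ratio term, the main obstacle.} I must bound $\langle \log(\pi_i^{t+1}/\pi_i^t),p-\pi_i^{t+1}\rangle$ by $\sqrt{2A_{\max}/\zeta}\sqrt{\mathcal D_i^t(s)}$. My first attempt is Cauchy--Schwarz with weights $\pi_i^t$. One factor is $(\sum_a (p-\pi_i^{t+1})^2/\pi_i^t)^{1/2}\le\sqrt{2A_{\max}/\zeta}$, as in the noise term. The other is $(\sum_a\pi_i^t\log^2(\pi_i^{t+1}/\pi_i^t))^{1/2}$, which should be compared with $\mathcal D_i^t(s)=\sum_a(\pi_i^{t+1}-\pi_i^t)\log(\pi_i^{t+1}/\pi_i^t)$.

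The elementary inequality I would use is $\pi^t\log^2(x)\le (\pi^{t+1}-\pi^t)\log x$ with $x=\pi^{t+1}/\pi^t$, which holds when $x\ge1$ because $\log x\le x-1$. The case $x<1$ is where I expect genuine difficulty. There I would instead weight by $\max(\pi_i^t,\pi_i^{t+1})$, or by the logarithmic mean, so that the needed inequality $\log^2 x\le (x-1)\log x/\min(1,x)$ holds. I would then use the lower bound on $\pi_i^{t+1}$ from $\Delta_{i,\zeta}$ to recover the $\sqrt{A_{\max}/\zeta}$ factor; alternatively, Lemma~\ref{lem:kl-projected-local-geometry}(i) controls the ratio.

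\emph{Step 5: assembly.} I integrate against $d_\mu^{\pi^t}$ and divide by $1-\gamma$. Jensen on the square roots (using $\sum_s d_\mu^{\pi^t}=1$), followed by $\mathbb E_t$ and Jensen again, produces $\{\mathbb E_t[\sum_s d_\mu^{\pi^t}\mathcal D_i^t]\}^{1/2}$ and $\sqrt{\mathcal V_i^t}$. The coefficient $\frac{1+L_d}{2(1-\gamma)}$ comes from the movement term and $\frac1\eta\sqrt{2A_{\max}/\zeta}$ from the mirror-step term.
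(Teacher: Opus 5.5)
Your proposal is correct and follows the paper's proof step for step. The shared steps are: the change of occupancy to $d_\mu^{\pi^t}$ at cost $2L_d(1+L_d)/(1-\gamma)$; the truncated comparator $(1-\zeta)\pi_i'+\frac{\zeta}{|\mathcal A_i|}\mathbf 1$; the four-term split, handled by Pinsker, \eqref{eq:global-table-bias}, weighted Cauchy--Schwarz with the $\zeta$-floor, and the KL first-order condition; and Jensen to assemble.

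The one place to tighten is Step~4, where only some of your options work.
\begin{itemize}
\item Weighting by $\max\{\pi_i^t,\pi_i^{t+1}\}$ fails in both regimes. For $x>1$ it would require $x\log x\le x-1$, and for $x<1$ it reduces to your failed first attempt.
\item Lemma~\ref{lem:kl-projected-local-geometry}(i) bounds $\pi_i^{t+1}/\pi_i^t$ from above, which is the wrong direction for $x<1$. A ratio-based fix would also leave a factor $e^{\eta B/(1-\gamma)}$ that is absent from \eqref{eq:NE-gap-bound}.
\end{itemize}

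Your displayed inequality is exactly $\min\{\pi_i^t,\pi_i^{t+1}\}\log^2 x\le(\pi_i^{t+1}-\pi_i^t)\log x$. So weight by $w_{a_i}:=\min\{\pi_i^t(a_i\mid s),\pi_i^{t+1}(a_i\mid s)\}$; the logarithmic mean also works and gives equality with $\mathcal D_i^t(s)$. With $x_{a_i}:=\pi_i^{t+1}(a_i\mid s)/\pi_i^t(a_i\mid s)$, you get $\sum_{a_i}w_{a_i}\log^2 x_{a_i}\le\mathcal D_i^t(s)$. Since $w_{a_i}\ge\zeta/|\mathcal A_i|$, you also get $\sum_{a_i}\bigl(p(a_i)-\pi_i^{t+1}(a_i\mid s)\bigr)^2/w_{a_i}\le 2|\mathcal A_i|/\zeta$. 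Together these give exactly $\sqrt{2A_{\max}/\zeta}\,\sqrt{\mathcal D_i^t(s)}$.

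Spend the floor only once, on this factor. If you kept the $\pi_i^t$-weights and also used the floor on $\pi_i^{t+1}$ to bound $\sum_{a_i}\pi_i^t\log^2 x_{a_i}$, you would end up with $A_{\max}/\zeta$ instead of its square root.

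The paper does the unweighted version of the same thing. It uses $\|q-\pi_i^{t+1}\|_2\le\sqrt2$ together with $\|\log(\pi_i^{t+1}/\pi_i^t)\|_2^2\le\frac{|\mathcal A_i|}{\zeta}\mathcal D_i^t(s)$, which follows from $\log^2\frac{x}{y}\le\frac{x-y}{\min\{x,y\}}\log\frac{x}{y}$.
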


\begin{proof}
Fix a player $i$ and let $\pi_i^{t,*}$ be its best response to $\pi_{-i}^t$. By the performance-difference lemma 
\begin{align}\label{eq:ne-gap-first-occupancy-change}
\!\!\!\!\!\!\! V_i(\pi^t)&-V_i(\pi_i^{t,*},\pi_{-i}^t)
=
\frac{1}{1-\gamma}
\sum_{s\in\mathcal S}
d_\mu^{(\pi_i^{t,*},\pi_{-i}^t)}(s)
\left\langle
\pi_i^t(\cdot\mid s)-\pi_i^{t,*}(\cdot\mid s),
\bar A_i^{\pi^t}(s)
\right\rangle
\cr
&=
\frac{1}{1-\gamma}
\mathbb E_t\bigg[
\sum_{s\in\mathcal S}
d_\mu^{(\pi_i^{t,*},\pi_{-i}^t)}(s)
\left\langle
\pi_i^t(\cdot\mid s)-\pi_i^{t,*}(\cdot\mid s),
\bar A_i^{\pi^t}(s)
\right\rangle\bigg]
\cr
&\le
\frac{1}{1-\gamma}
\mathbb E_t\bigg[
\sum_{s\in\mathcal S}
d_\mu^{\pi^t}(s)
\left\langle
\pi_i^t(\cdot\mid s)-\pi_i^{t,*}(\cdot\mid s),
\bar A_i^{\pi^t}(s)
\right\rangle\bigg]+
\frac{2L_d(1+L_d)}{1-\gamma},
\end{align}
where the second equality follows because all the variables $\pi^t, \pi^{t,*}$, and $\bar A_i^{\pi^t}(s)$ are $\mathcal F_t$-measurable. The last inequality is obtained by changing the occupancy measure to $d_\mu^{\pi^t}$ and bounding its discrepancy using part (i) of
Lemma~\ref{lemm:occupancy-sensitivity} by $\big\|
d_\mu^{(\pi_i^{t,*},\pi_{-i}^t)}-d_\mu^{\pi^t}\big\|_1
\le 2L_d$, while using Lemma~\ref{lem:span-inner-product} and
$\|\pi_i^t(\cdot\mid s)-\pi_i^{t,*}(\cdot\mid s)\|_{\mathrm{TV}}\le 1$.

The first-order optimality condition for the KL-projected NPG gives,
for every $q(\cdot\mid s)\in\Delta_\zeta(\mathcal A_i)$,
\begin{align}\label{eq:D-A-A}
\left\langle
r_i^t(s),
\pi_i^{t+1}(\cdot\mid s)-q(\cdot\mid s)
\right\rangle
&\le
\frac{1-\gamma}{\eta}
\left\langle
\log\frac{\pi_i^{t+1}(\cdot\mid s)}
{\pi_i^t(\cdot\mid s)},
q(\cdot\mid s)-\pi_i^{t+1}(\cdot\mid s)
\right\rangle \cr
&\le
\frac{1-\gamma}{\eta}
\left\|
\log\frac{\pi_i^{t+1}(\cdot\mid s)}
{\pi_i^t(\cdot\mid s)}
\right\|_2
\left\|
q(\cdot\mid s)-\pi_i^{t+1}(\cdot\mid s)
\right\|_2 \cr
&\le
\frac{1-\gamma}{\eta}
\sqrt{2}
\left\|
\log\frac{\pi_i^{t+1}(\cdot\mid s)}
{\pi_i^t(\cdot\mid s)}
\right\|_2 \cr
&\le
\frac{1-\gamma}{\eta}
\sqrt{\frac{2|\mathcal A_i|}{\zeta}}
\sqrt{\mathcal D_i^t(s)},
\end{align}
where the second inequality uses Cauchy--Schwarz, while the third uses the
fact that both $q(\cdot\mid s)$ and $\pi_i^{t+1}(\cdot\mid s)$ are
probability vectors. Finally, since
$\pi_i^t(\cdot\mid s),\pi_i^{t+1}(\cdot\mid s)
\in\Delta_\zeta(\mathcal A_i)$, every coordinate is at least
$\zeta/|\mathcal A_i|$, and the coordinatewise inequality $\log^2\frac{x}{y}\le\frac{x-y}{\min\{x,y\}}
\log\frac{x}{y}$ implies
\begin{align}\nonumber
\left\|
\log
\frac{\pi_i^{t+1}(\cdot\mid s)}
{\pi_i^t(\cdot\mid s)}
\right\|_2^2
&\le
\frac{|\mathcal A_i|}{\zeta}
\sum_{a_i\in\mathcal A_i}
\left(
\pi_i^{t+1}(a_i\mid s)-\pi_i^t(a_i\mid s)
\right)
\log
\frac{\pi_i^{t+1}(a_i\mid s)}
{\pi_i^t(a_i\mid s)}=
\frac{|\mathcal A_i|}{\zeta}
\mathcal D_i^t(s).
\end{align}

Next, consider the feasible truncated policy $$q_i^{t,*}(\cdot\mid s)
=
(1-\zeta)\pi_i^{t,*}(\cdot\mid s)
+
\frac{\zeta}{|\mathcal A_i|}\mathbf 1,$$
and let
$\Delta_i^t(s)=
\pi_i^t(\cdot\mid s)-\pi_i^{t+1}(\cdot\mid s)$. Then, we can bound the first term in \eqref{eq:ne-gap-first-occupancy-change} as
\begin{align}
\mathbb E_t\bigg[&
\sum_{s\in\mathcal S}
d_\mu^{\pi^t}(s)
\left\langle
\pi_i^t(\cdot\mid s)-\pi_i^{t,*}(\cdot\mid s),
\bar A_i^{\pi^t}(s)
\right\rangle\bigg]\cr 
&=
\mathbb E_t\bigg[
\sum_{s\in\mathcal S}
d_\mu^{\pi^t}(s)\left\langle
\Delta_i^t(s),
\bar A_i^{\pi^t}(s)
\right\rangle\bigg]+\mathbb E_t\bigg[
\sum_{s\in\mathcal S}
d_\mu^{\pi^t}(s)
\left\langle
q_i^{t,*}(\cdot\mid s)-\pi_i^{t,*}(\cdot\mid s),
\bar A_i^{\pi^t}(s)
\right\rangle
\bigg]
\cr
&+\mathbb E_t\bigg[
\sum_{s\in\mathcal S}d_\mu^{\pi^t}(s)\left\langle
\pi_i^{t+1}(\cdot\mid s)-q_i^{t,*}(\cdot\mid s),
\bar A_i^{\pi^t}(s)
\right\rangle\bigg]\cr
&\le
\frac12
\left(1+L_d\right)
\mathbb E_t\bigg[
\sum_{s\in\mathcal S}
d_\mu^{\pi^t}(s)
\left\|\Delta_i^t(s)\right\|_1
\bigg]+
\zeta
\left(1+L_d\right)
\cr
&+
\mathbb E_t\bigg[
\sum_{s\in\mathcal S}
d_\mu^{\pi^t}(s)
\left\langle
\pi_i^{t+1}(\cdot\mid s)-q_i^{t,*}(\cdot\mid s),
r_i^t(s)-\xi_i^t(s)
\right\rangle
\bigg]+2L_{\widehat A}
\cr
&=
\frac12
\left(1+L_d\right)
\mathbb E_t\bigg[
\sum_{s\in\mathcal S}
d_\mu^{\pi^t}(s)
\left\|\Delta_i^t(s)\right\|_1\bigg]+2L_{\widehat A}
+
\zeta
\left(1+L_d\right)
\cr
&+
\mathbb E_t\bigg[
\sum_{s\in\mathcal S}
d_\mu^{\pi^t}(s)
\left\langle
\pi_i^{t+1}(\cdot\mid s)-q_i^{t,*}(\cdot\mid s),
r_i^t(s)
\right\rangle
\bigg]\cr 
&+
\mathbb E_t\bigg[
\sum_{s\in\mathcal S}
d_\mu^{\pi^t}(s)
\left\langle
q_i^{t,*}(\cdot\mid s)-\pi_i^{t+1}(\cdot\mid s),
\xi_i^t(s)
\right\rangle\bigg]
\cr
&\le
\frac12
\left(1+L_d\right)
\mathbb E_t\bigg[
\sum_{s\in\mathcal S}
d_\mu^{\pi^t}(s)
\left\|\Delta_i^t(s)\right\|_1
\bigg]+2L_{\widehat A}
+
\zeta
\left(1+L_d\right)
\cr
&+\frac{1-\gamma}{\eta}
\sqrt{\frac{2|\mathcal A_i|}{\zeta}}\,
\mathbb E_t\bigg[
\sum_{s\in\mathcal S}
d_\mu^{\pi^t}(s)
\sqrt{\mathcal D_i^t(s)}\bigg]\cr 
&+
\mathbb E_t\bigg[
\sum_{s\in\mathcal S}
d_\mu^{\pi^t}(s)
\left\langle
q_i^{t,*}(\cdot\mid s)-\pi_i^{t+1}(\cdot\mid s),
\xi_i^t(s)
\right\rangle\bigg].
\label{eq:ne-gap-key}
\end{align}
Here, the first inequality uses Lemma~\ref{lem:span-inner-product}, the truncation error bound
$\|q_i^{t,*}(\cdot\mid s)-\pi_i^{t,*}(\cdot\mid s)\|_1\leq 2\zeta$,
the a.s.\ conditional bias bound \eqref{eq:global-table-bias}, and
$\|\pi_i^{t+1}(\cdot\mid s)-q_i^{t,*}(\cdot\mid s)\|_1\le 2$. The next equality follows by separating $r_i^t(s)-\xi_i^t(s)$ into its two terms, and the last inequality follows from \eqref{eq:D-A-A} with
$q=q_i^{t,*}$. Now, using Pinsker's inequality
$\|\Delta_i^t(s)\|_1^2\le \mathcal D_i^t(s)$ in \eqref{eq:ne-gap-key} gives
\begin{align}
\mathbb E_t\bigg[
&\sum_{s\in\mathcal S}
d_\mu^{\pi^t}(s)
\left\langle
\pi_i^t(\cdot\mid s)-\pi_i^{t,*}(\cdot\mid s),
\bar A_i^{\pi^t}(s)
\right\rangle\bigg]
\nonumber\\
&\le
\bigg(
\frac{1+L_d}{2}+
\frac{1-\gamma}{\eta}
\sqrt{\frac{2|\mathcal A_i|}{\zeta}}
\bigg)
\mathbb E_t\bigg[
\sum_{s\in\mathcal S}
d_\mu^{\pi^t}(s)
\sqrt{\mathcal D_i^t(s)}\bigg]+
2L_{\widehat A}
+
\zeta
\left(1+L_d\right)
\nonumber\\
&\qquad+
\mathbb E_t\bigg[
\sum_{s\in\mathcal S}
d_\mu^{\pi^t}(s)
\left\langle
q_i^{t,*}(\cdot\mid s)-\pi_i^{t+1}(\cdot\mid s),
\xi_i^t(s)
\right\rangle\bigg]
\nonumber\\
&\le
\bigg(
\frac{1+L_d}{2}
+
\frac{1-\gamma}{\eta}
\sqrt{\frac{2|\mathcal A_i|}{\zeta}}
\bigg)
\bigg\{
\mathbb E_t\bigg[
\sum_s
d_\mu^{\pi^t}(s)
\mathcal D_i^t(s)\bigg]\bigg\}^{1/2}\!\!\!\!+
2L_{\widehat A}
+
\zeta
\left(1+L_d\right)
\nonumber\\
&\qquad+
\mathbb E_t\bigg[
\sum_{s\in\mathcal S}
d_\mu^{\pi^t}(s)
\left\langle
q_i^{t,*}(\cdot\mid s)-\pi_i^{t+1}(\cdot\mid s),
\xi_i^t(s)
\right\rangle\bigg].
\label{eq:ne-gap-random-occupancy-CS}
\end{align}
where the second inequality holds by Jensen's inequality for the concave function $x\mapsto\sqrt{x}$,
applied first over the state probability distribution $d_\mu^{\pi^t}$ and then to the conditional
expectation. Now, it remains to control the stochastic noise term in
\eqref{eq:ne-gap-random-occupancy-CS}. Applying weighted Cauchy--Schwarz, we get
\begin{align}
&\mathbb E_t\bigg[
\sum_s
d_\mu^{\pi^t}(s)
\left\langle
q_i^{t,*}(\cdot\mid s)-\pi_i^{t+1}(\cdot\mid s),
\xi_i^t(s)
\right\rangle\bigg]
\cr
&\le
\mathbb E_t\bigg[
\bigg(
\!\sum_s d_\mu^{\pi^t}(s)
\sum_{a_i}
\pi_i^t(a_i\mid s)
\bigl(\xi_i^t(s,a_i)\bigr)^2
\bigg)^{\frac{1}{2}}
\!\bigg(\!
\sum_s d_\mu^{\pi^t}(s)
\sum_{a_i}
\frac{
\bigl(q_i^{t,*}(a_i\mid s)\!-\!\pi_i^{t+1}(a_i\mid s)\bigr)^2
}{
\pi_i^t(a_i\mid s)
}
\bigg)^{1/2}
\bigg]
\cr
&\le
\sqrt{\frac{2|\mathcal A_i|}{\zeta}}\,
\mathbb E_t\bigg[
\bigg(
\sum_s d_\mu^{\pi^t}(s)
\sum_{a_i}
\pi_i^t(a_i\mid s)
\bigl(\xi_i^t(s,a_i)\bigr)^2
\bigg)^{1/2}
\bigg]\le
\sqrt{\frac{2|\mathcal A_i|}{\zeta}}\,
\sqrt{\mathcal V_i^t},
\label{eq:ne-gap-noise-final}
\end{align}
where for the first inequality, we write
\[
\begin{aligned}
&d_\mu^{\pi^t}(s)\xi_i^t(s,a_i)
\bigl(q_i^{t,*}(a_i\mid s)-\pi_i^{t+1}(a_i\mid s)\bigr)
\\
&\qquad=
\Big(
\sqrt{d_\mu^{\pi^t}(s)\pi_i^t(a_i\mid s)}\,
\xi_i^t(s,a_i)
\Big)
\Bigg(
\sqrt{d_\mu^{\pi^t}(s)}
\ \frac{
q_i^{t,*}(a_i\mid s)-\pi_i^{t+1}(a_i\mid s)
}{
\sqrt{\pi_i^t(a_i\mid s)}
}
\Bigg),
\end{aligned}
\]
and apply Cauchy--Schwarz over $(s,a_i)$. Moreover, the second inequality in \eqref{eq:ne-gap-noise-final} uses the exploration bound $\pi_i^t(a_i\mid s)\ge\zeta/|\mathcal A_i|$, $\sum_s d_\mu^{\pi^t}(s)=1$, and the fact that $\sum_{a_i}(p(a_i)-q(a_i))^2\le2$ for any two probability distributions $p$ and $q$. Finally, the last inequality in \eqref{eq:ne-gap-noise-final} follows from conditional Jensen's inequality
and the definition of $\mathcal V_i^t$.

By combining \eqref{eq:ne-gap-first-occupancy-change},
\eqref{eq:ne-gap-random-occupancy-CS}, and
\eqref{eq:ne-gap-noise-final}, and taking the maximum over
$i=1,\ldots,n$, we obtain
\begin{align}\nonumber
\operatorname{Gap}(\pi^t)
&\le
\max_{i\in[n]}
\left\{
\bigg[
\frac{1+L_d}{2(1-\gamma)}+
\frac{1}{\eta}\sqrt{\frac{2|\mathcal A_i|}{\zeta}}
\bigg]
\bigg\{
\mathbb E_t \bigg[
\sum_s d_\mu^{\pi^t}(s)\mathcal D_i^t(s)
\bigg]
\bigg\}^{1/2}
\right\}\cr
&\qquad\qquad+
\frac{2L_{\widehat A}}{1-\gamma}
+
\frac{1}{1-\gamma}\sqrt{\frac{2A_{\max}}{\zeta}}\sqrt{\max_{i\in[n]}\mathcal V_i^t}
+
\frac{\zeta+2L_d}{1-\gamma}
\left(1+L_d\right).
\end{align}
This relation together with $\mathbb E_t[
\sum_s d_\mu^{\pi^t}(s)\mathcal D_i^t(s)]
\!\le\!
\sum_{j=1}^n
\mathbb E_t[
\sum_s d_\mu^{\pi^t}(s)\mathcal D_j^t(s)]$ and $A_{\max}=\max |\mathcal{A}_i|$, 
proves the desired bound \eqref{eq:NE-gap-bound}.
\end{proof}

\begin{theorem}\label{thm:high-prob-ne-regret}
Fix $T\ge1$ and $\rho\in(0,1)$. Assume
$T\ge\log(4/\rho)$, $\alpha+L_{\widehat{A}}\!>\!0$, and choose
\begin{align}
\eta
&=
\frac{1-\gamma}{8}
\max\bigg\{
T^{-1/2},\,
\frac{\sqrt{(1-\gamma)\alpha}}
{\sqrt{10n}(1+L_d)}
\bigg\},
\label{eq:parameters}\\
\zeta
&=
\min\bigg\{
\frac12,\,
2\max\bigg\{
\frac{n^{1/3}A_{\max}^{1/3}L_{\widehat A}^{2/3}}
{(1+L_d)^{2/3}},\,
\frac{\sqrt n\,A_{\max}^{1/3}(1-\gamma)^{1/6}\alpha^{1/6}}
{(1+L_d)^{1/3}}
\bigg\}
\bigg\}.
\label{eq:parameters-zeta}
\end{align}
Suppose that each player follows Algorithm
\ref{alg:episodic-kl-projected-npg} with $L_t\ge
\frac{A_{\max}}{\eta}$ and $B=4$. Then, with probability at least $1-\rho$, the NE regret is at most 
\begin{align}\label{eq:final-NE-regret-episodic}
\frac1T
\sum_{t=0}^{T-1}
\operatorname{Gap}(\pi^t)\lesssim{} \mathfrak C
\left(\!
\frac{\log(4/\rho)}{T}
\!\right)^{\!1/4}
\!\!\!\!\!+\frac{\big[nA_{\max}(1\!+\!L_d)\big]^{1/3}}{1-\gamma}
L_{\widehat A}^{2/3}\!+\!
\frac{\sqrt n\,A_{\max}^{1/3}(1\!+\!L_d)^{2/3}}
{1-\gamma}
\alpha^{1/6}
\!+\!
\frac{L_d(1\!+\!L_d)}{1-\gamma}.
\end{align}
Here, $\lesssim$ denotes an inequality up to a universal numerical constant,
$L_d=\frac{\gamma\delta_P}{1-\gamma}$, and
\begin{equation}\label{eq:C-stat}
\mathfrak C
:=\frac{1}{1-\gamma}\bigg(
1+L_d
+
\frac{n^{5/6}A_{\max}^{1/3}(1+L_d)^{4/3}}
{
L_{\widehat A}^{1/3}+
\big(n(1-\gamma)(1+L_d)^2\alpha\big)^{1/12}}\bigg).
\end{equation}
\end{theorem}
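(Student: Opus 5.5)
We combine Lemma~\ref{lem:kl-projected-potential-improvement} and Lemma~\ref{lem:kl-projected-ne-gap} through a telescoping argument on the potential, and then pass from conditional expectations to a high-probability statement.

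\textbf{Step 1 (potential telescoping).} Set $X_t:=\sum_{i}\mathbb E_t[\sum_s d_\mu^{\pi^t}(s)\mathcal D_i^t(s)]$. With $B=4$ and $\eta\le(1-\gamma)/8$, the factor $e^{2\eta B/(1-\gamma)}\le e$ is a constant. Lemma~\ref{lem:random-count-weighted-variance} together with $L_t\ge A_{\max}/\eta$ gives $\mathcal V_i^t\le\eta$. Lemma~\ref{lem:kl-projected-potential-improvement} then yields
\[
\tfrac{1}{4\eta}X_t\le \mathbb E_t[\Psi(\pi^t)-\Psi(\pi^{t+1})]+E,
\]
where
\[
E\lesssim \frac{n\eta^2+n\eta L_{\widehat A}^2+L_Vn^2\eta^2+n\eta^2L_d(1+L_d)/(1-\gamma)}{(1-\gamma)^2}+n\alpha .
\]

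\textbf{Step 2 (martingale concentration).} Write $\Psi(\pi^t)-\Psi(\pi^{t+1})=\mathbb E_t[\cdot]+M_t$. The potential is defined only up to a constant, and its unilateral differences are within $\alpha$ of value differences. Hence $|\Psi(\pi^t)-\Psi(\pi^{t+1})|\le n(\frac{1}{1-\gamma}+\alpha)$ after telescoping through the players. This bounds the increments $M_t$, and the total range of $\Psi$ along the trajectory is $O(n/(1-\gamma))$. Azuma--Hoeffding on $\sum_t M_t$ then gives, with probability $1-\rho/2$,
\[
\sum_{t<T}X_t\le 4\eta\Big[\tfrac{cn}{1-\gamma}\big(1+\sqrt{T\log(4/\rho)}\big)+TE\Big].
\]
The main obstacle is here: the potential range and the Azuma constant must not pick up a factor of $n$ that breaks the claimed $n^{5/6}$ scaling. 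If needed, I would instead apply Freedman's inequality, using $\mathcal D_i^t=O(\eta^2)$ to control the variance. The hypothesis $T\ge\log(4/\rho)$ is what lets the $1$ be absorbed into the $\sqrt{T\log}$ term.

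\textbf{Step 3 (average gap).} Lemma~\ref{lem:kl-projected-ne-gap} bounds each $\operatorname{Gap}(\pi^t)$ almost surely in terms of $X_t$ and $\mathcal V_i^t$, because its right-hand side is $\mathcal F_t$-measurable. So no further concentration is needed: I average it over $t$ and apply Cauchy--Schwarz in the form $\frac1T\sum_t\sqrt{X_t}\le(\frac1T\sum_t X_t)^{1/2}$. The coefficient of $\sqrt{X_t}$ is dominated by $\frac1\eta\sqrt{A_{\max}/\zeta}$, so this term is at most
\[
\sqrt{\tfrac{A_{\max}}{\zeta}}\Big(\tfrac{n\sqrt{\log(4/\rho)}}{\eta(1-\gamma)\sqrt T}+\tfrac{E}{\eta}\Big)^{1/2}.
\]
The variance term contributes $\frac{1}{1-\gamma}\sqrt{A_{\max}\eta/\zeta}$, and the remaining terms are $\frac{L_{\widehat A}}{1-\gamma}$ and $\frac{(\zeta+L_d)(1+L_d)}{1-\gamma}$.

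\textbf{Step 4 (parameter tuning).} I substitute the choices of $\eta$ and $\zeta$ from \eqref{eq:parameters}--\eqref{eq:parameters-zeta}.
\begin{itemize}
\item Taking $\eta\asymp(1-\gamma)T^{-1/2}$ makes both $n\sqrt{\log}/(\eta\sqrt T)$ and the $\eta$-quadratic part of $E/\eta$ of order $\sqrt{\log/T}$ up to constants. Its square root then gives the $T^{-1/4}$ rate with prefactor $\sqrt{A_{\max}/\zeta}$.
\item The second branch of $\eta$ balances $n\alpha/\eta$ against the $\eta$-terms. This gives a contribution $\propto(n\alpha)^{1/4}$ times constants, multiplied by $\zeta^{-1/2}$.
\item The $L_{\widehat A}$ contribution $\sqrt{A_{\max}nL_{\widehat A}^2/\zeta}$ is balanced against $\zeta(1+L_d)$. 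This is exactly what the first branch of $\zeta$ achieves, producing the $L_{\widehat A}^{2/3}$ term. The second branch of $\zeta$ similarly produces the $\alpha^{1/6}$ term.
\item Finally, $\zeta^{-1/2}$ is lower-bounded using the $\zeta$ formula, which produces the denominator in $\mathfrak C$. If instead $\zeta=\frac12$, the bound holds trivially, since $\operatorname{Gap}\le 1/(1-\gamma)$ is then dominated by the floor terms.
\end{itemize}
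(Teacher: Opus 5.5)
Your architecture (Step~1 with $\mathcal V_i^t\le\eta$, telescoping plus martingale concentration, averaging Lemma~\ref{lem:kl-projected-ne-gap} with Jensen, then tuning) matches the paper's. However, Step~2 has a genuine gap that destroys the $T^{-1/4}$ rate.

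You bound the martingale increments only by the crude range $n\bigl(\frac{1}{1-\gamma}+\alpha\bigr)$. The Azuma deviation is then of order $\frac{n}{1-\gamma}\sqrt{T\log(4/\rho)}$. After multiplying by $4\eta/T$, taking the square root, and multiplying by $\frac1\eta\sqrt{A_{\max}/\zeta}$, it yields exactly the term $\sqrt{A_{\max}/\zeta}\,\bigl(\frac{n\sqrt{\log(4/\rho)}}{\eta(1-\gamma)\sqrt T}\bigr)^{1/2}$ from your Step~3. With the prescribed step size this term does not decay:
\begin{itemize}
\item Whenever the first branch of \eqref{eq:parameters} is active (for all $T\le\frac{10n(1+L_d)^2}{(1-\gamma)\alpha}$, a range unbounded as $\alpha\to0$), $\eta\sqrt T=\frac{1-\gamma}{8}$. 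The term is then a $T$-independent constant of order $\frac{\sqrt{nA_{\max}/\zeta}}{1-\gamma}\log^{1/4}(4/\rho)$, which the right-hand side of \eqref{eq:final-NE-regret-episodic} does not dominate when $\alpha$, $L_{\widehat A}$, $\delta_P$ are small.
\item In the second branch it decays like $T^{-1/4}$, but with a factor $\alpha^{-1/4}$ that $\mathfrak C$ does not contain.
\end{itemize}
Your assertion in Step~4 that $n\sqrt{\log(4/\rho)}/(\eta\sqrt T)$ is of order $\sqrt{\log(4/\rho)/T}$ is an arithmetic slip; it is of order $n\sqrt{\log(4/\rho)}/(1-\gamma)$. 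The real obstacle is a missing factor of $\eta$ in the increment bound, not the factor $n$ you were worried about. The paper's Azuma constant also carries $n$, harmlessly.

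The missing idea is that the increment of $\Psi$ is itself $O(\eta)$ almost surely. Since $r_i^t$ is clipped, Lemma~\ref{lem:kl-projected-local-geometry}(iv) gives the deterministic bound $\|\pi_i^{t+1}-\pi_i^t\|_{1,\infty}\le\frac{\eta B}{1-\gamma}$. Pass through the intermediate profiles $\widetilde\pi^{t,i}$ and combine the $\alpha$-potential property with the performance-difference lemma and the span bound of Lemma~\ref{lem:span-inner-product}. This gives $|\Psi(\pi^{t+1})-\Psi(\pi^t)|\le\frac{n\eta B(1+L_d)}{2(1-\gamma)^2}+n\alpha$.

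Using this as the Azuma constant, the fluctuation contributes $\frac{n\eta^2(1+L_d)}{(1-\gamma)^2}\sqrt{\log(4/\rho)/T}+n\alpha\eta\sqrt{\log(4/\rho)/T}$ to $\frac1T\sum_t X_t$. The first piece becomes $\sqrt{A_{\max}/\zeta}\,\frac{\sqrt{n(1+L_d)}}{1-\gamma}\bigl(\log(4/\rho)/T\bigr)^{1/4}$ after the $\frac1\eta\sqrt{\cdot}$ step, independently of $\eta$. The second is absorbed into the existing $n\alpha\eta$ term via $T\ge\log(4/\rho)$.

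The crude $O(n/(1-\gamma))$ range bound is then needed only for the telescoped endpoint $\Psi(\pi^0)-\Psi(\pi^T)$, which is divided by $T$. Freedman's inequality would not rescue your version without this same almost-sure smallness. With this replacement, the rest of your outline, including the $\zeta$ balancing and the capped case, goes through as in the paper.
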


\begin{proof}
By the choice of $L_t$ and Lemma~\ref{lem:random-count-weighted-variance}, simultaneously for all $t$,
\begin{equation}
\max_{i\in[n]}\mathcal V_i^t\le\eta.
\label{eq:variance-bound-episodic}
\end{equation}
Define
\begin{equation}\nonumber
K_t
:=
\sum_{i=1}^n
\mathbb E_t
\bigg[
\sum_{s\in\mathcal S}
d_\mu^{\pi^t}(s)
\mathcal D_i^t(s)
\bigg].
\end{equation}
By Lemma~\ref{lem:kl-projected-potential-improvement} and
\eqref{eq:variance-bound-episodic}, we have
\begin{align}
K_t
\le{}&
4\eta\,
\mathbb E_t
\left[
\Psi(\pi^t)-\Psi(\pi^{t+1})
\right]
+
\frac{2n\eta^3
e^{\frac{2\eta B}{1-\gamma}}}
{(1-\gamma)^2}
\notag\\
&+
\frac{4n\eta^2
L_{\widehat A}^2}
{(1-\gamma)^2}
+
\frac{L_Vn^2\eta^3B^2}
{2(1-\gamma)^2}
+
\frac{4n\eta^3B^2L_d(1+L_d)}
{(1-\gamma)^3}
+
4n\alpha\eta.
\label{eq:hp-K-one-step}
\end{align}

We next control the fluctuation of the potential improvement. Using the
intermediate profiles $\widetilde\pi^{t,i}$ defined in the proof of
Lemma~\ref{lem:kl-projected-potential-improvement}, the $\alpha$-potential property, and the performance-difference identity (Lemma \ref{lemm:performance-difference}) applied sequentially to each player, we have
\begin{align}
\left|
\Psi(\pi^{t+1})-\Psi(\pi^t)
\right|
&\le
\sum_{i=1}^n
\left|
\Psi(\widetilde\pi^{t,i})
-
\Psi(\widetilde\pi^{t,i-1})
\right|
\notag\\
&\le
\sum_{i=1}^n
\left|
V_i^{\widetilde\pi^{t,i}}(\mu)
-
V_i^{\widetilde\pi^{t,i-1}}(\mu)
\right|
+n\alpha
\notag\\
&=
\frac{1}{1-\gamma}
\sum_{i=1}^n
\bigg|
\sum_{s\in\mathcal S}
d_\mu^{\widetilde\pi^{t,i}}(s)
\left\langle
\pi_i^{t+1}(\cdot\mid s)-\pi_i^t(\cdot\mid s),
\bar A_i^{\widetilde\pi^{t,i-1}}(s)
\right\rangle
\bigg|
+n\alpha
\notag\\
&\le
\frac{1+L_d}{2(1-\gamma)}
\sum_{i=1}^n
\left\|
\pi_i^{t+1}-\pi_i^t
\right\|_{1,\infty}
+n\alpha\le
\frac{n\eta B(1+L_d)}{2(1-\gamma)^2}+n\alpha.
\label{eq:hp-potential-increment}
\end{align}
Here, the first inequality follows from the triangle inequality, while
the second follows from the $\alpha$-potential property. The
equality follows from the performance-difference identity applied
sequentially to each player. The next inequality uses Lemma~\ref{lem:span-inner-product} and the fact that
$d_\mu^{\widetilde\pi^{t,i}}$ is a probability distribution. The
last inequality follows from the policy-increment
bound of Lemma~\ref{lem:kl-projected-local-geometry} (part iv). Now let
\[
Z_{t+1}
:=
\Psi(\pi^t)-\Psi(\pi^{t+1})
-
\mathbb E_t
\left[
\Psi(\pi^t)-\Psi(\pi^{t+1})
\right].
\]
Then $\{Z_{t+1}\}_{t\ge0}$ is a martingale-difference sequence with
respect to $\{\mathcal F_t\}$ and, by
\eqref{eq:hp-potential-increment}, we have\footnote{Here and below, $\lesssim$ hides only universal numerical constants.} $|Z_{t+1}|
\lesssim\frac{n\eta B(1+L_d)}{(1-\gamma)^2}
+n\alpha$. Hence, by Azuma--Hoeffding,\footnote{If $\{Z_t\}_{t=1}^T$ is a martingale-difference sequence satisfying $|Z_t|\le b$ almost surely, then, for any $\varepsilon\in(0,1)$, with probability at least $1-\varepsilon$, we have $|\sum_{t=1}^T Z_t| \le b\sqrt{2T\log(2/\varepsilon)}$. Here we take $\varepsilon=\rho/2$.} with probability at least $1-\rho/2$, 
\begin{equation}\nonumber 
\Big| \sum_{t=0}^{T-1}Z_{t+1} \Big| \lesssim \Big( \frac{n\eta B(1+L_d)}{(1-\gamma)^2}+n\alpha \Big) \sqrt{ T\log(4/\rho)}.
\end{equation}
Moreover, connecting any two policy profiles through at most $n$
unilateral changes and using the $\alpha$-potential property together with $|V_i^\pi(\mu)|\le1/(1-\gamma)$ gives $\left|
\Psi(\pi)-\Psi(\pi')
\right| \le n\big(
\frac{2}{1-\gamma}+\alpha\big)$. Therefore, on this event, which has
probability at least $1-\rho/2$,
\begin{align}
\sum_{t=0}^{T-1}
\mathbb E_t
\left[
\Psi(\pi^t)-\Psi(\pi^{t+1})
\right]
&=
\Psi(\pi^0)-\Psi(\pi^T)
-
\sum_{t=0}^{T-1}Z_{t+1}
\cr
&\lesssim
n\Big(
\frac{1}{1-\gamma}+\alpha\Big)
+\Big(
\frac{n\eta B(1+L_d)}{(1-\gamma)^2}
+n\alpha
\Big)
\sqrt{
T\log(4/\rho)}.
\label{eq:hp-potential-telescope}
\end{align}

Summing \eqref{eq:hp-K-one-step}, dividing by $T$, and using
\eqref{eq:hp-potential-telescope}, together with
$T\ge\log(4/\rho)$ and $B=4$, yields
\begin{align}
\frac1T\sum_{t=0}^{T-1}K_t
\lesssim{}&
\frac{n\eta}{(1-\gamma)T}
+
\frac{n\eta^2(1+L_d)}{(1-\gamma)^2}
\sqrt{\frac{\log(4/\rho)}{T}}
+
n\alpha\eta\cr 
&+
\frac{n\eta^2L_{\widehat A}^2}{(1-\gamma)^2}
+
\frac{n\Big(1+nL_V+L_d(1+L_d)/(1-\gamma)\Big)\eta^3}{(1-\gamma)^2}.
\label{eq:hp-average-K}
\end{align}

Using the definitions $L_V=\frac{4}{1-\gamma}
+\frac{12\gamma\delta_P}{(1-\gamma)^2}
+\frac{8\gamma^2\delta_P^2}{(1-\gamma)^3}$ and 
$L_d=\frac{\gamma\delta_P}{1-\gamma}$, we have
\[1+nL_V+\frac{L_d(1+L_d)}{1-\gamma}
\leq
\frac{10 n(1+L_d)^2}{1-\gamma}.
\]
Hence \eqref{eq:hp-average-K} implies
\begin{align}
\frac1T\sum_{t=0}^{T-1}K_t
\lesssim{}&
\frac{n\eta}{(1-\gamma)T}
+
\frac{n\eta^2(1+L_d)}{(1-\gamma)^2}
\sqrt{\frac{\log(4/\rho)}{T}}
+
n\alpha\eta+
\frac{n\eta^2L_{\widehat A}^2}{(1-\gamma)^2}
+\frac{n^2(1+L_d)^2\eta^3}{(1-\gamma)^3}.
\label{eq:hp-average-K-simple}
\end{align}

Since $\alpha\le1/(1-\gamma)$, $n\ge1$, $L_d\ge0$, and $T\ge 1$, by \eqref{eq:parameters}, we get $\eta\leq \frac{1-\gamma}{8}$. Thus all exponential factors appearing in
Lemma~\ref{lem:kl-projected-potential-improvement} are bounded by universal constants.
By Lemma~\ref{lem:kl-projected-ne-gap},
\eqref{eq:variance-bound-episodic}, and Jensen's inequality,
\begin{align}
\frac1T\sum_{t=0}^{T-1}\operatorname{Gap}(\pi^t)
\lesssim{}&
\bigg(
\frac{1+L_d}{1-\gamma}
+\frac1\eta\sqrt{\frac{A_{\max}}{\zeta}}
\bigg)
\sqrt{\frac1T\sum_{t=0}^{T-1}K_t}
+\frac{L_{\widehat A}}{1-\gamma}
\notag\\
&+\frac1{1-\gamma}\sqrt{\frac{2A_{\max}\eta}{\zeta}}
+\frac{(\zeta+L_d)(1+L_d)}{1-\gamma}.
\label{eq:hp-gap-average}
\end{align}

Substituting \eqref{eq:hp-average-K-simple} into
\eqref{eq:hp-gap-average} and applying
$\sqrt{\sum_j x_j}\le\sum_j\sqrt{x_j}$ to the resulting square
root gives
\begin{align}
\frac1T\sum_{t=0}^{T-1}\operatorname{Gap}(\pi^t)
\lesssim{}&
\bigg(
\frac{1+L_d}{1-\gamma}
+\frac1\eta\sqrt{\frac{A_{\max}}{\zeta}}
\bigg)
\bigg[
\sqrt{\frac{n\eta}{(1-\gamma)T}}
+\sqrt{n\alpha\eta}
+\frac{\sqrt n\,\eta L_{\widehat A}}{1-\gamma}
\notag\\
&\qquad
+\frac{\sqrt{n(1+L_d)}\,\eta}{1-\gamma}
\bigg(\frac{\log(4/\rho)}{T}\bigg)^{1/4}
+\frac{n(1+L_d)\eta^{3/2}}{(1-\gamma)^{3/2}}
\bigg]
\notag\\
&+\frac{L_{\widehat A}}{1-\gamma}
+\frac1{1-\gamma}\sqrt{\frac{2A_{\max}\eta}{\zeta}}
+\frac{(\zeta+L_d)(1+L_d)}{1-\gamma}.
\label{eq:hp-gap-expanded}
\end{align}
We now use the choices of $\eta$ and $\zeta$ in
\eqref{eq:parameters}--\eqref{eq:parameters-zeta}, together
with $T\ge\log(4/\rho)$. Suppose the cap in \eqref{eq:parameters-zeta} is inactive.\footnote{If the cap in \eqref{eq:parameters-zeta} is active, then either $\frac{[nA_{\max}(1+L_d)]^{1/3}}{1-\gamma}L_{\widehat A}^{2/3}
\gtrsim(1-\gamma)^{-1}$ or 
$\frac{\sqrt n\,A_{\max}^{1/3}(1+L_d)^{2/3}}{1-\gamma}\alpha^{1/6}
\gtrsim (1-\gamma)^{-1}$. Thus, in either case, the claimed regret bound \eqref{eq:final-NE-regret-episodic} already dominates the trivial uniform
bound $\operatorname{Gap}(\pi^t)\le (1-\gamma)^{-1}$.} Then $\zeta
\gtrsim
\frac{n^{1/3}A_{\max}^{1/3}L_{\widehat A}^{2/3}}{(1+L_d)^{2/3}}$ and $\zeta
\gtrsim
\frac{\sqrt n\,A_{\max}^{1/3}(1-\gamma)^{1/6}\alpha^{1/6}}{(1+L_d)^{1/3}},$ while $
\eta
\lesssim
\frac{1-\gamma}{\sqrt T}
+ \frac{(1-\gamma)^{3/2}}{1+L_d}\sqrt{\frac{\alpha}{n}}$. Substituting these bounds into the nonvanishing terms of
\eqref{eq:hp-gap-expanded} gives
\begin{align}
\frac{\sqrt{nA_{\max}}L_{\widehat A}}
{(1-\gamma)\sqrt{\zeta}}
&\lesssim
\frac{[nA_{\max}(1+L_d)]^{1/3}}
{1-\gamma}
L_{\widehat A}^{2/3},
\notag\\
\sqrt{\frac{nA_{\max}\alpha}{\eta\zeta}}
+
\frac{n(1+L_d)\sqrt{A_{\max}\eta}}
{(1-\gamma)^{3/2}\sqrt{\zeta}}
&\lesssim
\frac{\sqrt n\,A_{\max}^{1/3}(1+L_d)^{2/3}}
{(1-\gamma)^{5/6}}
\alpha^{1/6}
+
\frac{n(1+L_d)\sqrt{A_{\max}}}
{(1-\gamma)\sqrt{\zeta}}
\left(\frac{\log(4/\rho)}{T}\right)^{1/4}.
\label{eq:hp-offset-calculation}
\end{align}
The additional term introduced by the adaptive-oracle decomposition satisfies
\[
\frac{1}{1-\gamma}\sqrt{\frac{2A_{\max}\eta}{\zeta}}
\le
\frac{\sqrt{2}n(1+L_d)\sqrt{A_{\max}\eta}}{(1-\gamma)^{3/2}\sqrt{\zeta}},
\]
and is therefore already bounded by the second estimate in
\eqref{eq:hp-offset-calculation}. The remaining nonvanishing terms in
\eqref{eq:hp-gap-expanded}, including
$L_{\widehat A}/(1-\gamma)$ and
$\zeta(1+L_d)/(1-\gamma)$, are bounded
by the same quantities. Hence, using $(1-\gamma)^{-5/6}\le(1-\gamma)^{-1}$, the total nonvanishing contribution is
\begin{equation}
\frac{[nA_{\max}(1+L_d)]^{1/3}}
{1-\gamma}
L_{\widehat A}^{2/3}
+
\frac{\sqrt n\,A_{\max}^{1/3}(1+L_d)^{2/3}}
{1-\gamma}
\alpha^{1/6}
+
\frac{L_d(1+L_d)}{1-\gamma}.
\label{eq:hp-offset-bound}
\end{equation}

For the $T$-dependent terms, the same lower bounds on $\zeta$ give
\begin{align*}
\frac{n(1+L_d)\sqrt{A_{\max}}}
{(1-\gamma)\sqrt{\zeta}}
\lesssim
\frac{n^{5/6}A_{\max}^{1/3}(1+L_d)^{4/3}}
{(1-\gamma)
\max\left\{
L_{\widehat A}^{1/3},\,
n^{1/12}(1-\gamma)^{1/12}(1+L_d)^{1/6}\alpha^{1/12}
\right\}}\lesssim  \mathfrak C.
\end{align*}
Moreover, $\frac{1+L_d}{1-\gamma}\lesssim\mathfrak C$. Hence, using also
$T\ge\log(4/\rho)$, all the
$T$-dependent terms in \eqref{eq:hp-gap-expanded} are bounded by
$\mathfrak C
\big(
\frac{\log(4/\rho)}{T}
\big)^{1/4}$, with terms decaying faster than $T^{-1/4}$ absorbed into this bound.
Combining this estimate with \eqref{eq:hp-offset-bound} yields the bound \eqref{eq:final-NE-regret-episodic}.
\end{proof}

Theorem~\ref{thm:high-prob-ne-regret} provides a high-probability bound on the NE regret and hence guarantees the existence of an iterate among the first $T$ episodes with an NE gap of the same order. The bound consists of a statistical error decaying as $\widetilde O(T^{-1/4})$ and an approximation floor governed by the game and oracle parameters: smaller $L_{\widehat A}$ corresponds to more accurate marginalized-advantage estimation, smaller $\alpha$ means that the game is closer to an exact Markov potential game, and smaller $\delta_P$ means that individual actions have less influence on the transition dynamics. The dependence on $n$, $A_{\max}$, and $(1-\gamma)^{-1}$ reflects the additional difficulty arising from more players, larger action spaces, and longer horizons.

\begin{remark}
We note that the $\alpha$-dependent term in \eqref{eq:final-NE-regret-episodic} remains below the trivial upper bound $1/(1-\gamma)$ on the NE regret, provided $\alpha\lesssim 1/[n^3A_{\max}^2(1+L_d)^4]$. In particular, when $L_d=O(1)$, this reduces to $\alpha\lesssim 1/(n^3A_{\max}^2)$, with no additional polynomial dependence on $1-\gamma$.
\end{remark}

\begin{remark}
The bound \eqref{eq:hp-gap-expanded} also permits alternative tunings that make the decaying term independent of $\alpha$ and $L_{\widehat A}$, at the cost of the slower rate $\widetilde O(T^{-1/6})$. For instance, relative to the parameter choice in Theorem~\ref{thm:high-prob-ne-regret}, choosing $\eta$ by balancing the statistical scale $T^{-1/2}$ with the potential-error scale $\sqrt{\alpha/[n(1+L_d)^2]}$, and $\zeta$ as the maximum of the corresponding statistical, oracle-bias, and potential-error scales, yields a high-probability NE regret bound with a decaying term of order $\widetilde O(n^{2/3}A_{\max}^{1/3}(1+L_d)T^{-1/6}/(1-\gamma))$, independent of $\alpha$ and $L_{\widehat A}$. The fixed oracle-bias, potential-error, and transition-sensitivity terms scale respectively as $L_{\widehat A}^{2/3}$, $\alpha^{1/6}$, and $L_d(1+L_d)/(1-\gamma)$, with nearly the same coefficients as in Theorem~\ref{thm:high-prob-ne-regret}.
\end{remark}

\begin{remark}
\label{rem:primitive-time-ne-regret}
Under Assumption~\ref{ass:coverage}, the episodic guarantee in Theorem~\ref{thm:high-prob-ne-regret} can be translated from episode count into primitive time. Indeed, a union bound over states, samples, and episodes shows that, with high probability, collecting $L$ samples per state in each episode requires $\widetilde O(H_{\mathrm{cov}}L/p_{\min})$ primitive steps. Since $L\asymp A_{\max}/\eta$ and $1/\eta\lesssim \sqrt{T}/(1-\gamma)$ under \eqref{eq:parameters}, completing $T$ episodes requires $N_T=\widetilde O(H_{\mathrm{cov}}A_{\max}T^{3/2}/[p_{\min}(1-\gamma)])$ primitive steps. Hence, after $N$ primitive steps, $T_N=\widetilde\Omega([p_{\min}(1-\gamma)N/(H_{\mathrm{cov}}A_{\max})]^{2/3})$. Therefore, the bound in Theorem~\ref{thm:high-prob-ne-regret} remains valid with its $\mathfrak C T^{-1/4}$ term replaced by $\widetilde O(\mathfrak C[H_{\mathrm{cov}}A_{\max}/(p_{\min}(1-\gamma)N)]^{1/6})$, while the fixed approximation terms remain unchanged. Thus, the episodic $\widetilde O(T^{-1/4})$ rate becomes $\widetilde O(N^{-1/6})$ in primitive time.
\end{remark}

The episodic setting studied in this section provides a useful benchmark for the more challenging fully online problem. By freezing the policy within each episode, we create a locally stationary environment in which players can collect samples and estimate their advantage and value functions under a fixed policy. In the fully online setting, this separation between learning and policy updates is no longer available: policies evolve continuously along a single trajectory, so the data-generating process changes as learning proceeds. A natural way to bridge the two settings is to view the episodic dynamics as a stationary reference process and design the fully online algorithm so that its trajectory can be suitably coupled with this reference process. If the online trajectory can track the episodic one while keeping the discrepancy between them controlled, the online process should approximately inherit the favorable behavior established for its episodic counterpart. This observation motivates the approach of the next section. At a high level, we design the online algorithm with an aggregation-tracking mechanism that tracks the evolving marginalized advantages while controlling policy drift. This allows the fully online trajectory to remain close to a corresponding episodic frozen-policy trajectory over suitable time windows. Making this intuition rigorous is substantially more delicate, as policy drift, changing state occupancies, asynchronous state visits, bandit estimation errors, and delayed local information all interact. We address these challenges through stopping-time, coupling, charging, and dynamic-tracking arguments, which allow us to transfer the essential stability of the episodic analysis to the fully online setting without resets or stationary sampling windows.

\section{Fully Online Asynchronous Decentralized Learning with One-Sample Bandit Feedback}
\label{sec:fully-online-general-occupancy}

In this section, we extend our episodic online algorithm to a fully online asynchronous algorithm that runs along a single realized trajectory of the Markov game. No policy is frozen for an episode, and no batch of samples is collected. Whenever a state is visited, each player draws one action, receives one raw scalar stochastic oracle observation, forms an importance-weighted action vector and its action-centered one-sample marginalized-advantage estimator, recursively aggregates this estimator, and immediately updates the policy stored at the visited state. Thus, the algorithm operates asynchronously across states, with each player tracking its marginalized advantage.

\subsection{Global vs. Local Notation and Filtrations}\label{sec:online-notations}

Throughout, we reserve $t$ for global time and $k$
for the local visit index. For every $s\in\mathcal S$, we use
\begin{align*}
N_t(s):=\sum_{\tau=0}^{t-1}\mathbf 1\{S^\tau=s\}
\end{align*}
to denote the number of visits to $s$ \emph{before} global time $t$. Let $\tau_k(s)$
denote the global time of the $(k+1)$st visit to $s$, so that
$N_{\tau_k(s)}(s)=k$ and $S^{\tau_k(s)}=s$. In particular, at the state $S^t$ visited at global time $t$, we have $t=\tau_{N_t(S^t)}(S^t)$. At global time $t$, the current policy profile $\pi^t$ is defined
statewise by $\pi^t=(\pi_i^t(\cdot\mid s)\ \forall i,s)$. In our fully online algorithm, since players'
policies at state $s$ are updated only when $s$ is visited, exactly
$N_t(s)$ local updates have occurred at $s$ before global time $t$.
Consequently, the policy stored at state $s$ at the beginning of
any global time $t$ is the locally indexed policy
$\pi_i^{N_t(s)}(\cdot\mid s)$, and therefore we write
$$
\pi_i^t(\cdot\mid s)
:=
\pi_i^{N_t(s)}(\cdot\mid s),
\qquad i\in[n],\ s\in\mathcal S.
$$
In other words, if $N_t(s)=k$, then
$\pi_i^t(\cdot\mid s)=\pi_i^k(\cdot\mid s)$. Moreover, for notational simplicity, for global-policy quantities evaluated
at the $(k+1)$st visit to state $s$, we use the shorthands

$$
d_\mu^k(s)
:=
d_\mu^{\pi^{\tau_k(s)}}(s),
\qquad
\bar A_i^k(s,\cdot)
:=
\bar A_i^{\pi^{\tau_k(s)}}(s,\cdot).
$$

Thus, $d_\mu^k(s)$ and $\bar A_i^k(s,\cdot)$ denote, respectively, the
occupancy of state $s$ and player $i$'s marginalized advantage at
state $s$, both evaluated under the global policy profile at the $(k+1)$st
visit to $s$.

\subsubsection*{Global and Stopped Filtrations} 

Let $\{\mathcal F_t\}_{t\ge0}$ denote the global filtration, where
$\mathcal F_t$ contains the complete history up to the beginning of time
$t$, including the current state $S^t$, but before the actions $A^t$ and
the oracle randomness at time $t$ are generated. Throughout, we write
$\mathbb E_t[\cdot]:=\mathbb E[\cdot\mid\mathcal F_t]$. Thus, conditioning on $\mathcal F_t$ fixes $S^t$, the visit counts $N_t(\cdot)$, the stored
policy profile $\pi^t$, and all recursive estimates constructed before time
$t$, whereas $A^t$ and the new raw oracle samples, denoted by $g^t$, remain
random.

Since our proposed algorithm is also indexed by the local visit count $k$
at each state $s$, we will frequently condition on the global history
available at the corresponding visit time $\tau_k(s)$. For this purpose,
note that $\tau_k(s)$ is an $\{\mathcal F_t\}$-stopping time, since
$\{\tau_k(s)\le t\}\in\mathcal F_t$. We therefore use the corresponding
stopped sigma-field $\mathcal F_{\tau_k(s)}$ and write $
\mathbb E_{\tau_k(s)}[\cdot]
:=
\mathbb E[\cdot\mid\mathcal F_{\tau_k(s)}].$\footnote{For a stopping time $\tau$,
$\mathcal F_\tau:=\{E:E\cap\{\tau\le t\}\in\mathcal F_t
\text{ for every }t\}$.}
Thus, $\mathcal F_{\tau_k(s)}$ represents the global history available at
the beginning of the $(k+1)$st visit to state $s$, before the fresh actions
and oracle randomness at that visit are generated, and
$\mathcal F_{\tau_k(s)}\subseteq\mathcal F_{\tau_{k+1}(s)}$. In particular, if $t=\tau_k(s)$, then $S^t=s$, $N_t(s)=k$, and
$\pi_i^k(\cdot\mid s)=\pi_i^t(\cdot\mid s)$ are
$\mathcal F_t=\mathcal F_{\tau_k(s)}$-measurable, while $A^t$, $g^t$, and
the resulting one-sample estimators are not. We use $\mathbb E_t, \mathbb P_t$ when working in global time $t$ and $\mathbb E_{\tau_k(s)}, \mathbb P_{\tau_k(s)}$ when working with
the local visit index $k$.

\subsection{Online One-Sample Estimation Oracle}

At time $t=\tau_k(s)$, player $i$ observes state $S^t=s$ and draws
$A_i^t\sim\pi_i^k(\cdot\mid s)=\pi_i^t(\cdot\mid s)$ independently of the others and receives a scalar raw sample $g_i^t(s,A^t)$. Here, the conditional law of the raw sample $g_i^t$ is allowed to vary with time and depend on the entire history available up to time $t$, including the current global policy profile $\pi^t$, with its fresh randomness generated after conditioning on $\mathcal F_t$. Thus, this formulation allows for a general adaptive and policy-dependent oracle. It then forms the corresponding one-sample importance-weighted and action-centered estimator by
\begin{align}\nonumber
\widehat A_i^k(s,a_i)
:=
g_i^t(s,A^t)
\left(
\frac{\mathbf 1\{A_i^t=a_i\}}
{\pi_i^k(a_i\mid s)}-1
\right),
\qquad a_i\in\mathcal A_i.
\end{align}
Thus, $\widehat A_i^k(s,\cdot)$ is the one-sample importance-weighted estimator centered by the realized raw sample such that $\sum_{a_i\in\mathcal A_i}
\pi_i^k(a_i\mid s)
\widehat A_i^k(s,a_i)=0.$

\begin{assumption}
\label{ass:online-one-sample-oracle}
At every global time $t=\tau_k(s)$, the one-sample advantage estimator satisfies
\begin{equation}\nonumber
\left\|
\mathbb E_{\tau_k(s)}[\widehat A_i^k(s,\cdot)]
-
\bar A_i^k(s,\cdot)
\right\|_\infty
\le
L_{\widehat{A}},
\end{equation}
for some deterministic $L_{\widehat{A}}$. Moreover, the raw sample satisfies $|g_i^t(s,A)|\leq 1$ a.s. for every $i,t,s,A$.
\end{assumption}

Next, we state the following lemma, which provides almost-sure bounds on the weighted centered estimation noise and its conditional second moment. These bounds will be used later to derive our high-probability NE-regret bounds. The proof is given in Appendix~\ref{app:online-importance-sampling-bounds}. 

\begin{lemma}\label{lemm:conditional-variance-online}
At each actual visit $t=\tau_k(s)$ to the state $s$, define\footnote{By a slight abuse of notation from the previous section, we use $\xi_i^k(s,a_i)$, which also satisfies $\mathbb E_{\tau_k(s)}[\xi_i^k(s,a_i)]=0$.}
\begin{align*}
\xi_i^k(s,a_i)
:=
\widehat A_i^k(s,a_i)
-
\mathbb E_{\tau_k(s)}[\widehat A_i^k(s,a_i)],
\end{align*}
which denotes the centered estimation noise generated at the $(k+1)$st visit
to state $s$. Assume $\pi_i^k(\cdot\mid s)\in \Delta_{i,\zeta}$, and define the weighted conditional second moment at the $(k+1)$st visit to $s$ by
\begin{equation}\label{eq:online-occupancy-weighted-variance}
\mathcal V_i^k(s)
:=d_\mu^k(s)\mathbb E_{\tau_k(s)}\!\bigg[
\sum_{a_i\in\mathcal A_i}
\pi_i^k(a_i\mid s)
\bigl(\xi_i^k(s,a_i)\bigr)^2
\bigg].
\end{equation}
Then, almost surely, $\mathcal V_i^k(s)\leq A_{\max}$ and $\sum_{a_i}\pi_i^k(a_i\mid s)
\bigl(\xi_i^k(s,a_i)\bigr)^2\leq 10 A_{\max}/\zeta$. 
\end{lemma}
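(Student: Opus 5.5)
The plan is to work pathwise with the explicit form of the one-sample estimator. Fix $t=\tau_k(s)$ and write $p:=\pi_i^k(\cdot\mid s)$ and $g:=g_i^t(s,A^t)$, so that $|g|\le1$ by Assumption~\ref{ass:online-one-sample-oracle}. Let $m(a_i):=\mathbb E_{\tau_k(s)}[\widehat A_i^k(s,a_i)]$, so that $\xi_i^k(s,a_i)=\widehat A_i^k(s,a_i)-m(a_i)$.

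\textbf{Step 1: almost-sure bound.} First, control the uncentered estimator. If $A_i^t=b$, then $\widehat A_i^k(s,a_i)=g(\mathbf 1\{a_i=b\}/p(a_i)-1)$. Hence
\[
\sum_{a_i}p(a_i)\widehat A_i^k(s,a_i)^2
= g^2\Big(\frac{1}{p(b)}-1\Big)
\le \frac{A_{\max}}{\zeta}.
\]
This uses $\sum_{a}p(a)(\mathbf 1\{a=b\}/p(a)-1)^2=1/p(b)-1$ and $p(b)\ge\zeta/|\mathcal A_i|$.

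Next, bound the mean. For the raw vector $u(a_i):=\mathbf 1\{A_i^t=a_i\}/p(a_i)$ we have $\mathbb E_{\tau_k(s)}|u(a_i)|\le1$, so $|m(a_i)|\le2$ for all $a_i$. Therefore $\sum_{a_i}p(a_i)m(a_i)^2\le4$.

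Then apply $(x-y)^2\le2x^2+2y^2$:
\[
\sum_{a_i}p(a_i)\xi_i^k(s,a_i)^2\le \frac{2A_{\max}}{\zeta}+8\le\frac{10A_{\max}}{\zeta},
\]
since $A_{\max}/\zeta\ge1$.

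\textbf{Step 2: the bound on $\mathcal V_i^k(s)$.} Use that the conditional variance is at most the conditional second moment:
\[
\mathbb E_{\tau_k(s)}\Big[\sum_{a_i}p(a_i)\xi_i^k(s,a_i)^2\Big]\le \mathbb E_{\tau_k(s)}\Big[\sum_{a_i}p(a_i)\widehat A_i^k(s,a_i)^2\Big]=\mathbb E_{\tau_k(s)}\Big[g^2\Big(\frac{1}{p(A_i^t)}-1\Big)\Big].
\]
Bound $g^2\le1$. Since the player's own action $A_i^t$ is drawn from $p$ with fresh randomness given $\mathcal F_{\tau_k(s)}$, we get $\mathbb E[1/p(A_i^t)]=|\mathcal A_i|$. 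The right side is therefore at most $|\mathcal A_i|-1\le A_{\max}$.

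Finally, multiply by $d_\mu^k(s)\le1$, which is $\mathcal F_{\tau_k(s)}$-measurable because it is a function of $\pi^{\tau_k(s)}$. This gives $\mathcal V_i^k(s)\le A_{\max}$.

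\textbf{Main subtlety.} Because the oracle is adaptive, $g$ may depend on $A^t$, including on $A_i^t$. We therefore must not factor the expectation as $\mathbb E[g^2]\,\mathbb E[\cdot]$. The plan avoids this by bounding $g^2\le1$ pointwise before taking expectations, so only the law of $A_i^t$ given $\mathcal F_{\tau_k(s)}$, which is $p$, is ever used.
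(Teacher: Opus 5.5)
Your proof is correct and follows essentially the same route as the paper. Both arguments use the action-centering identity $\sum_{a_i}\pi_i^k(a_i\mid s)(\widehat A_i^k(s,a_i))^2=(g_i^t)^2(1/\pi_i^k(A_i^t\mid s)-1)$, the fact that centering only lowers the conditional second moment, and cancellation of the importance weight by averaging over $A_i^t\sim\pi_i^k(\cdot\mid s)$; for the almost-sure bound, both then combine $(x-y)^2\le 2x^2+2y^2$ with $\|\mathbb E_{\tau_k(s)}[\widehat A_i^k(s,\cdot)]\|_\infty\le 2$. Your pointwise use of $|g_i^t|\le 1$ before taking expectations is what keeps the argument valid for an adaptive oracle, and it gives the slightly sharper constant $|\mathcal A_i|-1$ in place of $|\mathcal A_i|$.
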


Finally, for later use, let $M:=1+L_d$, where by Lemma~\ref{lem:span-inner-product}, it follows that
$\|\bar A_i^\pi(s,\cdot)\|_\infty\le M$ for every $i,\pi,s$,
and let $\operatorname{clip}_{M}(\cdot)$ denote coordinatewise projection onto
$[-M,M]$. Moreover, assuming that $\pi_i^k(\cdot\mid s)\in \Delta_{i,\zeta}$, we have the following almost sure bound:
\[
\|\widehat A_i^k(s)\|_\infty
\le\frac{A_{\max}}{\zeta}+1
\le\frac{2A_{\max}}{\zeta}.
\]

\begin{algorithm}[t]
\caption{Fully Online One-Sample KL-Projected NPG}
\label{alg:fully-online-general-occupancy}
\begin{algorithmic}[1]
\State Initialize $\pi_i^0(\cdot\mid s)$ uniformly and
$r_i^{-1}(s)=0$ for every player $i$ and state $s$, and set
$N_0(s)=0$ for every $s$. Also, choose $M=1+L_d$ and $\omega \in (0, \frac{1}{2})$. 
\For{$t=0,1,2,\ldots$}
    \State Observe $S^t=s$ and set $k=N_t(s)$ (so that $\tau_k(s)=t$ and $\pi_i^t(\cdot\mid s)=\pi_i^k(\cdot\mid s)$).
    \State Player $i$ independently draws an action according to its current policy
    $A_i^t\sim\pi_i^k(\cdot\mid s)$.
    \State Player $i$ receives the scalar oracle sample $g_i^t(s,A^t)$ and forms the one-sample estimator
    \[
    \widehat A_i^k(s,a_i)
    =
    g_i^t(s,A^t)
    \left(
    \frac{\mathbf 1\{A_i^t=a_i\}}
    {\pi_i^k(a_i\mid s)}-1
    \right),
    \qquad a_i\in\mathcal A_i.
    \]
    \State Player $i$ recursively aggregates and clips the new estimator:
    \begin{align*}
    \widetilde r_i^k(s)
    &=
    (1-\omega)r_i^{k-1}(s)
    +
    \omega\widehat A_i^k(s),\\
    r_i^k(s)
    &=
    \operatorname{clip}_{M}
    \bigl(\widetilde r_i^k(s)\bigr).
    \end{align*}
    \State Simultaneously, every player updates
    \begin{align}\nonumber
    \pi_i^{k+1}(\cdot\mid s)
    =
    \argmin_{p_i\in\Delta_{i,\zeta}}
    \left\{
    \frac{\eta}{1-\gamma}
    \langle p_i,r_i^k(s)\rangle
    +
    D_{\mathrm{KL}}
    \bigl(p_i\|\pi_i^k(\cdot\mid s)\bigr)
    \right\}.
    \end{align}
    \State Set $\pi_i^{t+1}(\cdot\mid s')=\pi_i^t(\cdot\mid s')$ for every $s'\neq s$ (at state $s$, note that $\pi_i^{t+1}(\cdot\mid s)=\pi_i^{k+1}(\cdot\mid s)$).
    \State Set $N_{t+1}(s)=N_t(s)+1$ and leave all other counters unchanged.
    \State The state transits according to
    $S^{t+1}\sim P(\cdot\mid S^t,A^t)$.
\EndFor
\end{algorithmic}
\end{algorithm}

\subsection{An Online Decentralized Algorithm with One-Sample Bandit Oracle}

With these notations and definitions, we are now ready to describe our
algorithm. Algorithm~\ref{alg:fully-online-general-occupancy} is a fully
online implementation of Algorithm~\ref{alg:episodic-kl-projected-npg} that
uses only one oracle sample at each global time step. At time $t$, after
observing the current state $S^t=s$, the players set $k=N_t(s)$ and use the
policy component $\pi^t(\cdot\mid s)=\pi^k(\cdot\mid s)$ associated with the current local visit
count to sample their actions and construct importance-weighted one-sample
estimates of their advantages. These noisy estimates are recursively averaged
with weight $\omega$ and clipped at level $M$ to obtain the surrogate
advantage vectors $r_i^k(s)$. Each player then performs a KL-projected NPG update over the
truncated simplex $\Delta_{i,\zeta}$ using $r_i^k(s)$, thereby changing the
policy component at the visited state from $\pi_i^k(\cdot\mid s)$ to
$\pi_i^{k+1}(\cdot\mid s)$, while the policy components at all states
$s'\neq s$ remain unchanged.

To make this local-time interpretation precise, fix a state $s$ and let $\tau_0(s)\!<\!\tau_1(s)\!<\!\tau_2(s)\!<\!\cdots$ denote its successive visit times. At global time $t=\tau_k(s)$, we have
$N_t(s)=k$, so the policy component currently stored at $s$ is
$\pi^t(\cdot\mid s)=\pi^k(\cdot\mid s)$. The players draw their actions according to this
state-local policy, receive the new scalar oracle sample $g_i^t(s,A^t)$, use it
together with the previously stored estimator $r_i^{k-1}(s)$ to form
$r_i^k(s)$, and then perform the KL-projected NPG update $\bigl(r_i^k(s),\pi_i^k(\cdot\mid s)\bigr)\rightarrow
\pi_i^{k+1}(\cdot\mid s)$. Importantly, during the intervisit interval $\tau_k(s)<t<\tau_{k+1}(s)$, state $s$ is not visited, so neither its stored estimator nor its stored
policy component is updated. Other states may nevertheless be visited and updated during this interval,
so the full global policy profile
$\pi^t=\{\pi_i^t(\cdot\mid x):x\in\mathcal S,\ i\in[n]\}$ continues
to evolve. The resulting local timing at state $s$ can be illustrated in the following table:
\begin{center}
\resizebox{\textwidth}{!}{$
\begin{array}{c|ccccc}
\text{global time}
&
\tau_k(s)^-
&
\tau_k(s)
&
\tau_k(s)<t<\tau_{k+1}(s)
&
\tau_{k+1}(s)
&
\tau_{k+1}(s)^+
\\
\hline
\text{state visited}
&
-
&
s
&
s'\neq s
&
s
&
-
\\
\text{policy component at }s
&
\pi^k(\cdot\mid s)
&
\pi^k(\cdot\mid s)\to\pi^{k+1}(\cdot\mid s)
&
\pi^{k+1}(\cdot\mid s)
&
\pi^{k+1}(\cdot\mid s)\to\pi^{k+2}(\cdot\mid s)
&
\pi^{k+2}(\cdot\mid s)
\\
\text{estimator at }s
&
r^{k-1}(s)
&
r^{k-1}(s)\to r^k(s)
&
r^k(s)
&
r^k(s)\to r^{k+1}(s)
&
r^{k+1}(s).
\end{array}
$}
\end{center}

We note that although both $r_i^k(s)$ and $\pi_i^{k+1}(\cdot\mid s)$ remain stale between consecutive visits to $s$, the evolving global policy profile affects the next estimator update. Indeed, since the new oracle sample $g_i^{\tau_{k+1}(s)}(s,A^{\tau_{k+1}(s)})$ may have a conditional mean depending on the current global policy $\pi^{\tau_{k+1}(s)}$, policy updates at states $x\neq s$ during the intervisit interval can affect the update $r_i^k(s)\longrightarrow r_i^{k+1}(s)$. In other words, while $r_i^k(s)$ is updated only at the local visit times $k,k+1,\ldots$, each such change can be influenced, through the new oracle sample, by the history of the globally evolving policy profile. These global policy changes also modify the nonstationary state-transition law and hence the return time $\tau_{k+1}(s)-\tau_k(s)$. Thus, each state admits its own local KL-projected NPG update sequence, while the global state process and policy profile evolve continuously without episodes, global policy-freezing periods, or multiple samples per update.

\begin{remark}
The time-varying oracle $g_i^t$ allows for a broad class of adaptive and policy-dependent estimators, including rollout-based and value-based constructions, provided that the required conditional bias assumption (Assumption \ref{ass:online-one-sample-oracle}) holds. An important consequence of this generality is that, although $r_i^k(s)$ is updated only at visits to state $s$, its next fresh innovation $\widehat A_i^{k+1}(s,\cdot)$, constructed from the raw oracle sample $g_i^{\tau_{k+1}(s)}(s,A^{\tau_{k+1}(s)})$, need not be statewise local: its conditional distribution may depend on the entire history and current global policy profile, and hence on policy updates made at other states. Thus, the locally indexed estimator processes are generally coupled through their successive innovations. Our analysis accommodates this coupling through the global filtration, moving-target tracking, sensitivity, coupling, and charging arguments.\footnote{A fixed state-action oracle $g_i(s,A)$ is a simpler special case, for which updates at other states do not affect the conditional distribution of the next innovation at $s$, and the recursive estimator dynamics can be analyzed statewise, with their discrepancy from the globally coupled marginalized advantage already controlled by the oracle assumption.}
\end{remark}

As we noted earlier in the description of the algorithm, when attention is restricted to a fixed state $s$, its
successive visits generate a local KL-projected NPG sequence indexed by $k=0,1,2,\ldots$, and the KL optimality condition can be applied pathwise to
each such local update. This does not require $r_i^k(s)$ to be independent of
the past: although $r_i^k(s)$ is a random function of the global history up
to $\tau_k(s)$, once that history and the new oracle sample are realized,
$r_i^k(s)$ is simply the vector appearing in the corresponding KL
optimization problem. Therefore, we have the following corollary.  

\begin{corollary}\label{cor:local-kl-projected-geometry}
For any fixed state $s$, the local policy updates at that state follow the
standard KL-projected NPG update rule. For
the update performed at the $(k+1)$st visit to state $s$, define
\begin{align}\nonumber
\mathcal D_i^k(s)
&:=
D_{\mathrm{KL}}\!\left(\pi_i^{k+1}(\cdot\mid s)\middle\|\pi_i^k(\cdot\mid s)\right)
+
D_{\mathrm{KL}}\!\left(\pi_i^k(\cdot\mid s)\middle\|\pi_i^{k+1}(\cdot\mid s)\right),\\
\Delta_i^k(s)
&:=
\pi_i^k(\cdot\mid s)-\pi_i^{k+1}(\cdot\mid s).
\end{align}
Since $\|r_i^k(s)\|_\infty\le M$, all the conclusions of
Lemma~\ref{lem:kl-projected-local-geometry} hold for the local updates
$k=0,1,2,\ldots$, with $B$ replaced by $M$.
\end{corollary}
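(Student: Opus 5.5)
The plan is to reduce the corollary to Lemma~\ref{lem:kl-projected-local-geometry} by a pathwise argument at a fixed state. Lemma~\ref{lem:kl-projected-local-geometry} is a deterministic statement: for any fixed previous policy $q\in\Delta_{i,\zeta}$ and any fixed vector $r$ with $\|r\|_\infty\le B$, the minimizer of $\frac{\eta}{1-\gamma}\langle p,r\rangle+D_{\mathrm{KL}}(p\|q)$ over $\Delta_{i,\zeta}$ satisfies (i)--(iv). So I would fix a state $s$, a local index $k$, and a sample path $\omega$. On that path the update at global time $\tau_k(s)$ is exactly this optimization, with $q=\pi_i^k(\cdot\mid s)(\omega)$ and $r=r_i^k(s)(\omega)$. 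The stochastic dependence of $r_i^k(s)$ on the global history, and on updates at other states, therefore plays no role. It does not matter that $r_i^k(s)$ fails to be $\mathcal F_{\tau_k(s)}$-measurable: the conclusions are pointwise identities and inequalities between the realized vectors.

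The key steps are as follows.
\begin{enumerate}
\item Check the feasibility hypotheses. By induction on $k$, $\pi_i^k(\cdot\mid s)\in\Delta_{i,\zeta}$. The base case holds because the uniform initialization lies in $\Delta_{i,\zeta}$, since $1/|\mathcal A_i|\ge\zeta/|\mathcal A_i|$. The inductive step holds because the argmin is taken over $\Delta_{i,\zeta}$, and the policy at $s$ is unchanged between visits.
\item Check the magnitude hypothesis. After clipping, $\|r_i^k(s)\|_\infty\le M$ almost surely, by the definition of $\operatorname{clip}_M$ as projection onto $[-M,M]$.
\item Invoke Lemma~\ref{lem:kl-projected-local-geometry}. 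Since $\mathcal D_i^k(s)$ and $\Delta_i^k(s)$ are defined exactly as $\mathcal D_i^t(s)$ and $\Delta_i^t(s)$ in \eqref{eq:KL_symm}, after relabeling $t\to k$ and $\pi^{t+1}\to\pi^{k+1}$, the lemma with $B$ replaced by $M$ yields (i)--(iii) at state $s$.
\end{enumerate}

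Part (iv) needs one remark. Its first claim, the per-state bound $\|\Delta_i^k(s)\|_1\le\eta M/(1-\gamma)$, transfers directly. The mixed-norm statements are maxima over states, and in the asynchronous setting only one state moves per global step. The global increment $\pi^{t+1}-\pi^t$ is zero at every $s'\ne S^t$ and equals $-\Delta_i^k(S^t)$ at $S^t$. The bounds $\|\pi_i^{t+1}-\pi_i^t\|_{1,\infty}\le \eta M/(1-\gamma)$ and the corresponding TV bound with the factor $1/2$ therefore still hold.

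The main obstacle is the clip convention, and it is a matter of bookkeeping rather than substance. In Lemma~\ref{lem:kl-projected-local-geometry} the standing bound is $\|r\|_\infty\le B$, whereas $\operatorname{clip}_B$ in the episodic algorithm projects onto $[-B/2,B/2]$. Here $\operatorname{clip}_M$ projects onto $[-M,M]$, so the correct substitution is ``$\|r\|_\infty\le B$ with $B=M$''. I would state this explicitly so the constants in (i) and (iv), namely $e^{2\eta M/(1-\gamma)}$ and $\eta M/(1-\gamma)$, are read correctly.
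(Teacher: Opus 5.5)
Your proposal is correct and follows the same route as the paper. The paper also justifies the corollary by noting that each realized local update at $s$ is exactly an instance of the KL-projected problem with the realized vector $r_i^k(s)$, so Lemma~\ref{lem:kl-projected-local-geometry} applies pathwise with $B$ replaced by $M$, regardless of how $r_i^k(s)$ depends on the global history. Your extra checks make explicit what the paper leaves implicit: the induction giving $\pi_i^k(\cdot\mid s)\in\Delta_{i,\zeta}$ (needed for the test point in part (iii) and the active-constraint case of part (i)), the $[-M,M]$ clip convention, and the single-state reading of the mixed norms in part (iv).
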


\subsection{Analysis of the Fully Online Asynchronous Decentralized Algorithm}

Equipped with these preliminary results and notation, we are ready to analyze the performance of Algorithm~\ref{alg:fully-online-general-occupancy}. We first provide a brief overview of the proof strategy. The proof of the main result is organized so that the statistical analysis and the asynchronous Markov dynamics remain within the same occupancy-weighted geometry. The one-sample recursive estimator is analyzed directly along the realized trajectory. Mild clipping of the recursively aggregated estimate keeps the tracking error uniformly bounded without changing the one-sample nature of the algorithm. This allows changes in the occupancy weights themselves to be controlled by the localized occupancy-sensitivity bound, while a single global Freedman argument controls the weighted martingale noise.

The remaining steps preserve this geometry. The potential-improvement argument retains the occupancy factor in both the leading KL term and all second-order one-state perturbation terms, so the cumulative KL energy is controlled directly by the occupancy-weighted tracking error. The Nash-gap lemma uses the most recent KL and tracking certificates stored at each state. A frozen-policy coupling is then used only to show that these certificates are at most $H_T=\widetilde O(H_{\rm cov}/p_{\min})$ steps old and to transport their occupancy weights from their creation times to the current time. Crucially, because all the tracking, potential, and delayed-certificate arguments are carried out in the occupancy-weighted geometry, the polynomial $|\mathcal S|$ dependence arising from a statewise analysis disappears; $|\mathcal S|$ appears only logarithmically through the high-probability coverage event.

\subsubsection{High-probability occupancy-weighted recursive tracking}

At the $(k+1)$st visit to $s$, define the tracking error and its policy-weighted squared error by\footnote{For initialization, since $r_i^{-1}(s)=\boldsymbol{0}$, we define $e_i^{-1}(s,\cdot):=-\bar A_i^{0}(s,\cdot), \mathcal E_i^{-1}(s):=\sum_{a_i}
\pi_i^0(a_i\mid s)\bigl(e_i^{-1}(s,a_i)\bigr)^2$, where $\pi_i^0(\cdot\mid s)$ is the initial uniform policy. These
quantities are used only when the first visit to $s$ occurs.}
\begin{equation}\label{eq:online-tracking-error}
e_i^k(s,a_i)
:=r_i^k(s,a_i)-\bar A_i^k(s,a_i),
\qquad
\mathcal E_i^k(s)
:=\sum_{a_i}\pi_i^k(a_i\mid s)
\bigl(e_i^k(s,a_i)\bigr)^2.
\end{equation}
We note that since both the clipped aggregate $r_i^k(s)$ and the true advantage
$\bar A_i^k(s,\cdot)$ belong to $[-M,M]$ coordinatewise, we have
$\|e_i^k(s)\|_\infty\le 2M$ and $\mathcal E_i^k(s)\le4M^2$. We further define
\begin{align*}
\mathfrak E_T
&:=
\sum_{t=0}^{T-1}d_\mu^{\pi^t}(S^t)
\sum_{i=1}^n\mathcal E_i^{N_t(S^t)}(S^t),\\
\mathfrak D_T
&:=
\sum_{t=0}^{T-1}d_\mu^{\pi^t}(S^t)
\sum_{i=1}^n\mathcal D_i^{N_t(S^t)}(S^t),
\end{align*}
which are the cumulative occupancy-weighted squared tracking error and the
cumulative occupancy-weighted symmetric KL movement, respectively.  

\begin{definition}
\label{def:bounded-intervisit-event}
For any horizon $T\!\ge\!1$ and deterministic $H_T\!\ge\! 1$, define the
bounded intervisit event
\[
\mathcal H_T(H_T)
:=
\left\{
\max_{s\in\mathcal S}\tau_0(s)\le H_T,\quad
\max_{s,k:\tau_k(s)<T}
\bigl(\tau_{k+1}(s)-\tau_k(s)\bigr)\le H_T
\right\}.
\]
\end{definition}

Thus, on the event $\mathcal H_T(H_T)$, every state is first visited within $H_T$
steps, and the time between any two consecutive visits occurring before
horizon $T$ is at most $H_T$. The following lemma shows that, when the
intervisit times are uniformly bounded, the cumulative squared tracking error
remains controlled despite the continuously evolving global policy and the
resulting drift in the advantage targets. In particular, for a sufficiently
fast-decaying stepsize $\eta$, the time-averaged occupancy-weighted squared
tracking error vanishes, so the moving advantage targets are tracked
asymptotically on average. The proof of the following dynamic tracking lemma is given in Appendix \ref{app:dynamic-traking}.

\begin{lemma}\label{lem:online-tracking}
Fix $T\ge1$ and $\rho\in(0,1)$, and let $\Lambda_T:=\log\big(\frac{16nA_{\max}(|\mathcal S|+T)}{\rho}\big)$. Suppose that $\omega, \zeta \in (0, \frac{1}{2}]$, and $\frac{\eta M}{1-\gamma}\le c_0\omega$, where $c_0>0$ is a sufficiently small constant. If $\mathcal V_i^k(s)\le\sigma^2$ for every $i,s,k$, then there exists an
event $\mathcal T_T$ with $\mathbb P(\mathcal T_T)\ge1-\rho/2$ such that,
on $\mathcal T_T\cap\mathcal H_T(H_T)$, we have
\begin{align}
\frac{\mathfrak E_T}{T}
\le c\Bigg[&
\frac{nM^2}{\omega T}
+n\sigma^2\omega
+nL_{\widehat A}^2
+nM\sigma\sqrt{\frac{\Lambda_T}{T}}
+\frac{nM\sqrt{A_{\max}}\,\Lambda_T}{T\sqrt\zeta}+n\omega \sigma\sqrt{\frac{A_{\max}\Lambda_T}{T\zeta}}
\notag\\
&
+\frac{n\omega A_{\max}\Lambda_T}{T\zeta}
+\frac{n^3L_{\bar A}^2H_T\eta^2M^2}
{(1-\gamma)^2\omega^2}
+\Big(\frac{n^{3/2}M^2L_d}{\omega}+\frac{n^{7/2}L_{\bar A}^2 H_T^2\eta^2M^2L_d}
{(1-\gamma)^2\omega^2}\Big)
\sqrt{\frac{\mathfrak D_T}{T}}
\Bigg].
\label{eq:online-E-closed}
\end{align}
\end{lemma}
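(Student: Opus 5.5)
Fix a state $s$ and player $i$ and write the error recursion in local time. Since $\bar A_i^k(s)\in[-M,M]$, clipping is nonexpansive toward the target, so
\[
|e_i^k(s,a_i)|\le\bigl|(1-\omega)e_i^{k-1}(s,a_i)+\omega\bigl(b_i^k+\xi_i^k\bigr)(s,a_i)+(1-\omega)\delta_i^k(s,a_i)\bigr|.
\]
Here $b_i^k:=\mathbb E_{\tau_k(s)}[\widehat A_i^k]-\bar A_i^k$ is the bias, with $\|b_i^k\|_\infty\le L_{\widehat A}$ by Assumption~\ref{ass:online-one-sample-oracle}. The term $\delta_i^k:=\bar A_i^{k-1}-\bar A_i^k$ is the target drift over the intervisit window $[\tau_{k-1}(s),\tau_k(s))$.

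Squaring, weighting by $\pi_i^k(\cdot\mid s)$, and using $(1+\omega/2)(1-\omega)^2\le 1-\omega/2$ gives a one-step contraction:
\[
\mathcal E_i^k\le(1-\tfrac{\omega}{2})\widetilde{\mathcal E}_i^{k-1}+c\,\omega L_{\widehat A}^2+\tfrac{c}{\omega}\|\delta_i^k\|_\infty^2+\omega^2\textstyle\sum_a\pi_i^k\xi^2+2\omega(1-\omega)\langle \pi_i^k e^{k-1}\text{-like term},\xi_i^k\rangle.
\]
The quantity $\widetilde{\mathcal E}^{k-1}$ is $\mathcal E^{k-1}$ reweighted by $\pi^k$ rather than $\pi^{k-1}$. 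By Lemma~\ref{lem:kl-projected-local-geometry}(i) the ratio $\pi^{k}/\pi^{k-1}\le e^{2\eta M/(1-\gamma)}\le 1+c\,c_0\omega$, so this reweighting costs only a factor absorbed in the contraction. That absorption is exactly where $\eta M/(1-\gamma)\le c_0\omega$ enters.

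Next I multiply by the occupancy weight and sum along the trajectory. I multiply the local recursion by $d_\mu^k(s)$ and sum over visits before $T$, which is the same as summing over $t<T$. The weight mismatch $d_\mu^{k}(s)-d_\mu^{k-1}(s)$ is controlled by Lemma~\ref{lemm:occupancy-sensitivity}(ii). Every policy change in the window is a single-state update of size $\le\|\Delta\|_{\rm TV}$, and each such occupancy change is weighted by the occupancy of the updated state. Summing over the window and using Pinsker, $\|\Delta\|_1\le\sqrt{\mathcal D}$, the total weight drift multiplied by $\mathcal E\le 4M^2$ and divided by $\omega$ (from unrolling the geometric recursion) is at most $\frac{n^{3/2}M^2L_d}{\omega}\sqrt{\mathfrak D_T/T}\cdot T$, after Cauchy--Schwarz over $t$. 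This gives the first $\sqrt{\mathfrak D_T}$ term.

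Unrolling the contraction yields three deterministic contributions:
\begin{itemize}
\item the initial term $nM^2/\omega$, since each state's initial error is at most $4M^2$ and the $d$-weights sum to at most $1$ per time;
\item the bias term $nL_{\widehat A}^2$;
\item the variance term $\omega\sum d\,\mathbb E[\sum\pi\xi^2]\le n\sigma^2\omega T$.
\end{itemize}

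The drift term is the main obstacle. By Lemma~\ref{lem:online-sensitivity-consequences},
\[
\|\delta_i^k\|_\infty\le L_{\bar A}\max_x\sum_j\|\pi_j^{\tau_k}(x)-\pi_j^{\tau_{k-1}}(x)\|_{\rm TV}.
\]
On $\mathcal H_T(H_T)$ the window has at most $H_T$ updates, each of TV size $\le n\eta M/(2(1-\gamma))$ by Lemma~\ref{lem:kl-projected-local-geometry}(iv). A crude bound gives $\|\delta\|^2\lesssim L_{\bar A}^2H_T^2 n^2\eta^2M^2/(1-\gamma)^2$, and hence a $1/\omega^2$ factor after unrolling. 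To obtain $H_T$ rather than $H_T^2$ in the leading term, I would charge each window's drift to the global updates in it: the sum of squared window lengths is at most $H_T\cdot T$, since windows at a fixed state partition time. The occupancy-dependent refinement of the drift, via its $L_d$ part, produces the second $\sqrt{\mathfrak D_T}$ term. If the charging fails to give a clean $H_T$, I would accept the crude bound.

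The noise enters through two sums. The first is the cross-term martingale $\sum_t d\,\omega\langle\cdot,\xi\rangle$. Its increments are bounded a.s. by $\omega M\sqrt{A_{\max}/\zeta}$ using Lemma~\ref{lemm:conditional-variance-online}, and its predictable variance is at most $\omega^2 M^2\sigma^2T$. The second is $\sum d\,\omega^2(\sum\pi\xi^2-\mathbb E)$, whose increments are at most $10\omega^2A_{\max}/\zeta$. I would apply Freedman to both, with a union bound over $i,s$ and $T$ levels that produces $\Lambda_T$. After dividing by the $\omega$ from unrolling, these give the terms $nM\sigma\sqrt{\Lambda_T/T}$, $nM\sqrt{A_{\max}}\Lambda_T/(T\sqrt\zeta)$, $n\omega\sigma\sqrt{A_{\max}\Lambda_T/(T\zeta)}$, and $n\omega A_{\max}\Lambda_T/(T\zeta)$. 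The event $\mathcal T_T$ is the intersection of these Freedman events, with total failure probability $\rho/2$.

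Finally I collect all terms and divide by $T$, which gives \eqref{eq:online-E-closed}.
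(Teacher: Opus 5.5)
Your architecture matches the paper's proof: the clipped recursion split into bias, target drift and noise; the $\pi_i^k\to\pi_i^{k-1}$ reweighting absorbed through $\eta M/(1-\gamma)\le c_0\omega$; summation by parts against the occupancy weights, with total drift $\sum_{t<T}\|d_\mu^{\pi^{t+1}}-d_\mu^{\pi^t}\|_{\mathrm{TV}}\le cL_d\sqrt{nT\mathfrak D_T}$; and Freedman on global-time martingales. The gap is in the drift step, which you yourself flag as the main obstacle. You leave it open, and your fallback (the crude bound) puts $H_T^2$ rather than $H_T$ in the leading drift term, so it does not prove the stated inequality.

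The charging you describe is also not enough as stated. Write $W_k(s):=[\tau_{k-1}(s),\tau_k(s))$ and $a_t:=\sum_j\|\pi_j^{t+1}(\cdot\mid S^t)-\pi_j^t(\cdot\mid S^t)\|_{\mathrm{TV}}$. Cauchy--Schwarz gives $\|\Delta A_i^k(s)\|_\infty^2\le L_{\bar A}^2|W_k(s)|\sum_{t\in W_k(s)}a_t^2$, which pointwise is no better than the crude bound. The gain appears only after the occupancy-weighted sum over visits, which after exchanging the order of summation is at most
\[
nL_{\bar A}^2H_T\sum_{t<T}a_t^2\sum_{s:\,\tau_{N_{t+1}(s)}(s)<T}d_\mu^{N_{t+1}(s)}(s).
\]
The inner sum collects occupancies of different states, each evaluated under the global policy at that state's \emph{next} visit after $t$. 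It is therefore not a probability mass. Your per-state fact $\sum_k|W_k(s)|^2\le H_TT$ ignores these weights, and summed over states it costs a factor $|\mathcal S|$.

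The missing idea is an occupancy-weight transport. On $\mathcal H_T(H_T)$ each such next visit occurs within $H_T$ steps of $t$. Telescoping each weight back to $d_\mu^{\pi^t}(s)$ gives $\sum_s d_\mu^{N_{t+1}(s)}(s)\le 1+2\sum_{r=t}^{t+H_T-1}\|d_\mu^{\pi^{r+1}}-d_\mu^{\pi^r}\|_{\mathrm{TV}}$. After exchanging sums, each increment $r$ is counted for at most $H_T$ values of $t$, so the total-drift bound yields $\sum_{t<T}\sum_s d_\mu^{N_{t+1}(s)}(s)\le T+cH_TL_d\sqrt{nT\mathfrak D_T}$.

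With $a_t\le n\eta M/(1-\gamma)$ and the factor $\omega^{-2}$ from the recursion, this produces exactly the leading $H_T$ term. It also produces the term with coefficient $n^{7/2}L_{\bar A}^2H_T^2\eta^2M^2L_d/((1-\gamma)^2\omega^2)$. The $L_d$ in that coefficient comes from occupancy sensitivity in this transport, not from an ``$L_d$ part'' of the advantage drift.

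The same transport, in milder form, is needed for your initial term. The first-visit weights $d_\mu^0(s)$ are taken at different times, so $\sum_s d_\mu^0(s)$ is only bounded by $1+2\sum_{t<T}\|d_\mu^{\pi^{t+1}}-d_\mu^{\pi^t}\|_{\mathrm{TV}}$. The excess is absorbed into the first $\sqrt{\mathfrak D_T}$ term.

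Your Freedman step is fine provided it is applied, as you write, to the global-time sums. This is valid because $\mathcal F_{\tau_k(s)}=\mathcal F_t$ at $t=\tau_k(s)$. A union bound over the $2n$ merged martingales then suffices, and the union over $s$ and ``$T$ levels'' is unnecessary. Applying Freedman state by state instead would reintroduce powers of $|\mathcal S|$.
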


It is worth noting that \cite[Lemma B.1]{introDong2024} also employs a moving-target tracking argument for episodic bandit learning in MPGs. There, however, the policy is frozen throughout each sampled trajectory, and the tracking error arises from the change of the policy gradient across episodes. In our fully online setting, the policy is updated after every primitive interaction using a single realized cost sample, so each state evolves on a random local clock $\tau_k(s)$ while policies at other states may change between consecutive visits. Consequently, both the marginalized advantage $\bar A_i^{\pi^t}(s,\cdot)$ and the occupancy measure $d_\mu^{\pi^t}$ drift over random revisit intervals. Hence, Lemma~\ref{lem:online-tracking} extends the moving-target tracking principle to this substantially more challenging coupled-drift setting, using stopping-time localization and occupancy sensitivity to handle the interaction of asynchronous state-wise learning, policy drift, and occupancy drift along a single continuously evolving Markov trajectory.

\subsubsection{Potential improvement under asynchronous online updates}

Here, we extend the potential improvement lemma from the episodic setting to the fully online setting, where, at each global time $t$, only the players' policies at the visited state $S^t$ are updated.

\begin{lemma}\label{lem:online-potential-improvement}
Fix an arbitrary global time $t$, and assume that for a sufficiently small constant $c_0,$
\begin{equation}\label{eq:online-potential-absorption}
\frac{\eta M}{1-\gamma}\le c_0\omega,
\qquad
\frac{\eta n\left(L_{\bar A}+L_d(1+L_d)\right)}{1-\gamma}\le c_0,
\end{equation}
where $L_d=\frac{\gamma\delta_P}{1-\gamma}$. Then for every realization of
the trajectory of Algorithm~\ref{alg:fully-online-general-occupancy},
\begin{align*}
\Psi(\pi^t)-\Psi(\pi^{t+1})
\ge{}&
\frac{1}{4\eta}
d_\mu^{\pi^t}(S^t)
\sum_{i=1}^n
\mathcal D_i^{N_t(S^t)}(S^t)-
\frac{c\eta}{(1-\gamma)^2}
d_\mu^{\pi^t}(S^t)
\sum_{i=1}^n
\mathcal E_i^{N_t(S^t)}(S^t)
-
n\alpha,
\end{align*}
where $c$ is a universal constant. Consequently,
\begin{equation}\label{eq:online-D-from-potential}
\mathfrak D_T
\le
c\left[
\frac{n\eta}{1-\gamma}
+
n\eta\alpha T
+
\frac{\eta^2}{(1-\gamma)^2}\mathfrak E_T
\right].
\end{equation}
\end{lemma}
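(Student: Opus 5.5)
The plan is to mirror the proof of Lemma~\ref{lem:kl-projected-potential-improvement}, now with only one state $x:=S^t$ changing. Write $k=N_t(x)$, so that $\pi^{t+1}$ differs from $\pi^t$ only at $x$, with $\Delta_i^k(x)=\pi_i^k(\cdot\mid x)-\pi_i^{k+1}(\cdot\mid x)$.

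\textbf{Step 1 (decomposition).} Introduce the intermediate profiles $\widetilde\pi^{t,i}$, in which players $1,\dots,i$ have updated at $x$, and telescope $\Psi(\pi^t)-\Psi(\pi^{t+1})$ over $i$. The $\alpha$-potential property gives one $-\alpha$ per player. Lemma~\ref{lem:mixed-policy-sensitivity} is too crude here, since it loses the occupancy factor. I would instead handle the mixed terms directly. Apply the performance difference (Lemma~\ref{lemm:performance-difference}) to each unilateral step, and combine it with the localized sensitivity bounds \eqref{eq:online-localized-transition-sensitivity}--\eqref{eq:online-localized-occupancy-sensitivity} and \eqref{eq:online-advantage-sensitivity}. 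All perturbations are then supported at the single state $x$, so each error carries the factor $d_\mu^{\pi^t}(x)$. This yields
\[
V_i^{\widetilde\pi^{t,i-1}}-V_i^{\widetilde\pi^{t,i}}\ \ge\ \frac{d_\mu^{\pi^t}(x)}{1-\gamma}\langle \Delta_i^k(x),\bar A_i^k(x)\rangle - \frac{c\,d_\mu^{\pi^t}(x)\,n(L_{\bar A}+L_d(1+L_d))}{1-\gamma}\,\|\Delta_i^k(x)\|_1\max_j\|\Delta_j^k(x)\|_1 .
\]
The error term is quadratic in the movement at $x$. By Pinsker it is at most $\frac{c\,n(L_{\bar A}+L_d(1+L_d))}{1-\gamma}\,d_\mu^{\pi^t}(x)\sum_j\mathcal D_j^k(x)$. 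The second condition in \eqref{eq:online-potential-absorption} makes this at most $\frac{c_0}{\eta}\,d_\mu^{\pi^t}(x)\sum_j\mathcal D_j^k(x)$, so it can be absorbed. I expect this step to be the main obstacle: keeping exactly one factor $d_\mu^{\pi^t}(x)$ in every cross term, including the occupancy shift from $d^{\widetilde\pi^{t,i}}$ to $d^{\pi^t}$ and the advantage shift from $\bar A^{\widetilde\pi^{t,i-1}}$ to $\bar A^{\pi^t}$, both of which are single-state perturbations.

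\textbf{Step 2 (inner product).} Write $\bar A_i^k(x)=r_i^k(x)-e_i^k(x)$. Corollary~\ref{cor:local-kl-projected-geometry} with part (iii) of Lemma~\ref{lem:kl-projected-local-geometry} gives $\langle r_i^k(x),\Delta_i^k(x)\rangle\ge\frac{1-\gamma}{\eta}\mathcal D_i^k(x)$. For the error term, use weighted Cauchy--Schwarz and part (ii):
\[
|\langle e_i^k(x),\Delta_i^k(x)\rangle|\le e^{\eta M/(1-\gamma)}\sqrt{\mathcal E_i^k(x)\,\mathcal D_i^k(x)} .
\]
Young's inequality then bounds this by $\frac{1-\gamma}{2\eta}\mathcal D_i^k(x)+\frac{c\eta}{1-\gamma}\mathcal E_i^k(x)$, where $e^{\eta M/(1-\gamma)}\le e^{c_0}$ by the first condition. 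Multiplying by $d_\mu^{\pi^t}(x)/(1-\gamma)$ and combining with Step 1 leaves a net coefficient of at least $\frac{1}{4\eta}$ on the KL term after absorption. The tracking term picks up $\frac{c\eta}{(1-\gamma)^2}$, and we still have $-n\alpha$. This is the pathwise claim.

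\textbf{Step 3 (consequence).} Sum the pathwise inequality over $t=0,\dots,T-1$. The potential telescopes, and its range is at most $n(\frac{2}{1-\gamma}+\alpha)\lesssim \frac{n}{1-\gamma}$, as in the proof of Theorem~\ref{thm:high-prob-ne-regret}. This gives
\[
\frac{1}{4\eta}\mathfrak D_T\le \frac{cn}{1-\gamma}+\frac{c\eta}{(1-\gamma)^2}\mathfrak E_T+n\alpha T .
\]
Multiplying by $4\eta$ yields \eqref{eq:online-D-from-potential}; the constant absorbs the slightly different power of $\eta$ in the tracking term.
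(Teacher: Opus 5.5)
Your route is essentially the paper's. You use single-state tiles $\widetilde\pi^{t,i}$, take one $-\alpha$ per player from the $\alpha$-potential property, and apply performance difference to each unilateral step. Localized occupancy and advantage sensitivity then make every error carry $d_\mu^{\pi^t}(x)$. After that come KL optimality, weighted Cauchy--Schwarz with part~(ii), Young, and telescoping with potential range $\lesssim n/(1-\gamma)$. You compare each telescoping term directly with $\frac{1}{1-\gamma}d_\mu^{\pi^t}(x)\langle\Delta_i^k(x),\bar A_i^k(x)\rangle$ rather than going through the auxiliary $G_i^t$ as the paper does; that simplification is harmless. The gap is in Step~1: the absorption does not close under the stated hypothesis, because of how you track the factor $n$.

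Your displayed error is a \emph{per-player} error. You bound it by $\frac{c\,n(L_{\bar A}+L_d(1+L_d))}{1-\gamma}d_\mu^{\pi^t}(x)\sum_j\mathcal D_j^k(x)$, and then by $\frac{c_0}{\eta}d_\mu^{\pi^t}(x)\sum_j\mathcal D_j^k(x)$. The leading term you can absorb into is $\frac{1}{2\eta}d_\mu^{\pi^t}(x)\mathcal D_i^k(x)$ per player, which gives $\frac{1}{2\eta}d_\mu^{\pi^t}(x)\sum_j\mathcal D_j^k(x)$ only after summing over $i$. Summing your per-player bound over the $n$ players gives $\frac{nc_0}{\eta}d_\mu^{\pi^t}(x)\sum_j\mathcal D_j^k(x)$. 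That is absorbable only if $c_0\lesssim 1/n$, i.e.\ under the stronger condition $\eta n^2(L_{\bar A}+L_d(1+L_d))\lesssim 1-\gamma$.

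The loss already occurs when you replace $\sum_{j\le i}\|\Delta_j^k(x)\|_{\mathrm{TV}}$ by $n\max_j\|\Delta_j^k(x)\|_1$. Even summed optimally, $n\max_j a_j\sum_i a_i$ can be of order $n^{3/2}\sum_i a_i^2$, e.g.\ with one large movement and many small ones.

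The fix is what the paper does. Keep the per-player error in the form the two single-state sensitivity bounds actually give, with no factor $n$:
\[
\frac{c(L_{\bar A}+L_d(1+L_d))}{1-\gamma}\,d_\mu^{\pi^t}(x)\,\|\Delta_i^k(x)\|_{\mathrm{TV}}\sum_{j\le i}\|\Delta_j^k(x)\|_{\mathrm{TV}}.
\]
Sum over $i$ first, and only then apply Cauchy--Schwarz and Pinsker:
\[
\sum_{i}\|\Delta_i^k(x)\|_{\mathrm{TV}}\sum_{j\le i}\|\Delta_j^k(x)\|_{\mathrm{TV}}\le\Big(\sum_i\|\Delta_i^k(x)\|_{\mathrm{TV}}\Big)^2\le n\sum_i\|\Delta_i^k(x)\|_{\mathrm{TV}}^2\le\frac n4\sum_i\mathcal D_i^k(x).
\]
This gives exactly one factor of $n$, which the second condition in \eqref{eq:online-potential-absorption} absorbs, leaving the coefficient $\frac{1}{4\eta}$.

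In Step~3 there is no mismatch in powers of $\eta$ to absorb: $4\eta\cdot\frac{c\eta}{(1-\gamma)^2}\mathfrak E_T$ is already of the required form.
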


\begin{proof}
Fix an arbitrary global time $t$, and write $S^t=s$ and $N_t(s)=k$. By the definition of the visit time $\tau_k(s)$, we then have
$t=\tau_k(s)$. Moreover, by the global/local policy convention introduced in subsection \ref{sec:online-notations}, $\pi_j^t(\cdot\mid x)
=\pi_j^{N_t(x)}(\cdot\mid x)\ \forall  j, x\in\mathcal S$, and hence, at the currently visited state $s$,
\begin{equation}\label{eq:online-local-global-identification}
\pi_j^t(\cdot\mid s)=\pi_j^k(\cdot\mid s) \ \ \forall j,
\qquad
d_\mu^{\pi^t}(s)=d_\mu^k(s),
\qquad
\bar A_j^{\pi^t}(s,\cdot)
=
\bar A_j^k(s,\cdot)\ \ \forall j.
\end{equation}
Algorithm~\ref{alg:fully-online-general-occupancy} updates only the policy
components at $s$. Thus, for any $j\in [n]$,
\begin{equation}\label{eq:online-global-update-identification}
\pi_j^{t+1}(\cdot\mid s)
=
\pi_j^{k+1}(\cdot\mid s),
\qquad
\pi_j^{t+1}(\cdot\mid x)
=
\pi_j^t(\cdot\mid x),
\quad x\neq s.
\end{equation}

For each $i=0,\ldots,n$, let the intermediate global policy profile $\widetilde\pi^{t,i}$ be obtained from $\pi^t$ by replacing, \emph{only} at state $s$, the policies of players $1,\ldots,i$ by their updated policies, i.e.,
\begin{align}\label{eq:tile-pi-online}
\widetilde\pi_j^{t,i}(\cdot\mid x)
:=
\begin{cases}
\pi_j^{k+1}(\cdot\mid s),
& x=s,\ j\le i,\\
\pi_j^k(\cdot\mid s),
& x=s,\ j>i,\\
\pi_j^t(\cdot\mid x),
& x\neq s,
\end{cases}
\end{align}
where in view of \eqref{eq:online-local-global-identification}--%
\eqref{eq:online-global-update-identification}, we have $\widetilde\pi^{t,0}=\pi^t$ and  $\widetilde\pi^{t,n}=\pi^{t+1}$. Consequently,
\begin{equation}\label{eq:online-potential-telescoping}
\Psi(\pi^t)-\Psi(\pi^{t+1})
=
\sum_{i=1}^n
\left[
\Psi(\widetilde\pi^{t,i-1})
-
\Psi(\widetilde\pi^{t,i})
\right]
\ge \sum_{i=1}^n
\left[ V_i^{\widetilde\pi^{t,i-1}}(\mu)
-
V_i^{\widetilde\pi^{t,i}}(\mu)\right]
- n\alpha,
\end{equation}
where the inequality follows by the $\alpha$-potential property. Thus, we only need to lower bound each unilateral value improvement on the right-hand side of \eqref{eq:online-potential-telescoping}. 

With an abuse of notation,
let $(\pi_i^{k+1}(s),\pi_{-i}^t)$ denote the policy profile obtained from $\pi^t$ by replacing only player $i$'s policy at state $s$ by
$\pi_i^{k+1}(\cdot\mid s)$. Let $\Delta_i^k(s)=\pi_i^k(\cdot\mid s)-\pi_i^{k+1}(\cdot\mid s)$ and define\footnote{Since $\pi_i^k(\cdot\mid s)=\pi_i^t(\cdot\mid s)$, we have $\pi^t=(\pi_i^k(s),\pi_{-i}^t)$.}
\begin{align}
\!\!\!G_i^t
&:=
V_i^{(\pi_i^k(s),\pi_{-i}^t)}(\mu)
-
V_i^{(\pi_i^{k+1}(s),\pi_{-i}^t)}(\mu)=\frac{1}{1-\gamma}
d_\mu^{(\pi_i^{k+1}(s),\pi_{-i}^t)}(s)
\left\langle
\Delta_i^k(s),\bar A_i^k(s,\cdot)
\right\rangle
\label{eq:intermediate-online}\\
&\ge \frac{1}{1-\gamma}
d_\mu^{k}(s)
\left\langle
\Delta_i^k(s),\bar A_i^k(s,\cdot)
\right\rangle-\frac{1}{1-\gamma}
\bigg|\Big(d_\mu^{(\pi_i^{k+1}(s),\pi_{-i}^t)}(s)-d_\mu^{k}(s)\Big)
\left\langle
\Delta_i^k(s),\bar A_i^k(s,\cdot)
\right\rangle\bigg|
\label{eq:online-player-pdl},
\end{align}
where the equality follows from \eqref{eq:online-local-global-identification} and the performance-difference Lemma~\ref{lemm:performance-difference}, as the two policy profiles differ only in player $i$'s policy at state $s$.

Since $(\pi_i^{k+1}(s),\pi_{-i}^t)$ and $\pi^t$ differ only in player
$i$'s policy at the single state $s$,
\eqref{eq:online-localized-occupancy-sensitivity} yields
\begin{equation}\label{eq:online-one-state-occupancy-change}
\Big|
d_\mu^{(\pi_i^{k+1}(s),\pi_{-i}^t)}(s)-d_\mu^k(s)
\Big|
\le
L_d\,d_\mu^k(s)
\|
\Delta_i^k(s)\|_{\mathrm{TV}}.
\end{equation}
Therefore, using Lemma~\ref{lem:span-inner-product}, relation \eqref{eq:online-one-state-occupancy-change}, and Pinsker's inequality $\|\Delta^k_i(s)\|^2_1\leq \mathcal D_i^k(s)$, we get
\begin{align}
&\frac{1}{1-\gamma}
\bigg|
\Big(
d_\mu^{(\pi_i^{k+1}(s),\pi_{-i}^t)}(s)-d_\mu^k(s)
\Big)
\left\langle
\Delta_i^k(s),\bar A_i^k(s,\cdot)
\right\rangle
\bigg|\le\frac{cL_d(1+L_d)}{1-\gamma}
d_\mu^k(s)\mathcal D_i^k(s).
\label{eq:online-occupancy-remainder}
\end{align}

By part~(iii) of
Lemma~\ref{lem:kl-projected-local-geometry}, applied to the local update
through Corollary~\ref{cor:local-kl-projected-geometry}, we have $\left\langle
r_i^k(s),\Delta_i^k(s)
\right\rangle
\ge
\frac{1-\gamma}{\eta}\mathcal D_i^k(s)$, which together with $r_i^k(s)
=
\bar A_i^k(s,\cdot)+e_i^k(s)$ from \eqref{eq:online-tracking-error} implies
\begin{equation}\label{eq:online-advantage-inner-product}
\left\langle
\bar A_i^k(s,\cdot),\Delta_i^k(s)
\right\rangle
\ge
\frac{1-\gamma}{\eta}\mathcal D_i^k(s)
-
\left|
\left\langle
e_i^k(s),\Delta_i^k(s)
\right\rangle
\right|.
\end{equation}
To bound the last term in \eqref{eq:online-advantage-inner-product}, weighted Cauchy--Schwarz gives
\begin{align}
\left|
\left\langle
e_i^k(s),\Delta_i^k(s)
\right\rangle
\right|
&\le
\bigg(
\sum_{a_i}
\pi_i^k(a_i\mid s)
\bigl(e_i^k(s,a_i)\bigr)^2
\bigg)^{1/2}
\bigg(
\sum_{a_i}
\frac{\bigl(\Delta_i^k(s,a_i)\bigr)^2}
{\pi_i^k(a_i\mid s)}
\bigg)^{1/2}\cr 
&\leq e^{\eta M/(1-\gamma)}
\sqrt{\mathcal E_i^k(s)\mathcal D_i^k(s)},
\label{eq:online-tracking-inner-product}
\end{align}
where the first factor is exactly $\sqrt{\mathcal E_i^k(s)}$ by
\eqref{eq:online-tracking-error}, while the second inequality follows from part~(ii) of Lemma~\ref{lem:kl-projected-local-geometry} through
Corollary~\ref{cor:local-kl-projected-geometry}. Since by condition \eqref{eq:online-potential-absorption}
$\eta M/(1-\gamma)\le c_0\omega\le c_0$, the exponential factor is
bounded by a universal constant. Applying Young's inequality to
\eqref{eq:online-tracking-inner-product} in
\eqref{eq:online-advantage-inner-product} thus gives
\begin{equation}\label{eq:online-local-advantage-descent}
\frac{1}{1-\gamma}d_\mu^k(s)
\left\langle
\bar A_i^k(s,\cdot),\Delta_i^k(s)
\right\rangle
\ge
\frac{1}{2\eta}d_\mu^k(s)\mathcal D_i^k(s)
-
\frac{c\eta}{(1-\gamma)^2}
d_\mu^k(s)\mathcal E_i^k(s).
\end{equation}
Combining
\eqref{eq:online-player-pdl},
\eqref{eq:online-occupancy-remainder}, and
\eqref{eq:online-local-advantage-descent}, we conclude that
\begin{align}
G_i^t
\ge{}& 
\frac{1}{2\eta}d_\mu^k(s)\mathcal D_i^k(s)
-
\frac{c\eta}{(1-\gamma)^2}
d_\mu^k(s)\mathcal E_i^k(s)-\frac{cL_d(1+L_d)}{1-\gamma}
d_\mu^k(s)\mathcal D_i^k(s).
\label{eq:online-weighted-unilateral-descent}
\end{align}

It remains to compare $G_i^t$ with the $i$th term in the simultaneous
telescoping decomposition \eqref{eq:online-potential-telescoping}. By
construction, $\widetilde\pi^{t,i-1}$ and $\widetilde\pi^{t,i}$ differ only
in player $i$'s policy at state $s$, from
$\pi_i^k(\cdot\mid s)$ to $\pi_i^{k+1}(\cdot\mid s)$. Hence, by performance difference lemma (Lemma~\ref{lemm:performance-difference}),
\begin{align}
V_i^{\widetilde\pi^{t,i-1}}(\mu)
-
V_i^{\widetilde\pi^{t,i}}(\mu)
=
\frac{1}{1-\gamma}
d_\mu^{\widetilde\pi^{t,i}}(s)
\left\langle
\Delta_i^k(s),
\bar A_i^{\widetilde\pi^{t,i-1}}(s,\cdot)
\right\rangle.
\nonumber
\end{align}
Subtracting \eqref{eq:intermediate-online} from this equation and applying the triangle inequality after adding and subtracting the corresponding terms with the common reference occupancy $d_\mu^k(s)$, we obtain
\begin{align}
\!\!\!\left|
\bigl[
V_i^{\widetilde\pi^{t,i-1}}(\mu)
-
V_i^{\widetilde\pi^{t,i}}(\mu)
\bigr]
-
G_i^t
\right|&\le
\frac{1}{1-\gamma}
\Big|
d_\mu^{\widetilde\pi^{t,i}}(s)
-
d_\mu^{k}(s)
\Big|
\left|
\left\langle
\Delta_i^k(s),
\bar A_i^{\widetilde\pi^{t,i-1}}(s,\cdot)
\right\rangle
\right|
\notag\\
&+
\frac{1}{1-\gamma}
\Big|
d_\mu^{(\pi_i^{k+1}(s),\pi_{-i}^t)}(s)
-
d_\mu^{k}(s)
\Big|
\left|
\left\langle
\Delta_i^k(s),
\bar A_i^k(s,\cdot)
\right\rangle
\right|
\notag\\
&+
\frac{1}{1-\gamma}d_\mu^k(s)
\left|
\left\langle
\Delta_i^k(s),
\bar A_i^{\widetilde\pi^{t,i-1}}(s,\cdot)
-
\bar A_i^k(s,\cdot)
\right\rangle
\right|.
\label{eq:online-simultaneous-decomposition}
\end{align}
By definition \eqref{eq:tile-pi-online}, the policy profiles $\widetilde\pi^{t,i}$ and
$\pi^t$ agree at every state $x\neq s$, while at state $s$ they differ only in the
policies of players $j\le i$, which are $\pi_j^{k+1}(\cdot\mid s)$ under
$\widetilde\pi^{t,i}$ and $\pi_j^k(\cdot\mid s)$ under
$\pi^t$. Therefore,
\eqref{eq:online-localized-occupancy-sensitivity} gives
\begin{align}
\Big|
d_\mu^{\widetilde\pi^{t,i}}(s)
-
d_\mu^k(s)
\Big|\le
L_d\,
d_\mu^k(s)
\sum_{j\le i}\|\Delta_j^k(s)\|_{\mathrm{TV}}.
\label{eq:online-intermediate-occupancy-difference-final}
\end{align}
Similarly, by definition \eqref{eq:tile-pi-online} $\widetilde\pi^{t,i-1}$ and $\pi^t=(\pi_i^k(s),\pi_{-i}^t)$ agree everywhere except
that, at state $s$, the policies of players $j<i$ are
$\pi_j^{k+1}(\cdot\mid s)$ under $\widetilde\pi^{t,i-1}$ and
$\pi_j^k(\cdot\mid s)$ under $\pi^t$. Hence
\eqref{eq:online-advantage-sensitivity} gives
\begin{equation}\label{eq:online-intermediate-advantage-difference}
\left\|
\bar A_i^{\widetilde\pi^{t,i-1}}(s,\cdot)
-
\bar A_i^k(s,\cdot)
\right\|_\infty
\le
L_{\bar A}
\sum_{j<i}\|\Delta_j^k(s)\|_{\mathrm{TV}}.
\end{equation}
Now, applying Lemma~\ref{lem:span-inner-product} to bound the first two inner products in \eqref{eq:online-simultaneous-decomposition}, and then using the bounds \eqref{eq:online-one-state-occupancy-change}, \eqref{eq:online-intermediate-occupancy-difference-final}, and
\eqref{eq:online-intermediate-advantage-difference} in
\eqref{eq:online-simultaneous-decomposition}, we get
\begin{align}\nonumber
\left|
\bigl[
V_i^{\widetilde\pi^{t,i-1}}(\mu)
-
V_i^{\widetilde\pi^{t,i}}(\mu)
\bigr]
-
G_i^t
\right|\le \frac{c\big(L_{\bar A}+L_d(1+L_d)\big)}{1-\gamma}
d_\mu^k(s)
\|\Delta_i^k(s)\|_{\mathrm{TV}}
\sum_{j\le i}\|\Delta_j^k(s)\|_{\mathrm{TV}}.
\end{align}
Summing the above relation over $i$ and using Pinsker's inequality $\|\Delta_i^k(s)\|_{\mathrm{TV}}^2
\le
\frac14\mathcal D_i^k(s)$, we obtain
\begin{align}
\sum_{i=1}^n
\left|
\bigl[
V_i^{\widetilde\pi^{t,i-1}}(\mu)
-
V_i^{\widetilde\pi^{t,i}}(\mu)
\bigr]
-
G_i^t
\right|&\leq 
\frac{c\big(L_{\bar A}+L_d(1+L_d)\big)}{1-\gamma}
d_\mu^k(s)\Big(\sum_{i=1}^n\|\Delta_i^k(s)\|_{\mathrm{TV}}\Big)^2\cr 
&\le \frac{cn\big(L_{\bar A}+L_d(1+L_d)\big)}{1-\gamma}
d_\mu^k(s)
\sum_{i=1}^n\mathcal D_i^k(s).
\label{eq:online-simultaneous-error-sum}
\end{align}

Combining
\eqref{eq:online-potential-telescoping}, \eqref{eq:online-weighted-unilateral-descent}, and
\eqref{eq:online-simultaneous-error-sum} gives
\begin{align*}
\Psi(\pi^t)-\Psi(\pi^{t+1})
&\ge
\left[
\frac{1}{2\eta}
-\frac{cL_d(1+L_d)}{1-\gamma}
-\frac{cn\big(L_{\bar A}+L_d(1+L_d)\big)}{1-\gamma}
\right]
d_\mu^k(s)\sum_{i=1}^n\mathcal D_i^k(s)\cr 
&\qquad-
\frac{c\eta}{(1-\gamma)^2}
d_\mu^k(s)\sum_{i=1}^n\mathcal E_i^k(s)
-
n\alpha.
\end{align*}
Since $n\ge1$, the two coefficients are jointly bounded
by $\frac{cn}{1-\gamma}\big(L_{\bar A}+L_d(1+\frac{\gamma\delta_P}{1-\gamma})\big)$. Thus, choosing the constant $c_0>0$ in
\eqref{eq:online-potential-absorption} sufficiently small allows this term
to be absorbed into the leading $1/(2\eta)$ coefficient. Recalling that
$s=S^t$, $k=N_t(S^t)$, and, by
\eqref{eq:online-local-global-identification},
$d_\mu^k(s)=d_\mu^{\pi^t}(S^t)$, this yields
\begin{align*}
\Psi(\pi^t)-\Psi(\pi^{t+1})
\ge{}&
\frac{1}{4\eta}
d_\mu^{\pi^t}(S^t)
\sum_{i=1}^n\mathcal D_i^{N_t(S^t)}(S^t)
-
\frac{c\eta}{(1-\gamma)^2}
d_\mu^{\pi^t}(S^t)
\sum_{i=1}^n\mathcal E_i^{N_t(S^t)}(S^t)
-
n\alpha.
\end{align*}

Finally, summing the above relation over $t$ and using the definitions of $\mathfrak D_T$ and
$\mathfrak E_T$ gives
\[
\frac{1}{4\eta}\mathfrak D_T
\le
\Psi(\pi^0)-\Psi(\pi^T)
+
\frac{c\eta}{(1-\gamma)^2}\mathfrak E_T
+
n\alpha T.
\]
By the $\alpha$-potential property, changing the players' policies
one at a time and using $|V_i^\pi(\mu)|\le \frac{1}{1-\gamma}$ gives $\Psi(\pi^0)-\Psi(\pi^T)
\le n\big(\frac{2}{1-\gamma}+\alpha\big)\leq \frac{3n}{1-\gamma}$. Multiplying the above relation by $4\eta$, using this relation, and absorbing numerical constants into the universal constant $c$ yields \eqref{eq:online-D-from-potential}.
\end{proof}

\subsubsection{A delayed occupancy-weighted NE gap}

Recall that at the beginning of a global time $t$, the current policy stored at a state $s$ is
$\pi^{N_t(s)}(\cdot\mid s)$, whereas the most recent update that produced
this policy was performed at the preceding visit, namely at global time
$\tau_{N_t(s)-1}(s)$. Recall also that $\bar A_i^{N_t(s)-1}(s,\cdot)$ denotes the marginalized advantage under the global policy at that preceding visit. Thus, once $s$ has been visited at least once, the
quantities $\mathcal D_i^{N_t(s)-1}(s)$ and $\mathcal E_i^{N_t(s)-1}(s)$, are respectively the policy movement and tracking error associated with
the most recent update at $s$ strictly before time $t$. These quantities
are generally delayed relative to the current global policy $\pi^t$:
although the policy stored at $s$ has not changed since that visit,
policies at other states may have changed in the meantime, and hence the
global-policy quantities $d_\mu^{\pi^t}$ and
$\bar A_i^{\pi^t}$ need not coincide with their values at the time
of that update. This delay is inherent in the fully online statewise
scheme, since only the currently visited state is updated at each global
time. The next lemma uses these most recent statewise quantities to
control the current NE gap, while the subsequent coupling lemma
shows that, with high probability, the delay is uniformly bounded and
allows the corresponding quantities to be related to
those at their original update times.

\begin{lemma}
\label{lem:online-ne-gap}
Suppose that every state has been visited strictly before global time $t$,
i.e., $N_t(s)\ge 1$ for every $s\in\mathcal S$. Then
\begin{align}\label{eq:online-ne-gap-bound}
\operatorname{Gap}(\pi^t)
\le{}&\frac{c(\zeta+L_d)(1+L_d)}{1-\gamma}
+\frac{c}{\eta}\sqrt{\frac{A_{\max}}{\zeta}}
\left(
\sum_s d_\mu^{\pi^t}(s)
\sum_{i=1}^n
\mathcal D_i^{N_t(s)-1}(s)
\right)^{1/2}
\notag\\
&+
\frac{c}{1-\gamma}\sqrt{\frac{A_{\max}}{\zeta}}
\left(
\sum_s d_\mu^{\pi^t}(s)
\sum_{i=1}^n
\mathcal E_i^{N_t(s)-1}(s)
\right)^{1/2}
\notag\\
&+
\frac{c}{1-\gamma}
\left(
\sum_s d_\mu^{\pi^t}(s)
\sum_{i=1}^n
\left\|
\bar A_i^{\pi^t}(s,\cdot)
-
\bar A_i^{N_t(s)-1}(s,\cdot)
\right\|_\infty^2
\right)^{1/2}.
\end{align}
\end{lemma}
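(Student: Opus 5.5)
The plan follows the episodic NE-gap argument (Lemma~\ref{lem:kl-projected-ne-gap}), but works pathwise and replaces the episodic quantities by the most recent statewise certificates. Fix a player $i$, and let $\pi_i^{t,*}$ be a best response to $\pi_{-i}^t$. For each $s$ write $k_s:=N_t(s)-1\ge0$. Then $\pi_i^t(\cdot\mid s)=\pi_i^{k_s+1}(\cdot\mid s)$ is exactly the output of the KL-projected update at the visit $\tau_{k_s}(s)$, with input $r_i^{k_s}(s)$ and previous iterate $\pi_i^{k_s}(\cdot\mid s)$.

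First, apply the performance-difference Lemma~\ref{lemm:performance-difference} with occupancy $d_\mu^{(\pi_i^{t,*},\pi_{-i}^t)}$. Then switch to $d_\mu^{\pi^t}$ using Lemma~\ref{lemm:occupancy-sensitivity}(i) and Lemma~\ref{lem:span-inner-product}; this costs $2L_d(1+L_d)/(1-\gamma)$. Next replace $\pi_i^{t,*}$ by the truncated comparator $q_i^{t,*}=(1-\zeta)\pi_i^{t,*}+\zeta\mathbf 1/|\mathcal A_i|$, which costs $\zeta(1+L_d)/(1-\gamma)$. After these steps it remains to bound $\frac{1}{1-\gamma}\sum_s d_\mu^{\pi^t}(s)\langle \pi_i^{k_s+1}(\cdot\mid s)-q_i^{t,*}(\cdot\mid s),\bar A_i^{\pi^t}(s,\cdot)\rangle$.

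At each $s$, decompose
\[
\bar A_i^{\pi^t}(s)=r_i^{k_s}(s)-e_i^{k_s}(s)+\bigl(\bar A_i^{\pi^t}(s)-\bar A_i^{k_s}(s)\bigr),
\]
using the definition \eqref{eq:online-tracking-error}. There are three resulting pieces.
\begin{itemize}
\item The $r$-piece: the first-order optimality chain \eqref{eq:D-A-A}, via Corollary~\ref{cor:local-kl-projected-geometry}, bounds it by $\frac{1-\gamma}{\eta}\sqrt{2A_{\max}/\zeta}\sqrt{\mathcal D_i^{k_s}(s)}$.
\item The tracking piece: weighted Cauchy--Schwarz with weights $\pi_i^{k_s}(\cdot\mid s)\ge\zeta/|\mathcal A_i|$ gives $\sqrt{2A_{\max}/\zeta}\sqrt{\mathcal E_i^{k_s}(s)}$. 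The comparator difference has squared $\ell_2$ norm at most $2$, and the tracking error is weighted by the same policy that defines $\mathcal E$.
\item The drift piece: H\"older gives $2\|\bar A_i^{\pi^t}(s)-\bar A_i^{k_s}(s)\|_\infty$.
\end{itemize}

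Average these three bounds over $s$ with weight $d_\mu^{\pi^t}$. Then apply Jensen ($\sum_s d(s)\sqrt{x_s}\le(\sum_s d(s)x_s)^{1/2}$) to each piece, and bound a single player's sum by the sum over all $n$ players. Taking $\max_i$ then yields \eqref{eq:online-ne-gap-bound}, with the constants absorbed into $c$.

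The step that needs care, and the likely main obstacle, is the indexing. The KL optimality condition must be applied to the update that actually produced the current stored policy, namely with $\pi^{k_s+1}$ against $\pi^{k_s}$. The tracking error must be the one at the same index $k_s$, so that the Cauchy--Schwarz weights match $\pi_i^{k_s}$ rather than $\pi_i^{t}$. This mismatch in weights is exactly why one uses the lower bound $\zeta/|\mathcal A_i|$ on the weights, and not the ratio bound from Lemma~\ref{lem:kl-projected-local-geometry}(i). Everything else transfers pathwise from the episodic proof, with no conditional expectation needed, since each quantity involved is realized.
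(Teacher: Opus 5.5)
Your proposal is correct and follows the paper's proof essentially step for step. It uses the same occupancy switch to $d_\mu^{\pi^t}$ and the same truncated comparator $q_i^{t,*}$. It makes the same three-way split of $\bar A_i^{\pi^t}(s,\cdot)$ into the $r$-piece, the tracking piece and the stale-advantage drift, with your $k_s=N_t(s)-1$ playing the role of the paper's $k-1$. It applies the KL variational inequality pathwise to the realized update with the later comparator $q_i^{t,*}$, and it finishes with Cauchy--Schwarz under $d_\mu^{\pi^t}$, summing over players and maximizing over $i$, exactly as the paper does.
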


\begin{proof}
Fix a player $i$ and let $\pi_i^{t,*}$ be a best response to
$\pi_{-i}^t$. By the
performance-difference lemma,
\begin{align}\nonumber
V_i(\pi^t)-V_i(\pi_i^{t,*},\pi_{-i}^t)
={}&
\frac{1}{1-\gamma}
\sum_{s\in\mathcal S}
d_\mu^{(\pi_i^{t,*},\pi_{-i}^t)}(s)
\left\langle
\pi_i^t(\cdot\mid s)-\pi_i^{t,*}(\cdot\mid s),
\bar A_i^{\pi^t}(s,\cdot)
\right\rangle.
\end{align}
As the policies $(\pi_i^{t,*},\pi_{-i}^t)$ and $\pi^t$ differ only in player $i$,
Lemma~\ref{lemm:occupancy-sensitivity} gives $\|
d_\mu^{(\pi_i^{t,*},\pi_{-i}^t)}
-
d_\mu^{\pi^t}\|_{\mathrm{TV}}
\le L_d$. Consequently, by adding and subtracting $d_\mu^{\pi^t}$ in the above
relation and using Lemma~\ref{lem:span-inner-product}, we obtain
\begin{align}\label{eq:online-ne-current-occupancy}
V_i(\pi^t)-V_i(\pi_i^{t,*},\pi_{-i}^t)
\le{}&
\frac{1}{1-\gamma}
\sum_{s\in\mathcal S}
d_\mu^{\pi^t}(s)
\left\langle
\pi_i^t(\cdot\mid s)-\pi_i^{t,*}(\cdot\mid s),
\bar A_i^{\pi^t}(s,\cdot)
\right\rangle
+\frac{cL_d(1+L_d)}{1-\gamma}.
\end{align}

As in the proof of Lemma~\ref{lem:kl-projected-ne-gap}, introduce the
feasible truncated best response
\[
q_i^{t,*}(\cdot\mid s)
:=
(1-\zeta)\pi_i^{t,*}(\cdot\mid s)
+
\frac{\zeta}{|\mathcal A_i|}\mathbf 1
\in\Delta_{i,\zeta}.
\]
Since $\left\|
q_i^{t,*}(\cdot\mid s)-\pi_i^{t,*}(\cdot\mid s)
\right\|_1
\le 2\zeta$, using Lemma~\ref{lem:span-inner-product}, we have
\begin{align}\label{eq:online-ne-truncation}
\left\langle
\pi_i^t(\cdot\mid s)-\pi_i^{t,*}(\cdot\mid s),
\bar A_i^{\pi^t}(s,\cdot)
\right\rangle\le
\left\langle
\pi_i^t(\cdot\mid s)-q_i^{t,*}(\cdot\mid s),
\bar A_i^{\pi^t}(s,\cdot)
\right\rangle
+\zeta\left(1+L_d\right).
\end{align}

We now relate the first term on the right-hand side of \eqref{eq:online-ne-truncation}, which is
defined at the current global time $t$, to the most recent local update at
state $s$. Fix $s\in\mathcal S$ and write $k=N_t(s)\ge1$. The most recent visit to
$s$ before time $t$ is $\tau_{k-1}(s)$, at which the local update changes
$\pi_i^{k-1}(\cdot\mid s)$ to $\pi_i^k(\cdot\mid s)$ for every player $i$. Since there is no
subsequent visit to $s$ before time $t$, the policy stored at $s$ at the
beginning of time $t$ is therefore $\pi^t(\cdot\mid s)=\pi^k(\cdot\mid s)$.

We next invoke the KL optimality condition for the local update
performed at $\tau_{k-1}(s)$. By
Corollary~\ref{cor:local-kl-projected-geometry}, this update is exactly an
instance of the KL-projected NPG update in
Lemma~\ref{lem:kl-projected-local-geometry}, with the update vector
$r_i^{k-1}(s)$. Thus, applying the same variational-inequality and
Cauchy--Schwarz argument used in \eqref{eq:D-A-A}, with the global time
there replaced by the local index $k-1$, yields,
\begin{align}\label{eq:online-stale-KL-optimality}
\left\langle
r_i^{k-1}(s),
\pi_i^k(\cdot\mid s)-q(\cdot\mid s)
\right\rangle
\le
\frac{1-\gamma}{\eta}
\sqrt{\frac{2A_{\max}}{\zeta}}
\sqrt{\mathcal D_i^{k-1}(s)} \ \ \ \ \ \ \ \ \ \forall q(\cdot\mid s)\in\Delta_{i,\zeta}.
\end{align}
In particular, although $q_i^{t,*}$ depends on the global policy at the
later time $t$, it may be substituted for $q$ in
\eqref{eq:online-stale-KL-optimality}.\footnote{Indeed, the variational inequality
associated with the realized update at $\tau_{k-1}(s)$ is a pathwise
inequality holding for every element of $\Delta_{i,\zeta}$; no adaptedness or measurability of the subsequently chosen comparator is required.} By the definition of the local tracking error in
\eqref{eq:online-tracking-error},
\begin{equation}\label{eq:online-stale-r-decomposition}
r_i^{k-1}(s)
=
\bar A_i^{k-1}(s,\cdot)
+
e_i^{k-1}(s).
\end{equation}
Using \eqref{eq:online-stale-r-decomposition} and the fact that $\pi_i^t(\cdot\mid s)=\pi_i^k(\cdot\mid s)$, we can write
\begin{align}
&\!\!\!\!\!\!\!\!\!\left\langle
\pi_i^t(\cdot\mid s)-q_i^{t,*}(\cdot\mid s),
\bar A_i^{\pi^t}(s,\cdot)
\right\rangle=
\left\langle
\pi_i^k(\cdot\mid s)-q_i^{t,*}(\cdot\mid s),
r_i^{k-1}(s)
\right\rangle\cr 
&\!\!\!\!\!\quad-
\left\langle
\pi_i^k(\cdot\mid s)-q_i^{t,*}(\cdot\mid s),
e_i^{k-1}(s)
\right\rangle+
\left\langle
\pi_i^k(\cdot\mid s)-q_i^{t,*}(\cdot\mid s),
\bar A_i^{\pi^t}(s,\cdot)
-
\bar A_i^{k-1}(s,\cdot)
\right\rangle\!.
\nonumber
\end{align}
The first term on the right-hand side is controlled by
\eqref{eq:online-stale-KL-optimality}. For the tracking term, since we have
$\pi_i^{k-1}(a_i\mid s)\ge\zeta/|\mathcal A_i|\ge\zeta/A_{\max}$,
the definition of $\mathcal E_i^{k-1}(s)$ and Cauchy--Schwarz give
\begin{align}
\left|
\left\langle
\pi_i^k(\cdot\mid s)-q_i^{t,*}(\cdot\mid s),
e_i^{k-1}(s)
\right\rangle
\right|
&\le
\left\|
\pi_i^k(\cdot\mid s)-q_i^{t,*}(\cdot\mid s)
\right\|_2
\bigg(
\sum_{a_i}
\frac{\pi_i^{k-1}(a_i\mid s)
(e_i^{k-1}(s,a_i))^2}
{\pi_i^{k-1}(a_i\mid s)}
\bigg)^{1/2}
\notag\\
&\le
\sqrt{2}\,
\sqrt{\frac{A_{\max}}{\zeta}}\,
\Big(
\sum_{a_i}
\pi_i^{k-1}(a_i\mid s)
(e_i^{k-1}(s,a_i))^2\Big)^{1/2}
\notag\\
&=
\sqrt{\frac{2A_{\max}}{\zeta}}\,
\sqrt{\mathcal E_i^{k-1}(s)}.
\nonumber
\end{align}
Finally, since $\pi_i^k(\cdot\mid s)$ and
$q_i^{t,*}(\cdot\mid s)$ are probability distributions,
$\|\pi_i^k(\cdot\mid s)-q_i^{t,*}(\cdot\mid s)\|_1\le2$.
Hence, by H\"older's inequality,
\begin{align}\label{eq:online-ne-delay-local}
\left|
\left\langle
\pi_i^k(\cdot\mid s)-q_i^{t,*}(\cdot\mid s),
\bar A_i^{\pi^t}(s,\cdot)-\bar A_i^{k-1}(s,\cdot)
\right\rangle
\right|
&\le
2\left\|\bar A_i^{\pi^t}(s,\cdot)-\bar A_i^{k-1}(s,\cdot)\right\|_\infty.
\end{align}
Combining
\eqref{eq:online-stale-KL-optimality}--\eqref{eq:online-ne-delay-local}
therefore gives
\begin{align}\label{eq:online-ne-local-certificate}
\left\langle
\pi_i^t(\cdot\mid s)-q_i^{t,*}(\cdot\mid s),
\bar A_i^{\pi^t}(s,\cdot)
\right\rangle&\le
\frac{1-\gamma}{\eta}
\sqrt{\frac{2A_{\max}}{\zeta}}
\sqrt{\mathcal D_i^{k-1}(s)}+
\sqrt{\frac{2A_{\max}}{\zeta}}
\sqrt{\mathcal E_i^{k-1}(s)}\cr 
&\qquad+2
\left\|
\bar A_i^{\pi^t}(s,\cdot)
-
\bar A_i^{k-1}(s,\cdot)
\right\|_\infty.
\end{align}

We now return from the local index $k$ to global time $t$. Since
$k=N_t(s)$, multiplying \eqref{eq:online-ne-local-certificate} by
$d_\mu^{\pi^t}(s)$ and summing over $s$, together with Cauchy--Schwarz under the
probability distribution $d_\mu^{\pi^t}$, yields
\begin{align}
&
\sum_s d_\mu^{\pi^t}(s)
\left\langle
\pi_i^t(\cdot\mid s)-q_i^{t,*}(\cdot\mid s),
\bar A_i^{\pi^t}(s,\cdot)
\right\rangle
\notag\\
&\qquad\le
\frac{1-\gamma}{\eta}
\sqrt{\frac{2A_{\max}}{\zeta}}
\left(
\sum_s d_\mu^{\pi^t}(s)
\mathcal D_i^{N_t(s)-1}(s)
\right)^{1/2}\!\!\!\!\!\!+
\sqrt{\frac{2A_{\max}}{\zeta}}
\left(
\sum_s d_\mu^{\pi^t}(s)
\mathcal E_i^{N_t(s)-1}(s)
\right)^{1/2}
\notag\\
&\qquad+
2
\left(
\sum_s d_\mu^{\pi^t}(s)
\left\|
\bar A_i^{\pi^t}(s,\cdot)
-
\bar A_i^{N_t(s)-1}(s,\cdot)
\right\|_\infty^2
\right)^{1/2}.
\label{eq:online-ne-global-certificate}
\end{align}

Substituting \eqref{eq:online-ne-truncation} and
\eqref{eq:online-ne-global-certificate} into
\eqref{eq:online-ne-current-occupancy} gives
\begin{align}
V_i(\pi^t)-V_i(\pi_i^{t,*},\pi_{-i}^t)
\le{}&\frac{c(\zeta+L_d)(1+L_d)}{1-\gamma}+
\frac{c}{\eta}\sqrt{\frac{A_{\max}}{\zeta}}
\left(
\sum_s d_\mu^{\pi^t}(s)
\mathcal D_i^{N_t(s)-1}(s)
\right)^{1/2}
\notag\\
&+
\frac{c}{1-\gamma}\sqrt{\frac{A_{\max}}{\zeta}}
\left(
\sum_s d_\mu^{\pi^t}(s)
\mathcal E_i^{N_t(s)-1}(s)
\right)^{1/2}
\notag\\
&+
\frac{c}{1-\gamma}
\left(
\sum_s d_\mu^{\pi^t}(s)
\left\|
\bar A_i^{\pi^t}(s,\cdot)
-
\bar A_i^{N_t(s)-1}(s,\cdot)
\right\|_\infty^2
\right)^{1/2}.
\label{eq:online-ne-player-bound}
\end{align}
Finally, each of the three nonnegative player-specific sums in
\eqref{eq:online-ne-player-bound} is bounded by its corresponding sum
over all players. Therefore, taking the maximum over $i\in[n]$ and
recalling the definition of $\operatorname{Gap}(\pi^t)$ yields
\eqref{eq:online-ne-gap-bound}.
\end{proof}

\subsubsection{Frozen-policy coupling and delayed transfer}

The next lemma provides the key bridge between the fixed-policy coverage assumption and the fully online, time-varying dynamics of the algorithm. The main idea is to couple the actual trajectory over each coverage window with an auxiliary episodic trajectory in which the policy is frozen at the beginning of the window. Since the policy changes only at the currently visited state and each update is small, condition~\eqref{eq:online-coupling-condition} ensures that the cumulative transition perturbation within a coverage window is sufficiently small. Consequently, the uniform state-coverage guarantee for the frozen policy transfers to the actual nonstationary trajectory, yielding, with high probability, a uniform bound $H_T$ on the time between successive visits to every state. This bounded-delay property is then used to relate quantities indexed by the most recent completed visit to a state to their corresponding contemporaneous online quantities. In particular, each delayed KL movement or estimation-error term can be charged to at most $H_T$ global time steps, up to additional errors caused by drift in the state-occupancy distribution. The same bounded-delay argument, together with the sensitivity of the marginalized advantages to policy changes, controls the discrepancy between the current advantage $\bar A_i^{\pi^t}$ and the stale advantage $\bar A_i^{N_t(s)-1}$. Thus, the lemma provides the mechanism for converting statewise, visit-indexed estimates and policy movements into global-time bounds that can be used in the subsequent NE-gap and regret analysis. The proof is given in Appendix~\ref{app:coupling-lemma}.

\begin{lemma}\label{lem:online-coupling-transfer}
Suppose the coverage Assumption 
\ref{ass:coverage} holds and \begin{equation}\label{eq:online-coupling-condition}
\eta
\le
\frac{p_{\min}(1-\gamma)}
{2\delta_P nM H_{\rm cov}^2}.
\end{equation}
Let $H_T:=H_{\rm cov}\lceil
\frac{2}{p_{\min}}
\log(\frac{2(|\mathcal S|+T)}{\rho})
\rceil$. Then, with probability at least $1-\rho/2$, the bounded intervisit event
$\mathcal H_T(H_T)$ given in Definition
\ref{def:bounded-intervisit-event} holds. In particular, on
$\mathcal H_T(H_T)$, we have
$N_t(s)\ge1$ for every $s\in\mathcal S$ and $t\ge H_T+1$, and
\begin{align}
\sum_{t=H_T+1}^{T-1}\sum_s
d_\mu^{\pi^t}(s)
\sum_{i=1}^n\mathcal D_i^{N_t(s)-1}(s)
&\le H_T\mathfrak D_T+2\frac{L_d n^2M^3}{(1-\gamma)^3}
H_TT\eta^3,
\label{eq:online-delayed-D-charging}\\
\sum_{t=H_T+1}^{T-1}\sum_s
d_\mu^{\pi^t}(s)
\sum_{i=1}^n\mathcal E_i^{N_t(s)-1}(s)
&\le H_T\mathfrak E_T+8\frac{L_d n^2M^3}{1-\gamma}
H_TT\eta,
\label{eq:online-delayed-E-charging}
\end{align}
Moreover, the occupancy-weighted stale marginalized-advantage drift is bounded by
\begin{align}\nonumber
&\sum_{t=H_T+1}^{T-1}
\left\{
\sum_s d_\mu^{\pi^t}(s)
\sum_i
\left\|
\bar A_i^{\pi^t}(s,\cdot)
-
\bar A_i^{N_t(s)-1}(s,\cdot)
\right\|_\infty^2
\right\}^{1/2}
\le \frac{L_{\bar A} n^{3/2}M}{1-\gamma}H_TT\eta.
\end{align}
\end{lemma}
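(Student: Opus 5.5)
The proof splits into a probabilistic part (the event $\mathcal H_T(H_T)$) and a deterministic charging part carried out on that event.

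\emph{Probabilistic part.} Partition $[0,T+H_T]$ into consecutive windows of length $H_{\rm cov}$. Fix a window with start time $t_0$ and a target state $s$. On the same probability space, couple the actual trajectory with an auxiliary trajectory that starts from $S^{t_0}$ and uses the frozen profile $\pi^{t_0}$; the two use maximal couplings of the one-step kernels.

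Inside the window, at most $H_{\rm cov}$ local updates occur. By Lemma~\ref{lem:kl-projected-local-geometry}(iv) with $B=M$, each update moves the profile at the visited state by at most $n\eta M/(2(1-\gamma))$ in the summed TV norm. Hence at any step the actual profile differs from $\pi^{t_0}$ at the current state by at most $H_{\rm cov}n\eta M/(2(1-\gamma))$. By Lemma~\ref{lemm:occupancy-sensitivity}(ii), the one-step kernels then differ by at most $\delta_P$ times this amount. A union bound over the $H_{\rm cov}$ steps shows the two trajectories decouple with probability at most
\[
\delta_P nM H_{\rm cov}^2\eta/(2(1-\gamma)) \le p_{\min}/4,
\]
using \eqref{eq:online-coupling-condition}.

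Since $\pi^{t_0}\in\Delta_{i,\zeta}$, Assumption~\ref{ass:coverage} applies to the frozen trajectory. So, conditionally on $\mathcal F_{t_0}$, the actual trajectory hits $s$ within the window with probability at least $p_{\min}/2$.

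Chain this over $\lceil\frac{2}{p_{\min}}\log(2(|\mathcal S|+T)/\rho)\rceil$ consecutive windows using the Markov property at window starts. The probability of missing $s$ throughout is then at most $(1-p_{\min}/2)^{m}\le \rho/(2(|\mathcal S|+T))$. A union bound over the first-visit gaps (one per state) and over the at most $T$ intervisit gaps that start before $T$ gives $\mathbb P(\mathcal H_T(H_T))\ge1-\rho/2$. On this event, $N_t(s)\ge1$ for $t\ge H_T+1$ follows immediately.

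\emph{Charging part for \eqref{eq:online-delayed-D-charging}.} Work on $\mathcal H_T(H_T)$. Fix $s$ and a local index $k-1$. The term $\mathcal D_i^{k-1}(s)$ appears in the left-hand sum only for $t\in(\tau_{k-1}(s),\tau_k(s)]$, which is at most $H_T$ times. Each such appearance carries the weight $d_\mu^{\pi^t}(s)$ instead of the creation weight $d_\mu^{\pi^{\tau_{k-1}(s)}}(s)$ that appears in $\mathfrak D_T$.

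Bound the weight drift by telescoping over the intermediate single-state updates. By \eqref{eq:online-localized-occupancy-sensitivity}, the update at time $u$ changes $d_\mu(s)$ by at most
\[
L_d\, d_\mu^{\pi^u}(S^u)\, n\eta M/(2(1-\gamma)).
\]
Summing over the at most $H_T$ steps between creation and use gives
\[
|d_\mu^{\pi^t}(s)-d_\mu^{\pi^{\tau_{k-1}(s)}}(s)| \le L_d\, n\eta M/(1-\gamma)\sum_{u} d_\mu^{\pi^u}(S^u).
\]
Multiply this by $\sum_i\mathcal D_i^{k-1}(s)$, which is at most $n\eta^2M^2/(1-\gamma)^2$ by Pinsker combined with Lemma~\ref{lem:kl-projected-local-geometry}(i),(ii); crudely, $\mathcal D\lesssim \|\Delta\|_1\cdot\eta M/(1-\gamma)$. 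Then sum over $t$ and $s$, swapping the order of summation. The weights $\sum_u d_\mu^{\pi^u}(S^u)$ collected over windows of length $H_T$ total at most $H_TT$. This produces the $L_dn^2M^3H_TT\eta^3/(1-\gamma)^3$ term.

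\emph{Charging for \eqref{eq:online-delayed-E-charging} and the drift bound.} The $\mathcal E$ bound is identical, with the crude bound $\mathcal E\le4M^2$ replacing the $\mathcal D$ bound; this is why the extra term has only one factor of $\eta$.

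For the stale advantage drift, Lemma~\ref{lem:online-sensitivity-consequences} bounds $\|\bar A_i^{\pi^t}-\bar A_i^{N_t(s)-1}\|_\infty$ by $L_{\bar A}$ times the accumulated profile change since $\tau_{N_t(s)-1}(s)$. That accumulated change is at most $H_T\, n\eta M/(2(1-\gamma))$. Summing over $i$ and averaging against $d_\mu^{\pi^t}$, then taking the square root, gives $\sqrt n\,L_{\bar A}nMH_T\eta/(1-\gamma)$ per $t$. Summing over at most $T$ values of $t$ gives the stated bound.

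\emph{Main obstacle.} The delicate step is the coupling. I expect the main obstacle there to be justifying the window-by-window Markov argument when the policy itself is random and history-dependent. I would handle it by conditioning on $\mathcal F_{t_0}$ and noting that Assumption~\ref{ass:coverage} is uniform over initial states and over truncated policies.
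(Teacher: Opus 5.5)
\textbf{Gap: the occupancy-weight transport step.} Your coupling argument for $\mathcal H_T(H_T)$, the charging of the creation-weighted part to $H_T\mathfrak D_T$ and $H_T\mathfrak E_T$, and the stale-advantage bound all match the paper.

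The problem is how you bound the occupancy drift. You bound each single-coordinate change $|d_\mu^{\pi^{u+1}}(s)-d_\mu^{\pi^u}(s)|$ by the whole total-variation bound $L_d\,d_\mu^{\pi^u}(S^u)\,n\eta M/(2(1-\gamma))$, which does not depend on $s$. You then pull out the uniform bound on $\sum_i\mathcal D_i^{k-1}(s)$ and sum over $t$ and $s$. But fix $u<t$ with $t-u\le H_T$. The increment at $u$ enters the $(t,s)$ term for every state $s$ not visited during $\{u+1,\dots,t-1\}$. For $u=t-1$ that is all $|\mathcal S|$ states. So after swapping sums, each $u$ is counted up to $H_T|\mathcal S|$ times, not $H_T$ times. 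Your argument therefore gives $|\mathcal S|\,L_dn^2M^3H_TT\eta^3/(1-\gamma)^3$. (If you instead keep the actual $\mathcal D_i^{k-1}(s)$ and charge through visits, you get $H_T^2$ in place of $H_T$.) Neither is the stated bound, and keeping $|\mathcal S|$ out of these terms is exactly what this lemma is for.

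\textbf{The missing idea.} Let the sum over states act on each increment \emph{before} you apply the sensitivity bound. The paper proceeds as follows.
\begin{enumerate}
\item Pull out $\sum_i\mathcal D_i^{N_t(s)-1}(s)\le n\eta^2M^2/(1-\gamma)^2$, or $\sum_i\mathcal E_i^{N_t(s)-1}(s)\le4nM^2$ for the $\mathcal E$ bound. The $\mathcal D$ bound comes from part (iii) of Lemma~\ref{lem:kl-projected-local-geometry} with H\"older, together with part (iv). Pinsker and part (ii) only bound $\mathcal D$ from below, so they cannot give this.
\item Telescope $|d_\mu^{\pi^t}(s)-d_\mu^{N_t(s)-1}(s)|\le\sum_{\ell=\tau_{N_t(s)-1}(s)}^{t-1}|d_\mu^{\pi^{\ell+1}}(s)-d_\mu^{\pi^\ell}(s)|$.
\item Enlarge every state's window to the common window $[(t-H_T)_+,t-1]$. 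This is legitimate on $\mathcal H_T(H_T)$ because the summands are nonnegative.
\item Only now sum over $s$, using $\sum_s|d_\mu^{\pi^{\ell+1}}(s)-d_\mu^{\pi^\ell}(s)|=2\|d_\mu^{\pi^{\ell+1}}-d_\mu^{\pi^\ell}\|_{\mathrm{TV}}\le 2L_d\,n\eta M/(1-\gamma)$, by \eqref{eq:online-localized-occupancy-sensitivity}.
\item Each $\ell$ lies in at most $H_T$ such windows, so $\sum_{t}\sum_s|d_\mu^{\pi^t}(s)-d_\mu^{N_t(s)-1}(s)|\le 2L_dn\eta MH_TT/(1-\gamma)$.
\end{enumerate}
Multiplying this by the two uniform bounds gives exactly the extra terms in \eqref{eq:online-delayed-D-charging} and \eqref{eq:online-delayed-E-charging}.

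\textbf{A minor point.} In the coverage step, anchor the $H_{\rm cov}$-blocks at the start of each gap rather than at a fixed partition of $[0,T+H_T]$. Otherwise a gap of length $mH_{\rm cov}$ is only guaranteed to contain $m-1$ full blocks.
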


\subsubsection{High-probability fully online NE-regret}

Finally, by combining the above lemmas, we can state the main result of this section, which provides a high-probability NE-regret bound for fully online asynchronous decentralized learning in Markov $\alpha$-potential games. We sketch only the main steps of the proof here and defer the parameter-feasibility checks, term-by-term tuning, and treatment of the capped regimes to Appendix~\ref{app:online-final-proof}. 
\begin{theorem}\label{thm:fully-online-general-ne-regret}
Let Assumptions~\ref{ass:alpha-potential-ergodicity},
\ref{ass:coverage}, and \ref{ass:online-one-sample-oracle} hold, and let the
players follow Algorithm~\ref{alg:fully-online-general-occupancy}. Fix $T\ge1$
and $\rho\in(0,1)$ and define $\Lambda_T:=
\log(\frac{16nA_{\max}(|\mathcal S|+T)}{\rho})$ and $H_T
:=H_{\rm cov}\lceil
\frac{2}{p_{\min}}
\log(\frac{2(|\mathcal S|+T)}{\rho})\rceil$. Let $x_T:=\frac{\Lambda_T}{T}$, $u_T:=\max\{x_T/(1-\gamma),\alpha\}$, and define $\eta_{\rm cov}:=\frac{p_{\min}(1-\gamma)}
{2\delta_P n(1+L_d)H_{\rm cov}^2}$.\footnote{Here, we adopt the convention that $\eta_{\rm cov}=+\infty$ when $\delta_P=0$.} Choose
\begin{equation}\label{eq:online-final-tuning}
\begin{aligned}
\eta
&=
\min\left\{\eta_{\rm cov}, \
c_\eta
\frac{1-\gamma}{nL_{\bar A}}
H_T^{-1/2}u_T^{3/5}
\right\},\qquad 
\omega=
\min\left\{
\frac14,\,
c_\omega H_T^{1/4}u_T^{2/5}
\right\},
\\
\zeta
&=
\min\left\{
\frac14,\,
c_\zeta
\max\left\{
H_T^{1/2}u_T^{2/15},
L_{\widehat A}^{2/3}
\right\}
\right\},
\end{aligned}
\end{equation}
for sufficiently small universal constants
$c_\eta,c_\omega,c_\zeta>0$. Then, with probability at least $1-\rho$,
\begin{align*}
\frac1T\sum_{t=0}^{T-1}\operatorname{Gap}(\pi^t)
\le{}&
\mathfrak C_1 \left(\frac{\Lambda_T}{(1-\gamma)T}+\alpha\right)^{2/15}\!\!\!\!+\mathfrak C_2 L_{\widehat A}^{2/3}+
\frac{cL_d(1+L_d)}{1-\gamma},
\end{align*}
where $L_d=\frac{\gamma\delta_P}{1-\gamma}$ and
\begin{align}\nonumber
\mathfrak C_1=
\widetilde O\bigg(\!\sqrt{\frac{H_{\rm cov}}{p_{\min}}}\Big(
\frac{\sqrt n\,A_{\max}+n(1+L_d)\sqrt{A_{\max}}}{1-\gamma}\Big)\!\bigg), \quad 
\mathfrak C_2=
\widetilde O\bigg(\!\sqrt{\frac{H_{\rm cov}}{p_{\min}}}\Big(
\frac{
1+L_d+\sqrt{nA_{\max}}
}{
1-\gamma
}
\Big)\!\bigg).
\end{align}
\end{theorem}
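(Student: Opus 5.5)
The proof will chain the four lemmas of this section on the intersection of two high-probability events. The first is the tracking event $\mathcal T_T$ of Lemma~\ref{lem:online-tracking}. The second is the bounded-intervisit event $\mathcal H_T(H_T)$ of Lemma~\ref{lem:online-coupling-transfer}. Each holds with probability at least $1-\rho/2$, so a union bound gives probability at least $1-\rho$.

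\textbf{Step 1: feasibility of the tuning.} I first check that the choice \eqref{eq:online-final-tuning} meets all side conditions. Taking $\eta\le\eta_{\rm cov}$ enforces \eqref{eq:online-coupling-condition}. The factor $(1-\gamma)/(nL_{\bar A})$ together with $u_T\le 1/(1-\gamma)$ and a small $c_\eta$ gives both parts of \eqref{eq:online-potential-absorption}. In particular, $\eta M/(1-\gamma)\le c_0\omega$ should follow from $H_T^{-1/2}u_T^{3/5}\ll H_T^{1/4}u_T^{2/5}$ once $M\le L_{\bar A}(1-\gamma)$; in the regimes where a cap binds, the trivial bound $\operatorname{Gap}\le(1-\gamma)^{-1}$ should already lie below the claimed bound. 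Finally, Lemma~\ref{lemm:conditional-variance-online} supplies $\sigma^2=A_{\max}$.

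\textbf{Step 2: close the coupled inequalities for $\mathfrak E_T$ and $\mathfrak D_T$.} Substituting \eqref{eq:online-D-from-potential} into \eqref{eq:online-E-closed} leaves a single inequality in $\mathfrak E_T/T$. Its right-hand side contains $\sqrt{\mathfrak D_T/T}\lesssim\sqrt{\eta\alpha n+\eta/T+\eta^2\mathfrak E_T/T}$, and a Young/quadratic argument absorbs the $\eta\sqrt{\mathfrak E_T/T}$ part because $\eta$ is small. The result is an explicit bound on $\mathfrak E_T/T$, of order
\[
\frac{nM^2}{\omega T}+nA_{\max}\omega+nL_{\widehat A}^2+\frac{n^3L_{\bar A}^2H_T\eta^2M^2}{(1-\gamma)^2\omega^2}+\cdots,
\]
and this in turn bounds $\mathfrak D_T/T\lesssim n\eta\alpha+\eta^2\mathfrak E_T/T+\cdots$.

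\textbf{Step 3: average the gap and tune.} For $t\ge H_T+1$, Lemma~\ref{lem:online-coupling-transfer} guarantees $N_t(s)\ge1$, so Lemma~\ref{lem:online-ne-gap} applies. I sum it over these $t$ and use Jensen, $\sum_t\sqrt{x_t}\le\sqrt{T\sum_t x_t}$. The delayed sums are then replaced by $H_T\mathfrak D_T$ and $H_T\mathfrak E_T$ plus the $\eta$-remainders in \eqref{eq:online-delayed-D-charging}--\eqref{eq:online-delayed-E-charging}, and the stale-advantage term is bounded by $L_{\bar A}n^{3/2}MH_T\eta/(1-\gamma)$. The first $H_T$ steps contribute at most $H_T/((1-\gamma)T)$, which is lower order. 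This gives
\[
\frac1T\sum_t\operatorname{Gap}(\pi^t)\lesssim\frac{(\zeta+L_d)M}{1-\gamma}+\frac1\eta\sqrt{\frac{A_{\max}H_T}{\zeta}\frac{\mathfrak D_T}{T}}+\frac{1}{1-\gamma}\sqrt{\frac{A_{\max}H_T}{\zeta}\frac{\mathfrak E_T}{T}}+\frac{L_{\bar A}n^{3/2}MH_T\eta}{(1-\gamma)^2}+\text{remainders}.
\]
I then plug in the tuning. The $\omega$-tradeoff $\frac{1}{\omega T}+\omega+\frac{\eta^2}{\omega^2}$ balances at $\omega\sim u^{2/5}$, $\eta\sim u^{3/5}$. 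The term $\sqrt{\mathfrak D_T/T}/\eta\sim\sqrt{\alpha/\eta}$ then scales like $u^{1/5}$. Dividing by $\sqrt\zeta$ and balancing against $\zeta$ gives $\zeta\sim u^{2/15}$, and hence the rate $u_T^{2/15}$. The $L_{\widehat A}$ contributions appear as $L_{\widehat A}/\sqrt\zeta+\zeta$, which gives $L_{\widehat A}^{2/3}$. The $L_d$ terms stay as the fixed floor.

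\textbf{Main obstacle.} The hard step is Step 2 together with the exponent bookkeeping in Step 3. The coupling term $\sqrt{\mathfrak D_T/T}$ inside the tracking bound creates a fixed-point inequality. Every leftover piece, including the $H_T$ powers and the $\eta^3H_T$ charging remainders, must then be shown to be dominated by $\mathfrak C_1u_T^{2/15}$ or $\mathfrak C_2L_{\widehat A}^{2/3}$. To keep this manageable, I would first set all constants to one to verify the exponent balance, and only then track the polynomial factors in $n$, $A_{\max}$, $H_{\rm cov}/p_{\min}$, and $(1-\gamma)^{-1}$ to obtain $\mathfrak C_1$ and $\mathfrak C_2$.
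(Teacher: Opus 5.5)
Your proposal follows the paper's proof essentially step for step: check feasibility of the tuning, close the $\mathfrak E_T$--$\mathfrak D_T$ feedback with Young's inequality, convert to NE regret through Lemma~\ref{lem:online-ne-gap}, Jensen, and Lemma~\ref{lem:online-coupling-transfer}, and finish with term-by-term tuning, dispatching the capped regimes by the trivial bound $\operatorname{Gap}\le(1-\gamma)^{-1}$ (for the $\eta$-cap this requires using $H_T\gtrsim H_{\rm cov}/p_{\min}$ to show $\sqrt{H_T}\,u_T^{2/15}\gtrsim 1$). One small fix: the second condition in \eqref{eq:online-potential-absorption} reduces to $\tfrac54 c_\eta H_T^{-1/2}u_T^{3/5}\le c_0$, which needs $u_T\le 1$ rather than $u_T\le 1/(1-\gamma)$ (and the latter fails for small $T$ anyway), so, as the paper does, first set aside $u_T>1$ and $H_T>T$ as trivial regimes, where $\sqrt{H_T}\,u_T^{2/15}\ge 1$ already makes the claimed bound exceed $(1-\gamma)^{-1}$.
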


\begin{proof}[Proof sketch]
Recall that
$M=1+L_d$. The choices in \eqref{eq:online-final-tuning} ensure that the
conditions of Lemmas~\ref{lem:online-tracking},
\ref{lem:online-potential-improvement}, and
\ref{lem:online-coupling-transfer} hold. Thus, with probability at least
$1-\rho$, the events $\mathcal H_T$ and $\mathcal T_T$ hold
simultaneously, and we work on this event below.

By Lemma~\ref{lem:online-potential-improvement},
\begin{equation}\label{eq:online-D-proof-sketch}
\frac{\mathfrak D_T}{T}
\le
c\left[
\frac{n\eta}{(1-\gamma)T}
+n\eta\alpha
+\frac{\eta^2}{(1-\gamma)^2}
\frac{\mathfrak E_T}{T}
\right].
\end{equation}
On the other hand, Lemma~\ref{lem:online-tracking}, together with
$\mathcal V_i^k(s)\le A_{\max}$ from
Lemma~\ref{lemm:conditional-variance-online}, gives a bound on
$\mathfrak E_T/T$ in terms of $(\mathfrak D_T/T)^{1/2}$.
Substituting \eqref{eq:online-D-proof-sketch} into this bound and applying
Young's inequality closes the feedback between the tracking error and the
KL policy movement. Substitution of the parameters in
\eqref{eq:online-final-tuning} then gives
\begin{equation}\label{eq:online-E-combined-rate-sketch}
\frac{\mathfrak E_T}{T}
\le
c\left[
\left(nA_{\max}+n^{3/2}(1+L_d)^2\right)
H_T^{1/4}u_T^{2/5}
+nL_{\widehat A}^2
\right].
\end{equation}

It remains to convert these global movement and tracking bounds into
NE regret. On $\mathcal H_T$, every state has been visited by time
$H_T+1$. Hence, averaging Lemma~\ref{lem:online-ne-gap}, applying
Jensen's inequality, and then using
Lemma~\ref{lem:online-coupling-transfer} yield
\begin{align}\label{eq:online-regret-master-sketch}
\frac1T\sum_{t=0}^{T-1}&\operatorname{Gap}(\pi^t)
\le{}
\frac{c(\zeta+L_d)(1+L_d)}{1-\gamma}
+
\frac{c}{\eta}\sqrt{\frac{A_{\max}}{\zeta}}
\left(
H_T\frac{\mathfrak D_T}{T}
+
2\frac{L_d n^2M^3}{(1-\gamma)^3}H_T\eta^3
\right)^{1/2}\cr 
&+
\frac{c}{1-\gamma}\sqrt{\frac{A_{\max}}{\zeta}}
\left(
H_T\frac{\mathfrak E_T}{T}
+
8\frac{L_d n^2M^3}{1-\gamma}H_T\eta
\right)^{1/2}
+
\frac{L_{\bar A}n^{3/2}H_T\eta M}{(1-\gamma)^2}
+
\frac{H_T+1}{T}.
\end{align}
Finally, substituting
\eqref{eq:online-D-proof-sketch} and
\eqref{eq:online-E-combined-rate-sketch} into
\eqref{eq:online-regret-master-sketch}, and using the parameter choices
in \eqref{eq:online-final-tuning}, gives the claimed bound. The detailed
term-by-term verification and proof steps are given in
Appendix~\ref{app:online-final-proof}.
\end{proof}

\begin{remark}
It is instructive to compare the dependence on $\alpha$ with the finite-time bounds in \cite[Theorem~6.1]{guo2025markov}. After normalizing one-stage costs to $[0,1]$, their two bounds are nontrivial, i.e., become less than $(1-\gamma)^{-1}$, only when $\alpha\lesssim (1-\gamma)^5/(n^4\widetilde\kappa_\mu^{\,2}A_{\max}^2)$ and $\alpha\lesssim (1-\gamma)^4/(\min\{\kappa_\mu,|\mathcal S|\}^{4}nA_{\max})$, respectively, up to universal constants. In contrast, after separating the state-coverage cost specific to our online setting, our $\alpha$-dependent term scales as $\widetilde O((\sqrt n A_{\max}+n(1+L_d)\sqrt{A_{\max}})\alpha^{2/15}/(1-\gamma))$ and thus imposes no additional polynomial restriction on $\alpha$ in $1-\gamma$, apart from that through $L_d$.
\end{remark}


\section{Sharper Guarantees under State-Wise Potentials}\label{sec:statewise-potential-sharp}

The analysis in Sections~\ref{sec:Episodic} and~\ref{sec:fully-online-general-occupancy}
was developed for a general policy-level potential $\Psi$ and therefore uses the
players' marginalized advantages $\bar A_i^\pi$. In this section, we show that
when the $\alpha$-potential function $\Psi$ admits a value-function representation
in terms of an underlying state-wise potential, the same algorithms and proofs
can be adapted to work directly with the potential advantages. The main benefit
is that the approximation error $\alpha$ no longer enters the
potential-improvement arguments; instead, it is used only when potential
suboptimality is converted into the players' NE gap. Consequently, the
fractional-power dependence on $\alpha$ in
Sections~\ref{sec:Episodic} and~\ref{sec:fully-online-general-occupancy} is
replaced by an additive $\alpha$ term, leading to improved bounds. Since the
resulting analysis closely parallels that of the previous two sections and
substantially overlaps with it, we omit the detailed proofs. Instead, in
Appendix~\ref{app:state-wise-appendix}, we highlight the modifications needed
to obtain the improved bounds from the earlier results. These modifications
require only minor changes to the analysis and notation while leveraging the
additional state-wise potential structure.

\subsection{State-wise potential structure}\label{sec:statewise-potential-structure}

Throughout this section, we impose the following assumption. 

\begin{assumption}\label{eq:state-wise-potential}
The $\alpha$-potential function of the game, denoted by $\Psi^{\pi}(\mu)$ can be represented as the discounted value of a base potential $\Phi:\mathcal S\times\mathcal A\to\mathbb R$, i.e.,
\begin{equation}\label{eq:statewise-Psi-representation}
\Psi^{\pi}(\mu)
=\mathbb E_\pi\bigg[
\sum_{h=0}^{\infty}\gamma^h\Phi(S^h,A^h)
\,\bigg|\,
S^0\sim\mu
\bigg].
\end{equation} 
\end{assumption}

Similar as before, for every stationary policy profile $\pi$, one can define the \emph{marginalized potential $Q$-function} and the \emph{marginalized potential advantage function} as
\begin{align}\label{eq:marginalized-potentia-advantage-Q}
\bar Q_{\Phi,i}^\pi(s,a_i)
&:=
\mathbb E_{A_{-i}\sim\pi_{-i}(\cdot\mid s)}
\bigg[
\Phi(s,a_i,A_{-i})
+
\gamma\sum_{s'}P(s'\mid s,a_i,A_{-i})\Psi^{\pi}(s')
\bigg],\cr 
\bar A_{\Phi,i}^\pi(s,a_i)
&:=\bar Q_{\Phi,i}^\pi(s,a_i)-\Psi^{\pi}(s).
\end{align}
Since \eqref{eq:statewise-Psi-representation} is itself a discounted Markov value function, the unilateral performance-difference identity holds exactly. That is, for $\pi'=(\pi_i',\pi_{-i})$, we have\footnote{For simplicity, whenever there is no ambiguity regarding the initial distribution, we write $\Psi(\pi)$ instead of $\Psi^{\pi}(\mu)$.}
\begin{equation}\label{eq:statewise-potential-PDL}
\Psi(\pi')-\Psi(\pi)
=\frac{1}{1-\gamma}
\sum_{s\in\mathcal S}d_\mu^{\pi'}(s)
\left\langle
\pi_i'(\cdot\mid s)-\pi_i(\cdot\mid s),
\bar A_{\Phi,i}^\pi(s,\cdot)
\right\rangle.
\end{equation}
Moreover, the same sensitivity argument used for marginalized player
advantages gives a finite constant $L_{\bar A_\Phi}$ such that
\begin{equation*}
\max_{i,s,a_i}
\left|
\bar A_{\Phi,i}^{\pi'}(s,a_i)-
\bar A_{\Phi,i}^{\pi}(s,a_i)
\right|
\le
L_{\bar A_\Phi}
\max_{x\in\mathcal S}
\sum_{j=1}^n
\|\pi_j'(\cdot\mid x)-\pi_j(\cdot\mid x)\|_{\rm TV}.
\end{equation*}
For later use, define the range of the one-stage potential by
\begin{equation}\label{eq:statewise-Phi-range}
R_\Phi:=\max_{s,a}\Phi(s,a)-\min_{s,a}\Phi(s,a).
\end{equation}
Since adding a constant to $\Phi$ does not change either $\bar A_{\Phi,i}$ or any policy difference of $\Psi$, we may first shift $\Phi$ so that its range is contained in $[0,R_\Phi]$, where without loss of generality we may assume $R_\Phi>0$.\footnote{If
$R_\Phi=0$, then $\Psi$ given by \eqref{eq:statewise-Psi-representation} is policy independent and \eqref{eq:statewise-alpha-final-conversion} directly gives
$\operatorname{Gap}(\pi)\le\alpha$.} Applying the corresponding sensitivity argument to the normalized function $\Phi/R_\Phi$ and then scaling the resulting bound back by $R_\Phi$ gives
\begin{equation*}
L_{\bar A_\Phi}:=R_\Phi\left(\frac{4}{1-\gamma}+\frac{4\gamma\delta_P}{(1-\gamma)^2}\right)=\frac{4R_\Phi(1+L_d)}{1-\gamma}.
\end{equation*}

The crucial observation is that if Assumption~\ref{eq:state-wise-potential} holds, then players can aim to learn their marginalized potential advantages $\bar{A}^{\pi}_{\Phi,i}$, which share substantially more information through the common state-wise potential $\Phi$, rather than their individual marginalized advantage functions $\bar{A}^{\pi}_{i}$. In particular, \eqref{eq:statewise-potential-PDL} is exact. Hence, if the policy updates are driven by estimators of $\bar A_{\Phi,i}^\pi$, potential improvement can be analyzed without invoking the $\alpha$-potential property. The latter is needed only for a best-response deviation, for which Definition~\ref{def:alpha-markov-potential-game-uniform} gives
\begin{equation}\label{eq:statewise-alpha-final-conversion}
V_i^\pi(\mu)-V_i^{(\pi_i',\pi_{-i})}(\mu)
\le
\Psi(\pi)-\Psi(\pi_i',\pi_{-i})+\alpha.
\end{equation}
As a result, the analysis in the previous two sections can be repeated almost verbatim after replacing $\bar A_i^\pi$ with $\bar A_{\Phi,i}^\pi$ in the algorithms and analysis, and, in particular, in the oracle estimation assumptions. The main difference is that the $\alpha$-potential property is now invoked in the NE-gap lemma rather than in the potential-improvement lemma.

\subsection{Episodic learning with a potential-advantage oracle}

We first give the state-wise-potential counterpart of Section~\ref{sec:Episodic}.
The episodic filtration, stopping times, random episode lengths, clipping, and
KL projection are exactly as in that section. The only change is the target of
the estimation oracle. To make the resulting oracle explicit, at the $r$th
visit to state $s$ in episode $t$, player $i$ receives a scalar raw sample
$g_i^{\tau_{t,r}(s)}(s,A^{\tau_{t,r}(s)})$ and forms
\begin{align}\label{eq:hat-A-Phi-episodic}
\widehat A_{\Phi,i}^{\tau_{t,r}(s)}(s,a_i)
:&=
g_i^{\tau_{t,r}(s)}\!\left(s,A^{\tau_{t,r}(s)}\right)
\bigg(
\frac{\mathbf 1\{A_i^{\tau_{t,r}(s)}=a_i\}}
{\pi_i^t(a_i\mid s)}-1
\bigg),\qquad a_i\in\mathcal A_i.
\end{align}
As before, this estimator is exactly action-centered under
$\pi_i^t(\cdot\mid s)$.

\begin{assumption}\label{ass:episodic-potential-oracle}
For every episode $t$, state $s$, and $r=1,\ldots,L_t$, the estimator in
\eqref{eq:hat-A-Phi-episodic} satisfies
\begin{equation*}
\left\|
\mathbb E_{\tau_{t,r}(s)}\!\left[
\widehat A_{\Phi,i}^{\tau_{t,r}(s)}(s,\cdot)
\right]
-
\bar A_{\Phi,i}^{\pi^t}(s,\cdot)
\right\|_\infty
\le L_{\widehat A},
\end{equation*}
for some deterministic $L_{\widehat A}$. Moreover, as in Assumption~\ref{ass:episodic-oracle}, we assume $|g_i^{\tau}(s,A)|\le1$ a.s.\footnote{This is a normalization of the oracle output only and does not impose any normalization on $\Phi$.}
\end{assumption}

Next, we state the following lemma, which provides analogues of our marginalized-advantage lemmas for this special case of marginalized potential advantages. All omitted proofs in this section are given in Appendix~\ref{app:state-wise-appendix}.  

\begin{lemma}\label{lem:statewise-potential-scale-bounds}
Let $M_\Phi:=R_\Phi(1+L_d)$, where $R_\Phi$ denotes the potential range defined in \eqref{eq:statewise-Phi-range}. Then
\begin{equation}\label{eq:statewise-potential-advantage-bound}
\|\bar A_{\Phi,i}^{\pi}(s,\cdot)\|_\infty\leq \operatorname{span}(\bar A_{\Phi,i}^{\pi}(s,\cdot))\le M_\Phi,
\qquad
\forall i, \pi, s.
\end{equation}
Moreover,
\begin{equation}\label{eq:statewise-potential-advantage-sensitivity}
\|\bar A_{\Phi,i}^{\pi}(s,\cdot)-\bar A_{\Phi,i}^{\pi'}(s,\cdot)\|_\infty
\le L_{\bar A_\Phi}\|\pi-\pi'\|_{\mathrm{TV},\infty}.
\end{equation}
where $L_{\bar A_\Phi}:=\frac{4R_\Phi(1+L_d)}{1-\gamma}=\frac{4M_\Phi}{1-\gamma}$. Finally, for any $\pi_i,\pi_i',\sigma_{-i},\tau_{-i}$,
\begin{align}\nonumber
\big|\Psi(\pi_i',\sigma_{-i})-\Psi(\pi_i,\sigma_{-i})-\Psi(\pi_i',\tau_{-i})+\Psi(\pi_i,\tau_{-i})\big|\le L_{\Psi}\|\pi_i'-\pi_i\|_{\mathrm{TV},\infty}\|\sigma_{-i}-\tau_{-i}\|_{\mathrm{TV},\infty},
\end{align}
where $L_{\Psi}
:=R_\Phi\big(\frac{4}{1-\gamma}+\frac{12\gamma\delta_P}{(1-\gamma)^2}+\frac{8\gamma^2\delta_P^2}{(1-\gamma)^3}\big)
=\frac{R_\Phi}{1-\gamma}(4+12L_d+8L_d^2)$.
\end{lemma}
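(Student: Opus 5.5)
The plan is a reduction by normalization. After the shift described above, we may assume $\Phi$ takes values in $[0,R_\Phi]$ with $R_\Phi>0$, and set $\widetilde\Phi:=\Phi/R_\Phi\in[0,1]$. Then $\Psi^\pi=R_\Phi\,\widetilde\Psi^\pi$, where $\widetilde\Psi^\pi(s)$ is exactly the discounted value function, under the same kernel $P$ and the same profile $\pi$, of a fictitious player whose stage cost is $\widetilde\Phi\in[0,1]$. Correspondingly, $\bar Q^\pi_{\Phi,i}=R_\Phi\bar Q^\pi_{\widetilde\Phi,i}$ and $\bar A^\pi_{\Phi,i}=R_\Phi\bar A^\pi_{\widetilde\Phi,i}$.

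The marginalized objects $\bar A^\pi_{\widetilde\Phi,i}$ are defined exactly as in \eqref{eq:bellman-Q}: the cost $c_i$ is replaced by $\widetilde\Phi$, and the marginalization is over $A_{-i}\sim\pi_{-i}(\cdot\mid s)$. The key observation is that Lemmas~\ref{lem:span-inner-product}, \ref{lem:online-sensitivity-consequences} and \ref{lem:mixed-policy-sensitivity} never use any property of $c_i$ other than $c_i\in[0,1]$. They also never use the fact that player $i$ is the one whose cost is evaluated; they only use the Bellman recursions, the transition sensitivity $\delta_P$ via Lemma~\ref{lemm:occupancy-sensitivity}, and the unilateral structure of the policy perturbation.

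I would therefore simply rerun each of those proofs with $c_i$ replaced by $\widetilde\Phi$:

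\begin{itemize}
\item The span bound gives $\operatorname{span}(\bar A^\pi_{\widetilde\Phi,i}(s,\cdot))\le 1+L_d$. This follows from writing $\bar A^\pi_{\widetilde\Phi,i}(s,a_i)-\bar A^\pi_{\widetilde\Phi,i}(s,a_i')$ as a stage-cost difference, bounded by $1$, plus $\gamma\langle \bar P_i(\cdot\mid s,a_i)-\bar P_i(\cdot\mid s,a_i'),\widetilde\Psi^\pi\rangle$. The second term is bounded by $\gamma\delta_P\operatorname{span}(\widetilde\Psi^\pi)\le\gamma\delta_P/(1-\gamma)$ using Remark~\ref{rem:marginalized-transition-sensitivity}.
\item The sup-norm bound $\|\bar A\|_\infty\le\operatorname{span}(\bar A)$ follows because $\bar A$ is $\pi_i$-centered, so its $\pi_i$-average is zero and $0$ lies between its minimum and maximum.
\item The sensitivity bound with constant $\frac{4}{1-\gamma}+\frac{4\gamma\delta_P}{(1-\gamma)^2}$ and the mixed second-order bound with constant $L_V$ transfer directly.
\end{itemize}

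Multiplying all three conclusions by $R_\Phi$ yields \eqref{eq:statewise-potential-advantage-bound} with $M_\Phi=R_\Phi(1+L_d)$, and \eqref{eq:statewise-potential-advantage-sensitivity} with $L_{\bar A_\Phi}=4R_\Phi(1+L_d)/(1-\gamma)$. For the sensitivity bound, note that $\max_x\sum_j\|\pi'_j-\pi_j\|_{\mathrm{TV}}=\|\pi-\pi'\|_{\mathrm{TV},\infty}$ by the notation convention. The mixed bound with $L_\Psi=R_\Phi L_V=\frac{R_\Phi}{1-\gamma}(4+12L_d+8L_d^2)$ follows once we rewrite $L_V=\frac{1}{1-\gamma}(4+12L_d+8L_d^2)$ via $L_d=\gamma\delta_P/(1-\gamma)$.

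The only step requiring care, and the one I expect to be the main obstacle, is checking that the proofs of Lemma~\ref{lem:online-sensitivity-consequences} and Lemma~\ref{lem:mixed-policy-sensitivity} (deferred to the appendix) genuinely depend only on the cost range, and not, say, on the cost being indexed by the deviating player. I would verify that the mixed lemma's argument uses two applications of the performance-difference identity. That identity holds for any stage cost, since \eqref{eq:statewise-potential-PDL} is exact by the same computation as Lemma~\ref{lemm:performance-difference}. It is combined with the occupancy sensitivity of Lemma~\ref{lemm:occupancy-sensitivity}, which depends only on $P$ and $\delta_P$. If some step there used $c_i\ge0$ rather than only the range, the shift to $[0,R_\Phi]$ already guarantees nonnegativity, so no gap remains.
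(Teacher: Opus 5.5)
Your proposal is correct and follows the same route as the paper: shift $\Phi$ into $[0,R_\Phi]$, equivalently normalize to $\Phi/R_\Phi\in[0,1]$. Then rerun the span argument of Lemma~\ref{lem:span-inner-product}, the advantage-sensitivity argument of Lemma~\ref{lem:online-sensitivity-consequences}, and the mixed-sensitivity argument of Lemma~\ref{lem:mixed-policy-sensitivity} with $c_i$ replaced by the potential, and rescale by $R_\Phi$.

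One small correction to your verification step. The appendix proof of Lemma~\ref{lem:mixed-policy-sensitivity} does not use performance-difference identities combined with occupancy sensitivity. It instead subtracts the four Bellman equations and inverts $I-\gamma\bar P^{(\pi_i',\sigma_{-i})}$, using multilinearity of $\bar c$ and $\bar P$ together with the resolvent value-sensitivity bound. That argument likewise uses only the stage-cost range (through $\|\Psi^\pi\|_\infty\le R_\Phi/(1-\gamma)$) and $\delta_P$, so your transfer goes through unchanged.
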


\subsubsection*{State-wise-potential episodic algorithm}

All purely statistical statements in Section~\ref{sec:Episodic}, including the random-count variance bound and the KL-projected local geometry, remain unchanged, since their proofs use only the oracle boundedness, importance weighting, action-centering, and the geometry of the KL update. The only substantive changes in the regret analysis occur in the potential-improvement and NE-gap arguments, as will be discussed later. The resulting algorithm is precisely the adaptation of Algorithm~\ref{alg:episodic-kl-projected-npg} to the marginalized potential advantage function and is summarized in Algorithm~\ref{alg:statewise-potential-episodic-kl-projected-npg}.

\begin{algorithm}[t]
\caption{State-wise-Potential Episodic KL-Projected NPG}
\label{alg:statewise-potential-episodic-kl-projected-npg}
\begin{algorithmic}[1]
\Require Initial policies
$\pi_i^0(\cdot\mid s)\in\Delta_{i,\zeta}$, step size $\eta>0$,
truncation parameter $\zeta\in(0,1)$, clipping parameter $B=4$, and
episode thresholds $L_t$ as in
Algorithm~\ref{alg:episodic-kl-projected-npg}.

\State Initialize the visit counters and the running potential-advantage
estimates as in Algorithm~\ref{alg:episodic-kl-projected-npg}.

\For{$t=0,1,2,\ldots$}
    \State Keep the policy profile $\pi^t$ fixed throughout episode $t$.

    \State Collect samples until every state has been visited $L_t$ times, exactly as in
    Algorithm~\ref{alg:episodic-kl-projected-npg}.

    \State At each visit, each player $i$ constructs the potential-advantage
    estimator according to \eqref{eq:hat-A-Phi-episodic}.

    \State Update the potential-advantage estimates as in Algorithm~\ref{alg:episodic-kl-projected-npg}, with $\widehat A_i^t$ replaced by $\widehat A_{\Phi,i}^t$.

    \State Apply the KL-projected NPG update using the resulting estimate
    of $\bar A_{\Phi,i}^{\pi^t}$, obtaining
    $\pi_i^{t+1}.$
\EndFor

\State \textbf{Output:} The policy sequence $\{\pi^t\}_{t\ge0}$.
\end{algorithmic}
\end{algorithm}

For clarity, throughout this section we attach a subscript $\Phi$ only to quantities whose definitions change because the estimation target is now the potential advantage. Thus, $\mathcal V_{\Phi,i}^t$, $\mathcal V_{\Phi,i}^k(s)$, $\mathcal E_{\Phi,i}^k(s)$, and $\mathfrak E_{\Phi,T}$ denote the corresponding potential-oracle variance and tracking quantities. In contrast, $\mathcal D_i^t(s)$, $\mathcal D_i^k(s)$, and $\mathfrak D_T$ retain their original notation because they are the same KL policy-movement quantities as in Sections~\ref{sec:Episodic} and~\ref{sec:fully-online-general-occupancy}. The same convention is used for occupancy, visit-count, stopping-time, and coverage quantities, whose definitions are unchanged. 

For completeness, define the episodic averaged and clipped estimators by
\begin{align*}
\widetilde r_{\Phi,i}^t(s,a_i)
&:=\frac1{L_t}\sum_{r=1}^{L_t}\widehat A_{\Phi,i}^{\tau_{t,r}(s)}(s,a_i),\cr
r_{\Phi,i}^t(s,a_i)
&:=\operatorname{clip}_{B}\!\left(\widetilde r_{\Phi,i}^t(s,a_i)\right),
\end{align*}
and let
\begin{equation*}
\xi_{\Phi,i}^t(s,a_i)
:=r_{\Phi,i}^t(s,a_i)-\frac1{L_t}\sum_{r=1}^{L_t}
\mathbb E_{\tau_{t,r}(s)}\!\left[\widehat A_{\Phi,i}^{\tau_{t,r}(s)}(s,a_i)\right].
\end{equation*}
The corresponding occupancy-weighted conditional second moment is
\begin{equation}\label{eq:statewise-episodic-variance}
\mathcal V_{\Phi,i}^t
:=\mathbb E_t\!\left[
\sum_{s\in\mathcal S}\sum_{a_i\in\mathcal A_i}
d_\mu^{\pi^t}(s)\pi_i^t(a_i\mid s)(\xi_{\Phi,i}^t(s,a_i))^2
\right].
\end{equation}
Since $0\le g_i^\tau\le1$, the proof of Lemma~\ref{lem:random-count-weighted-variance} applies without any rescaling and gives
\begin{equation}\label{eq:statewise-episodic-variance-bound}
\mathcal V_{\Phi,i}^t\le\frac{|\mathcal A_i|}{L_t}.
\end{equation}

Next, we derive analogues of the corresponding lemmas in the state-wise-potential setting. Note that, unlike Lemma~\ref{lem:kl-projected-potential-improvement}, the bound in Lemma~\ref{lem:statewise-episodic-potential-improvement} contains no term involving $\alpha$. Instead, the $\alpha$ term now enters through the NE-gap bound in Lemma~\ref{lem:statewise-episodic-ne-gap}, which is the main source of improvement over the earlier bounds.

\begin{lemma}\label{lem:statewise-episodic-potential-improvement}
Let Assumptions~\ref{ass:alpha-potential-ergodicity},
\ref{eq:state-wise-potential}, and \ref{ass:episodic-potential-oracle} hold,
and choose $B=4$. Then
\begin{align}\label{eq:statewise-episodic-potential-improvement}
\mathbb E_t\left[
\Psi(\pi^t)-\Psi(\pi^{t+1})\right]
&\ge
\frac{1}{4\eta}
\sum_{i=1}^n
\mathbb E_t\bigg[
\sum_{s\in\mathcal S}
d_\mu^{\pi^t}(s)\mathcal D_i^t(s)\bigg]
-
\frac{\eta e^{\frac{2\eta B}{1-\gamma}}}{2(1-\gamma)^2}
\sum_{i=1}^n\mathcal V_{\Phi,i}^t
\notag\\
&\quad-
\frac{n\eta L_{\widehat A}^2}{(1-\gamma)^2}
-
\frac{L_{\Psi}n^2\eta^2B^2}{8(1-\gamma)^2}
-
\frac{n\eta^2B^2L_dM_\Phi}{(1-\gamma)^3}.
\end{align}
\end{lemma}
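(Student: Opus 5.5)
The proof will follow the proof of Lemma~\ref{lem:kl-projected-potential-improvement} step by step, with two substitutions. First, the potential change is handled by the exact state-wise performance-difference identity \eqref{eq:statewise-potential-PDL} rather than by the $\alpha$-potential property. Second, the value constants $L_V$ and $1+L_d$ are replaced by $L_\Psi$ and $M_\Phi$ from Lemma~\ref{lem:statewise-potential-scale-bounds}.

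I would reuse the sequential intermediate profiles $\widetilde\pi^{t,i}$ and write $\Psi(\pi^t)-\Psi(\pi^{t+1})=\sum_i[\Psi(\widetilde\pi^{t,i-1})-\Psi(\widetilde\pi^{t,i})]$. For each player, define
\[
G_{\Phi,i}^t:=\Psi(\pi_i^t,\pi_{-i}^t)-\Psi(\pi_i^{t+1},\pi_{-i}^t)=\frac{1}{1-\gamma}\sum_s d_\mu^{(\pi_i^{t+1},\pi_{-i}^t)}(s)\langle\Delta_i^t(s),\bar A_{\Phi,i}^{\pi^t}(s)\rangle .
\]
This equality follows from \eqref{eq:statewise-potential-PDL}, with no $\alpha$ term. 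Each telescoping summand differs from $G_{\Phi,i}^t$ by a mixed second difference of $\Psi$. The last part of Lemma~\ref{lem:statewise-potential-scale-bounds} bounds that difference by $L_\Psi\|\pi_i^{t+1}-\pi_i^t\|_{\mathrm{TV},\infty}\sum_{j<i}\|\pi_j^{t+1}-\pi_j^t\|_{\mathrm{TV},\infty}$. Lemma~\ref{lem:kl-projected-local-geometry}(iv) bounds each TV factor by $\eta B/(2(1-\gamma))$, and summing over $i$ yields the term $L_\Psi n^2\eta^2B^2/(8(1-\gamma)^2)$. This is where the $-n\alpha$ of the general lemma disappears.

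Next, I would lower bound $\mathbb E_t[G_{\Phi,i}^t]$ as in \eqref{eq:projected-decomposition}--\eqref{eq:projected-alpha-bias-absorption}. Decompose $\bar A_{\Phi,i}^{\pi^t}=r_{\Phi,i}^t-\xi_{\Phi,i}^t+(\bar A_{\Phi,i}^{\pi^t}-r_{\Phi,i}^t+\xi_{\Phi,i}^t)$ and use four bounds:
\begin{itemize}
\item KL geometry, Lemma~\ref{lem:kl-projected-local-geometry}(iii), gives $\langle r_{\Phi,i}^t,\Delta_i^t\rangle\ge\frac{1-\gamma}{\eta}\mathcal D_i^t$.
\item The weighted Cauchy--Schwarz and Young step \eqref{eq:projected-weighted-noise} handles the noise term and produces the $\mathcal V_{\Phi,i}^t$ term.
\item The bias bound $|\bar A_{\Phi,i}^{\pi^t}-r_{\Phi,i}^t+\xi_{\Phi,i}^t|\le L_{\widehat A}$ handles the remainder. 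Its proof is the one for \eqref{eq:global-table-bias}, using Assumption~\ref{ass:episodic-potential-oracle}.
\item Pinsker and Young absorb the $L_{\widehat A}\|\Delta_i^t\|_1$ term into $\frac{1}{4\eta}\sum_s d_\mu^{\pi^t}\mathcal D_i^t$, leaving $n\eta L_{\widehat A}^2/(1-\gamma)^2$.
\end{itemize}

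Replacing the occupancy $d_\mu^{(\pi_i^{t+1},\pi_{-i}^t)}$ by $d_\mu^{\pi^t}$ is controlled by Lemma~\ref{lemm:occupancy-sensitivity}(i). The inner product is bounded by the span bound $\operatorname{span}(\bar A_{\Phi,i}^{\pi^t}(s))\le M_\Phi$ from Lemma~\ref{lem:statewise-potential-scale-bounds}, which plays the role of Lemma~\ref{lem:span-inner-product}. The first inequality of \eqref{eq:advantage-span-policy-difference} relies only on the fact that $\langle p-q,v\rangle$ is invariant under constant shifts of $v$, so it carries over verbatim. Together with part (iv), this gives the term $n\eta^2B^2L_dM_\Phi/(1-\gamma)^3$ in place of $L_d(1+L_d)$. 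Taking conditional expectation and summing over $i$ gives \eqref{eq:statewise-episodic-potential-improvement}.

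The main obstacle I expect is the scale mismatch. The clip level is $B=4$ and the oracle is normalized to $|g|\le1$, yet $\bar A_{\Phi,i}$ can have magnitude up to $M_\Phi$. The bias bound survives clipping only if the clipping interval contains the target up to $L_{\widehat A}$. So I would check that the argument behind \eqref{eq:global-table-bias} still goes through with Assumption~\ref{ass:episodic-potential-oracle}, which controls bias relative to $\bar A_{\Phi,i}$ directly. If it does not, I would implicitly require $M_\Phi\lesssim B/2$, that is, that $\Phi$ is normalized so the oracle target lies in the clip range. Everything else is a mechanical substitution of constants.
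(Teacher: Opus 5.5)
Your proposal is correct and follows the paper's proof essentially step for step: the exact potential performance-difference identity replaces the $\alpha$-potential property (so $-n\alpha$ disappears), $L_\Psi$ replaces $L_V$ in the mixed second-difference term, and $M_\Phi$ replaces $1+L_d$ in the occupancy-change term. The scale obstacle you anticipate does not arise, and no normalization $M_\Phi\lesssim B/2$ is needed. Since $\xi_{\Phi,i}^t$ is defined as the clipped $r_{\Phi,i}^t$ minus the average $m$ of the \emph{unclipped} conditional means $\mathbb E_{\tau_{t,r}(s)}[\widehat A_{\Phi,i}^{\tau_{t,r}(s)}]$, the bias bound is just the identity $\bar A_{\Phi,i}^{\pi^t}-r_{\Phi,i}^t+\xi_{\Phi,i}^t=\bar A_{\Phi,i}^{\pi^t}-m$ combined with Assumption~\ref{ass:episodic-potential-oracle}, so clipping never enters it. Clipping matters only in the variance bound, via $\operatorname{clip}_B(m)=m$, which follows from $|m|\le 2\le B/2$ because $|g|\le1$; that bound is not used in this lemma at all.
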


\begin{lemma}\label{lem:statewise-episodic-ne-gap}
Let $\mathcal V_{\Phi,i}^t$ be defined as in \eqref{eq:statewise-episodic-variance}, and let $\mathcal D_i^t(s)$ be defined as in \eqref{eq:KL_symm}. For the state-wise-potential episodic KL-projected NPG,
\begin{align}\label{eq:statewise-episodic-NE-gap}
\operatorname{Gap}(\pi^t)
&\le
\alpha+
\bigg[
\frac{M_\Phi}{2(1-\gamma)}
+
\frac{1}{\eta}\sqrt{\frac{2A_{\max}}{\zeta}}
\bigg]
\left\{
\sum_{i=1}^n
\mathbb E_t\bigg[
\sum_s d_\mu^{\pi^t}(s)\mathcal D_i^t(s)
\bigg]
\right\}^{1/2}
\notag\\
&\qquad+
\frac{2L_{\widehat A}}{1-\gamma}
+
\frac{1}{1-\gamma}\sqrt{\frac{2A_{\max}}{\zeta}}
\sqrt{\max_{i\in[n]}\mathcal V_{\Phi,i}^t}
+
\frac{M_\Phi(\zeta+2L_d)}{1-\gamma}.
\end{align}
\end{lemma}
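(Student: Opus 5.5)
The plan is to repeat the proof of Lemma~\ref{lem:kl-projected-ne-gap} with the potential $\Psi$ in place of player $i$'s value function, and to pay the $\alpha$ error only once at the very start. Fix a player $i$ and let $\pi_i^{t,*}$ be a best response to $\pi_{-i}^t$. By the conversion \eqref{eq:statewise-alpha-final-conversion},
\[
V_i^{\pi^t}(\mu)-V_i^{(\pi_i^{t,*},\pi_{-i}^t)}(\mu)\le \Psi(\pi^t)-\Psi(\pi_i^{t,*},\pi_{-i}^t)+\alpha .
\]
This is the only place where the $\alpha$-potential property enters, and it produces the additive $\alpha$ in \eqref{eq:statewise-episodic-NE-gap}. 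The remaining potential difference is a unilateral change, so the exact identity \eqref{eq:statewise-potential-PDL} rewrites it as
\[
\frac{1}{1-\gamma}\sum_s d_\mu^{(\pi_i^{t,*},\pi_{-i}^t)}(s)\left\langle \pi_i^t(\cdot\mid s)-\pi_i^{t,*}(\cdot\mid s),\bar A_{\Phi,i}^{\pi^t}(s,\cdot)\right\rangle .
\]
Since everything here is $\mathcal F_t$-measurable, we may insert $\mathbb E_t$ freely.

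Next we change the occupancy to $d_\mu^{\pi^t}$, as in \eqref{eq:ne-gap-first-occupancy-change}. Lemma~\ref{lemm:occupancy-sensitivity}(i) bounds the TV discrepancy by $L_d$. The span bound \eqref{eq:statewise-potential-advantage-bound} replaces $1+L_d$ by $M_\Phi$, so the error is $2L_dM_\Phi/(1-\gamma)$. We then introduce the truncated comparator $q_i^{t,*}=(1-\zeta)\pi_i^{t,*}+\frac{\zeta}{|\mathcal A_i|}\mathbf 1$. Lemma~\ref{lem:span-inner-product}, applied with span at most $M_\Phi$, bounds the truncation cost by $\zeta M_\Phi$. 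Together these give the term $M_\Phi(\zeta+2L_d)/(1-\gamma)$.

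We then split the inner product into three pieces, exactly as in \eqref{eq:ne-gap-key}, using $\Delta_i^t$, $\pi_i^{t+1}-q_i^{t,*}$, and the decomposition $\bar A_{\Phi,i}^{\pi^t}=r_{\Phi,i}^t-\xi_{\Phi,i}^t+(\text{bias})$.
\begin{itemize}
\item The bias is at most $L_{\widehat A}$ coordinatewise almost surely. This is the analogue of \eqref{eq:global-table-bias}, and follows from Assumption~\ref{ass:episodic-potential-oracle}. We still need to check that the clipping at $B/2=2$ does not break it. The episodic proof of Lemma~\ref{lem:random-count-weighted-variance} relied on the target lying in $[-B/2,B/2]$. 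So either $M_\Phi\le 2$ must be assumed implicitly via the normalization $|g|\le1$, or the bias term must be taken as stated. I would follow the paper and simply reuse the identity, noting that the oracle normalization forces the conditional mean into the clip range.
\item The $\Delta_i^t$ term is bounded by $\frac{M_\Phi}{2}\|\Delta_i^t(s)\|_1$, using Lemma~\ref{lem:span-inner-product}.
\item The $r_{\Phi,i}^t$ term is controlled by the KL variational inequality \eqref{eq:D-A-A}. That inequality is purely geometric and therefore unchanged.
\item The noise term $\langle q_i^{t,*}-\pi_i^{t+1},\xi_{\Phi,i}^t\rangle$ is handled by the weighted Cauchy--Schwarz argument of \eqref{eq:ne-gap-noise-final}, giving $\sqrt{2A_{\max}/\zeta}\sqrt{\mathcal V_{\Phi,i}^t}$.
\end{itemize}

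Finally, we apply Pinsker and Jensen, as in \eqref{eq:ne-gap-random-occupancy-CS}, to turn the $\|\Delta\|_1$ and $\sqrt{\mathcal D}$ terms into $\{\mathbb E_t\sum_s d_\mu^{\pi^t}\mathcal D_i^t\}^{1/2}$. Dividing by $1-\gamma$, bounding each player's sum by the sum over all players, and maximizing over $i$ then yields \eqref{eq:statewise-episodic-NE-gap}. The only genuinely new points relative to Lemma~\ref{lem:kl-projected-ne-gap} are the placement of $\alpha$ at the outset and the replacement of $1+L_d$ by $M_\Phi$. The bias step under clipping is the place that needs care.
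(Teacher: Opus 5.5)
Your proposal is correct and follows the paper's proof essentially step for step. You pay $\alpha$ once through \eqref{eq:statewise-alpha-final-conversion}, use the exact identity \eqref{eq:statewise-potential-PDL}, and then repeat the argument of Lemma~\ref{lem:kl-projected-ne-gap} with $1+L_d$ replaced by $M_\Phi$. Your clipping worry is moot here: by the definition of $\xi_{\Phi,i}^t$, one has $\bar A_{\Phi,i}^{\pi^t}-r_{\Phi,i}^t+\xi_{\Phi,i}^t=\bar A_{\Phi,i}^{\pi^t}-\frac1{L_t}\sum_{r}\mathbb E_{\tau_{t,r}(s)}[\widehat A_{\Phi,i}^{\tau_{t,r}(s)}]$ regardless of clipping, so the $L_{\widehat A}$ bound follows directly from Assumption~\ref{ass:episodic-potential-oracle}; and since the noise enters this lemma only through $\mathcal V_{\Phi,i}^t$ itself, no condition such as $M_\Phi\le 2$ is needed.
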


By combining these two lemmas and carefully tuning the stepsize and mixing parameters, we obtain:



\begin{theorem}\label{thm:statewise-episodic-ne-regret}
Let Assumptions~\ref{ass:alpha-potential-ergodicity},
\ref{eq:state-wise-potential}, and \ref{ass:episodic-potential-oracle} hold.
Fix $\rho\in(0,1)$ and $T\ge\log(4/\rho)$, and choose
\begin{equation}\label{eq:statewise-episodic-eta}
\eta=\frac{1-\gamma}{8\sqrt T},\qquad \zeta
=
\min\bigg\{\frac12,\,
\frac{2n^{1/3}A_{\max}^{1/3}L_{\widehat A}^{2/3}}
{[R_{\Phi}(1+L_d)]^{2/3}}
\bigg\}.
\end{equation}
If players follow Algorithm \ref{alg:statewise-potential-episodic-kl-projected-npg} with $L_t\ge A_{\max}/\eta$ and $B=4$, then, with probability at least
$1-\rho$,
\begin{align}\label{eq:statewise-episodic-final}
\frac1T\sum_{t=0}^{T-1}\operatorname{Gap}(\pi^t)
\le{}&
\mathfrak C_{\Phi}
\left(\frac{\log(4/\rho)}{T}\right)^{1/4}\!\!\!\!+
\alpha
+
\frac{\big[nA_{\max}R_{\Phi}(1+L_d)\big]^{1/3}}{1-\gamma}
L_{\widehat A}^{2/3}
+
\frac{cR_{\Phi}(1+L_d) L_d}{1-\gamma}
,
\end{align}
where $c$ is a universal constant and $\mathfrak C_{\Phi}$ is upper-bounded by 
\[
\mathfrak C_{\Phi}
\lesssim
\frac{1+\sqrt{nR_\Phi}(1+L_d)}
{1-\gamma}\left(
\sqrt n\,R_\Phi(1+L_d)(1+L_{\widehat A})+
\left[nA_{\max}R_\Phi(1+L_d)L_{\widehat A}^{-1}\right]^{1/3}\right).
\]
\end{theorem}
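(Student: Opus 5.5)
The plan is to replay the proof of Theorem~\ref{thm:high-prob-ne-regret}, with Lemmas~\ref{lem:statewise-episodic-potential-improvement} and~\ref{lem:statewise-episodic-ne-gap} taking the place of Lemmas~\ref{lem:kl-projected-potential-improvement} and~\ref{lem:kl-projected-ne-gap}. The key difference is that no $n\alpha$ term appears in the potential-improvement inequality. This lets us take $\eta=(1-\gamma)/(8\sqrt T)$ with no $\alpha$-dependent branch, and $\alpha$ enters only additively through Lemma~\ref{lem:statewise-episodic-ne-gap}.

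First, since $L_t\ge A_{\max}/\eta$, the bound \eqref{eq:statewise-episodic-variance-bound} gives $\max_i\mathcal V_{\Phi,i}^t\le\eta$ for all $t$. Also $\eta\le(1-\gamma)/8$, so every factor $e^{2\eta B/(1-\gamma)}$ is a universal constant. With $K_t:=\sum_i\mathbb E_t[\sum_s d_\mu^{\pi^t}(s)\mathcal D_i^t(s)]$, Lemma~\ref{lem:statewise-episodic-potential-improvement} yields
\[
K_t\lesssim \eta\,\mathbb E_t[\Psi(\pi^t)-\Psi(\pi^{t+1})]+\frac{n\eta^3}{(1-\gamma)^2}+\frac{n\eta^2L_{\widehat A}^2}{(1-\gamma)^2}+\frac{L_\Psi n^2\eta^3}{(1-\gamma)^2}+\frac{n\eta^3L_dM_\Phi}{(1-\gamma)^3}.
\]

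Second, I control the fluctuation of the potential increments. Since \eqref{eq:statewise-potential-PDL} is exact, decomposing through the intermediate profiles $\widetilde\pi^{t,i}$ and using Lemma~\ref{lem:statewise-potential-scale-bounds} together with part (iv) of Lemma~\ref{lem:kl-projected-local-geometry} gives $|\Psi(\pi^{t+1})-\Psi(\pi^t)|\lesssim nM_\Phi\eta/(1-\gamma)^2$. Again no $\alpha$ appears. The global range satisfies $|\Psi(\pi)-\Psi(\pi')|\le R_\Phi/(1-\gamma)$, because $\Psi$ is a discounted value of $\Phi\in[0,R_\Phi]$. Azuma--Hoeffding at level $\rho/2$ then shows that $\sum_t\mathbb E_t[\Psi(\pi^t)-\Psi(\pi^{t+1})]\lesssim R_\Phi/(1-\gamma)+\frac{nM_\Phi\eta}{(1-\gamma)^2}\sqrt{T\log(4/\rho)}$. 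Averaging the bound on $K_t$ therefore controls $\frac1T\sum_tK_t$ by $\frac{R_\Phi\eta}{(1-\gamma)T}$, plus $\frac{nM_\Phi\eta^2}{(1-\gamma)^2}\sqrt{\log(4/\rho)/T}$, plus the $L_{\widehat A}^2\eta^2$ and $\eta^3$ terms. Here $L_\Psi\lesssim M_\Phi(1+L_d)/(1-\gamma)$.

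Third, I average Lemma~\ref{lem:statewise-episodic-ne-gap} over $t$ and apply Jensen to the square-root term, using $\sqrt{\sum x_j}\le\sum\sqrt{x_j}$ and $\max_i\mathcal V_{\Phi,i}^t\le\eta$. The additive terms $\alpha$, $2L_{\widehat A}/(1-\gamma)$, $M_\Phi(\zeta+2L_d)/(1-\gamma)$ and $\frac{1}{1-\gamma}\sqrt{A_{\max}\eta/\zeta}$ pass through directly. With $\eta\asymp(1-\gamma)T^{-1/2}$, the $T$-dependent pieces are $\frac1\eta\sqrt{A_{\max}/\zeta}\,\sqrt{R_\Phi\eta/((1-\gamma)T)}$, the $(\log/T)^{1/4}$ fluctuation term, and $\sqrt{A_{\max}\eta/\zeta}/(1-\gamma)$, all of order $T^{-1/4}$ or faster. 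The only nonvanishing piece multiplying $1/\eta$ is $\frac{1}{\eta}\sqrt{A_{\max}/\zeta}\cdot\frac{\sqrt n\,\eta L_{\widehat A}}{1-\gamma}=\frac{\sqrt{nA_{\max}}L_{\widehat A}}{(1-\gamma)\sqrt\zeta}$. Balancing it against $M_\Phi\zeta/(1-\gamma)$ gives the stated choice $\zeta\asymp(nA_{\max})^{1/3}L_{\widehat A}^{2/3}/M_\Phi^{2/3}$ and the floor $[nA_{\max}M_\Phi]^{1/3}L_{\widehat A}^{2/3}/(1-\gamma)$. If the cap $\zeta=1/2$ is active, this floor already exceeds the trivial bound $\operatorname{Gap}\le 1/(1-\gamma)$, as in the footnote to Theorem~\ref{thm:high-prob-ne-regret}.

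The main obstacle is the bookkeeping for $\mathfrak C_\Phi$. The $T^{-1/4}$ coefficients carry the factor $\sqrt{A_{\max}/\zeta}$, which with the chosen $\zeta$ becomes $[nA_{\max}M_\Phi/L_{\widehat A}]^{1/3}$-type terms. The fluctuation and curvature terms carry $\sqrt{nM_\Phi}$ and $\sqrt n M_\Phi(1+L_d)$ factors. I would collect every $T$-dependent term, replace $1/\sqrt\zeta$ by its explicit value (or by $\sqrt2$ when the cap is active, which produces the $(1+L_{\widehat A})$ factor), and check that each term is dominated by the stated $\mathfrak C_\Phi$ using $T\ge\log(4/\rho)$. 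Faster-decaying terms are absorbed into the $T^{-1/4}$ term in the same way.
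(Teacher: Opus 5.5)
Your outline matches the paper's proof step for step: the same $K_t$, the bound $\max_i\mathcal V_{\Phi,i}^t\le\eta$, and the $\alpha$-free increment bound $|\Psi(\pi^{t+1})-\Psi(\pi^t)|\lesssim nM_\Phi\eta/(1-\gamma)^2$ from \eqref{eq:statewise-potential-PDL}. It also uses Azuma--Hoeffding with range $R_\Phi/(1-\gamma)$, the averaged form of Lemma~\ref{lem:statewise-episodic-ne-gap}, and the balance of $\sqrt{nA_{\max}}L_{\widehat A}/((1-\gamma)\sqrt\zeta)$ against $M_\Phi\zeta/(1-\gamma)$.

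\textbf{The capped regime fails as written.} If $\zeta=1/2$ is forced, the cap condition gives only $(nA_{\max})^{1/3}L_{\widehat A}^{2/3}\gtrsim M_\Phi^{2/3}$. Hence the floor is only $\gtrsim M_\Phi/(1-\gamma)=R_\Phi(1+L_d)/(1-\gamma)$. In Theorem~\ref{thm:high-prob-ne-regret} the analogous quantity carries a factor $(1+L_d)\ge1$. Here $\Phi$ is unnormalized and $R_\Phi$ may be much smaller than $1$, so the floor need not exceed $1/(1-\gamma)$. Substituting $1/\sqrt\zeta=\sqrt2$ into the master inequality does not rescue this. In this regime $\sqrt{nA_{\max}}\,L_{\widehat A}\gtrsim M_\Phi$, so the bias term $\sqrt{nA_{\max}}\,L_{\widehat A}/(1-\gamma)$ can be arbitrarily larger than the floor. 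The fix, which is what the paper does, is to compare against the state-wise trivial bound $\operatorname{Gap}(\pi)\le R_\Phi/(1-\gamma)+\alpha$. This bound follows from \eqref{eq:statewise-alpha-final-conversion} together with the range of $\Psi$. The right-hand side of \eqref{eq:statewise-episodic-final} already contains $\alpha$, and the floor dominates $R_\Phi/(1-\gamma)$, so the capped case closes after adjusting constants.

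\textbf{Two bookkeeping corrections for the uncapped case.}
\begin{itemize}
\item The factor $(1+L_{\widehat A})$ in $\mathfrak C_\Phi$ does not come from the cap. It comes from the uncapped product
\[
\frac{M_\Phi}{1-\gamma}\cdot\frac{\sqrt n\,\eta L_{\widehat A}}{1-\gamma}\asymp\frac{\sqrt n\,M_\Phi L_{\widehat A}}{(1-\gamma)\sqrt T}.
\]
The prefactor $1+\sqrt{nR_\Phi}(1+L_d)$ comes from the curvature constant $C_\Phi=1+nL_\Psi+L_dM_\Phi/(1-\gamma)$, via $(1-\gamma)C_\Phi\lesssim 1+nR_\Phi(1+L_d)^2$.
\item The direct term $2L_{\widehat A}/(1-\gamma)$ from Lemma~\ref{lem:statewise-episodic-ne-gap} does not appear in \eqref{eq:statewise-episodic-final}, so it must be absorbed rather than passed through. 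This works because an inactive cap forces $L_{\widehat A}\le M_\Phi/(8\sqrt{nA_{\max}})$. That gives $L_{\widehat A}\lesssim(nA_{\max}M_\Phi)^{1/3}L_{\widehat A}^{2/3}$.
\end{itemize}
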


\subsection{Fully online one-sample learning with a potential-advantage oracle}
\label{sec:statewise-online}

In this subsection, we give the counterpart of
Section~\ref{sec:fully-online-general-occupancy} under the state-wise potential Assumption~\ref{eq:state-wise-potential}, with an improved NE regret bound. All global/local indexing, stopping times, coverage events, and state-wise policy storage remain unchanged. At global time $t=\tau_k(s)$, player $i$ receives a scalar raw sample
$g_i^t(s,A^t)$ and forms
\begin{equation}\label{eq:statewise-online-estimator}
\widehat A_{\Phi,i}^k(s,a_i)
:=
g_i^t(s,A^t)
\left(
\frac{\mathbf 1{A_i^t=a_i}}
{\pi_i^k(a_i\mid s)}-1
\right),\qquad a_i\in\mathcal A_i.
\end{equation}
In particular,
$\sum_{a_i}\pi_i^k(a_i\mid s)\widehat A_{\Phi,i}^k(s,a_i)=0$ almost surely.

\begin{assumption}\label{ass:online-potential-oracle}
At every global time $t=\tau_k(s)$, almost surely we have $| g_i^t(s,A^t)|\le1$ and 
\begin{equation*}
\left\|
\mathbb E_{\tau_k(s)}
[\widehat A_{\Phi,i}^k(s,\cdot)]
-
\bar A_{\Phi,i}^{\pi^t}(s,\cdot)
\right\|_\infty
\le L_{\widehat A}.
\end{equation*}

\end{assumption}

\begin{algorithm}[t]
\caption{State-wise-Potential Fully Online KL-Projected NPG}
\label{alg:statewise-potential-fully-online}
\begin{algorithmic}[1]
\Require Initial policies $\{\pi_i^0\}_{i=1}^n$, step size $\eta>0$,
truncation parameter $\zeta\in(0,1)$, and all other parameters as in
Algorithm~\ref{alg:fully-online-general-occupancy}, with the clipping level
$M$ replaced by $M_\Phi$.

\State Initialize the visit counters and the potential-advantage trackers
$r_{\Phi,i}^0(s,\cdot)$ as in
Algorithm~\ref{alg:fully-online-general-occupancy}.

\For{$t=0,1,2,\ldots$}
    \State Observe the current state $S^t=s$ and let
    $k=N_t(s)$ be its current visit index.

    \State Each player $i$ independently samples
    $A_i^t\sim\pi_i^k(\cdot\mid s)$.

    \State Each player $i$ constructs the potential-advantage estimator
    $\widehat A_{\Phi,i}^k(s,\cdot)$ according to
    \eqref{eq:statewise-online-estimator}.

    \State Update and clip the tracker $r_{\Phi,i}^k(s,\cdot)$ exactly as in
    Algorithm~\ref{alg:fully-online-general-occupancy}, with
    $\widehat A_i^k$ replaced by $\widehat A_{\Phi,i}^k$.

    \State Apply the same KL-projected NPG update at state $s$ using
    $r_{\Phi,i}^k(s,\cdot)$, obtaining $\pi_i^{k+1}(\cdot\mid s)$.

    \State Leave the policies at all unvisited states unchanged.
\EndFor

\State \textbf{Output:} The policy sequence $\{\pi^t\}_{t\ge0}$.
\end{algorithmic}
\end{algorithm}
 
Using these notations, the fully online state-wise-potential algorithm is summarized in Algorithm~\ref{alg:statewise-potential-fully-online}, which closely resembles Algorithm~\ref{alg:fully-online-general-occupancy}. For completeness, and to provide a formal performance guarantee for this algorithm, at $t=\tau_k(s)$ define
\begin{align*}
&\xi_{\Phi,i}^k(s,a_i):=\widehat A_{\Phi,i}^k(s,a_i)-\mathbb E_{\tau_k(s)}[\widehat A_{\Phi,i}^k(s,a_i)],\\
&\mathcal V_{\Phi,i}^k(s):=d_\mu^k(s)\mathbb E_{\tau_k(s)}\bigg[\sum_{a_i}\pi_i^k(a_i\mid s)(\xi_{\Phi,i}^k(s,a_i))^2\bigg].
\end{align*}
The same calculation as in Lemma~\ref{lemm:conditional-variance-online}, using $|g_i^t|\le 1$, gives $\mathcal V_{\Phi,i}^k(s)\le A_{\max}$ and almost surely, $\sum_{a_i}\pi_i^k(a_i\mid s)(\xi_{\Phi,i}^k(s,a_i))^2\le 10A_{\max}/\zeta$. Define
\begin{align*}
&e_{\Phi,i}^k(s,a_i):=r_{\Phi,i}^k(s,a_i)-\bar A_{\Phi,i}^{\pi^{\tau_k(s)}}(s,a_i), \qquad 
\mathcal E_{\Phi,i}^k(s):=\sum_{a_i}\pi_i^k(a_i\mid s)(e_{\Phi,i}^k(s,a_i))^2,\cr 
&\mathfrak E_{\Phi,T}:=\sum_{t=0}^{T-1}d_\mu^{\pi^t}(S^t)\sum_{i=1}^n\mathcal E_{\Phi,i}^{N_t(S^t)}(S^t), \qquad\quad\ \mathfrak D_T:=\sum_{t=0}^{T-1}d_\mu^{\pi^t}(S^t)\sum_{i=1}^n\mathcal D_i^{N_t(S^t)}(S^t).
\end{align*}
The following lemma provides counterparts of the online bounds in Section~\ref{sec:fully-online-general-occupancy}, adapted to the state-wise-potential setting.

\begin{lemma}\label{lem:statewise-online-bounds}
Fix $T\ge1$ and $\rho\in(0,1)$, and let $\Lambda_T:=\log(\frac{16nA_{\max}(|\mathcal S|+T)}{\rho})$. Suppose $\omega,\zeta\in(0,1/2]$,
$\eta M_\Phi/(1-\gamma)\le c_0\omega$, $\eta n(L_{\bar A_\Phi}+L_dM_\Phi)\le c_0(1-\gamma)$, and $\eta
\le \frac{p_{\min}(1-\gamma)}
{2\delta_PnM_\Phi H_{\rm cov}^2}$ for a sufficiently small constant $c_0>0$. Define $H_T:=H_{\rm cov}\lceil
\frac{2}{p_{\min}}
\log(\frac{2(|\mathcal S|+T)}{\rho})\rceil$. Suppose Assumptions \ref{ass:alpha-potential-ergodicity}, \ref{ass:coverage}, \ref{eq:state-wise-potential}, and
\ref{ass:online-potential-oracle} hold.
\begin{enumerate}[(i)]
\item \textbf{Tracking.}
Since $\mathcal V_{\Phi,i}^k(s)\le A_{\max}$ for all $i,s,k$, on
$\mathcal T_T\cap\mathcal H_T(H_T)$ for some
$\mathbb P(\mathcal T_T)\ge1-\rho/2$,
\begin{align}\label{eq:statewise-online-E-closed}
&\!\!\!\!\!\!\!\!\frac{\mathfrak E_{\Phi,T}}{T}
\le c\Bigg[
\frac{nM_\Phi^2}{\omega T}
+n\omega A_{\max}+nL_{\widehat A}^2
+nM_\Phi\sqrt{\frac{A_{\max}\Lambda_T}{T}}
+\frac{nM_\Phi\sqrt{A_{\max}}\,\Lambda_T}{T\sqrt\zeta}
+n\omega A_{\max}\sqrt{\frac{\Lambda_T}{T\zeta}}
\notag\\
&\!\!\!\!\!\!\!+
\frac{n\omega A_{\max}\Lambda_T}{T\zeta}
+\frac{n^3L_{\bar A_\Phi}^2H_T\eta^2M_\Phi^2}
{(1-\gamma)^2\omega^2}
+\bigg(
\frac{n^{3/2}M_\Phi^2L_d}{\omega}
+\frac{n^{7/2}L_{\bar A_\Phi}^2H_T^2\eta^2M_\Phi^2L_d}
{(1-\gamma)^2\omega^2}
\bigg)
\sqrt{\frac{\mathfrak D_T}{T}}
\Bigg].
\end{align}

\item \textbf{Potential improvement.}
\begin{align}\label{eq:statewise-online-potential-final}
\Psi(\pi^t)-\Psi(\pi^{t+1})
\ge{}&
\frac{d_\mu^{\pi^t}(S^t)}{4\eta}
\sum_{i=1}^n\mathcal D_i^{N_t(S^t)}(S^t)
-
\frac{c\eta\,d_\mu^{\pi^t}(S^t)}{(1-\gamma)^2}
\sum_{i=1}^n\mathcal E_{\Phi,i}^{N_t(S^t)}(S^t),
\end{align}
\begin{equation}\label{eq:statewise-online-D-from-potential}
\mathfrak D_T
\le
c\left[
\frac{\eta R_\Phi}{1-\gamma}
+
\frac{\eta^2}{(1-\gamma)^2}\mathfrak E_{\Phi,T}
\right].
\end{equation}

\item \textbf{NE-gap certificate.}
\begin{align*}
\operatorname{Gap}(\pi^t)
\le{}&
\alpha+\frac{cM_\Phi(\zeta+L_d)}{1-\gamma}
+\frac{c}{\eta}\sqrt{\frac{A_{\max}}{\zeta}}
\bigg(\sum_s d_\mu^{\pi^t}(s)
\sum_i\mathcal D_i^{N_t(s)-1}(s)\bigg)^{1/2}
\\
&+\frac{c}{1-\gamma}\sqrt{\frac{A_{\max}}{\zeta}}
\bigg(\sum_s d_\mu^{\pi^t}(s)
\sum_i\mathcal E_{\Phi,i}^{N_t(s)-1}(s)\bigg)^{1/2}
\\
&+\frac{c}{1-\gamma}
\bigg(\sum_s d_\mu^{\pi^t}(s)\sum_i
\big\|\bar A_{\Phi,i}^{\pi^t}(s,\cdot)
-\bar A_{\Phi,i}^{N_t(s)-1}(s,\cdot)\big\|_\infty^2
\bigg)^{1/2}.
\end{align*}

\item \textbf{Coupling transfer.}
With probability at least $1-\rho/2$,
$\mathcal H_T(H_T)$ holds and
\begin{align*}
&\sum_{t=H_T+1}^{T-1}\sum_s d_\mu^{\pi^t}(s)
\sum_i\mathcal D_i^{N_t(s)-1}(s)
\le
H_T\mathfrak D_T
+2\frac{L_dn^2M_\Phi^3}{(1-\gamma)^3}H_TT\eta^3,
\\
&\sum_{t=H_T+1}^{T-1}\sum_s d_\mu^{\pi^t}(s)
\sum_i\mathcal E_{\Phi,i}^{N_t(s)-1}(s)
\le
H_T\mathfrak E_{\Phi,T}
+8\frac{L_dn^2M_\Phi^3}{1-\gamma}H_TT\eta,
\\ 
&\sum_{t=H_T+1}^{T-1}
\bigg\{
\sum_s d_\mu^{\pi^t}(s)
\sum_i
\left\|
\bar A_{\Phi,i}^{\pi^t}(s,\cdot)
-\bar A_{\Phi,i}^{N_t(s)-1}(s,\cdot)
\right\|_\infty^2
\bigg\}^{1/2}
\le
\frac{L_{\bar A_\Phi}n^{3/2}M_\Phi}{1-\gamma}H_TT\eta.
\end{align*}
\end{enumerate}
\end{lemma}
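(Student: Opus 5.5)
The plan is to prove each of the four parts by rerunning the corresponding proof from Section~\ref{sec:fully-online-general-occupancy}, substituting $\bar A_{\Phi,i}$ for $\bar A_i$, $M_\Phi$ for $M$, and $L_{\bar A_\Phi}$ for $L_{\bar A}$. The one substantive change is that the potential-improvement step now uses the exact identity \eqref{eq:statewise-potential-PDL} in place of the $\alpha$-potential inequality. Lemma~\ref{lem:statewise-potential-scale-bounds} supplies the needed facts: the span and sup-norm bound $M_\Phi$, the advantage sensitivity $L_{\bar A_\Phi}$, and the mixed second-order constant $L_\Psi$.

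\textbf{Part (i), tracking.} Here I would repeat the proof of Lemma~\ref{lem:online-tracking} with $\sigma^2=A_{\max}$. That argument uses only the following ingredients:
\begin{itemize}
\item the one-step recursion $\widetilde r^k=(1-\omega)r^{k-1}+\omega\widehat A^k$ followed by clipping at level $M_\Phi$, which is nonexpansive toward a target lying in $[-M_\Phi,M_\Phi]$ by \eqref{eq:statewise-potential-advantage-bound};
\item the conditional bias bound $L_{\widehat A}$ from Assumption~\ref{ass:online-potential-oracle};
\item the almost-sure noise bound $10A_{\max}/\zeta$ and the variance bound $A_{\max}$, which feed the Freedman argument;
\item the target drift between consecutive visits, bounded by $L_{\bar A_\Phi}$ times the policy movement over at most $H_T$ steps on $\mathcal H_T$;
\item the occupancy drift, controlled by \eqref{eq:online-localized-occupancy-sensitivity}.
\end{itemize}
With these substitutions the displayed bound follows term by term.

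\textbf{Part (ii), potential improvement.} This is the heart of the improvement. I would telescope $\Psi(\pi^t)-\Psi(\pi^{t+1})$ through the intermediate profiles $\widetilde\pi^{t,i}$ of \eqref{eq:tile-pi-online}. Each step is a unilateral change of player $i$ at the single state $s$, so \eqref{eq:statewise-potential-PDL} gives the step exactly as $\frac{1}{1-\gamma}d_\mu^{\widetilde\pi^{t,i}}(s)\langle\Delta_i^k(s),\bar A^{\widetilde\pi^{t,i-1}}_{\Phi,i}(s)\rangle$, with no $-n\alpha$ term. I would then compare it with the reference $G_{\Phi,i}^t$ evaluated at $\pi^t$ exactly as in \eqref{eq:online-simultaneous-decomposition}. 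The occupancy error uses \eqref{eq:online-localized-occupancy-sensitivity}, with span $M_\Phi$, and the advantage error uses $L_{\bar A_\Phi}$. The local descent uses Corollary~\ref{cor:local-kl-projected-geometry} with $r_{\Phi,i}^k=\bar A_{\Phi,i}^k+e_{\Phi,i}^k$, followed by weighted Cauchy--Schwarz and Young's inequality. The hypothesis $\eta n(L_{\bar A_\Phi}+L_dM_\Phi)\le c_0(1-\gamma)$ absorbs all second-order terms into $1/(4\eta)$, which gives \eqref{eq:statewise-online-potential-final}. To obtain \eqref{eq:statewise-online-D-from-potential}, I would sum over $t$ and bound $\Psi(\pi^0)-\Psi(\pi^T)$ directly from the representation \eqref{eq:statewise-Psi-representation}. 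Since $\Phi$ is shifted into $[0,R_\Phi]$, $\Psi$ takes values in $[0,R_\Phi/(1-\gamma)]$, so this difference is at most $R_\Phi/(1-\gamma)$. This removes both the factor $n$ and the term $n\alpha T$.

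\textbf{Part (iii), NE-gap certificate.} First, \eqref{eq:statewise-alpha-final-conversion} gives $V_i(\pi^t)-V_i(\pi_i^{t,*},\pi_{-i}^t)\le\alpha+\Psi(\pi^t)-\Psi(\pi_i^{t,*},\pi_{-i}^t)$. The potential difference is then expanded by \eqref{eq:statewise-potential-PDL} in terms of $\bar A_{\Phi,i}^{\pi^t}$, and the proof of Lemma~\ref{lem:online-ne-gap} applies verbatim. That proof involves changing the occupancy measure to $d_\mu^{\pi^t}$ at cost $L_dM_\Phi$, the truncated comparator $q_i^{t,*}$ at cost $\zeta M_\Phi$, the stale KL variational inequality \eqref{eq:online-stale-KL-optimality}, the split of tracking error and stale drift, and Cauchy--Schwarz under $d_\mu^{\pi^t}$. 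One subtlety needs care: $\pi_i^{t,*}$ is a best response for $V_i$, not for $\Psi$. This causes no problem, because the argument bounds the inner product against an arbitrary comparator in $\Delta_{i,\zeta}$.

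\textbf{Part (iv), coupling transfer.} The event $\mathcal H_T(H_T)$ depends only on the dynamics, coverage, and per-step policy movement. By Lemma~\ref{lem:kl-projected-local-geometry}(iv) with $B$ replaced by $M_\Phi$, that movement is at most $\eta M_\Phi/(2(1-\gamma))$. The coupling condition with $M_\Phi$ therefore reproduces Lemma~\ref{lem:online-coupling-transfer}, and its charging and drift inequalities follow with $(M,L_{\bar A})$ replaced by $(M_\Phi,L_{\bar A_\Phi})$.

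I expect the main obstacle to be the bookkeeping in (i). Specifically, I need to check that no step of the original tracking proof used the identity $M=1+L_d$ or the raw scale $|\bar A_i|\le 1+L_d$, as opposed to the clipping level. Where such a step does appear, I would replace it by $M_\Phi$ and keep the bound $|g|\le1$ only in the noise terms, as the displayed bound suggests.
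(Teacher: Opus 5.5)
Your proposal is correct and follows the paper's own proof essentially step for step. It makes the substitution $(\bar A_i,M,L_{\bar A})\to(\bar A_{\Phi,i},M_\Phi,L_{\bar A_\Phi})$, uses the exact potential performance-difference identity in place of the $\alpha$-potential inequality in (ii) together with the range bound $R_\Phi/(1-\gamma)$, and uses \eqref{eq:statewise-alpha-final-conversion} exactly once in (iii).

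The bookkeeping concern you flag in (i) is real but harmless. The only place the original tracking proof uses $M\ge1$ is the bound $\sum_{a_i}(u_i^k(s,a_i))^2\le cM^2$. Since $M_\Phi$ may be below $1$, this bound becomes $c(M_\Phi+\omega)^2$ rather than $cM_\Phi^2$. The resulting extra Freedman terms are $n\omega\sqrt{A_{\max}\Lambda_T/T}$ and $n\omega\Lambda_T\sqrt{A_{\max}/\zeta}/T$. Both are dominated by terms already in \eqref{eq:statewise-online-E-closed}, because $A_{\max}\ge1$ and $\zeta\le1/2$.
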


Finally, using this lemma and following steps similar to those in the proof of Theorem~\ref{thm:fully-online-general-ne-regret}, we obtain the following improved NE regret bound for state-wise Markov $\alpha$-potential games, in which the $\alpha$ term appears without any multiplicative blow-up factor. 
   
\begin{theorem}\label{thm:statewise-online-ne-regret}
Let Assumptions~\ref{ass:alpha-potential-ergodicity},
\ref{eq:state-wise-potential}, \ref{ass:coverage}, and
\ref{ass:online-potential-oracle} hold. Fix $T\ge1$
and $\rho\in(0,1)$. Define
\begin{equation*}
\begin{gathered}
\Lambda_T:=\log\Big(\frac{16nA_{\max}(|\mathcal S|+T)}{\rho}\Big),
\qquad
H_T:=H_{\rm cov}\left\lceil
\frac{2}{p_{\min}}\log\!\frac{2(|\mathcal S|+T)}{\rho}
\right\rceil,
\\
u_T:=\frac{\Lambda_T}{(1-\gamma)T},
\qquad
\eta_{\rm cov}^{\Phi}:=
\frac{p_{\min}(1-\gamma)}
{2\delta_PnR_\Phi(1+L_d)H_{\rm cov}^2},
\end{gathered}
\end{equation*}
where $L_d=\frac{\gamma\delta_P}{1-\gamma}$. Let the players follow Algorithm \ref{alg:statewise-potential-fully-online} with
\begin{equation}\label{eq:statewise-online-final-tuning}
\begin{aligned}
\eta
&=
\min\bigg\{
\eta_{\rm cov}^{\Phi},\,
c_{\eta}
\frac{(1-\gamma)^2H_T^{-1/2}u_T^{3/5}}
{nR_\Phi(1+L_d)}
\bigg\},
\qquad
\omega
=
\min\bigg\{
\frac14,\,
c_{\omega}H_T^{1/4}u_T^{2/5}
\bigg\},
\\
\zeta
&=
\min\bigg\{
\frac14,\,
c_{\zeta}
\max\bigg\{
H_T^{1/2}u_T^{2/15},\,
\frac{n^{1/3}A_{\max}^{1/3}L_{\widehat A}^{2/3}}
{R_\Phi^{2/3}(1+L_d)^{2/3}}
\bigg\}
\bigg\},
\end{aligned}
\end{equation}
where $c_{\eta},c_{\omega},c_{\zeta}>0$ are sufficiently small constants. Then, with probability at least $1-\rho$,
\begin{align}\label{eq:statewise-online-final-regret}
\frac1T\sum_{t=0}^{T-1}\operatorname{Gap}(\pi^t)
\le{}&
\mathfrak C_{\Phi,1}
\left(\frac{\Lambda_T}{(1-\gamma)T}\right)^{2/15}\!\!\!+
\alpha
+
\mathfrak C_{\Phi,2}L_{\widehat A}^{2/3}
+
\frac{c R_\Phi(1+L_d) L_d}{1-\gamma},
\end{align}
where one may take\footnote{Here, $\widetilde O(\cdot)$ hides only universal numerical constants and
logarithmic factors.}
\begin{align}\nonumber
\mathfrak C_{\Phi,1}
\!=\!
\widetilde O\bigg(\!
\sqrt{\frac{H_{\rm cov}}{p_{\min}}}
\frac{
\sqrt n\,A_{\max}
\!+\!nR_\Phi(1\!+\!L_d)\sqrt{A_{\max}}
}{1-\gamma}
\bigg), \ \ \mathfrak C_{\Phi,2}\!=\!
\widetilde O\bigg(\!
\sqrt{\frac{H_{\rm cov}}{p_{\min}}}
\frac{\big(nA_{\max}R_\Phi(1\!+\!L_d)\big)^{1/3}}{1-\gamma}
\bigg).
\end{align}
\end{theorem}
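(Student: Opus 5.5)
The plan follows the proof of Theorem~\ref{thm:fully-online-general-ne-regret} line by line. The inputs are the four parts of Lemma~\ref{lem:statewise-online-bounds}. The one structural change is that the $\alpha$ term has left the potential-improvement bound \eqref{eq:statewise-online-D-from-potential} and now enters only additively through the NE-gap certificate in part~(iii).

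First, I would check that the tuning \eqref{eq:statewise-online-final-tuning} satisfies the hypotheses of Lemma~\ref{lem:statewise-online-bounds}. These are $\eta M_\Phi/(1-\gamma)\le c_0\omega$, $\eta n(L_{\bar A_\Phi}+L_dM_\Phi)\le c_0(1-\gamma)$, and $\eta\le\eta^\Phi_{\rm cov}$. Since $L_{\bar A_\Phi}=4M_\Phi/(1-\gamma)$, the choice $\eta\lesssim (1-\gamma)^2H_T^{-1/2}u_T^{3/5}/(nM_\Phi)$ gives the second condition directly. The first reduces to $H_T^{-1/2}u_T^{3/5}\lesssim H_T^{1/4}u_T^{2/5}$, which holds because $u_T\le1$ in the nontrivial regime. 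I then work on $\mathcal T_T\cap\mathcal H_T(H_T)$, which has probability at least $1-\rho$ by a union bound.

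Second, I would close the feedback loop between tracking and KL movement. Substituting \eqref{eq:statewise-online-D-from-potential} into the $\sqrt{\mathfrak D_T/T}$ term of \eqref{eq:statewise-online-E-closed} gives
\[
\sqrt{\mathfrak D_T/T}\lesssim\sqrt{\eta R_\Phi/((1-\gamma)T)}+\frac{\eta}{1-\gamma}\sqrt{\mathfrak E_{\Phi,T}/T}.
\]
The coefficient of $\sqrt{\mathfrak E_{\Phi,T}/T}$ is then of order $n^{3/2}M_\Phi^2L_d\eta/((1-\gamma)\omega)$, which is small. Young's inequality then yields a closed bound of the form $\mathfrak E_{\Phi,T}/T\lesssim (nA_{\max}+n^{3/2}M_\Phi^2)H_T^{1/4}u_T^{2/5}+nL_{\widehat A}^2$. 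To get this, I balance $nM_\Phi^2/(\omega T)$, $n\omega A_{\max}$, and the drift term $n^3L_{\bar A_\Phi}^2H_T\eta^2M_\Phi^2/((1-\gamma)^2\omega^2)$. With the chosen $\eta,\omega$, each of these is of order $H_T^{1/4}u_T^{2/5}$ up to the stated constants. With $\mathfrak E_{\Phi,T}/T$ in hand, $\mathfrak D_T/T$ follows from \eqref{eq:statewise-online-D-from-potential}. It is now free of $\alpha$, which is exactly the gain over the general case, where the $n\eta\alpha T$ term forced $u_T\ge\alpha$.

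Third, I would average part~(iii) over $t\ge H_T+1$ (on $\mathcal H_T$ every state has been visited by then) and bound the first $H_T+1$ steps trivially. I apply Jensen to pull the averages inside the square roots and use the three charging bounds of part~(iv). This produces the analogue of \eqref{eq:online-regret-master-sketch} with an extra additive $\alpha$ and with $M,\mathfrak E_T$ replaced by $M_\Phi,\mathfrak E_{\Phi,T}$. The resulting terms are handled as follows.
\begin{itemize}
\item The term $\frac1\eta\sqrt{A_{\max}H_T\mathfrak D_T/(\zeta T)}$ contributes $\sqrt{A_{\max}H_T/\zeta}\,\sqrt{\mathfrak E_{\Phi,T}/T}/(1-\gamma)$, plus $\sqrt{R_\Phi A_{\max}H_T/(\eta\zeta(1-\gamma) T)}$.
\item The tracking term contributes the same quantity $\sqrt{A_{\max}H_T\mathfrak E_{\Phi,T}/(\zeta T)}/(1-\gamma)$.
\item The stale-drift term is of order $n^{3/2}M_\Phi^2H_T\eta/(1-\gamma)^3$.
\end{itemize}
Plugging in the rate for $\mathfrak E_{\Phi,T}/T$ gives $\sqrt{H_T^{5/4}u_T^{2/5}/\zeta}$. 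With $\zeta\asymp H_T^{1/2}u_T^{2/15}$ this becomes $H_T^{3/8}u_T^{2/15}$, which I would absorb into $\mathfrak C_{\Phi,1}$; the dependence is only logarithmic beyond $\sqrt{H_{\rm cov}/p_{\min}}$. The $nL_{\widehat A}^2$ part gives $\sqrt{nA_{\max}/\zeta}\,L_{\widehat A}/(1-\gamma)$. With the second branch of $\zeta$ this is $\mathfrak C_{\Phi,2}L_{\widehat A}^{2/3}$, and the same branch makes $M_\Phi\zeta/(1-\gamma)$ of that order too. The remaining term $M_\Phi L_d/(1-\gamma)$ is the stated floor. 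The capped regimes ($\zeta=1/4$, $\omega=1/4$, or $\eta=\eta^\Phi_{\rm cov}$) I would handle as in Appendix~\ref{app:online-final-proof}: either the bound then exceeds the trivial $1/(1-\gamma)$, or the cap only adds a term that $\mathfrak C_{\Phi,1}$ absorbs.

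The main obstacle is the exponent bookkeeping in the second and third steps. The difficulty is checking that, after closing the loop, every $T$-dependent term is at most $u_T^{2/15}$, and in particular that the $1/(\eta\sqrt\zeta)$-amplified $\mathfrak D_T$ term does not dominate. The term $\sqrt{R_\Phi/(\eta T)}$ scales as $u_T^{1/2-3/10}=u_T^{1/5}$, which is still faster than $u_T^{2/15}$, so it is absorbed.
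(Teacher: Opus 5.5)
Your proposal is correct and follows the paper's proof essentially step for step: verify the lemma hypotheses under the tuning, close the $\mathfrak E_{\Phi,T}$--$\mathfrak D_T$ feedback by Young's inequality using the $\alpha$-free bound \eqref{eq:statewise-online-D-from-potential}, then average part~(iii) with Jensen and the charging bounds of part~(iv) and substitute the parameters. Two small corrections: first, once the factor $\zeta^{-1/2}\asymp H_T^{-1/4}u_T^{-1/15}$ is included, the contribution $\sqrt{R_\Phi A_{\max}H_T/(\eta\zeta(1-\gamma)T)}$ is exactly of order $\frac{\sqrt{nR_\Phi M_\Phi A_{\max}}}{1-\gamma}\sqrt{H_T}\,u_T^{2/15}$, not $u_T^{1/5}$ (it is one of the rate-setting terms, and it still fits in $\mathfrak C_{\Phi,1}$ because $R_\Phi\le M_\Phi$); second, when the $\zeta$-cap is triggered by the $L_{\widehat A}$ branch you only get $\mathfrak C_{\Phi,2}L_{\widehat A}^{2/3}\gtrsim M_\Phi/(1-\gamma)$, so you must compare with the state-wise trivial bound $\operatorname{Gap}(\pi)\le R_\Phi/(1-\gamma)+\alpha$ rather than with $1/(1-\gamma)$, since $M_\Phi$ may be smaller than $1$.
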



\section{Independent-Resource Markov Congestion Games (IMCGs)}
\label{sec:IMCG-model}
In this section, we consider an important subclass of Markov games, namely, Markov congestion games, which generalize the static congestion games that have been studied extensively in the literature. This class of games is motivated by a variety of real-world applications, including dynamic resource allocation, dynamic routing, and dynamic job scheduling under stochastic dynamics. While static congestion games are known to be potential games, Markov congestion games generally do not admit an exact Markov potential function. Consequently, designing scalable learning algorithms for this class has remained a major challenge.

The work of \cite[Proposition~4.2]{guo2025markov} studies this class and shows that Markov congestion games belong to the class of Markov $\alpha$-potential games under a Lipschitz continuity assumption on the transition kernels. However, the resulting $\alpha$ can scale with the size of the state space $|\mathcal S|$ and other game parameters. Since any bounded game can, in principle, be viewed as a Markov $\alpha$-potential game for a sufficiently large $\alpha$, obtaining an $\alpha$ that grows substantially with the problem dimensions can limit the usefulness of such a characterization for deriving nontrivial equilibrium guarantees. On the other hand, \cite{cui2022learning} considers Markov congestion games with independent resource chains and develops a centralized algorithm that achieves sublinear NE regret with high probability. Their approach, however, is centralized and computationally demanding, as each iteration requires solving a large static game induced by the $Q$-function; developing a fully decentralized algorithm is left as an open problem.

In this section, we address both issues by applying the results developed earlier in this paper. We show that Markov congestion games with independent resource chains can be characterized as Markov $\alpha$-potential games with an $\alpha$ that is small and, importantly, independent of the size of the state space. At the same time, these games admit a state-wise $\alpha$-potential structure. Thus, they benefit from both properties developed in this paper, allowing us to obtain scalable, fully online, and decentralized learning algorithms with sublinear NE regret guarantees that hold with arbitrarily high probability.
   
\vspace{-0.3cm}
\subsection{Problem Formulation for IMCGs}\label{sec:formulation-IMCG}

Markov congestion games with independent resource chains (IMCGs) \cite{cui2022learning} are a subclass of Markov games, described in Section~\ref{sec:model}, where actions select subsets of resources, costs increase with resource congestion, and resource chains evolve independently.

More specifically, an IMCG is described by a finite set of players $[n]=\{1,\ldots,n\}$ and a finite set of resources $\mathcal E=\{1,\ldots,m\}$. Time evolves in discrete stages $t=0,1,2,\ldots$. At each time $t$, we assume that each resource $e\in\mathcal E$ possesses a local state $s_{e}^t\in\mathcal{S}_e$, where $\mathcal{S}_e$ is a finite state space associated with resource $e$, and the global state of the system at time $t$ is denoted by $s^t=(s^t_{1},\ldots,s^t_{m})
\in
\mathcal{S}
:=
\prod_{e=1}^{m}\mathcal{S}_e.$ As before, we write $s=(s_1,\ldots,s_m)$ for a generic global state, and denote the random variables by capital letters. At each time $t$, each player $i\in[n]$ selects an action $a_i^t\in\mathcal{A}_i$, consisting of a subset of resources, where the only restriction on the action set $\mathcal{A}_i$ is that each of its elements has cardinality at most $q$.\footnote{Note that we always have $q\leq m$. In particular, if $q=1$, each player can select at most one resource.} Let $a^t=(a^t_{1},\ldots,a^t_{n})\in \mathcal{A}_1\times\cdots\times \mathcal{A}_n$ denote the joint action profile at time $t$. Then, the congestion level (or load) on resource $e$ at time $t$ is denoted by
\[
n_e(a^t)
=
\sum_{i=1}^n \boldsymbol{1}_{\{e\in a^t_i\}},
\]
and the vector of loads is given by $n(a^t)
=
\bigl(n_1(a^t),\ldots,n_m(a^t)\bigr).$

\begin{remark}\label{rem:weighted-cngestion}
Our results naturally extend to the case where different players have different weights $w_{i,e}$ for different resources, in which case one can work with the weighted congestion measure $n_e(a^t)=\sum_{i=1}^n w_{i,e}\boldsymbol{1}_{\{e\in a_i^t\}}$. However, for simplicity of presentation and to avoid introducing additional notation, we focus on the homogeneous-weight setting.
\end{remark}

We assume resources are independent  in the sense that conditional on the current state and the action profile, each resource evolves according to its own Markov chain independently. More precisely, for every resource $e\in\mathcal E$, let $P_e(\cdot|x,k)$ denote a resource transition kernel on $\mathcal{S}_e$, where $x\in\mathcal{S}_e,$ and $k\in\{0,\ldots,n\}$
represents the current congestion level on resource $e$. Conditioned on the current state and action profile, the next-state distribution factorizes as
\begin{equation}
P(S^{t+1}=s'|S^t=s,A^t=a)
=
\prod_{e=1}^{m}
P_e
\bigl(
s'_e
\mid
s_e,
n_e(a)
\bigr).
\nonumber
\end{equation}

For every resource $e$, let $c_e:\mathcal S_e\times\{0,\ldots,n\}
\rightarrow
\mathbb [0, 1]$ denote the local congestion cost. We assume that the one-stage cost of player $i$ is additive over the resources selected
by player $i$, that is
\begin{align}\nonumber
c_i(s,a)
=
\sum_{e\in a_i}
c_e(s_e,n_e(a)),
\end{align}
where without loss of generality, we assume that players' costs are normalized so that $c_i\in [0, 1]$. Similar as before, for a stationary policy profile $\pi$ and initial state distribution $\mu\in\Delta(\mathcal{S})$, the infinite-horizon discounted value function for player $i$ is given by
\begin{align}\nonumber
V_i^\pi(\mu)
:=\mathbb E_\pi\bigg[
\sum_{t=0}^{\infty}\gamma^t c_i(S^t,A^t)
\,\bigg|\,
S^0\sim \mu
\bigg],
\end{align}
Finally, an IMCG is defined by the tuple $\mathcal G
=
\big([n], \mathcal S, \{P_e\}_{e\in \mathcal{E}}, \{\mathcal A_i\}_{i=1}^n, \{c_e\}_{e\in \mathcal{E}}, \gamma, \mu \big).$

\begin{definition}[Local load  sensitivity]\label{ass:local-load-sen}
The load transition sensitivity of the IMCG is defined as

$$
\delta:=
\max_{\substack{e\in \mathcal{E},s_e\in \mathcal{S}_e\\
k\in \{0,\ldots,n-1\}}}
\left\|P_e(\cdot\mid s_e,k+1)-P_e(\cdot\mid s_e,k)\right\|_{\TV}.
$$

\end{definition}

The parameter $\delta$ measures the maximum change in the transition law of a resource caused by increasing its load by one player. Since total variation distance is at most one, we always have $\delta\le1$. In many large-scale congestion systems, however, the effect of a single additional player on the resource dynamics is expected to be small, so that $\delta\ll1$. This is particularly natural when the system contains many players and the transition dynamics depend smoothly on the aggregate congestion level, in which case the marginal effect of any individual player becomes small.

\begin{definition}
\label{def:alpha-markov-potential-game-local}
A Markov game is called a local $\alpha$-potential game if there exists a
potential function $\Psi$ on the space of stationary policy profiles such that, for
every player $i\in[n]$, every pair of stationary policy profiles
$\pi=(\pi_i,\pi_{-i})$ and $\pi'=(\pi_i',\pi_{-i})$ that differ only in
player $i$'s policy, and every initial distribution
$\mu\in\Delta(\mathcal S)$.
\begin{align}\nonumber
\left| \bigl(V_i^{\pi'}(\mu)-V_i^\pi(\mu)\bigr) - \bigl(\Psi^{\pi'}(\mu)-\Psi^{\pi}(\mu)\bigr) \right| \le \alpha \sum_{s\in\mathcal S} d_\mu^{\pi'}(s) \left\| \pi_i'(\cdot\mid s)-\pi_i(\cdot\mid s) \right\|_{\mathrm{TV}}. \end{align}
\end{definition}

Definition~\ref{def:alpha-markov-potential-game-local} is stronger and more local than the Markov $\alpha$-potential game notion in Definition~\ref{def:alpha-markov-potential-game-uniform}, which requires the discrepancy between a player's value change and the
corresponding potential change to be uniformly bounded by $\alpha$ over
unilateral policy deviations. In contrast, the local Definition \ref{def:alpha-markov-potential-game-local} scales this discrepancy by the occupancy-weighted policy distance
$\sum_{s\in\mathcal S}d_\mu^{\pi'}(s)
\|\pi_i'(\cdot\mid s)-\pi_i(\cdot\mid s)\|_{\mathrm{TV}}$.
Since this quantity is at most one, the local condition implies the condition of
Definition \ref{def:alpha-markov-potential-game-uniform}, whereas the converse need not hold. More
importantly, the approximation error vanishes with the size of the
unilateral policy deviation, making the condition local in policy space
and allowing the analysis to exploit the geometry of the value and
potential functions along the policy updates. We will show that IMCGs benefit from this stronger local $\alpha$-potential property.

\subsection{Transition Kernel Sensitivity for IMCGs}

Consider the IMCG, where we recall that the local state of resource $e$ is $s_e\in\mathcal S_e$, and the global state is $s=(s_1,\ldots,s_m)\in\mathcal S$. Then we have the following transition kernel sensitivity lemma.

\begin{lemma}\label{lemm:transition-sensitivity}
For any fixed policies $\pi_{-i}$ and any two actions $a_i, a_i'\in \mathcal{A}_i$ of player $i$ in an IMCG,
\[
\max_{s\in \mathcal{S}}
\left\|
\bar{P}^{\pi_{-i}}_i(\cdot\mid s, a_i)
-
\bar{P}^{\pi_{-i}}_i(\cdot\mid s, a'_i)
\right\|_{\TV}
\le
|a_i\triangle a_i'|\delta\leq 2q \delta,
\]
where $\bar{P}^{\pi_{-i}}_i(\cdot\mid s,a_i)$ is the marginalized transition kernel of player $i$ and $|a_i\triangle a_i'|$ denotes the cardinality of the symmetric difference between the sets $a_i$ and $a'_i$. In particular, $\delta_P\leq 2q \delta$. 
\end{lemma}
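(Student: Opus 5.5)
The plan is a coupling/product-measure argument: bound the TV distance between the two unmarginalized kernels by a sum of per-resource TV distances, then average over the opponents' actions.

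First, fix $s$ and an opponents' action profile $a_{-i}\in\mathcal A_{-i}$, and compare $P(\cdot\mid s,a_i,a_{-i})$ with $P(\cdot\mid s,a_i',a_{-i})$. By the independent-resource structure, both are product measures over $e\in\mathcal E$ with factors $P_e(\cdot\mid s_e,n_e(a_i,a_{-i}))$ and $P_e(\cdot\mid s_e,n_e(a_i',a_{-i}))$. I will invoke the standard subadditivity of total variation for product measures,
\[
\Big\|\prod_e p_e-\prod_e q_e\Big\|_{\TV}\le\sum_e\|p_e-q_e\|_{\TV}.
\]
This follows by telescoping: replace one factor at a time, and note that changing a single factor of a product measure changes the TV distance by exactly that factor's TV distance. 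Equivalently, it follows by coupling the coordinates independently via maximal couplings and applying a union bound.

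Second, identify which factors differ. For $e\notin a_i\triangle a_i'$, player $i$'s contribution to $n_e$ is the same under $a_i$ and $a_i'$ (either both include $e$ or neither does), so $n_e(a_i,a_{-i})=n_e(a_i',a_{-i})$ and the factors coincide. For $e\in a_i\triangle a_i'$, the two loads differ by exactly one. Writing them as $k$ and $k+1$ with $k\in\{0,\ldots,n-1\}$, Definition~\ref{ass:local-load-sen} bounds the factor TV distance by $\delta$. Hence the unmarginalized TV distance is at most $|a_i\triangle a_i'|\,\delta$, uniformly in $a_{-i}$ and $s$.

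Third, marginalize. Since $\bar P_i^{\pi_{-i}}(\cdot\mid s,a_i)$ is the $\pi_{-i}(\cdot\mid s)$-mixture of the $P(\cdot\mid s,a_i,a_{-i})$, and the same mixture weights apply for $a_i'$, convexity of TV (as in Remark~\ref{rem:marginalized-transition-sensitivity}) gives the same bound $|a_i\triangle a_i'|\delta$ after averaging. Because $|a_i|,|a_i'|\le q$, we have $|a_i\triangle a_i'|\le|a_i|+|a_i'|\le 2q$. Taking $\pi_{-i}$ deterministic, or reading off the second step directly, and maximizing over $i$, $s$, $a_i$, $a_i'$, $a_{-i}$ yields $\delta_P\le 2q\delta$ by Definition~\ref{def:transition-sensitivity}.

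The only step needing care is the product-measure TV inequality. The telescoping version is short, but I will state it explicitly so the factorization over the resources $\mathcal E$ is used cleanly. Everything else is bookkeeping about which loads change.
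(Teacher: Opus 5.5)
Your proposal is correct and matches the paper's proof essentially step for step. You fix $s$ and $a_{-i}$ and bound the product-measure TV distance by the per-resource sum. You observe that only resources in $a_i\triangle a_i'$ contribute, each with loads $k$ and $k+1$ and hence at most $\delta$. You then average over $a_{-i}\sim\pi_{-i}(\cdot\mid s)$ by convexity of TV and use $|a_i\triangle a_i'|\le 2q$ to get $\delta_P\le 2q\delta$.
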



\begin{proof}
Fix a state $s$ and a realization of the other players' actions
$a_{-i}$. For each resource $e$, let 
\begin{align*}
&\mu_e
:=
P_e
\left(
\cdot\mid s_e,
n_{e}(a_{-i})+\boldsymbol{1}_{\{e\in a_i\}}
\right),\cr 
&\nu_e
:=
P_e
\left(
\cdot\mid s_e,
n_{e}(a_{-i})+\boldsymbol{1}_{\{e\in a_i'\}}
\right),
\end{align*}
where by abuse of notation $n_{e}(a_{-i})\!=\!\sum_{j\ne i}\boldsymbol{1}_{\{e\in a_j\}}$. Then, by independence of the resource chains,
\[
P(\cdot\mid s,a_i,a_{-i})=\prod_{e=1}^m \mu_e \qquad \mbox{and} \qquad
P(\cdot\mid s,a'_i,a_{-i})=\prod_{e=1}^m \nu_e.
\]
If $e\notin a_i\triangle a_i'$, then $\boldsymbol{1}_{\{e\in a_i\}}
=
\boldsymbol{1}_{\{e\in a_i'\}}$, and therefore $\mu_e=\nu_e$. Thus the only coordinates in which $\prod_{e=1}^m \mu_e$ and $\prod_{e=1}^m \nu_e$ differ are those in the symmetric difference $a_i\triangle a_i'$. By the standard telescoping bound for product measures, 
\[
\left\|
\prod_{e=1}^m \mu_e
-
\prod_{e=1}^m \nu_e
\right\|_{\TV}
\le
\sum_{e=1}^m
\|\mu_e-\nu_e\|_{\TV}=
\sum_{e\in a_i\triangle a_i'}
\|\mu_e-\nu_e\|_{\TV}.
\]
where the second equality holds because $\mu_e=\nu_e$ whenever $e\notin a_i\triangle a_i'$. 

For each $e\in a_i\triangle a_i'$, the local load on resource $e$ differs by exactly one, so
by the definition of the local load  sensitivity (Definition \ref{ass:local-load-sen}), we have $\|\mu_e-\nu_e\|_{\TV}\le \delta$.
Hence, for fixed $s$ and $a_{-i}$,
\begin{align}\nonumber
\left\|P(\cdot\mid s,a_i,a_{-i})-P(\cdot\mid s,a'_i,a_{-i})\right\|_{\TV}=\left\|
\prod_{e=1}^m \mu_e -\prod_{e=1}^m \nu_e
\right\|_{\TV}
\le
|a_i\triangle a_i'|\delta.
\end{align}
This in view of Definition \ref{def:transition-sensitivity} and the fact that $|a_i\triangle a_i'|\leq 2q$ shows that $\delta_P\leq 2q \delta.$  

Finally, $\bar{P}^{\pi_{-i}}_i(\cdot\mid s, a_i)$ and $\bar{P}^{\pi_{-i}}_i(\cdot\mid s, a'_i)$ are obtained by averaging the primitive transition kernels $P(\cdot\mid s,a_i,a_{-i})$ and $P(\cdot\mid s,a'_i,a_{-i})$ over $a_{-i}\sim\pi_{-i}(\cdot\mid s)$. As total variation is convex, averaging
cannot increase the distance. Therefore, for every $s\in \mathcal{S}$,
\[
\left\|
\bar{P}^{\pi_{-i}}_i(\cdot\mid s, a_i)
-
\bar{P}^{\pi_{-i}}_i(\cdot\mid s, a'_i)
\right\|_{\TV}
\le
|a_i\triangle a_i'|\delta.
\]
Taking the maximum over $s\in \mathcal{S}$ completes the proof.
\end{proof}

\subsection{Local \(\alpha\)-Potential with State-wise Potential Function for IMCGs}

Static congestion games possess a particularly strong structure: every unilateral change in a player's cost is captured exactly by the corresponding change in the Rosenthal potential \cite{rosenthal1973class,milchtaich1996congestion}. More precisely, for each \emph{fixed} state $s$, define the Rosenthal potential
\begin{align}\label{eq:static-rosenthial-potential}
\Phi(s,a)
=\sum_{e\in \mathcal{E}}
\sum_{k=1}^{n_e(a)}
c_e(s_e,k).
\end{align}
Then, for every player $i$, any fixed state $s\in \mathcal{S}$, and every unilateral action deviation from $a_i$ to $a_i'$,
\[
\Phi(s,a_i',a_{-i})-\Phi(s,a_i,a_{-i})
=
c_i(s,a_i',a_{-i})-c_i(s,a_i,a_{-i}).
\]
Thus, at every fixed state, the congestion game admits an exact potential representation. This observation suggests a natural candidate for extending the static Rosenthal potential to the dynamic Markov setting, which has also been investigated in prior works \cite{fox2022independent,guo2025markov}: use $\Phi(s,a)$ as the stage cost and define its infinite-horizon discounted value as the Markov potential function. If the exact potential identity holds state by state, one might hope that accumulating these identities along the stochastic trajectory would preserve, at least approximately, the potential structure at the policy level. The main difficulty is that a unilateral policy deviation changes not only the actions selected at each state, but also the future state distribution. Consequently, the statewise exact-potential identity alone does not imply an exact Markov potential identity, and a direct comparison between unilateral value changes and changes in the discounted Rosenthal potential can lead to loose or pessimistic $\alpha$-potential bounds \cite{guo2025markov}.

The key observation in our setting is that the independent local resource dynamics make this distributional mismatch controllable. Exploiting this structure, we show that the discounted Rosenthal potential satisfies a much stronger proximity relation for IMCGs: the approximation error scales locally both in terms of the magnitude of the unilateral policy deviation and the load transition sensitivity. As a result, IMCGs enjoy the best of both worlds: they satisfy the local Markov $\alpha$-potential property of Definition~\ref{def:alpha-markov-potential-game-local}, while the same discounted Rosenthal construction provides a state-wise Markov potential function in the sense of Assumption~\ref{eq:state-wise-potential}.

\begin{lemma}\label{thm:alpha-potential}
Consider an IMCG and define the state-wise Markov potential function
\[\Psi^{\pi}(\mu):=\mathbb E_\pi
\!\bigg[
\sum_{t=0}^{\infty}
\gamma^t
\Phi(S^t,A^t)
\;\bigg|\;
S^0\sim \mu\bigg],
\]
where $\Phi(s,a)=
\sum_{e\in \mathcal{E}}
\sum_{k=1}^{n_e(a)}
c_e(s_e,k)$ denotes the static Rosential potential function at state $s$. Let $\pi=(\pi_i,\pi_{-i})$ and
$\pi'=(\pi_i',\pi_{-i})$ be two policy profiles that differ only in
player $i$'s policy. Then, for every player $i$ and any initial distribution $\mu\in\Delta(\mathcal S)$, for the choice of $\alpha:=
\frac{4nq\gamma\delta}{(1-\gamma)^2}$, 
\begin{align}\nonumber
\left|
\bigl(V_i^{\pi'}(\mu)-V_i^\pi(\mu)\bigr)
-
\bigl(\Psi^{\pi'}(\mu)-\Psi^{\pi}(\mu)\bigr)
\right|
&\le
\alpha
\sum_{s\in\mathcal S}
d_\mu^{\pi'}(s)
\left\|
\pi_i'(\cdot\mid s)-\pi_i(\cdot\mid s)
\right\|_{\mathrm{TV}}.
\end{align}
\end{lemma}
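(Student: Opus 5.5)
The plan is to isolate the part of player $i$'s cost that the Rosenthal potential does not capture, and to observe that this residual does not depend on player $i$'s own action. Player $i$ can then affect it only through the transitions, which Lemma~\ref{lemm:transition-sensitivity} controls.

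\textbf{Step 1 (residual stage cost).} Define $f_i(s,a):=c_i(s,a)-\Phi(s,a)$. Expanding \eqref{eq:static-rosenthial-potential} resource by resource, the terms contributed by player $i$ cancel exactly against $c_i$, so
\[
f_i(s,a)=-\sum_{e\in\mathcal E}\sum_{k=1}^{n_e(a_{-i})}c_e(s_e,k),
\]
which depends only on $(s,a_{-i})$. This is the same algebra as the static Rosenthal identity, with $a_i$ formally replaced by the empty set. I then want $f_i$ to take values in an interval of length $O(n)$.
\begin{itemize}
\item If the congestion costs are nondecreasing in the load (the natural congestion assumption), then $\sum_{k\le n_e}c_e(s_e,k)\le n_e\,c_e(s_e,n_e)$. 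Summing over $e$ gives at most $\sum_{j\ne i}c_j(s,a)\le n-1$, so $f_i\in[-(n-1),0]$.
\item If monotonicity is not available, I fall back on the crude bound $|f_i|\le nq$. This costs an extra factor of $q$ in $\alpha$.
\end{itemize}

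\textbf{Step 2 (performance difference for the residual value).} Let $W^\pi(\mu):=V_i^\pi(\mu)-\Psi^\pi(\mu)$, the discounted value of the stage cost $f_i$. The quantity to bound is $|W^{\pi'}(\mu)-W^{\pi}(\mu)|$. Fixing $\pi_{-i}$, $W$ is the value function of a single-agent MDP with stage cost $\bar f_i^{\pi_{-i}}(s):=\mathbb E_{A_{-i}}[f_i(s,A_{-i})]$ and kernel $\bar P_i^{\pi_{-i}}$. Lemma~\ref{lemm:performance-difference} therefore applies verbatim and gives
\[
W^{\pi'}(\mu)-W^{\pi}(\mu)=\frac{1}{1-\gamma}\sum_s d_\mu^{\pi'}(s)\big\langle \pi_i'(\cdot\mid s)-\pi_i(\cdot\mid s),\,\bar A^W_{i}(s,\cdot)\big\rangle .
\]
Here $\bar A^W_i(s,a_i)=\bar f_i^{\pi_{-i}}(s)+\gamma\sum_{s'}\bar P_i^{\pi_{-i}}(s'\mid s,a_i)W^\pi(s')-W^\pi(s)$.

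\textbf{Step 3 (span bound).} The stage term $\bar f_i^{\pi_{-i}}(s)$ does not depend on $a_i$, so only the transition term contributes to the span over $a_i$. For two actions $a_i,a_i'$,
\[
\big|\langle \bar P_i(\cdot\mid s,a_i)-\bar P_i(\cdot\mid s,a_i'),W^\pi\rangle\big|\le \|\cdot\|_{\TV}\operatorname{span}(W^\pi)\le 2q\delta\,\operatorname{span}(W^\pi),
\]
where the last inequality is Lemma~\ref{lemm:transition-sensitivity}. By Step 1, $\operatorname{span}(W^\pi)\le n/(1-\gamma)$, hence $\operatorname{span}(\bar A_i^W(s,\cdot))\le 2nq\gamma\delta/(1-\gamma)$. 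Then Lemma~\ref{lem:span-inner-product}-style reasoning, namely $|\langle p-p',v\rangle|\le \operatorname{span}(v)\|p-p'\|_{\TV}$, applied inside the sum of Step 2 yields
\[
|W^{\pi'}-W^{\pi}|\le \frac{2nq\gamma\delta}{(1-\gamma)^2}\sum_s d_\mu^{\pi'}(s)\|\pi_i'(\cdot\mid s)-\pi_i(\cdot\mid s)\|_{\TV}.
\]
This is within the stated $\alpha=4nq\gamma\delta/(1-\gamma)^2$; the extra factor of $2$ absorbs slack from cruder estimates such as bounding the range of $f_i$ by $2n$.

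\textbf{Main obstacle.} The delicate point is Step 1: obtaining a range bound on $f_i$, and hence on $W^\pi$, that is linear in $n$ and does not pick up an extra factor of $q$. This is exactly where the cost normalization $c_j\in[0,1]$ and the monotonicity of $c_e$ in the load must be used. If monotonicity is not assumed, I would instead bound $\Phi(s,\varnothing,a_{-i})$ directly by the total load $\sum_e n_e(a_{-i})\le (n-1)q$ and accept the resulting constant.
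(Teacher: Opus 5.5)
Your proof is correct, but it reaches the bound by a different decomposition than the paper's.

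\textbf{Shared ingredients.} Both arguments rest on the same two facts:
\begin{itemize}
\item $c_i-\Phi$ does not depend on $a_i$, so its $\pi$-average $h(x)=\bar c_i^{\pi}(x)-\bar\Phi^{\pi}(x)$ is the same under $\pi$ and $\pi'$.
\item Monotonicity of $c_e$ in the load keeps this residual in a range of order $n$. The paper invokes monotonicity at exactly this point as well.
\end{itemize}

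\textbf{The paper's route.} It writes the discrepancy as a pure occupancy-shift term, $\frac{1}{1-\gamma}\sum_x\bigl(d_\mu^{\pi'}(x)-d_\mu^{\pi}(x)\bigr)h(x)$. It then bounds this by $\|h\|_\infty\,\|d_\mu^{\pi'}-d_\mu^{\pi}\|_1$, using Lemma~\ref{lemm:occupancy-sensitivity}(i) and $\delta_P\le 2q\delta$.

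\textbf{Your route.} You apply the performance-difference lemma to the residual value $W^\pi$. You observe that only the continuation term of its advantage varies with $a_i$, and you bound that span by $2q\gamma\delta\,\operatorname{span}(W^\pi)$ via Lemma~\ref{lemm:transition-sensitivity}.

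\textbf{Comparison.} The two computations are equivalent once unwound, since the proof of Lemma~\ref{lemm:occupancy-sensitivity}(i) is itself a resolvent-telescoping argument of the same kind.
\begin{itemize}
\item Yours is self-contained. Because it pairs a TV distance with the span of $W^\pi$, rather than an $\ell_1$ occupancy gap with $\|h\|_\infty$, it gives at most $2nq\gamma\delta/(1-\gamma)^2$. That is a factor of two below the stated $\alpha$.
\item The paper's version is shorter given the already-proved occupancy lemma. It also makes explicit that the entire discrepancy is an occupancy effect.
\end{itemize}

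\textbf{Two small remarks.}
\begin{itemize}
\item Your chain $\sum_e n_e(a_{-i})\,c_e(s_e,n_e(a_{-i}))\le\sum_{j\ne i}c_j(s,a)$ uses monotonicity a second time. You need it to raise the load from $n_e(a_{-i})$ to $n_e(a)$, because $c_j\le 1$ is only guaranteed at full action profiles. Alternatively, use $0\le -f_i(s,a)=\Phi(s,a)-c_i(s,a)\le\Phi(s,a)\le n$, as the paper effectively does.
\item Your non-monotone fallback loses a factor $q$, so it does not yield the stated $\alpha$. The lemma as stated therefore relies on monotone congestion costs in both proofs.
\end{itemize}
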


\begin{proof}
For the sake of completeness, we first verify the static potential identity by following \cite{rosenthal1973class}, with a straightforward extension to set-valued actions. Fix a state
\(s\in\mathcal S\), fix the actions \(a_{-i}\) of all players other than
player \(i\), and let \(a_i,a_i'\in\mathcal{A}_i\) be two possible actions of player \(i\).  Define
$a=(a_i,a_{-i})$ and $a'=(a_i',a_{-i})$. For each resource \(e\), 
\[
n_e(a)=n_{e}(a_{-i})+\mathbf 1\{e\in a_i\},
\qquad
n_e(a')=n_{e}(a_{-i})+\mathbf 1\{e\in a_i'\},
\]
where $n_{e}(a_{-i}):=
\sum_{j\neq i}\mathbf 1\{e\in a_j\}$. Then, the change in player \(i\)'s one-stage cost is
\begin{align*}
c_i(s,a')-c_i(s,a)
&=
\sum_{e\in a_i'} c_e(s_e,n_e(a'))
-
\sum_{e\in a_i} c_e(s_e,n_e(a))  \\
&=
\sum_{e=1}^m
\Big[
\mathbf 1\{e\in a_i'\}
c_e\big(s_e,n_{e}(a_{-i})+1\big)
-
\mathbf 1\{e\in a_i\}
c_e\big(s_e,n_{e}(a_{-i})+1\big)
\Big]  \\
&=
\sum_{e=1}^m
\Big(
\mathbf 1\{e\in a_i'\}
-
\mathbf 1\{e\in a_i\}
\Big)
c_e\big(s_e,n_{e}(a_{-i})+1\big).
\end{align*}

On the other hand, the change in the static Rosenthal potential is
\begin{align*}
\Phi(s,a')
-
\Phi(s,a)
&=
\sum_{e=1}^m
\bigg[
\sum_{k=1}^{n_e(a')} c_e(s_e,k)
-
\sum_{k=1}^{n_e(a)} c_e(s_e,k)
\bigg].
\end{align*}
As the congestion on a resource \(e\) changes by at most one depending on whether \(e\in a_i\) or \(e\in a_i'\),
\[
\sum_{k=1}^{n_e(a')} c_e(s_e,k)
-
\sum_{k=1}^{n_e(a)} c_e(s_e,k)
=
\Big(
\mathbf 1\{e\in a_i'\}
-
\mathbf 1\{e\in a_i\}
\Big)
c_e\big(s_e,n_{e}(a_{-i})+1\big).
\]
Hence, for every fixed \(s\), \(a_{-i}\), and any \(a_i,a_i'\in \mathcal{A}_i\), we have
\begin{align}\label{eq:stastic-potential-identity}
c_i(s,a_i',a_{-i})-c_i(s,a_i,a_{-i})
=\Phi(s,a_i',a_{-i})
-\Phi(s,a_i,a_{-i}).
\end{align}
Let us define
\[
\bar{c}_i^\pi(s)
:=
\E_{A\sim\pi(\cdot\mid s)}[c_i(s,A)],
\qquad
\bar{\Phi}^\pi(s)
:=
\E_{A\sim\pi(\cdot\mid s)}
[\Phi(s,A)] .
\]
Now, conditional on \(s\), take expectations in \eqref{eq:stastic-potential-identity} with respect to
\(A_{-i}\sim \pi_{-i}(\cdot\mid s)\),
\(A_i\sim \pi_i(\cdot\mid s)\), and
\(A_i'\sim \pi_i'(\cdot\mid s)\), independently. Using
linearity of expectation, we obtain
\[
\bar{c}_i^{\pi'}(s)-\bar{c}_i^\pi(s)
=
\bar{\Phi}^{\pi'}(s)-\bar{\Phi}^\pi(s)\ \ \forall s\in\mathcal S. 
\]
Moreover, using the definition of the value function and the state-wise Markov potential function,
\[
V_i^\pi(s)
=
\frac{1}{1-\gamma}\sum_{x\in\mathcal S}d_s^\pi(x)\bar{c}_i^\pi(x),
\qquad
\Psi^\pi(s)
=
\frac{1}{1-\gamma}\sum_{x\in\mathcal S}d_s^\pi(x)\bar{\Phi}^\pi(x).
\]
Therefore, if we define $h(x):=\bar{c}_i^\pi(x)-\bar{\Phi}^\pi(x)
=\bar{c}_i^{\pi'}(x)-\bar{\Phi}^{\pi'}(x)$, then
\begin{align}\label{eq:V-h-Psi}
\big(V_i^{\pi'}(s)-V_i^\pi(s)\big)
&-
\big(\Psi^{\pi'}(s)-\Psi^\pi(s)\big)\cr
&=
\frac{1}{1-\gamma}
\sum_{x\in\mathcal S}
\Big[
d_s^{\pi'}(x)
\big(\bar{c}_i^{\pi'}(x)-\bar{\Phi}^{\pi'}(x)\big)
-
d_s^\pi(x)
\big(\bar{c}_i^{\pi}(x)-\bar{\Phi}^{\pi}(x)\big)
\Big]\cr
&=
\frac{1}{1-\gamma}
\sum_{x\in\mathcal S}
\big(d_s^{\pi'}(x)-d_s^\pi(x)\big)h(x).
\end{align}
Moreover, we can uniformly bound the static potential $\Phi(s,a)$ for any $s$ and $a$ as follows:
\begin{align}\nonumber
\Phi(s,a)\!=\!
\sum_{e \in \mathcal{E}}
\sum_{k=1}^{n_e(a)}
c_e(s_e,k)
\!\le\!
\sum_{e\in \mathcal{E}} n_e(a)c_e(s_e,n_e(a))\!=\!\sum_{i=1}^n\sum_{e\in a_i}c_e(s_e,n_e(a))\!=\!
\sum_{i=1}^n c_i(s,a)
\le n,
\end{align}
where the first inequality holds by monotonicity of the resource costs, the second equality follows by double counting the player--resource incidences, using
$n_e(a)=\sum_{i=1}^n\mathbf{1}_{\{e\in a_i\}}$, and the last equality holds because $c_i\in [0, 1]$ (note that using the notation \eqref{eq:statewise-Phi-range}, this means $R_{\Phi}\leq n$). Hence \(\bar{\Phi}^\pi(s)\in [0, n]\), and since $\bar{c}^{\pi}_i\in [0,1]$, we have $\|h\|_\infty \le n-1$. Thus, using \eqref{eq:V-h-Psi}, we obtain
\begin{align*}
\left|
\big(V_i^{\pi'}(s)-V_i^\pi(s)\big)
-
\big(\Psi^{\pi'}(s)-\Psi^\pi(s)\big)
\right|
&\le
\frac{n-1}{1-\gamma}
\sum_{x\in\mathcal S}
\left|d_s^{\pi'}(x)-d_s^\pi(x)\right|\cr 
&\leq \frac{4nq\gamma\delta}{(1-\gamma)^2}
\sum_{x\in\mathcal S}
d_s^{\pi'}(x)
\left\|
\pi_i'(\cdot\mid x)
-
\pi_i(\cdot\mid x)
\right\|_{\mathrm{TV}},
\end{align*}
where the second inequality uses part (i) of Lemma \ref{lemm:occupancy-sensitivity} with $L_d=\frac{\gamma\delta_P}{1-\gamma}$ and because $\delta_P\leq 2q\delta$ by Lemma \ref{lemm:transition-sensitivity}. Taking expectation from the above inequality over $s\sim \mu$ and using the convexity of the absolute value completes the proof.
\end{proof}

\subsection{Episodic and One-Sample Estimation Oracles for IMCGs}

So far, we have shown that IMCGs are (local) Markov $\alpha$-potential games with $\alpha=\frac{4nq\gamma\delta}{(1-\gamma)^2}$ and that they also admit a state-wise potential function (see Assumption~\ref{eq:state-wise-potential}), as established in Lemma~\ref{thm:alpha-potential}. In practical settings, this choice of $\alpha$ is expected to be very small, since one would typically expect $\delta\ll 1$. Crucially, $\alpha$ does not scale with either the size of the state space $|\mathcal{S}|$ or the sizes of the action spaces $|\mathcal{A}_i|$, which can be exponentially large. Therefore, IMCGs fall naturally within the class of state-wise Markov $\alpha$-potential games studied in Section~\ref{sec:statewise-potential-sharp}. However, to apply the algorithms developed in Section~\ref{sec:statewise-potential-sharp}, we still need to construct episodic or one-sample estimation oracles for the marginalized potential advantage that satisfy Assumption~\ref{ass:episodic-potential-oracle} or Assumption~\ref{ass:online-potential-oracle}, respectively. In this subsection, we show that such estimators can be constructed directly by leveraging the Rosenthal function \eqref{eq:static-rosenthial-potential}. To this end, the following lemma provides a one-stage approximation to the marginalized potential advantage, whose proof is deferred to Appendix~\ref{app:estimate-oracle-IMCG}. 

\begin{lemma}\label{thm:one-stage}
Let $\Phi$ be the static Rosenthal potential function defined in
\eqref{eq:static-rosenthial-potential}. Given a policy profile
$\pi=(\pi_i,\pi_{-i})$, let $\bar{Q}_{\Phi,i}^{\pi}(s,a_i)$ and
$\bar{A}_{\Phi,i}^{\pi}(s,a_i)$ denote, respectively, the marginalized
potential $Q$-function and the marginalized potential advantage function
that were defined in \eqref{eq:marginalized-potentia-advantage-Q}. Let
\begin{align}\label{eq:marginalized-both-costs}
&\bar{c}_i^{\pi_{-i}}(s,a_i)
:=
\mathbb E_{A_{-i}\sim \pi_{-i}(\cdot\mid s)}
\left[
c_i(s,a_i,A_{-i})
\right]
\qquad \mbox{and} \qquad
\bar{c}_i^{\pi}(s)
:=
\mathbb E_{A\sim\pi(\cdot\mid s)}
\left[c_i(s,A)
\right].
\end{align}
Then, for every player $i$, state $s\in \mathcal{S}$, and action
$a_i\in \mathcal{A}_i$, we have
\begin{align}\label{eq:surrogate-mariginalized-potential}
\left|
\bar{A}_{\Phi,i}^\pi(s,a_i)
-
\big(
\bar{c}_i^{\pi_{-i}}(s,a_i)
- \bar{c}_i^{\pi}(s)
\big)
\right|
\le
\frac{4nq\gamma\delta}{1-\gamma}.
\end{align}
\end{lemma}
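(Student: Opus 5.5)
The plan is to write the marginalized potential advantage in its Bellman form, separate the one-stage part from the continuation part, and show that the continuation part is nearly independent of $a_i$. From \eqref{eq:marginalized-potentia-advantage-Q},
\[
\bar A_{\Phi,i}^\pi(s,a_i)=\bar\Phi_i^{\pi_{-i}}(s,a_i)-\bar\Phi^\pi(s)+\gamma\Big(\sum_{s'}\bar P_i^{\pi_{-i}}(s'\mid s,a_i)\Psi^\pi(s')-\sum_{s'}\bar P^{\pi}(s'\mid s)\Psi^\pi(s')\Big),
\]
where $\bar\Phi_i^{\pi_{-i}}(s,a_i):=\mathbb E_{A_{-i}\sim\pi_{-i}(\cdot\mid s)}[\Phi(s,a_i,A_{-i})]$ and $\bar\Phi^\pi(s):=\mathbb E_{A\sim\pi(\cdot\mid s)}[\Phi(s,A)]$. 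Here I use $\Psi^\pi(s)=\bar\Phi^\pi(s)+\gamma\sum_{s'}\bar P^\pi(s'\mid s)\Psi^\pi(s')$, which is the Bellman equation for the discounted value of the stage cost $\Phi$.

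\textbf{One-stage term.} For fixed $s$ and $a_{-i}$, the static identity \eqref{eq:stastic-potential-identity} says that $\Phi(s,a_i,a_{-i})-c_i(s,a_i,a_{-i})$ does not depend on $a_i$. Average this identity over $A_{-i}\sim\pi_{-i}(\cdot\mid s)$, and separately over $A\sim\pi(\cdot\mid s)$ (using independence of $A_i$ and $A_{-i}$). This gives
\[
\bar\Phi_i^{\pi_{-i}}(s,a_i)-\bar\Phi^\pi(s)=\bar c_i^{\pi_{-i}}(s,a_i)-\bar c_i^\pi(s)
\]
exactly. So the one-stage part reproduces the surrogate in \eqref{eq:surrogate-mariginalized-potential} with no error.

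\textbf{Continuation term.} Write $\bar P^\pi(\cdot\mid s)=\sum_{a_i'}\pi_i(a_i'\mid s)\bar P_i^{\pi_{-i}}(\cdot\mid s,a_i')$. The continuation term then becomes
\[
\gamma\sum_{a_i'}\pi_i(a_i'\mid s)\big\langle \bar P_i^{\pi_{-i}}(\cdot\mid s,a_i)-\bar P_i^{\pi_{-i}}(\cdot\mid s,a_i'),\Psi^\pi\big\rangle.
\]
Each inner product is a difference of expectations of $\Psi^\pi$ under two distributions, so it is bounded by $\operatorname{span}(\Psi^\pi)$ times the total variation distance. By Lemma~\ref{lemm:transition-sensitivity}, that distance is at most $|a_i\triangle a_i'|\delta\le 2q\delta$. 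For the span, the bound $0\le\Phi\le n$ established in the proof of Lemma~\ref{thm:alpha-potential} gives $\Psi^\pi(s')\in[0,n/(1-\gamma)]$, so $\operatorname{span}(\Psi^\pi)\le n/(1-\gamma)$. Combining these, the continuation term is at most $\gamma\cdot 2q\delta\cdot n/(1-\gamma)=2nq\gamma\delta/(1-\gamma)$, which lies within the claimed $4nq\gamma\delta/(1-\gamma)$.

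\textbf{Main obstacle.} The only delicate point is a normalization convention. If one bounds $|\langle p-p',\Psi\rangle|$ by $\|\Psi\|_\infty\|p-p'\|_1$ rather than by the span times the total variation distance, an extra factor of 2 appears. That version gives exactly the stated constant $4nq\gamma\delta/(1-\gamma)$, so either route suffices. Otherwise the argument is a direct one-step Bellman decomposition, and the static potential identity makes the stage term cancel exactly.
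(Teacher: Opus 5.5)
Your proposal is correct and follows the paper's proof essentially step for step. Both use the same one-step Bellman split, the exact cancellation of the stage term via the static Rosenthal identity, and a bound on the continuation term through Lemma~\ref{lemm:transition-sensitivity} with $\|\Psi^\pi\|_\infty\le n/(1-\gamma)$. The paper bounds the continuation inner product by the sup-norm times the $\ell_1$ distance, which gives its constant $4$; your span-based bound giving $2nq\gamma\delta/(1-\gamma)$ is also valid, since $0\le\Phi\le n$ places $\Psi^\pi$ in $[0,n/(1-\gamma)]$.
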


Comparing \eqref{eq:surrogate-mariginalized-potential} with
Assumption~\ref{ass:episodic-potential-oracle} or
Assumption~\ref{ass:online-potential-oracle} suggests that if the
estimation oracle invoked by player $i$ can estimate the surrogate quantity
$\bar c_i^{\pi_{-i}}(s,a_i)-\bar c_i^\pi(s)$ in an episodic or
one-sample fashion, then the resulting estimator can be used to satisfy
Assumption~\ref{ass:episodic-potential-oracle} or
Assumption~\ref{ass:online-potential-oracle}, respectively, with
$L_{\widehat{A}}:=\frac{4nq\gamma\delta}{1-\gamma}$. The following lemma
shows that this is indeed the case; its proof is given in
Appendix~\ref{app:one-sample-episodic-IMCG-sample}.

\begin{lemma}\label{lem:imcg-potential-oracles}
For IMCGs, the choice of raw sample $g_i^\tau(s,A^\tau):=c_i(s,A^\tau)$ yields
\begin{enumerate}[(i)]
\item[(i)] an episodic estimation oracle satisfying
Assumption~\ref{ass:episodic-potential-oracle} with $L_{\widehat A}=\frac{4nq\gamma\delta}{1-\gamma}$; and
\item[(ii)] a fully online one-sample estimation oracle satisfying
Assumption~\ref{ass:online-potential-oracle} with $L_{\widehat A}=\frac{4nq\gamma\delta}{1-\gamma}$.
\end{enumerate}
\end{lemma}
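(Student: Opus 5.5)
The plan is to compute the conditional mean of the importance-weighted, action-centered estimator with raw sample $g_i^\tau(s,A^\tau)=c_i(s,A^\tau)$, show that it equals the surrogate $\bar c_i^{\pi_{-i}}(s,a_i)-\bar c_i^{\pi}(s)$ exactly, and then invoke Lemma~\ref{thm:one-stage} to bound the distance from $\bar A_{\Phi,i}^{\pi}(s,\cdot)$. Boundedness is immediate, since $c_i\in[0,1]$ gives $|g_i^\tau|\le1$ almost surely. So the whole argument reduces to one conditional-expectation identity, carried out once for the episodic filtration and once for the online stopped filtration.

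The key step is as follows. Fix a sampling time $\tau$ with $S^\tau=s$, and let $\pi$ denote the policy profile used to draw $A^\tau$. In the episodic case $\pi=\pi^t$. In the online case $\pi=\pi^{\tau_k(s)}$, whose component at $s$ is $\pi^k(\cdot\mid s)$. Conditional on $\mathcal F_\tau$, the players draw their actions independently from $\pi_j(\cdot\mid s)$. The cost $c_i(s,A)$ is a deterministic function of $(s,A)$, so no extra oracle randomness enters. Hence
\[
\mathbb E_\tau\!\left[c_i(s,A)\frac{\mathbf 1\{A_i=a_i\}}{\pi_i(a_i\mid s)}\right]
=\sum_{a_{-i}}\pi_{-i}(a_{-i}\mid s)\,c_i(s,a_i,a_{-i})=\bar c_i^{\pi_{-i}}(s,a_i),
\]
using the product form of $\pi(\cdot\mid s)$. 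In addition, $\mathbb E_\tau[c_i(s,A)]=\bar c_i^{\pi}(s)$. Subtracting the two gives $\mathbb E_\tau[\widehat A_{\Phi,i}(s,a_i)]=\bar c_i^{\pi_{-i}}(s,a_i)-\bar c_i^{\pi}(s)$ for every $a_i$. The division is legitimate because $\pi_i(a_i\mid s)\ge\zeta/|\mathcal A_i|>0$ on $\Delta_{i,\zeta}$.

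To conclude, Lemma~\ref{thm:one-stage} applied to the profile $\pi$ yields
\[
\bigl\|\mathbb E_\tau[\widehat A_{\Phi,i}(s,\cdot)]-\bar A_{\Phi,i}^{\pi}(s,\cdot)\bigr\|_\infty\le \frac{4nq\gamma\delta}{1-\gamma},
\]
which is exactly Assumption~\ref{ass:episodic-potential-oracle} with $\pi=\pi^t$ and Assumption~\ref{ass:online-potential-oracle} with $\pi=\pi^t$ at $t=\tau_k(s)$.

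The only point requiring care is measurability. We must check that, conditional on $\mathcal F_{\tau_{t,r}(s)}$ (respectively $\mathcal F_{\tau_k(s)}$), the policy used at time $\tau$ is fixed and the fresh action draws are independent across players with the stated marginals. In the online case the global profile keeps evolving at other states, but this does not matter. Only the component at the current state $s$ enters the expectation, and the target advantage in the assumption is evaluated at the same global profile $\pi^{\tau_k(s)}$ to which Lemma~\ref{thm:one-stage} is applied. Since Lemma~\ref{thm:one-stage} holds for an arbitrary profile, no drift term arises.
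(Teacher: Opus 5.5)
Your proposal is correct and follows essentially the same route as the paper. You compute the conditional mean of the importance-weighted, action-centered estimator, which equals exactly $\bar c_i^{\pi_{-i}}(s,a_i)-\bar c_i^{\pi}(s)$ under the sampling-time profile; you then invoke Lemma~\ref{thm:one-stage} for the bias bound and $c_i\in[0,1]$ for boundedness, once for the episodic filtration and once for the online stopped filtration. Your remark on the online case adds a small justification the paper leaves implicit: only the state-$s$ component of the evolving global profile enters the conditional mean, and Lemma~\ref{thm:one-stage} holds for an arbitrary profile.
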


\subsection{Decentralized Online Learning for IMCGs with NE-Gap Guarantees}

Finally, by combining Lemmas~\ref{thm:alpha-potential} and
\ref{lem:imcg-potential-oracles} with Theorems~\ref{thm:statewise-episodic-ne-regret}
and \ref{thm:statewise-online-ne-regret}, we obtain fully decentralized
online learning algorithms for IMCGs with finite-time NE regret
performance guarantees. For brevity, we only present the fully online
version, which adapts Algorithm~\ref{alg:statewise-potential-fully-online}
to the IMCG setting and is summarized in
Algorithm~\ref{alg:fully-online-general-occupancy-IMCG}. An analogous
result can be established in the episodic setting by applying
Theorem~\ref{thm:statewise-episodic-ne-regret} and adapting
Algorithm~\ref{alg:statewise-potential-episodic-kl-projected-npg} to
IMCGs. The main result of this section is given in the following theorem,
whose proof follows immediately from the preceding results.

\begin{algorithm}[t]
\caption{Fully Online Algorithm for Player $i$ in IMCGs}
\label{alg:fully-online-general-occupancy-IMCG}
\begin{algorithmic}[1]
\State Initialize $\pi_i^0(\cdot\mid s)$ uniformly and
$r_i^{-1}(s)=0$ for every state $s$, and set
$N_0(s)=0$ for every $s$. Also, choose
$M_{\Phi}=n(1+\frac{2q\gamma\delta}{1-\gamma})$ and
$\omega \in (0, \frac{1}{2})$. 
\For{$t=0,1,2,\ldots$}
    \State Observe $S^t=s$ and set $k=N_t(s)$ to be the number of times state $s$ was visited before $t$.
    \State Player $i$ independently draws an action
    $A_i^t\sim \pi_i^k(\cdot\mid s)=\pi_i^t(\cdot\mid s)$.
    \State Player $i$ observes its realized cost $c_i(s,A^t)$ and forms the one-sample estimator
    \[
    \widehat A_i^k(s,a_i)
    =
    c_i(s,A^t)
    \left(
    \frac{\mathbf 1\{A_i^t=a_i\}}
    {\pi_i^k(a_i\mid s)}-1
    \right),
    \qquad a_i\in\mathcal A_i.
    \]
    \State Player $i$ aggregates and clips the new estimator:
    \begin{align*}
    \widetilde r_i^k(s)
    &=
    (1-\omega)r_i^{k-1}(s)
    +
    \omega\widehat A_i^k(s),\\
    r_i^k(s)
    &=
    \operatorname{clip}_{M_{\Phi}}
    \bigl(\widetilde r_i^k(s)\bigr).
    \end{align*}
    \State Player $i$ updates its policy only at the observed state $S^t=s$ as
    \begin{align}\nonumber
    \pi_i^{k+1}(\cdot\mid s)
    =
    \argmin_{p_i\in\Delta_{i,\zeta}}
    \left\{
    \frac{\eta}{1-\gamma}
    \langle p_i,r_i^k(s)\rangle
    +
    D_{\mathrm{KL}}
    \bigl(p_i\|\pi_i^k(\cdot\mid s)\bigr)
    \right\}.
    \end{align}
    \State Set $\pi_i^{t+1}(\cdot\mid s')=\pi_i^t(\cdot\mid s')$ for every $s'\neq s$.
    \State Set $N_{t+1}(s)=N_t(s)+1$ and leave all other counters unchanged.
    \State The state transits according to
    $S^{t+1}\sim P(\cdot\mid S^t,A^t)$.
\EndFor
\end{algorithmic}
\end{algorithm}

\begin{theorem}\label{thm:imcg-online-ne-regret}
Consider an IMCG and let Assumption~\ref{ass:coverage} hold. Fix $T\ge1$ and
$\rho\in(0,1)$, and suppose that each player follows
Algorithm~\ref{alg:fully-online-general-occupancy-IMCG} with the parameter
choices specified in Theorem~\ref{thm:statewise-online-ne-regret}, using
the corresponding IMCG parameter bounds. Then, with probability at least $1-\rho$,
\begin{align}\label{eq:imcg-online-final-regret}
\frac1T\sum_{t=0}^{T-1}\operatorname{Gap}(\pi^t)
\le{}&
\mathfrak C_{{\rm I},1}
\Big(\frac{\Lambda_T}{(1-\gamma)T}\Big)^{2/15}
+\mathfrak C_{{\rm I},2}
\Big(\frac{nq\gamma\delta}{1-\gamma}\Big)^{2/3}
+c\frac{nq\gamma\delta}{(1-\gamma)^2}
\Big(1+\frac{q\gamma\delta}{1-\gamma}\Big),
\end{align}
where $\Lambda_T=\log\big(\frac{16nA_{\max}(|\mathcal S|+T)}{\rho}\big)$, $c$ is a universal constant, and one may take
\begin{align}\nonumber
\mathfrak C_{{\rm I},1}
&=\widetilde O\!\bigg(\sqrt{\frac{H_{\rm cov}}{p_{\min}}}
\frac{n^2A_{\max}}{1-\gamma}
\Big(1+\frac{2q\gamma\delta}{1-\gamma}\Big)\bigg), \quad
\mathfrak C_{{\rm I},2}
=\widetilde O\!\bigg(\sqrt{\frac{H_{\rm cov}}{p_{\min}}}
\frac{n^{2/3}A_{\max}^{1/3}}{1-\gamma}
\Big(1+\frac{2q\gamma\delta}{1-\gamma}\Big)^{1/3}\bigg).
\end{align}
In particular, as $\delta\to0$, with all other problem parameters fixed,
with probability at least $1-\rho$,
\[
\frac1T\sum_{t=0}^{T-1}\operatorname{Gap}(\pi^t)
\le
\mathfrak C_{{\rm I},1}
\Big(\frac{\Lambda_T}{(1-\gamma)T}\Big)^{2/15}
+o_\delta(1).
\]
\end{theorem}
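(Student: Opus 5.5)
The plan is to obtain Theorem~\ref{thm:imcg-online-ne-regret} as a direct specialization of Theorem~\ref{thm:statewise-online-ne-regret}. We check that each of its hypotheses holds for an IMCG with explicit parameter values, and then substitute these values into \eqref{eq:statewise-online-final-regret}.

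\textbf{Step 1: verify the structural assumptions.} Assumption~\ref{ass:coverage} is given. The ergodicity part of Assumption~\ref{ass:alpha-potential-ergodicity} is taken as a standing hypothesis on the resource chains. For the potential part, Lemma~\ref{thm:alpha-potential} shows that the discounted Rosenthal potential is a local $\alpha$-potential with $\alpha=\frac{4nq\gamma\delta}{(1-\gamma)^2}$. Since the occupancy-weighted TV factor is at most one, this implies Definition~\ref{def:alpha-markov-potential-game-uniform} with the same $\alpha$. The potential is by construction the discounted value of $\Phi$, so Assumption~\ref{eq:state-wise-potential} holds. From the bound $0\le\Phi\le n$ in that proof, we get $R_\Phi\le n$. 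Lemma~\ref{lemm:transition-sensitivity} gives $\delta_P\le 2q\delta$, hence $L_d\le \frac{2q\gamma\delta}{1-\gamma}$ and $M_\Phi=R_\Phi(1+L_d)\le n(1+\frac{2q\gamma\delta}{1-\gamma})$. This matches the clipping level used in Algorithm~\ref{alg:fully-online-general-occupancy-IMCG}. Finally, Lemma~\ref{lem:imcg-potential-oracles}(ii) shows that the realized-cost raw sample $g_i^t=c_i(s,A^t)\in[0,1]$ satisfies Assumption~\ref{ass:online-potential-oracle} with $L_{\widehat A}=\frac{4nq\gamma\delta}{1-\gamma}$. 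With these identifications, Algorithm~\ref{alg:fully-online-general-occupancy-IMCG} is exactly Algorithm~\ref{alg:statewise-potential-fully-online}.

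\textbf{Step 2: substitute.} Apply Theorem~\ref{thm:statewise-online-ne-regret} with the tuning \eqref{eq:statewise-online-final-tuning}, using upper bounds for $R_\Phi$ and $L_d$ wherever they enter. The constraints on $\eta$ only tighten when $R_\Phi$ and $L_d$ are replaced by upper bounds, so the feasibility conditions remain valid. The four terms are then handled as follows.
\begin{itemize}
\item \emph{Statistical term.} $\mathfrak C_{\Phi,1}$ contains $\sqrt n A_{\max}+nR_\Phi(1+L_d)\sqrt{A_{\max}}\le \sqrt nA_{\max}+n^2(1+\frac{2q\gamma\delta}{1-\gamma})\sqrt{A_{\max}}$. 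This is dominated by $n^2A_{\max}(1+\frac{2q\gamma\delta}{1-\gamma})$, which gives $\mathfrak C_{{\rm I},1}$.
\item \emph{Oracle-bias term.} $\mathfrak C_{\Phi,2}L_{\widehat A}^{2/3}$, with $(nA_{\max}R_\Phi(1+L_d))^{1/3}\le n^{2/3}A_{\max}^{1/3}(1+\frac{2q\gamma\delta}{1-\gamma})^{1/3}$ and $L_{\widehat A}^{2/3}\lesssim(\frac{nq\gamma\delta}{1-\gamma})^{2/3}$, gives the $\mathfrak C_{{\rm I},2}$ term.
\item \emph{Transition-sensitivity term and $\alpha$.} The term $\frac{cR_\Phi(1+L_d)L_d}{1-\gamma}\le \frac{c\,n(1+\frac{2q\gamma\delta}{1-\gamma})\,2q\gamma\delta}{(1-\gamma)^2}$. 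The additive $\alpha=\frac{4nq\gamma\delta}{(1-\gamma)^2}$ has the same form. Together they give the last term of \eqref{eq:imcg-online-final-regret}.
\end{itemize}

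\textbf{Step 3: the $\delta\to 0$ limit.} Every term other than the statistical one carries a positive power of $\delta$ times coefficients that stay bounded as $\delta\to0$. In $\mathfrak C_{{\rm I},1}$, the factor $1+\frac{2q\gamma\delta}{1-\gamma}\to1$, so this coefficient also stays bounded. Hence the residual is $o_\delta(1)$.

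\textbf{Main obstacle.} The delicate point is confirming that the parameter choices remain admissible once the IMCG bounds are plugged in. One such case is $\delta_P=0$, where $\eta_{\rm cov}^\Phi=+\infty$. Another is that a capped $\zeta$ (the $L_{\widehat A}$ branch of the max) does not break the tuning argument. I would handle both by monotonicity of the conditions in $R_\Phi$ and $L_d$, together with the convention already adopted in the statewise theorem. I would also check that, if the cap on $\zeta$ is active, the claimed bound exceeds the trivial bound $1/(1-\gamma)$.
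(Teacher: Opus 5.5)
Your proposal is correct and follows the paper's own proof. The paper likewise verifies the hypotheses of Theorem~\ref{thm:statewise-online-ne-regret} via Lemmas~\ref{lemm:transition-sensitivity}, \ref{thm:alpha-potential}, and \ref{lem:imcg-potential-oracles}(ii), substitutes $R_\Phi\le n$, $\delta_P\le 2q\delta$, $L_d\le\frac{2q\gamma\delta}{1-\gamma}$, $L_{\widehat A}=\frac{4nq\gamma\delta}{1-\gamma}$, and $\alpha=\frac{4nq\gamma\delta}{(1-\gamma)^2}$ into \eqref{eq:statewise-online-final-regret}, and absorbs the dominated terms; your term-by-term bookkeeping and admissibility remarks just spell out what the paper does in one line.
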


\begin{proof}
By Lemma~\ref{lemm:transition-sensitivity}, every IMCG satisfies
$\delta_P\le 2q\delta$, and hence
$L_d=\frac{\gamma\delta_P}{1-\gamma}
\le\frac{2q\gamma\delta}{1-\gamma}$.
Moreover, the proof of Lemma~\ref{thm:alpha-potential} shows that the
discounted Rosenthal potential satisfies
Assumption~\ref{eq:state-wise-potential} with $R_\Phi\le n$, and that
IMCGs are local Markov $\alpha$-potential games with
$\alpha=\frac{4nq\gamma\delta}{(1-\gamma)^2}$.
Finally, by part~(ii) of Lemma~\ref{lem:imcg-potential-oracles}, the
realized cost sample used in
Algorithm~\ref{alg:fully-online-general-occupancy-IMCG} provides a
one-sample estimation oracle satisfying
Assumption~\ref{ass:online-potential-oracle} with
$L_{\widehat A}=\frac{4nq\gamma\delta}{1-\gamma}$.
Thus, under Assumption~\ref{ass:coverage}, all the hypotheses of
Theorem~\ref{thm:statewise-online-ne-regret} hold. Substituting the
valid IMCG upper bounds
\[
R_\Phi\le n,\qquad
\delta_P\le2q\delta,\qquad
L_d\le\frac{2q\gamma\delta}{1-\gamma},\qquad
L_{\widehat A}=\frac{4nq\gamma\delta}{1-\gamma},\qquad
\alpha=\frac{4nq\gamma\delta}{(1-\gamma)^2}
\]
into \eqref{eq:statewise-online-final-tuning} and
\eqref{eq:statewise-online-final-regret}, and absorbing universal
numerical constants and dominated terms into $c$ and the
$\widetilde O(\cdot)$ factors, gives
\eqref{eq:imcg-online-final-regret}. The final claim follows immediately
by letting $\delta\to0$ with all other problem parameters fixed.
\end{proof}

\subsection{Strategic Online Job Scheduling on Independent Stochastic Machines}
\label{subsec:imcg-job-scheduling}

A particularly compelling application of IMCGs arises in online job scheduling on heterogeneous machines \cite{etesami2022optimal,fardno2026game}. Motivated by this setting, we formulate a strategic online job-scheduling problem in which $n$ self-interested users (or transmitters) receive jobs over time and share a collection of $m$ heterogeneous machines. At each time $t$, the users simultaneously decide where to dispatch their currently available jobs, while each machine processes its assigned jobs according to its current operating state and congestion level, with its processing dynamics evolving stochastically over time (see Figure~\ref{fig:imcg-job-scheduling} for an illustration). This formulation builds on our earlier work on distributed load balancing and dynamic job scheduling \cite{fardno2026game}, where self-interested users allocate jobs among heterogeneous servers to minimize their individual processing delays. In particular, the static load-balancing game in \cite{fardno2026game} is shown to be an exact potential game, while its dynamic extension captures the carryover of unfinished jobs across time. Here, we show that this strategic online formulation naturally fits the IMCG framework, thereby providing a concrete application of our earlier results and yielding a fully online decentralized learning scheme with finite-time NE-regret guarantees.

Formally, let $\mathcal E=[m]$ denote the set of heterogeneous machines, with each machine $e$ having a finite local state space $\mathcal S_e$. Its state $s_e^t\in\mathcal S_e$ represents its current operating state, which may encode, for example, its queue length, workload, or processing condition. The global system state is $s^t=(s_e^t)_{e\in\mathcal E}\in\mathcal S:=\prod_{e\in\mathcal E}\mathcal S_e$. Each user $i\in[n]$ has a finite set $\mathcal A_i$ of admissible dispatching patterns, where an action $a_i\in\mathcal A_i$ specifies the subset of machines to which user $i$ dispatches its currently available jobs. As in the IMCG model, we assume that each user can use at most $q$ machines at each time step, so that $|a_i|\le q$. Given a joint dispatching decision $a=(a_1,\ldots,a_n)$, define $n_e(a):=\sum_{i=1}^n \one\{e\in a_i\}$ as the number of users simultaneously dispatching jobs to machine $e$.

For each machine $e$, let $c_e(s_e,n_e)$ denote the delay cost associated with using that machine when its local state is $s_e$ and its congestion level is $n_e$. Accordingly, the one-stage delay incurred by user $i$ under state $s$ and joint dispatching decision $a$ is $c_i(s,a)=\sum_{e\in a_i}c_e\bigl(s_e,n_e(a)\bigr)$. Thus, the machines correspond to the resources of an IMCG, while the users' dispatching decisions determine their resource subsets and the resulting congestion levels. Finally, suppose that the stochastic processing dynamics of different machines are conditionally independent and that the evolution of each machine depends only on its current local state and congestion level. Then there exist local transition kernels $P_e$ such that $P(S^{t+1}=s'\mid S^t=s,A^t=a)=\prod_{e\in\mathcal E}P_e\bigl(s'_e\mid s_e,n_e(a)\bigr)$, which is precisely the independent-resource transition structure of an IMCG. Consequently, the strategic online job-scheduling problem formulated above is an IMCG, and the decentralized learning framework developed in this paper applies directly. For heterogeneous job sizes, the weighted-congestion extension discussed in Remark~\ref{rem:weighted-cngestion} can be used. The formulation here should therefore be viewed as a stochastic simultaneous-dispatch extension of the dynamic scheduling model in \cite{fardno2026game}.

\begin{figure}[t]
    \centering
    \includegraphics[width=.92\linewidth]{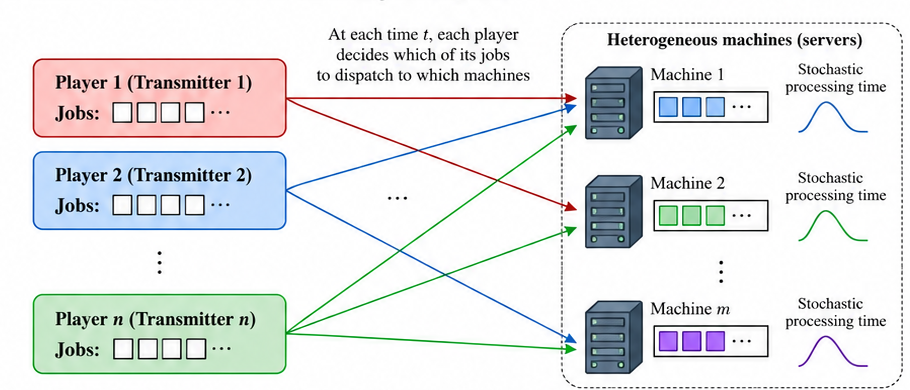}
    \caption{Strategic online job scheduling on stochastic machines. Each
    player receives jobs over time and chooses a subset of machines to which
    its jobs are dispatched. The machines are shared resources whose local
    processing conditions evolve stochastically and independently, conditional
    on their current states and congestion levels.}
    \label{fig:imcg-job-scheduling}
\end{figure}

Algorithm~\ref{alg:fully-online-general-occupancy-IMCG} has a particularly
simple interpretation in this application. At time $t$, every player observes
the current machine states/loads $S^t$, independently samples a dispatching pattern
from its current policy, and sends its jobs accordingly. After the dispatch,
player $i$ only needs to observe its own realized delay
$c_i(S^t,A^t)$; it does not need to know the other players' costs or policies.
Using this single realized delay, the player forms the importance-weighted
advantage estimate in
Algorithm~\ref{alg:fully-online-general-occupancy-IMCG}, recursively averages
and clips this estimate, and then performs a KL-projected NPG update only
at the state that was just observed. Intuitively, dispatching patterns that
repeatedly produce smaller delays at a given machine-state configuration gain
probability, while the KL regularization prevents the policy from changing too
abruptly and the truncation parameter $\zeta$ maintains sufficient exploration.
Thus, all users adapt simultaneously from their own experienced delays and the
observed machine states, without a centralized scheduler, without computing a
best response, and without observing the other users' actions.

Theorem~\ref{thm:imcg-online-ne-regret} then gives a finite-time equilibrium
guarantee for this decentralized scheduling mechanism. In particular, with
probability at least $1-\rho$, the average incentive of any user to
unilaterally change its dispatching policy satisfies
\[
\frac1T\sum_{t=0}^{T-1}\operatorname{Gap}(\pi^t)
\le
\mathfrak C_{{\rm I},1}
\Big(\frac{\Lambda_T}{(1-\gamma)T}\Big)^{2/15}
+\mathfrak C_{{\rm I},2}
\Big(\frac{nq\gamma\delta}{1-\gamma}\Big)^{2/3}
+c\frac{nq\gamma\delta}{(1-\gamma)^2}
\Big(1+\frac{q\gamma\delta}{1-\gamma}\Big).
\]
Here, $\delta$ has a transparent physical meaning: it measures how much the
future operating condition of a machine can change when one additional user
dispatches a job to it. When the effect of a single user is small
($\delta\ll1$), which occurs particularly when there are many users and
machines in the system, the last two terms are small, while the remaining term
decreases as $\widetilde O(T^{-2/15})$. Consequently, for any target accuracy
above the $\delta$-dependent approximation floor, a polynomial number of online
interactions suffices, with high probability, to make the time-averaged NE gap small. A similar phenomenon has also been proved for a sequential update
rule and observed in simulations for a simultaneous update rule in
\cite{fardno2026game}. Equivalently, over a long run, only a small fraction of
time steps can have a large unilateral improvement opportunity. In the scheduling
interpretation, the users therefore learn dispatching policies under which no
single user can substantially reduce its discounted delay by changing its
policy alone. In other words, strategic users can learn stable approximate
equilibrium behavior in polynomial time through fully decentralized,
simultaneous, one-sample updates, even when they can observe only their own
realized costs and the machine conditions evolve stochastically over time.

\section{Conclusions}\label{sec:conclusions}

This paper developed a finite-time framework for decentralized learning in structured Markov games under bandit feedback. Starting from general Markov $\alpha$-potential games, we introduced KL-projected NPG algorithms and established high-probability NE regret bounds in both episodic and fully online settings. In the fully online setting, players learn from a single continuing trajectory, receive only one realized cost sample per time step, and update asynchronously whenever the corresponding state is visited. The analysis therefore required controlling not only bandit estimation errors but also the interactions among policy drift, changing state occupancies, asynchronous state visits, and delayed local information. The stopping-time, coupling, charging, and dynamic-tracking arguments developed here provide a way to handle these effects without episodic resets or centralized coordination. We further showed that sharper guarantees are possible when the game possesses a state-wise potential structure. In this case, the approximation error $\alpha$ enters the resulting NE regret guarantees as a single additive term rather than being amplified through the online tracking analysis, enlarging the regime of approximate potentiality for which the bounds remain informative. 

Finally, we specialized the framework to IMCGs, where stochastic resource dynamics are coupled to strategic behavior through congestion. We showed that weak sensitivity of resource dynamics to an individual player's actions induces an approximate state-wise potential structure and derived a fully decentralized, one-sample online NE regret guarantee. In particular, the NE regret decays at rate $\widetilde O(T^{-2/15})$ without any distribution-mismatch coefficient, up to additive approximation terms controlled by the resource-coupling parameter that vanish as the dynamic coupling becomes weak. The strategic job-scheduling application illustrates the practical interpretation of this result: strategic users can learn approximately stable dispatching policies from realized costs while continuously interacting with stochastic machines, without requiring a centralized scheduler.

As future research directions, one could consider improving our bounds through sharper tracking or variance-reduction arguments while retaining one-sample bandit feedback and asynchronous updates. It would also be useful to extend the framework beyond discounted finite-state games, for example, to average-reward, partially observed \cite{introJordanKamgarpour2026}, or function-approximation settings. More broadly, our results suggest that weak dynamic coupling and approximate potential structure provide a promising route toward finite-time guarantees for independent learning in large stochastic multi-agent systems where centralized learning and strong oracle access are impractical.

\vspace{-0.3cm}
\paragraph{Acknowledgment.}
The core ideas and technical contributions of this work originated with the author and were developed in part in a research proposal submitted prior to this work. These include the coupling argument, charging and delay lemmas, dynamic tracking analysis, and estimation and sensitivity lemmas. The author acknowledges the use of generative AI (GPT-5.6 Sol) for brainstorming and writing assistance. Specifically, AI was used to refine proofs; fill in standard technical steps, including proofs of Lemmas~\ref{lemm:best}, \ref{lem:online-sensitivity-consequences}, and \ref{lemm:coupling}; carry out step-size and parameter tuning for the main regret bounds; and brainstorm sharper analytical techniques, including the tailored span bounds in Lemma~\ref{lem:statewise-potential-scale-bounds}. The author independently verified and revised all such material. Generative AI also assisted with exposition, literature review, grammatical editing, and generating tables and Figure~\ref{fig:imcg-job-scheduling}.

\bibliographystyle{IEEEtran}
\bibliography{thesisrefs}

\appendix
 
\section{Omitted Proofs for Section \ref{sec:preliminaries}}\label{app:3}

\subsection{Proof of Lemma \ref{lemm:best}}\label{app:belman-proof}
\begin{proof}
Suppose first that $\pi_i$ is a best response to $\pi_{-i}$.
Then $V_i^\pi = V_i^*$, $\bar{Q}_i^\pi = \bar{Q}_i^*$, and the Bellman optimality equation \eqref{eq:Bellman-opt} gives $V_i^\pi(s)
=
\min_{a_i\in \mathcal{A}_i} \bar{Q}_i^\pi(s,a_i)$. Hence
\[
\bar{A}_i^\pi(s,a_i)
=
\bar{Q}_i^\pi(s,a_i)-V_i^\pi(s)
=
\bar{Q}_i^\pi(s,a_i)-\min_{a'_i\in \mathcal{A}_i} \bar{Q}_i^\pi(s,a'_i)
\ge 0.
\]

Also, from \eqref{eq:Bellman-opt}, every action in the support of an optimal stationary policy
attains the minimum:
\[
\pi_i(a_i\mid s)>0
\quad\Longrightarrow\quad
\bar{Q}_i^\pi(s,a_i)
=
\min_{a'_i\in \mathcal{A}_i} \bar{Q}_i^\pi(s,a'_i) \quad\Longrightarrow\quad \bar{A}_i^\pi(s,a_i)=0.
\]
Conversely, assume \eqref{eq:adv_nonnegative} and \eqref{eq:adv_support} hold. The first condition \eqref{eq:adv_nonnegative} implies
\[
\bar{Q}_i^\pi(s,a_i)\ge V_i^\pi(s),
\qquad \forall a_i\in \mathcal{A}_i.
\]

Since $\pi_i(\cdot\mid s)$ is a probability distribution, there exists some action $a_i$ with $\pi_i(a_i\mid s)>0$. By the second condition \eqref{eq:adv_support}, $\bar{Q}_i^\pi(s,a'_i)=V_i^\pi(s)$. Therefore
\[
V_i^\pi(s)
=
\min_{a_i} \bar{Q}_i^\pi(s,a_i),
\qquad \forall s\in \mathcal{S}.
\]
Thus $\pi_i$ satisfies the Bellman optimality equation and hence is an
optimal stationary policy, i.e., a best response to $\pi_{-i}$.
\end{proof}

\subsection{Proof of Lemma \ref{lem:kl-projected-local-geometry}}\label{app:KL-geometry}
\begin{proof}
Fix any arbitrary but fixed state $s$. By the KKT optimality condition for the KL-projected NPG update rule \eqref{eq:kl-projected-npg-update}, there
exists $c:=\lambda(s)>0$ such that \eqref{eq:water-filling} holds. Since
\[
1
=
\sum_{a_i}\pi_i^{t+1}(a_i\mid s)
\ge
c\sum_{a_i}
\pi_i^t(a_i\mid s)\exp\Big(-\frac{\eta}{1-\gamma} r_i^t(s,a_i)\Big)
\ge
c e^{-\frac{B\eta }{1-\gamma}},
\]
we have $c\le e^{\frac{B\eta }{1-\gamma}}$. Hence, if the lower-bound constraint in \eqref{eq:water-filling} is inactive,
\[
\frac{\pi_i^{t+1}(a_i\mid s)}
{\pi_i^t(a_i\mid s)}
=
c \exp\Big(-\frac{\eta}{1-\gamma} r_i^t(s,a_i)\Big)
\le e^{\frac{2\eta B}{1-\gamma}},
\]
while if it is active, $\frac{\pi_i^{t+1}(a_i\mid s)}
{\pi_i^t(a_i\mid s)}
=
\frac{\zeta/|\mathcal A_i|}
{\pi_i^t(a_i\mid s)}
\le1.$ This proves part (i). 

To show (ii), consider the valid inequality $(x-1)^2\le\max\{1,x\}(x-1)\log x, \forall x>0$. Choosing $x
=
\frac{
\pi_i^{t+1}(a_i\mid s)
}{
\pi_i^t(a_i\mid s)
}$ and using
the first part, we get
\begin{align}\nonumber
\sum_{a_i}\!
\frac{
\left(
\pi_i^{t+1}(a_i| s)
-
\pi_i^t(a_i| s)
\right)^2
}{
\pi_i^t(a_i| s)
}\le
e^{\frac{2\eta B}{1-\gamma}}
\sum_{a_i}\!
\left(
\pi_i^{t+1}(a_i| s)
\!-\!
\pi_i^t(a_i| s)
\right)
\log
\frac{
\pi_i^{t+1}(a_i| s)
}{
\pi_i^t(a_i| s)
}=
e^{\frac{2\eta B}{1-\gamma}}\mathcal D_i^t(s).
\end{align}

Part (iii) follows directly from the first-order variational optimality
condition for the KL-projected NPG update
\eqref{eq:kl-projected-npg-update}. Indeed, optimality of
$\pi_i^{t+1}(\cdot\mid s)$ gives, for any $p\in\Delta_{i,\zeta}$,
\[
\left\langle
\frac{\eta}{1-\gamma}r_i^t(s)
+
\nabla_p D_{\mathrm{KL}}
\!\left(
p\,\middle\|\,\pi_i^t(\cdot\mid s)
\right)
\Big|_{p=\pi_i^{t+1}(\cdot\mid s)},
\,
p-\pi_i^{t+1}(\cdot\mid s)
\right\rangle
\ge 0.
\]
Choosing the feasible test point $p=\pi_i^t(\cdot\mid s)$ and using
$\nabla_p D_{\mathrm{KL}}(p\|q)=\log(p/q)+\mathbf 1$, together with the fact
that both $\pi_i^t(\cdot\mid s)$ and $\pi_i^{t+1}(\cdot\mid s)$ are
probability vectors, yields
\[
\frac{\eta}{1-\gamma}
\left\langle
r_i^t(s),
\Delta_i^t(s)
\right\rangle
+
\left\langle
\log\frac{\pi_i^{t+1}(\cdot\mid s)}
{\pi_i^t(\cdot\mid s)},
\Delta_i^t(s)
\right\rangle
\ge 0.
\]
The result now follows from $-\big\langle
\log\frac{\pi_i^{t+1}(\cdot\mid s)}
{\pi_i^t(\cdot\mid s)},
\Delta_i^t(s)
\big\rangle=\mathcal D_i^t(s)$.

We next bound the policy increment. From part (iii) and
$\|r_i^t\|_\infty\le B$,
\begin{align}
\frac{1-\gamma}{\eta}
\mathcal D_i^t(s)
&\le
\left\langle
r_i^t(s),
\Delta_i^t(s)
\right\rangle
\le
\left\|
r_i^t(s)
\right\|_\infty
\left\|
\Delta_i^t(s)
\right\|_1
\le
B
\left\|
\Delta_i^t(s)
\right\|_1.
\label{eq:projected-increment-upper}
\end{align}
On the other hand, Pinsker's inequality gives
\begin{align}
\mathcal D_i^t(s)
&=D_{\mathrm{KL}}
\left( \pi_i^t(\cdot\mid s)
\middle\|
\pi_i^{t+1}(\cdot\mid s)
\right)+
D_{\mathrm{KL}}
\left(
\pi_i^{t+1}(\cdot\mid s)
\middle\|
\pi_i^t(\cdot\mid s)
\right)
\ge
\left\|
\Delta_i^t(s)
\right\|_1^2.
\label{eq:projected-increment-pinsker}
\end{align}
Combining
\eqref{eq:projected-increment-upper} and
\eqref{eq:projected-increment-pinsker} proves (iv).
\end{proof}

\subsection{Proof of Lemma \ref{lemm:occupancy-sensitivity}}\label{app:sensitivity-distribution}

\begin{proof}
(i) For $\pi=(\pi_i,\pi_{-i})$,
$\bar P^\pi(s'\mid s)=\sum_{a_i\in\mathcal A_i}
\pi_i(a_i\mid s)\bar P_i^{\pi_{-i}}(s'\mid s,a_i)$. Hence
\[
\bar P^\pi(\cdot\mid s)-\bar P^{\pi'}(\cdot\mid s)
=
\sum_{a_i\in\mathcal A_i}
\bigl(\pi_i(a_i\mid s)-\pi_i'(a_i\mid s)\bigr)
\bar P_i^{\pi_{-i}}(\cdot\mid s,a_i).
\]
Let $\mathcal A_i^+:=\{a_i:\pi_i(a_i\mid s)\ge\pi_i'(a_i\mid s)\}$
and $\mathcal A_i^-:=\mathcal A_i\setminus\mathcal A_i^+$. Since both
policies sum to one,
\[
D:=
\sum_{a_i\in\mathcal A_i^+}(\pi_i(a_i\mid s)-\pi_i'(a_i\mid s))
=
\sum_{a_i'\in\mathcal A_i^-}(\pi_i(a_i\mid s)-\pi_i'(a_i\mid s))
=
\|\pi_i(\cdot\mid s)-\pi_i'(\cdot\mid s)\|_{\TV}.
\]
For $D>0$, set
$p_{a_i}:=(\pi_i(a_i\mid s)-\pi_i'(a_i\mid s))/D$ on $\mathcal A_i^+$ and
$q_{a_i'}:=(\pi'_i(a_i\mid s)-\pi_i(a_i\mid s))/D$ on $\mathcal A_i^-$.
Then $\sum_{a_i\in\mathcal A_i^+}p_{a_i}
=\sum_{a_i'\in\mathcal A_i^-}q_{a_i'}=1$, and hence
\begin{align*}
\|\bar P^\pi(\cdot\mid s)-\bar P^{\pi'}(\cdot\mid s)\|_{\TV}
&=
D\Big\|
\sum_{a_i\in\mathcal A_i^+}
p_{a_i}\bar P_i^{\pi_{-i}}(\cdot\mid s,a_i)
-
\sum_{a_i'\in\mathcal A_i^-}
q_{a_i'}\bar P_i^{\pi_{-i}}(\cdot\mid s,a_i')
\Big\|_{\TV}\\
&=
D\Big\|
\sum_{\substack{a_i\in\mathcal A_i^+\\a_i'\in\mathcal A_i^-}}
p_{a_i}q_{a_i'}
\bigl(
\bar P_i^{\pi_{-i}}(\cdot\mid s,a_i)
-
\bar P_i^{\pi_{-i}}(\cdot\mid s,a_i')
\bigr)
\Big\|_{\TV}\\
&\le
D
\sum_{\substack{a_i\in\mathcal A_i^+\\a_i'\in\mathcal A_i^-}}
p_{a_i}q_{a_i'}
\big\|
\bar P_i^{\pi_{-i}}(\cdot\mid s,a_i)
-
\bar P_i^{\pi_{-i}}(\cdot\mid s,a_i')
\big\|_{\TV}\\
&\le
D
\sup_{a_i,a_i'\in\mathcal A_i}
\big\|
\bar P_i^{\pi_{-i}}(\cdot\mid s,a_i)
-
\bar P_i^{\pi_{-i}}(\cdot\mid s,a_i')
\big\|_{\TV}\\
&\le
\delta_P
\|\pi_i(\cdot\mid s)-\pi_i'(\cdot\mid s)\|_{\TV}.
\end{align*}
Indeed, the second equality follows from
$\sum_{a_i}p_{a_i}=\sum_{a_i'}q_{a_i'}=1$. The first inequality then follows from the triangle
inequality, while the second uses
$\sum_{a_i,a_i'}p_{a_i}q_{a_i'}=1$.
Finally, the last inequality follows from Remark \ref{rem:marginalized-transition-sensitivity}
and $D=\|\pi_i(\cdot\mid s)-\pi_i'(\cdot\mid s)\|_{\TV}$.

To show the second inequality in (i), using the definition of the discounted occupancy measure, for any two policy profiles $\pi,\pi'$ (not necessarily differing only in the policy of player $i$), we obtain
\begin{align}\label{eq:contraction-tv-visitation}
\left\|
d_s^{\pi'}-d_s^\pi
\right\|_{\mathrm{TV}}
&=
\left\|
(1-\gamma)
\sum_{t=0}^{\infty}
\gamma^t
\left[
(\bar{P}^{\pi'})^t(\cdot\mid s)
-
(\bar{P}^\pi)^t(\cdot\mid s)
\right]
\right\|_{\mathrm{TV}}
\cr
&\le
(1-\gamma)
\sum_{t=1}^{\infty}
\gamma^t
\left\|
(\bar{P}^{\pi'})^t(\cdot\mid s)
-
(\bar{P}^\pi)^t(\cdot\mid s)
\right\|_{\mathrm{TV}}
\cr
&\le
(1-\gamma)
\sum_{t=1}^{\infty}
\gamma^t
\sum_{k=0}^{t-1}
\sum_{x\in\mathcal S}
(\bar{P}^{\pi'})^k(x\mid s)
\left\|
\bar{P}^{\pi'}(\cdot\mid x)
-
\bar{P}^\pi(\cdot\mid x)
\right\|_{\mathrm{TV}}
\cr
&=
(1-\gamma)
\sum_{k=0}^{\infty}
\left(
\sum_{t=k+1}^{\infty}\gamma^t
\right)
\sum_{x\in\mathcal S}
(\bar{P}^{\pi'})^k(x\mid s)
\left\|
\bar{P}^{\pi'}(\cdot\mid x)
-
\bar{P}^\pi(\cdot\mid x)
\right\|_{\mathrm{TV}}
\cr
&=
\gamma
\sum_{k=0}^{\infty}
\gamma^k
\sum_{x\in\mathcal S}
(\bar{P}^{\pi'})^k(x\mid s)
\left\|
\bar{P}^{\pi'}(\cdot\mid x)
-
\bar{P}^\pi(\cdot\mid x)
\right\|_{\mathrm{TV}}
\cr
&=
\frac{\gamma}{1-\gamma}
\sum_{x\in\mathcal S}
d_s^{\pi'}(x)
\left\|
\bar{P}^{\pi'}(\cdot\mid x)
-
\bar{P}^\pi(\cdot\mid x)
\right\|_{\mathrm{TV}}
\cr
&\le
\frac{\gamma\delta_P}{1-\gamma}
\sum_{x\in\mathcal S}
d_s^{\pi'}(x)
\left\|
\pi_i'(\cdot\mid x)
-
\pi_i(\cdot\mid x)
\right\|_{\mathrm{TV}},
\end{align}
where the first inequality uses the triangle inequality and the fact
that the $t=0$ term vanishes, the second inequality follows from
Lemma~\ref{lemm:coupling} with
$A=\bar{P}^{\pi'}$ and $B=\bar{P}^\pi$, and the last inequality follows from the
first part of the lemma. Finally, since
$d_\mu^\pi=\mathbb E_{s\sim\mu}[d_s^\pi]$ and
$d_\mu^{\pi'}=\mathbb E_{s\sim\mu}[d_s^{\pi'}]$, the convexity of
total variation gives $\big\|
d_\mu^{\pi'}-d_\mu^\pi
\big\|_{\mathrm{TV}}
\le
L_d
\sum_{x\in\mathcal S}
d_\mu^{\pi'}(x)
\left\|
\pi_i'(\cdot\mid x)
-
\pi_i(\cdot\mid x)
\right\|_{\mathrm{TV}}$.
\smallskip

(ii) For any two policy profiles $\pi$ and $\pi'$, introduce intermediate profiles obtained by replacing the players' policies one at a time. Applying the first
inequality of part (i) to each consecutive pair and using the triangle inequality yields the first inequality of part (ii), i.e.,
\begin{align}\label{eq:P-general-telescop}
\left\|
\bar{P}^{\pi'}(\cdot\mid x)-\bar{P}^\pi(\cdot\mid x)
\right\|_{\mathrm{TV}}
\le
\delta_P
\sum_{j=1}^n
\|\pi_j'(\cdot\mid x)-\pi_j(\cdot\mid x)\|_{\mathrm{TV}} \ \ \ \forall \pi, \pi', x\in \mathcal{S}.
\end{align}
Next, since the induced transition row $\bar{P}^\pi(\cdot\mid s)$ depends only on the policy profile $\pi(\cdot\mid s)$ at the state $s$, if $\pi$ and $\pi'$ differ only at state $x$, then $\bar{P}^\pi$ and $\bar{P}^{\pi'}$ coincide in every row except possibly the row corresponding to $x$. Using \eqref{eq:contraction-tv-visitation} that holds for any two stationary policy profiles, we obtain
\begin{align*}
\|d_\mu^{\pi'}-d_\mu^\pi\|_{\mathrm{TV}}
&\le
\frac{\gamma}{1-\gamma}
\sum_{s\in \mathcal{S}}d_\mu^\pi(s)\left\|
\bar{P}^{\pi'}(\cdot\mid s)-\bar{P}^\pi(\cdot \mid s)
\right\|_{\mathrm{TV}}\\
&=
\frac{\gamma}{1-\gamma}
d_\mu^\pi(x)
\left\|
\bar{P}^{\pi'}(\cdot\mid x)-\bar{P}^\pi(\cdot\mid x)
\right\|_{\mathrm{TV}}.
\end{align*}
Combining the preceding two displayed inequalities and using
$L_d=\gamma\delta_P/(1-\gamma)$ proves
\eqref{eq:online-localized-occupancy-sensitivity}.
\end{proof}

\begin{lemma}\label{lemm:coupling}
Let $A$ and $B$ be two transition matrices on a finite state space
$\mathcal S$. For every state $s$, let $A(\cdot\mid s)$ and $A^t(\cdot\mid s)$ denote the $s$th rows of the row-stochastic transition matrices $A$ and $A^k$, respectively (similarly for $B$ and $B^t$). Then, for every initial state $s\in\mathcal S$ and every
integer $t\ge1$,
\begin{align}
\left\|
A^t(\cdot\mid s)-B^t(\cdot\mid s)
\right\|_{\mathrm{TV}}
&\le
\sum_{k=0}^{t-1}
\sum_{x\in\mathcal S}
A^k(x\mid s)
\left\|
A(\cdot\mid x)-B(\cdot\mid x)
\right\|_{\mathrm{TV}}.
\nonumber
\end{align}
\end{lemma}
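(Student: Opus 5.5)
We prove the bound by a telescoping (hybrid) decomposition of the difference of matrix powers. The plan is to write
\[
A^t-B^t=\sum_{k=0}^{t-1}A^{k}(A-B)B^{t-1-k},
\]
which is verified by noting that consecutive terms cancel: $A^{k}(A-B)B^{t-1-k}=A^{k+1}B^{t-1-k}-A^{k}B^{t-k}$, and summing over $k$ leaves $A^t-B^0\cdot B^{t}\!=A^t-B^t$. Taking the $s$th row gives
\[
A^t(\cdot\mid s)-B^t(\cdot\mid s)=\sum_{k=0}^{t-1}\sum_{x\in\mathcal S}A^k(x\mid s)\big(A(\cdot\mid x)-B(\cdot\mid x)\big)B^{t-1-k}.
\]

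Next we apply the triangle inequality for the TV norm (a scaled $\ell_1$ norm on signed measures) across both the sum over $k$ and the sum over $x$, using nonnegativity of the weights $A^k(x\mid s)$. It then remains to show that right-multiplication by the stochastic matrix $B^{t-1-k}$ does not increase the TV norm of the signed row vector $v=A(\cdot\mid x)-B(\cdot\mid x)$, i.e., $\|vB^{m}\|_{\mathrm{TV}}\le\|v\|_{\mathrm{TV}}$.

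This contraction step is the only point requiring care. It follows because $\|vP\|_1\le\sum_y|v(y)|\sum_z P(z\mid y)=\|v\|_1$ for any row-stochastic $P$, and the TV norm here is half the $\ell_1$ norm. Equivalently, for $v$ a difference of probability vectors, $vP$ is the difference of their pushforwards under the same kernel, and TV is nonincreasing under a common Markov kernel.

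Combining the three steps yields the stated inequality. An alternative route, matching the lemma's name, is to couple the two chains started at $s$ step by step: use the $A$-chain until the first time the maximal coupling of $A(\cdot\mid X_k)$ and $B(\cdot\mid X_k)$ fails, then union bound over that first decoupling time $k$. Its probability is at most $\sum_x A^k(x\mid s)\|A(\cdot\mid x)-B(\cdot\mid x)\|_{\mathrm{TV}}$, which gives the same bound. I would present the algebraic version since it is shorter.
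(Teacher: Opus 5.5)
Your proposal is correct and follows essentially the same argument as the paper: the telescoping identity $A^t-B^t=\sum_{k=0}^{t-1}A^k(A-B)B^{t-1-k}$, the triangle inequality over $k$ and $x$, and the contraction $\|vB\|_{\mathrm{TV}}\le\|v\|_{\mathrm{TV}}$ for any row-stochastic $B$. The only cosmetic difference is the order of steps. The paper applies the contraction to the whole row $A^k(\cdot\mid s)(A-B)B^{t-1-k}$ and then expands over $x$, whereas you expand over $x$ first; both orders are valid.
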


\begin{proof}
Consider the identity $A^t-B^t=\sum_{k=0}^{t-1}
A^k(A-B)B^{t-1-k}$. For every state $s$, we can write 
\begin{align}
\left\|
A^t(\cdot\mid s)-B^t(\cdot\mid s)
\right\|_{\mathrm{TV}}
&\le
\sum_{k=0}^{t-1}
\left\|
A^k(\cdot\mid s)(A-B)B^{t-1-k}
\right\|_{\mathrm{TV}}
\notag\\
&\le
\sum_{k=0}^{t-1}
\left\|
A^k(\cdot\mid s)(A-B)
\right\|_{\mathrm{TV}}
\notag\\
&=
\sum_{k=0}^{t-1}
\left\|
\sum_{x\in\mathcal S}
A^k(x\mid s)
\bigl(
A(\cdot\mid x)-B(\cdot\mid x)
\bigr)
\right\|_{\mathrm{TV}}
\notag\\
&\le
\sum_{k=0}^{t-1}
\sum_{x\in\mathcal S}
A^k(x\mid s)
\left\|
A(\cdot\mid x)-B(\cdot\mid x)
\right\|_{\mathrm{TV}},
\end{align}
where the second inequality uses the contraction of total variation
under a transition matrix,\footnote{For any row vector $v$ and row-stochastic matrix $B$, we have
$\|vB\|_{\mathrm{TV}}\le \|v\|_{\mathrm{TV}}$, because
$\|vB\|_1\le\sum_x |v(x)|\sum_y B(y\mid x)=\|v\|_1$.} and the last inequality uses
convexity of the total variation norm.
\end{proof}

\subsection{Proof of Lemma \ref{lem:online-sensitivity-consequences}}\label{app:advantage-sensitivity}
\begin{proof}
Boundedness of the stage costs $\|c_i\|_{\infty}\leq 1$, together with the Bellman equations and the
resolvent bound
$\|(I-\gamma \bar{P}^\pi)^{-1}\|_{\infty\to\infty}\le(1-\gamma)^{-1}$,
implies that the value, $Q$-, and hence marginalized advantage functions are
Lipschitz continuous with respect to the policy profile.  

Since $\bar c_i^\pi(s):=
\sum_{a\in\mathcal A}
c_i(s,a)\prod_{j=1}^n\pi_j(a_j\mid s)$ is a multilinear function in players' policies, telescoping the players one at a time and using
$|c_i(s,a)|\le 1$ gives
\begin{align}\label{eq:bar-c-telescop}
\|\bar c_i^{\pi'}-\bar c_i^\pi\|_\infty
\le
2
\max_{x\in\mathcal S}
\sum_{j=1}^n
\|\pi_j'(\cdot\mid x)-\pi_j(\cdot\mid x)\|_{\mathrm{TV}}.
\end{align}
Consider the vector value function $V_i^{\pi}:=(V_i^{\pi}(s), s\in \mathcal{S})$. Then, the Bellman equations \eqref{eq:bellman} for all $s\in \mathcal{S}$ can be written in a compact vector form as $V_i^\pi
=\bar c_i^\pi+\gamma \bar{P}^\pi V_i^\pi$ and $V_i^{\pi'}
=
\bar c_i^{\pi'}+\gamma \bar{P}^{\pi'}V_i^{\pi'}$ therefore imply
\[
V_i^{\pi'}-V_i^\pi
=
(I-\gamma \bar{P}^{\pi'})^{-1}
\left[
\bar c_i^{\pi'}-\bar c_i^\pi
+
\gamma(\bar{P}^{\pi'}-\bar{P}^\pi)V_i^\pi
\right].
\]
Using
$\|V_i^\pi\|_\infty\le (1-\gamma)^{-1}$,
$\|(I-\gamma \bar{P}^{\pi'})^{-1}\|_{\infty\to\infty}
\le(1-\gamma)^{-1}$, and part (ii) of Lemma \ref{lemm:occupancy-sensitivity},
\[
\max_x
\|\bar{P}^{\pi'}(\cdot\mid x)-\bar{P}^\pi(\cdot\mid x)\|_{\mathrm{TV}}
\le
\delta_P \max_x
\sum_{j=1}^n
\|\pi_j'(\cdot\mid x)-\pi_j(\cdot\mid x)\|_{\mathrm{TV}},
\]
together with
\[
\bigl\|(\bar{P}^{\pi'}-\bar{P}^\pi)V_i^\pi\bigr\|_\infty
\le
2\|V_i^\pi\|_\infty
\max_x
\|\bar{P}^{\pi'}(\cdot\mid x)-\bar{P}^\pi(\cdot\mid x)\|_{\mathrm{TV}},
\]
we obtain
\begin{align}\label{eq:value-sensitivity-proof}
\|V_i^{\pi'}-V_i^\pi\|_\infty
\le
\Big(\frac{2}{1-\gamma}+\frac{2\gamma\delta_P}{(1-\gamma)^2}\Big)
\max_{x\in\mathcal S}
\sum_{j=1}^n
\|\pi_j'(\cdot\mid x)-\pi_j(\cdot\mid x)\|_{\mathrm{TV}}.
\end{align}
Similarly, the marginalized $Q$-functions consist of the same
one-stage costs and continuation-value terms. Applying the preceding
mixture bounds to these terms, and then centering the marginalized
$Q$-functions to form $\bar A_i$, gives the stated bound with
$L_{\bar A}=\frac{4}{1-\gamma}+\frac{4\gamma\delta_P}{(1-\gamma)^2}$ and proves \eqref{eq:online-advantage-sensitivity}.
\end{proof}

\subsection{Proof of Lemma \ref{lem:mixed-policy-sensitivity}}\label{app:sensitivity-second-order-advantage}
\begin{proof}
By the Bellman equation in vector form, for any stationary policy profile $\pi$,
\[
V_i^\pi
=
\bar c_i^\pi
+
\gamma \bar P^\pi V_i^\pi,
\]
where
$\bar c_i^\pi(s):=\mathbb E_{A\sim\pi(\cdot\mid s)}[c_i(s,A)]$.
Therefore, subtracting the Bellman equations corresponding to the four
policy profiles gives
\begin{align*}
\left(I-\gamma\bar P^{(\pi_i',\sigma_{-i})}\right)
&\Big(
V_i^{(\pi_i',\sigma_{-i})}
-
V_i^{(\pi_i,\sigma_{-i})}
-
V_i^{(\pi_i',\tau_{-i})}
+
V_i^{(\pi_i,\tau_{-i})}
\Big)
\\
={}&
\bar c_i^{(\pi_i',\sigma_{-i})}
-
\bar c_i^{(\pi_i,\sigma_{-i})}
-
\bar c_i^{(\pi_i',\tau_{-i})}
+
\bar c_i^{(\pi_i,\tau_{-i})}
\\
&+
\gamma
\left(
\bar P^{(\pi_i',\sigma_{-i})}
-
\bar P^{(\pi_i,\sigma_{-i})}
\right)
\left(
V_i^{(\pi_i,\sigma_{-i})}
-
V_i^{(\pi_i,\tau_{-i})}
\right)
\\
&+
\gamma
\left(
\bar P^{(\pi_i',\sigma_{-i})}
-
\bar P^{(\pi_i',\tau_{-i})}
\right)
\left(
V_i^{(\pi_i',\tau_{-i})}
-
V_i^{(\pi_i,\tau_{-i})}
\right)
\\
&+
\gamma
\Big(
\bar P^{(\pi_i',\sigma_{-i})}
-
\bar P^{(\pi_i,\sigma_{-i})}
-
\bar P^{(\pi_i',\tau_{-i})}
+
\bar P^{(\pi_i,\tau_{-i})}
\Big)
V_i^{(\pi_i,\tau_{-i})}.
\end{align*}
Since the expected one-stage cost and the induced transition kernel are
multilinear in the players' policies, telescoping the opponents' policies
one at a time, as in the proofs of Lemmas~\ref{lemm:occupancy-sensitivity}
and \ref{lem:online-sensitivity-consequences} (see \eqref{eq:bar-c-telescop} and \eqref{eq:P-general-telescop}), gives
\begin{align*}
&\left\|
\bar c_i^{(\pi_i',\sigma_{-i})}
-
\bar c_i^{(\pi_i,\sigma_{-i})}
-
\bar c_i^{(\pi_i',\tau_{-i})}
+
\bar c_i^{(\pi_i,\tau_{-i})}
\right\|_\infty\le
4
\|\pi_i'-\pi_i\|_{\mathrm{TV},\infty}
\|\sigma_{-i}-\tau_{-i}\|_{\mathrm{TV},\infty},\cr 
&\left\|
\bar P^{(\pi_i',\sigma_{-i})}
-
\bar P^{(\pi_i,\sigma_{-i})}
-
\bar P^{(\pi_i',\tau_{-i})}
+
\bar P^{(\pi_i,\tau_{-i})}
\right\|_{\infty\to\infty}\le
8\delta_P
\|\pi_i'-\pi_i\|_{\mathrm{TV},\infty}
\|\sigma_{-i}-\tau_{-i}\|_{\mathrm{TV},\infty}.
\end{align*}
Moreover, part~(i) of Lemma~\ref{lemm:occupancy-sensitivity}, together
with the identity
$\|K\|_{\infty\to\infty}
=2\max_s\|K(\cdot\mid s)\|_{\mathrm{TV}}$
for differences of stochastic kernels and telescoping the opponents'
policies one at a time, gives
\begin{align*}
&\left\|
\bar P^{(\pi_i',\sigma_{-i})}
-
\bar P^{(\pi_i,\sigma_{-i})}
\right\|_{\infty\to\infty}
\le
2\delta_P\|\pi_i'-\pi_i\|_{\mathrm{TV},\infty},\cr 
&\left\|
\bar P^{(\pi_i',\sigma_{-i})}
-
\bar P^{(\pi_i',\tau_{-i})}
\right\|_{\infty\to\infty}
\le
2\delta_P\|\sigma_{-i}-\tau_{-i}\|_{\mathrm{TV},\infty}.
\end{align*}
On the other hand, the Bellman-resolvent argument leading to
\eqref{eq:value-sensitivity-proof} in the proof of
Lemma~\ref{lem:online-sensitivity-consequences} gives
\begin{align*}
&\left\|
V_i^{(\pi_i,\sigma_{-i})}
-
V_i^{(\pi_i,\tau_{-i})}
\right\|_\infty
\le
\left(
\frac{2}{1-\gamma}
+
\frac{2\gamma\delta_P}{(1-\gamma)^2}
\right)
\|\sigma_{-i}-\tau_{-i}\|_{\mathrm{TV},\infty},\cr 
&\left\|
V_i^{(\pi_i',\tau_{-i})}
-
V_i^{(\pi_i,\tau_{-i})}
\right\|_\infty
\le
\left(
\frac{2}{1-\gamma}
+
\frac{2\gamma\delta_P}{(1-\gamma)^2}
\right)
\|\pi_i'-\pi_i\|_{\mathrm{TV},\infty}.
\end{align*}
Finally, using $\|
(I-\gamma\bar P^{(\pi_i',\sigma_{-i})})^{-1}\|_{\infty\to\infty}
\le\frac{1}{1-\gamma}$, as in the proof of
Lemma~\ref{lem:online-sensitivity-consequences}, we obtain
\begin{align*}
\left\|
V_i^{(\pi_i',\sigma_{-i})}
-
V_i^{(\pi_i,\sigma_{-i})}
-
V_i^{(\pi_i',\tau_{-i})}
+
V_i^{(\pi_i,\tau_{-i})}
\right\|_\infty\le L_V \|\pi_i'-\pi_i\|_{\mathrm{TV},\infty}
\|\sigma_{-i}-\tau_{-i}\|_{\mathrm{TV},\infty},
\end{align*}
where $L_V=
\frac{4}{1-\gamma}
+
\frac{12\gamma\delta_P}{(1-\gamma)^2}
+
\frac{8\gamma^2\delta_P^2}{(1-\gamma)^3}$. Averaging over the initial distribution $\mu$ proves the result.
\end{proof}

\subsection{Proof of Lemma \ref{lem:span-inner-product}}\label{app:span-lemma}
\begin{proof}
Let $c:=\frac12\left(\max_a v(a)+\min_a v(a)\right)$. Since for any vector $x\in\mathbb R^m$ such that $\sum_a x(a)=0$, we have $\langle x,v\rangle=\langle x,v-c\mathbf 1\rangle$, by H\"older's inequality, we get
\begin{align}\label{eq:span-x}
|\langle x,v\rangle|
\le
\|x\|_1\|v-c\mathbf 1\|_\infty
=
\frac12\|x\|_1\operatorname{span}(v).
\end{align}
Since $\bar A_i^\pi(s,a_i)=\bar Q_i^\pi(s,a_i)-V_i^\pi(s)$ and
$V_i^\pi(s)$ is independent of $a_i$, we have $\operatorname{span}\big(\bar A_i^\pi(s,\cdot)\big)
=\operatorname{span}\big(\bar Q_i^\pi(s,\cdot)\big)$. For any $a_i,a_i'\in\mathcal A_i$,
\begin{align*}
\left|
\bar Q_i^\pi(s,a_i)-\bar Q_i^\pi(s,a_i')
\right|
&\le
\left|
\bar c_i^{\pi_{-i}}(s,a_i)
-
\bar c_i^{\pi_{-i}}(s,a_i')
\right|+
\gamma
\left|
\left\langle
\bar P_i^{\pi_{-i}}(\cdot\mid s,a_i)
-
\bar P_i^{\pi_{-i}}(\cdot\mid s,a_i'),
V_i^\pi
\right\rangle
\right|
\\
&\le
1+
\gamma\delta_P\operatorname{span}(V_i^\pi)
\le
1+\frac{\gamma\delta_P}{1-\gamma},
\end{align*}
where the second inequality uses $c_i(s,a_i)\in[0,1]$, Definition~\ref{def:transition-sensitivity}, and \eqref{eq:span-x}, while the last inequality follows from
$0\le V_i^\pi(s)\le1/(1-\gamma)$. Taking the maximum over
$a_i,a_i'$ proves $\operatorname{span}(\bar{A}^{\pi}_i(s,\cdot))\leq 1+\frac{\gamma\delta_P}{1-\gamma}$. Finally, choosing $v:=\bar A_i^\pi(s,\cdot)$ and $x:=\pi'_i(\cdot\mid s)-\pi_i''(\cdot\mid s)$ in \eqref{eq:span-x}, and noting that 
$\frac12\|\pi'_i(\cdot\mid s)-\pi_i''(\cdot\mid s)\|_1=\|\pi'_i(\cdot\mid s)-\pi''_i(\cdot\mid s)\|_{\mathrm{TV}}$, proves \eqref{eq:advantage-span-policy-difference}.

Moreover, by the definition of the marginalized advantage function,
\[
\sum_{a_i\in\mathcal A_i}
\pi_i(a_i\mid s)\bar A_i^\pi(s,a_i)
=
0.
\]
Hence,
\[
\min_{a_i\in\mathcal A_i}\bar A_i^\pi(s,a_i)
\le
0
\le
\max_{a_i\in\mathcal A_i}\bar A_i^\pi(s,a_i).
\]
Therefore,
\[
\left\|\bar A_i^\pi(s,\cdot)\right\|_\infty
\le
\operatorname{span}\big(\bar A_i^\pi(s,\cdot)\big)
\le
1+\frac{\gamma\delta_P}{1-\gamma}
=
1+L_d,
\]
which completes the proof.
\end{proof}

\newpage
\section{Omitted Proofs for Section \ref{sec:Episodic}}

\subsection{Proof of Lemma \ref{lem:random-count-weighted-variance}}\label{app:episodic-estimator-noise}
\begin{proof}
For every state $s$ and $r=1,\ldots,L_t$, let $\tau_{t,r}(s)$
denote the time of the $r$-th visit to state $s$ during episode $t$.
By the definition of the sample-time filtration in Section~\ref{sec:filteration-oracle-episode},
$\mathcal F_{\tau_{t,r}(s)}$ contains all information available immediately
before the action $A^{\tau_{t,r}(s)}$ is drawn and the corresponding raw sample is generated. In particular, since the policy profile $\pi^t$ remains fixed
throughout the episode, $A^{\tau_{t,r}(s)}\mid\mathcal F_{\tau_{t,r}(s)}
\sim\pi^t(\cdot\mid s),$ with the players' action draws conditionally independent. Moreover,
$\mathcal F_t\subseteq\mathcal F_{\tau_{t,r}(s)}$, and, for $r<r'$, the
action and oracle sample generated at $\tau_{t,r}(s)$ are measurable with
respect to $\mathcal F_{\tau_{t,r'}(s)}$. 

Now, let us fix $(i,s,a_i)$ and define the centered observations
\begin{align}\nonumber
X_r(s,a_i)
:=
\widehat A_i^{\tau_{t,r}(s)}(s,a_i)
-
\mathbb E_{\tau_{t,r}(s)}
\left[
\widehat A_i^{\tau_{t,r}(s)}(s,a_i)
\right],
\qquad r=1,\ldots,L_t.
\end{align}
The centered observations satisfy the martingale-difference property along the
successive visits to $s$. Indeed, the observation generated at visit $r$ has
conditional zero mean given the information available immediately before that
visit, i.e., $\mathbb E_{\tau_{t,r}(s)}
\left[X_r(s,a_i)\right]=0.$ Moreover, for $r'<r$, the earlier observation $X_{r'}(s,a_i)$ is
measurable with respect to $\mathcal F_{\tau_{t,r}(s)}$. Now, let us define
\begin{align}\nonumber
m_i^t(s,a_i)
:=
\frac{1}{L_t}
\sum_{r=1}^{L_t}
\mathbb E_{\tau_{t,r}(s)}
\left[
\widehat A_i^{\tau_{t,r}(s)}(s,a_i)
\right].
\end{align}
Then, using \eqref{eq:g-random-count-advantage-estimator} and \eqref{eq:random-count-noise}, we have
\begin{align}
&\widetilde r_i^t(s,a_i)-m_i^t(s,a_i)
=\frac{1}{L_t}\sum_{r=1}^{L_t}X_r(s,a_i),\cr 
&\xi_i^t(s,a_i)=r_i^t(s,a_i)-m_i^t(s,a_i).
\label{eq:episodic-centered-martingale-average}
\end{align}
Moreover, Assumption \ref{ass:episodic-oracle} and the triangle inequality give
\begin{align}\nonumber
\bigg|
\bar A_i^{\pi^t}(s,a_i)
-
r_i^t(s,a_i)
+
\xi_i^t(s,a_i)
\bigg|
&=
\left|
\frac{1}{L_t}
\sum_{r=1}^{L_t}
\left(
\bar A_i^{\pi^t}(s,a_i)
-
\mathbb E_{\tau_{t,r}(s)}
\left[
\widehat A_i^{\tau_{t,r}(s)}(s,a_i)
\right]
\right)
\right|
\cr
&\le
\frac{1}{L_t}
\sum_{r=1}^{L_t}
\left|
\bar A_i^{\pi^t}(s,a_i)
-
\mathbb E_{\tau_{t,r}(s)}
\left[
\widehat A_i^{\tau_{t,r}(s)}(s,a_i)
\right]
\right|
\le
L_{\widehat A},
\end{align}
where the equality holds by \eqref{eq:random-count-noise}. This proves \eqref{eq:global-table-bias}.

To bound the conditional second moment $\mathcal{V}_i^t$, using the definition of $X_r(s,a_i)$ and the action-centered estimator
$\widehat A_i^{\tau_{t,r}(s)}(s,\cdot)$ from
\eqref{eq:hat-A-episodic}, we have
\begin{align}
&\mathbb E_{\tau_{t,r}(s)}
\bigg[
\sum_{a_i\in\mathcal A_i}
\pi_i^t(a_i\mid s)
\left(
X_r(s,a_i)
\right)^2
\bigg]\le
\mathbb E_{\tau_{t,r}(s)}
\bigg[
\sum_{a_i\in\mathcal A_i}
\pi_i^t(a_i\mid s)
\left(
\widehat A_i^{\tau_{t,r}(s)}(s,a_i)
\right)^2
\bigg]
\cr
&\qquad=
\mathbb E_{\tau_{t,r}(s)}
\bigg[
\Big(g_i^{\tau_{t,r}(s)}\big(s,A^{\tau_{t,r}(s)}\big)\Big)^2
\bigg(
\frac{1}{\pi_i^t\big(A_i^{\tau_{t,r}(s)}\mid s\big)}-1
\bigg)
\bigg]
\cr
&\qquad\leq 
\sum_{a_i\in\mathcal A_i}
\mathbb E_{\tau_{t,r}(s)}
\left[
\left.
\Big(g_i^{\tau_{t,r}(s)}
\big(s, a_i,A_{-i}^{\tau_{t,r}(s)}\big)\Big)^2\right|
A_i^{\tau_{t,r}(s)}=a_i
\right]\le
|\mathcal A_i|.
\label{eq:episodic-one-sample-weighted-variance}
\end{align}
The first inequality follows from conditional centering: for each $a_i$,
subtracting
$\mathbb E_{\tau_{t,r}(s)}
[\widehat A_i^{\tau_{t,r}(s)}(s,a_i)]$
reduces the conditional second moment by the square of the conditional
mean. For the equality, by \eqref{eq:hat-A-episodic}, direct expansion of the $\pi_i^t(\cdot\mid s)$-weighted squared action-centered estimator gives
\begin{align*}
\sum_{a_i\in\mathcal A_i}
\pi_i^t(a_i\mid s)
\left(
\widehat A_i^{\tau_{t,r}(s)}(s,a_i)
\right)^2=\left(g_i^{\tau_{t,r}(s)}\big(s,A^{\tau_{t,r}(s)}\big)\right)^2
\bigg(\frac{1}{\pi_i^t\big(A_i^{\tau_{t,r}(s)}\mid s\big)}-1\bigg).
\end{align*}
The second inequality simply drops the nonpositive term $-1$, and taking conditional expectation by averaging
over $A_i^{\tau_{t,r}(s)}\sim\pi_i^t(\cdot\mid s)$, which cancels the importance-weighting denominator. Finally, the bounded-oracle assumption
$|g_i^{\tau_{t,r}(s)}(s,A)|\le1$ bounds each of the resulting
$|\mathcal A_i|$ terms by $1$.

On the other hand, since $|g_i^\tau(s,A)|\le1$, we have
\begin{align}\nonumber
&\left|
\mathbb E_{\tau_{t,r}(s)}
\left[
\widehat A_i^{\tau_{t,r}(s)}(s,a_i)
\right]
\right|\cr
&\qquad\le
\mathbb E_{\tau_{t,r}(s)}
\bigg[
\left|g_i^{\tau_{t,r}(s)}\!\left(s,A^{\tau_{t,r}(s)}\right)\right|
\frac{\mathbf 1\{A_i^{\tau_{t,r}(s)}=a_i\}}
{\pi_i^t(a_i\mid s)}
\bigg]
+
\mathbb E_{\tau_{t,r}(s)}
\left[
\left|g_i^{\tau_{t,r}(s)}\!\left(s,A^{\tau_{t,r}(s)}\right)\right|
\right]\cr
&\qquad=
\mathbb E_{\tau_{t,r}(s)}
\left[
\left.
\left|g_i^{\tau_{t,r}(s)}\!\left(s,A^{\tau_{t,r}(s)}\right)\right|\right|
A_i^{\tau_{t,r}(s)}=a_i
\right]
+
\mathbb E_{\tau_{t,r}(s)}
\left[
\left|g_i^{\tau_{t,r}(s)}\!\left(s,A^{\tau_{t,r}(s)}\right)\right|
\right]
\le2.
\end{align}
Consequently, $|m_i^t(s,a_i)|\le2\le\frac{B}{2}$, and therefore, $\operatorname{clip}_B(m_i^t(s,a_i))=m_i^t(s,a_i)$. Since $r_i^t(s,a_i)=\operatorname{clip}_B(\widetilde{r}^t_i(s,a_i))$, using the $1$-Lipschitzness\footnote{That is,
$\left|\operatorname{clip}_B(x)-\operatorname{clip}_B(y)\right|
\le |x-y|,\forall x,y\in\mathbb R.$} of the clipping and \eqref{eq:episodic-centered-martingale-average}, we have
\begin{align}
|\xi_i^t(s,a_i)|
=
\left|
\operatorname{clip}_B\!\left(\widetilde r_i^t(s,a_i)\right)
-
\operatorname{clip}_B\!\left(m_i^t(s,a_i)\right)
\right|\le
\left|
\widetilde r_i^t(s,a_i)-m_i^t(s,a_i)
\right|.
\label{eq:episodic-clipping-contraction}
\end{align}
Therefore, for every state $s$ we can write
\begin{align*}
&\mathbb E_t
\bigg[
\sum_{a_i\in\mathcal A_i}
\pi_i^t(a_i\mid s)
\left(
\xi_i^t(s,a_i)
\right)^2
\bigg]\leq \mathbb E_t
\bigg[
\sum_{a_i\in\mathcal A_i}
\pi_i^t(a_i\mid s)
\left(
\widetilde r_i^t(s,a_i)-m_i^t(s,a_i)
\right)^2
\bigg]
\cr
&=
\mathbb E_t
\bigg[
\sum_{a_i\in\mathcal A_i}
\pi_i^t(a_i\mid s)
\bigg(
\frac{1}{L_t}\sum_{r=1}^{L_t}X_r(s,a_i)
\bigg)^2
\bigg]=
\frac{1}{L_t^2}
\sum_{r=1}^{L_t}
\mathbb E_t
\bigg[
\sum_{a_i\in\mathcal A_i}
\pi_i^t(a_i\mid s)
\left(
X_r(s,a_i)
\right)^2
\bigg]
\le
\frac{|\mathcal A_i|}{L_t},
\end{align*}
where the first inequality uses \eqref{eq:episodic-clipping-contraction} and the first equality follows from \eqref{eq:episodic-centered-martingale-average}. The second equality follows from the martingale-difference property, which ensures that the cross terms in the squared average vanish by conditioning on $\mathcal F_{\tau_{t,r}(s)}$ and the tower property. Indeed, for $r'<r$, $X_{r'}(s,a_i)$ is $\mathcal F_{\tau_{t,r}(s)}$-measurable, and hence $\mathbb E_t\left[X_{r'}(s,a_i)X_r(s,a_i)\right]=\mathbb E_t\left[X_{r'}(s,a_i)\mathbb E_{\tau_{t,r}(s)}\left[X_r(s,a_i)\right]\right]=0$. Moreover, the last inequality uses \eqref{eq:episodic-one-sample-weighted-variance} and the tower property.  

Finally, using \eqref{eq:episodic-clipping-contraction} and since $\pi^t$ and $d^{\pi^t}_{\mu}$ are $\mathcal F_t$-measurable
and $\sum_s d_\mu^{\pi^t}(s)=1$, we obtain
\begin{align}\nonumber
\mathcal V_i^t
&=
\mathbb E_t
\bigg[
\sum_{s,a_i}
d_\mu^{\pi^t}(s)
\pi_i^t(a_i\mid s)
\bigl(\xi_i^t(s,a_i)\bigr)^2
\bigg]
\le
\frac{|\mathcal A_i|}{L_t}.\tag*{\hfill\qedsymbol}
\end{align}
\renewcommand{\qedsymbol}{}
\end{proof}

\vspace{-2cm}
\section{Omitted Proofs for Section \ref{sec:fully-online-general-occupancy}}

\subsection{Proof of Lemma \ref{lemm:conditional-variance-online}}\label{app:online-importance-sampling-bounds}

\begin{proof}
At an actual visit $t=\tau_k(s)$, the conditional second moment is controlled as
\begin{align*}
\mathcal{V}_i^k(s)&\le
d_\mu^k(s)\mathbb E_{\tau_k(s)}\bigg[
\sum_{a_i\in\mathcal A_i}
\pi_i^k(a_i\mid s)
\bigl(\widehat A_i^k(s,a_i)\bigr)^2
\bigg]\cr
&=
d_\mu^k(s)\mathbb E_{\tau_k(s)}\bigg[
(g_i^t(s,A^t))^2
\bigg(
\frac{1}{\pi_i^k(A_i^t\mid s)}-1
\bigg)
\bigg]\cr
&\le
d_\mu^k(s)\mathbb E_{\tau_k(s)}\!\left[
\frac{(g_i^t(s,A^t))^2}
{\pi_i^k(A_i^t\mid s)}
\right]
\le
d_\mu^k(s)\,|\mathcal{A}_i|
\le A_{\max},
\end{align*}
where the first inequality follows because conditional centering minimizes mean-square error. The equality follows directly from the definition $\widehat A_i^k(s,a_i)
=g_i^t(s,A^t)
\big(\mathbf 1\{A_i^t=a_i\}/\pi_i^k(a_i\mid s)-1
\big)$. Indeed, for the realized action $A_i^t$,
\begin{align}\nonumber
\sum_{a_i\in\mathcal A_i}
\pi_i^k(a_i\mid s)
\bigl(\widehat A_i^k(s,a_i)\bigr)^2
&=
(g_i^t(s,A^t))^2
\sum_{a_i\in\mathcal A_i}
\pi_i^k(a_i\mid s)
\left(
\frac{\mathbf 1\{A_i^t=a_i\}}
{\pi_i^k(a_i\mid s)}-1
\right)^2\cr
&=
(g_i^t(s,A^t))^2
\left(
\frac{1}{\pi_i^k(A_i^t\mid s)}-1
\right).
\end{align}
Thus, the subtraction of the realized raw sample is exactly the action-centering operation and reduces the corresponding $\pi_i^k(\cdot\mid s)$-weighted second moment by $(g_i^t(s,A^t))^2$ relative to the uncentered importance-weighted vector. The next inequality simply drops this nonnegative reduction. The following inequality holds by conditioning on $\mathcal F_{\tau_k(s)}$ and averaging over $A_i^{\tau_k(s)}\sim\pi_i^k(\cdot\mid s)$, which cancels the importance-weighting denominator, while the bounded-oracle assumption $|g_i^{\tau_k(s)}|\le 1$ gives a contribution of at most $1$ for each $a_i\in\mathcal A_i$. The last inequality uses $d_\mu^k(s):=d_\mu^{\pi^{\tau_k(s)}}(s)\le1$.

Moreover, almost surely, we have
\begin{align*}
\!\!\!\!\!\!\sum_{a_i}\pi_i^k(a_i\mid s)
\bigl(\xi_i^k(s,a_i)\bigr)^2
&\!\le\!
2\!\sum_{a_i}\pi_i^k(a_i\mid s)
\bigl(\widehat A_i^k(s,a_i)\bigr)^2
\!+\!
2\!\sum_{a_i}\pi_i^k(a_i\mid s)
\bigl(\mathbb E_{\tau_k(s)}[\widehat A_i^k(s,a_i)]\bigr)^2
\cr
&\le
2\sum_{a_i}\pi_i^k(a_i\mid s)
\bigl(\widehat A_i^k(s,a_i)\bigr)^2
+8
\cr
&=
2(g_i^t(s,A^t))^2
\left(
\frac{1}{\pi_i^k(A_i^t\mid s)}-1
\right)+8
\cr
&\le
\frac{2(g_i^t(s,A^t))^2}{\pi_i^k(A_i^t\mid s)}+8
\le
\frac{2|\mathcal{A}_i|}{\zeta}+8
\le
10\frac{A_{\max}}{\zeta}.
\end{align*}
The first inequality uses
$\xi_i^k=\widehat A_i^k-
\mathbb E_{\tau_k(s)}[\widehat A_i^k]$ while the second one uses $\|\mathbb E_{\tau_k(s)}[\widehat A_i^k(s)]\|_\infty\le2$,
which follows directly from $|g_i^t(s,A^t)|\le 1$ and the definition of
$\widehat A_i^k$. The equality follows from the direct weighted second-moment identity above, and the next inequality drops the nonnegative action-centering reduction. The following inequality uses $\pi_i^k(\cdot\mid s)\ge\frac{\zeta}{|\mathcal{A}_i|}$ and $|g_i^t(s,A^t)|\le1$. Finally, the last inequality holds because $\zeta\in (0, 1)$ and $A_{\max}=\max_i |\mathcal{A}_i|\ge 1$.
\end{proof}

\subsection{Proof of Lemma \ref{lem:online-tracking}}\label{app:dynamic-traking}
Since the proof is relatively long, we divide it into four steps and briefly describe the role of each step. In Step~1, we derive the one-step tracking-error recursion and control the stochastic fluctuations by constructing suitable global-time martingale-difference sequences. In Step~2, we sum the tracking recursion along the realized trajectory and control the resulting occupancy-weight drift using the localized occupancy-sensitivity bound. In Step~3, we bound the drift of the true marginalized advantages between consecutive visits to the same state, using the bounded-intervisit event and charging the resulting occupancy drift to the cumulative policy movement. Finally, in Step~4, we combine these estimates and pass from the lagged tracking errors appearing naturally in the recursion to the current tracking errors, yielding the desired bound. With these steps in mind, we now provide a formal proof.

\begin{proof}
\subsubsection*{Step 1: Tracking error recursion and the global martingales.}
Fix $s$ and $k\ge0$, and put $t=\tau_k(s)$. Recall that $\mathcal F_{\tau_k(s)}$ contains the complete global history, including the current state $S^t$, but is taken before the fresh actions are drawn and oracle randomness at the $(k+1)$st visit to state $s$ is generated. In particular, $\pi^{\tau_k(s)}$, $r_i^{k-1}(s)$, and $e_i^{k-1}(s)$ are
$\mathcal F_{\tau_k(s)}$-measurable. The fresh actions and oracle observations
are generated afterward. 

For $k\!\ge\! 1$, let $\widetilde e_i^k(s,a_i)=\widetilde r_i^k(s,a_i)\!-\!
\bar A_i^k(s,a_i)$ denote the tracking error before clipping, and define
\[
\Delta A_i^k(s,a_i):=
\bar A_i^{k-1}(s,a_i)
-\bar A_i^k(s,a_i), \ \ \ \Delta A_i^0(s)=0.
\]
Then using $\xi_i^k(s,a_i)
:=\widehat A_i^k(s,a_i)
-\mathbb E_{\tau_k(s)}[\widehat A_i^k(s,a_i)]$ and \eqref{eq:online-tracking-error}, we have
\begin{align}
\sqrt{\pi_i^k(a_i\mid s)}\,\widetilde e_i^k(s,a_i)
&=
\sqrt{\pi_i^k(a_i\mid s)}
\left[
(1-\omega)r_i^{k-1}(s,a_i)
+\omega\widehat A_i^k(s,a_i)
-\bar A_i^k(s,a_i)
\right]
\notag\\
&=
\sqrt{\pi_i^k(a_i\mid s)}
\Bigg\{
(1-\omega)
\left[e_i^{k-1}(s,a_i)+\Delta A_i^k(s,a_i)\right]
\notag\\
&\qquad\qquad
+\omega\left[
\mathbb E_{\tau_k(s)}[\widehat A_i^k(s,a_i)]
-\bar A_i^k(s,a_i)
\right]
+\omega\xi_i^k(s,a_i)
\Bigg\}
\notag\\
&=
u_i^k(s,a_i)
+\omega\sqrt{\pi_i^k(a_i\mid s)}\,\xi_i^k(s,a_i),
\label{eq:online-u-error}
\end{align}
where we recall that $\pi_i^k(\cdot\mid s)$ is the policy stored at state $s$
immediately before the $(k+1)$st local update, and $u_i^k(s,a_i)$ is defined to be the $\mathcal F_{\tau_k(s)}$ predictable part of \eqref{eq:online-u-error} before clipping, i.e., \footnote{Since $\bar A_i^k(s,\cdot):=\bar A_i^{\pi^{\tau_k(s)}}(s,\cdot)$ is determined by the policy profile $\pi^{\tau_k(s)}$, it is $\mathcal F_{\tau_k(s)}$-measurable.}
\begin{align}\nonumber
u_i^k(s,a_i)
\!:={}&\sqrt{\pi_i^k(a_i\mid s)}\Bigg\{\!
(1-\omega)\bigl[e_i^{k-1}(s,a_i)\!+\!\Delta A_i^k(s,a_i)\bigr]\!+\!\omega\Bigl[
\mathbb E_{\tau_k(s)}[\widehat A_i^k(s,a_i)]
\!-\!\bar A_i^k(s,a_i)
\Bigr]\!\Bigg\}.
\end{align}
Squaring this relation, summing over $a_i$, and applying Young's inequality, for some constant $c\leq 5$,
\begin{align}\label{eq:sum-u-square}
\sum_{a_i}(u_i^k(s,a_i))^2
&\le
(1-\omega)
\sum_{a_i}\pi_i^k(a_i\mid s)
\bigl(e_i^{k-1}(s,a_i)\bigr)^2
+\frac{c}{\omega}
\sum_{a_i}\pi_i^k(a_i\mid s)
\bigl(\Delta A_i^k(s,a_i)\bigr)^2\cr
&\qquad+c\omega
\sum_{a_i}\pi_i^k(a_i\mid s)
\left(
\mathbb E_{\tau_k(s)}
\bigl[\widehat A_i^k(s,a_i)\bigr]
-\bar A_i^k(s,a_i)
\right)^2\cr
&\hspace{-1cm}\le
(1-\omega)
\sum_{a_i}\pi_i^{k}(a_i\mid s)
\bigl(e_i^{k-1}(s,a_i)\bigr)^2
+\frac{c}{\omega}
\sum_{a_i}\pi_i^k(a_i\mid s)
\bigl(\Delta A_i^k(s,a_i)\bigr)^2+c\omega L_{\widehat A}^2\cr 
&\hspace{-1cm}\le
(1-\omega)e^{\frac{2\eta M}{1-\gamma}}
\sum_{a_i}\pi_i^{k-1}(a_i\mid s)
\bigl(e_i^{k-1}(s,a_i)\bigr)^2
+\frac{c}{\omega}
\sum_{a_i}\pi_i^k(a_i\mid s)
\bigl(\Delta A_i^k(s,a_i)\bigr)^2+c\omega L_{\widehat A}^2\cr 
&\hspace{-1cm}\le
(1-c_1\omega)\mathcal E_i^{k-1}(s)
+\frac{c}{\omega}
\sum_{a_i}\pi_i^k(a_i\mid s)
\bigl(\Delta A_i^k(s,a_i)\bigr)^2
+c\omega L_{\widehat A}^2,
\end{align}
where the second inequality uses the oracle Assumption~\ref{ass:online-one-sample-oracle}. The third inequality uses part~(i) of Lemma~\ref{lem:kl-projected-local-geometry} (see also Corollary~\ref{cor:local-kl-projected-geometry}). The last inequality follows from the definition in \eqref{eq:online-tracking-error} and the condition $\eta M/(1-\gamma)\le c_0\omega$, which, for sufficiently small $c_0>0$, implies $(1-\omega)e^{2\eta M/(1-\gamma)}\leq (1-\omega)e^{2c_0\omega}\leq 1-c_1\omega$ for some universal constant $c_1>0$.  

Next, note that since the true target $\bar A_i^k(s,a_i)$ belongs to $[-M,M]$ and clipping
is Euclidean projection onto this interval, by the nonexpansive property of the projection, we have
\begin{equation}\nonumber
|e_i^k(s,a_i)|=|\operatorname{clip}_{M}
    \bigl(\widetilde r_i^k(s,a_i)\bigr)-
\operatorname{clip}_{M}
    \bigl(\bar A_i^k(s,a_i)\bigr) |\leq |\widetilde r_i^k(s,a_i)-
\bar A_i^k(s,a_i)|=|\widetilde e_i^k(s,a_i)|.
\end{equation}
Therefore,
\begin{equation}\label{eq:online-clipping-nonexpansive}
|e_i^k(s,a_i)|\leq |\widetilde e_i^k(s,a_i)|, \ \ \ 
\qquad
\|e_i^k(s)\|_\infty\le2M.
\end{equation}
Squaring \eqref{eq:online-u-error}, using
\eqref{eq:online-clipping-nonexpansive}, and summing over $a_i$, we obtain 
\begin{align}\nonumber
\mathcal E_i^k(s)\leq \sum_{a_i}(u_i^k(s,a_i))^2
+\omega^2\sum_{a_i}\pi_i^k(a_i\mid s)(\xi_i^k(s,a_i))^2+2\omega\sum_{a_i}u_i^k(s,a_i)\sqrt{\pi_i^k(a_i\mid s)}\,\xi_i^k(s,a_i).
\end{align}
Using \eqref{eq:sum-u-square} into the above relation and adding and subtracting the conditional second moment gives  
\begin{align}\label{eq:online-one-step-tracking-energy}
\mathcal E_i^k(s)&\le(1-c_1\omega)\mathcal E_i^{k-1}(s)
+\frac{c}{\omega}\sum_{a_i}\pi_i^k(a_i\mid s)(\Delta A_i^k(s,a_i))^2
+c\omega L_{\widehat{A}}^2\cr 
&\qquad+Y_i^k(s)+V_i^k(s)+\omega^2\mathbb E_{\tau_k(s)}\!\bigg[
\sum_{a_i}\pi_i^k(a_i\mid s)(\xi_i^k(s,a_i))^2\bigg],
\end{align}
where $Y_i^k(s)$ and $V_i^k(s)$ are defined as
\begin{align}\nonumber
Y_i^k(s):=2\omega\sum_{a_i}u_i^k(s,a_i)
&\sqrt{\pi_i^k(a_i\mid s)}\xi_i^k(s,a_i),\cr
V_i^k(s):=\omega^2\bigg[
\sum_{a_i}\pi_i^k(a_i\mid s)(\xi_i^k(s,a_i))^2
&-\mathbb E_{\tau_k(s)}\Big[
\sum_{a_i}\pi_i^k(a_i\mid s)(\xi_i^k(s,a_i))^2\Big]
\bigg].
\end{align}
We note that $\mathbb E_{\tau_k(s)}[\xi_i^k(s,a_i)]=0$ by definition, and since $u_i^k(s,a_i)$ and $\pi_i^k(a_i\mid s)$ are $\mathcal F_{\tau_k(s)}$-measurable, it follows that $\mathbb E_{\tau_k(s)}[Y_i^k(s)]=0$, while $\mathbb E_{\tau_k(s)}[V_i^k(s)]=0$ follows directly from its conditional centering. Thus, both $Y_i^k(s)$ and $V_i^k(s)$ are conditionally mean zero given $\mathcal F_{\tau_k(s)}$.

To apply martingale concentration along the actual online trajectory, we now
reindex these local increments in global time. Indeed, at every global time $t$, setting $S^t=s$ and
$N_t(S^t)=k$ gives $t=\tau_k(s)=\tau_{N_t(S^t)}(S^t)$, so $Y_i^{N_t(S^t)}(S^t)$ and $V_i^{N_t(S^t)}(S^t)$ are precisely the local statewise
increments $Y_i^{k}(s)$ and $V_i^{k}(s)$ generated at time $t$. In particular, since
$\mathcal F_{\tau_k(s)}=\mathcal F_t$, their conditional mean-zero property
with respect to $\mathcal F_{\tau_k(s)}$ implies that they have conditional
mean zero given $\mathcal F_t$. Since $S^t$,
$\pi^t$, and hence $d_\mu^{\pi^t}(S^t)$ are $\mathcal F_t$-measurable, it
follows that the occupancy-weighted increments $d_\mu^{\pi^t}(S^t)Y_i^{N_t(S^t)}(S^t)$ and $d_\mu^{\pi^t}(S^t)V_i^{N_t(S^t)}(S^t)$ are $\mathcal F_{t+1}$-measurable and have conditional mean zero given
$\mathcal F_t$. Thus, the global-time reindexing chronologically merges the
statewise mean-zero increments while preserving their martingale-difference
property, yielding martingale-difference sequences with respect to the global
filtration $(\mathcal F_t)_{t\ge0}$.\footnote{The global-time sequence interleaves the statewise local sequences in the realized order of visits, with no local increment carried forward between visits. For example, if the increments for $s$ occur at $\tau_0(s)=1$, $\tau_1(s)=4$, $\tau_2(s)=6$ and those for $s'$ at $\tau_0(s')=2$, $\tau_1(s')=3$, $\tau_2(s')=5$, their chronological merging is $V_i^0(s)$,$V_i^0(s')$,$V_i^1(s')$,$V_i^1(s)$,$V_i^2(s')$,$V_i^2(s)$, which is exactly the global-time sequence $\{V_i^{N_t(S^t)}(S^t)\}_{t\ge 0}$.} Therefore, in the following, we compute their conditional second moments to obtain high-probability concentration results.

Since
$\pi_i^t(\cdot\mid S^t)
=\pi_i^{N_t(S^t)}(\cdot\mid S^t)$,
and $\pi_i^t(\cdot\mid S^t)$,
$d_\mu^{\pi^t}(S^t)$, and
$u_i^{N_t(S^t)}(S^t,\cdot)$ are $\mathcal F_t$-measurable, while $\xi_i^{N_t(S^t)}(S^t)
=
\widehat A_i^{N_t(S^t)}(S^t)
-\mathbb E_t\big[
\widehat A_i^{N_t(S^t)}(S^t)
\big]$, by Cauchy--Schwarz, we have
\begin{align}
&\mathbb E_t\!\left[
\bigl(
d_\mu^{\pi^t}(S^t)
Y_i^{N_t(S^t)}(S^t)
\bigr)^2
\right]
\notag\\
&\qquad\le
4\omega^2
\bigl(d_\mu^{\pi^t}(S^t)\bigr)^2
\left(
\sum_{a_i}
\bigl(
u_i^{N_t(S^t)}(S^t,a_i)
\bigr)^2
\right)
\mathbb E_t\!\left[
\sum_{a_i}
\pi_i^t(a_i\mid S^t)
\bigl(
\xi_i^{N_t(S^t)}(S^t,a_i)
\bigr)^2
\right]
\notag\\
&\qquad\le
c\omega^2 M^2
d_\mu^{\pi^t}(S^t)
\mathbb E_t\!\left[
\sum_{a_i}
\pi_i^t(a_i\mid S^t)
\bigl(
\xi_i^{N_t(S^t)}(S^t,a_i)
\bigr)^2
\right]
\notag\\
&\qquad=
c\omega^2M^2
\mathcal V_i^{N_t(S^t)}(S^t).
\label{eq:online-global-Y-variance}
\end{align}
Here, the first inequality uses Cauchy--Schwarz and measurability of
$u_i^{N_t(S^t)}(S^t,\cdot)$. The second inequality holds because $d_\mu^{\pi^t}(S^t)\le1$ and $\sum_{a_i}(u_i^k(s,a_i))^2\le cM^2$ for all $s$. To see the latter, since $|g_i^t(s,A)|\le1$, the definition of $\widehat A_i^k$ implies $\big|\mathbb E_{\tau_k(s)}[\widehat A_i^k(s,a_i)]\big|\le2$, and therefore $\big\|\mathbb E_{\tau_k(s)}[\widehat A_i^k(s,\cdot)]-\bar A_i^k(s,\cdot)\big\|_\infty\le2+M$. Combining this with $\|e_i^k(s)\|_\infty\le2M$ and $\|\Delta A_i^k(s)\|_\infty\le2M$ in the definition of $u_i^k$ yields $\sum_{a_i}(u_i^k(s,a_i))^2\le cM^2$.\footnote{Here, using the fact that $M\ge 1$, we absorb the constant terms added to $M$ into the universal constant $c$.} Finally, the equality in \eqref{eq:online-global-Y-variance} follows from $t=\tau_{N_t(S^t)}(S^t)$ and the definition of
$\mathcal V_i^{N_t(S^t)}(S^t)$ in
\eqref{eq:online-occupancy-weighted-variance}. 

On the other hand, the martingale increments can be bounded as
\begin{align*}
\left|
d_\mu^{\pi^t}(S^t)
Y_i^{N_t(S^t)}(S^t)
\right|&\le
2\omega 
\bigg(
\sum_{a_i}
\bigl(
u_i^{N_t(S^t)}(S^t,a_i)
\bigr)^2
\bigg)^{1/2}
\bigg(
\sum_{a_i}
\pi_i^t(a_i\mid S^t)
\bigl(
\xi_i^{N_t(S^t)}(S^t,a_i)
\bigr)^2
\bigg)^{1/2}\cr 
&\le
c\omega M
\left(\frac{A_{\max}}{\zeta}\right)^{1/2},
\end{align*}
where the first inequality follows directly by Cauchy--Schwarz to the definition of $Y_i^{N_t(S^t)}(S^t)$ and using $d_\mu^{\pi^t}(S^t)\leq 1$, while the second inequality uses $\sum_{a_i}(u_i^{N_t(S^t)}(S^t,a_i))^2
\le cM^2$, and the almost-sure bound
$\sum_{a_i}\pi_i^t(a_i\mid S^t)
(\xi_i^{N_t(S^t)}(S^t,a_i))^2
\le 10A_{\max}/\zeta$ from Lemma~\ref{lemm:conditional-variance-online}.

Similarly, we can bound the conditional occupancy-weighted second moment of $V_i$ as 
\begin{align}
\mathbb E_t\!\left[
\bigl(
d_\mu^{\pi^t}(S^t)
V_i^{N_t(S^t)}(S^t)
\bigr)^2
\right]&\le
\omega^4
\bigl(d_\mu^{\pi^t}(S^t)\bigr)^2
\mathbb E_t\!\left[
\left(
\sum_{a_i}
\pi_i^t(a_i\mid S^t)
\bigl(
\xi_i^{N_t(S^t)}(S^t,a_i)
\bigr)^2
\right)^2
\right]\cr 
&\le
c\frac{\omega^4A_{\max}}{\zeta}
\bigl(d_\mu^{\pi^t}(S^t)\bigr)^2
\mathbb E_t\!\left[
\sum_{a_i}
\pi_i^t(a_i\mid S^t)
\bigl(
\xi_i^{N_t(S^t)}(S^t,a_i)
\bigr)^2
\right]\cr
&\le
c\frac{\omega^4A_{\max}}{\zeta}
d_\mu^{\pi^t}(S^t)
\mathbb E_t\!\left[
\sum_{a_i}
\pi_i^t(a_i\mid S^t)
\bigl(
\xi_i^{N_t(S^t)}(S^t,a_i)
\bigr)^2
\right]\cr
&=
c\frac{\omega^4A_{\max}}{\zeta}
\mathcal V_i^{N_t(S^t)}(S^t).
\label{eq:online-global-V-variance}
\end{align}
The first inequality follows from the definition of
\(V_i^{N_t(S^t)}(S^t)\) and the fact that conditional centering can only
decrease the second moment. The second inequality uses the almost-sure bound from Lemma~\ref{lemm:conditional-variance-online}, and the third inequality
uses \(d_\mu^{\pi^t}(S^t)\le1\). Moreover, using the definition of $V_i^{N_t(S^t)}(S^t)$ and the same almost-sure bound from Lemma~\ref{lemm:conditional-variance-online}, it is easy to see that  
\[
\Big|d_\mu^{\pi^t}(S^t)V_i^{N_t(S^t)}(S^t)\Big|
\le c\frac{\omega^2A_{\max}}{\zeta}.
\]

Since $\mathcal V_i^k(s)\le\sigma^2$ for every $i,s,k$, the predictable
quadratic variations in \eqref{eq:online-global-Y-variance} and
\eqref{eq:online-global-V-variance} satisfy,
\begin{align*}
\sum_{t<T}
\mathbb E_t\!\left[
\bigl(
d_\mu^{\pi^t}(S^t)
Y_i^{N_t(S^t)}(S^t)
\bigr)^2
\right]
&\le
c\omega^2M^2\sigma^2T,\\
\sum_{t<T}
\mathbb E_t\!\left[
\bigl(
d_\mu^{\pi^t}(S^t)
V_i^{N_t(S^t)}(S^t)
\bigr)^2
\right]
&\le
c\omega^4\sigma^2\frac{A_{\max}}{\zeta}T.
\end{align*}
Therefore, applying Freedman's inequality\footnote{If $(X_t,\mathcal F_{t+1})$ is a martingale
difference sequence with $|X_t|\le b$ a.s. and
$\sum_{t<T}\mathbb E_t[X_t^2]\le v$ deterministically, then, with probability
at least $1-\rho$, $\sum_{t<T}X_t
\le c\bigl(\sqrt{v\log(1/\rho)}+b\log(1/\rho)\bigr)$, for a universal constant $c$.}
to each of the two martingales for each $i\in[n]$ with failure probability
$\rho/(4n)$, and taking a union bound, yields an event $\mathcal T_T$ with
$\mathbb P(\mathcal T_T)\ge1-\rho/2$. Since
$\log(4n/\rho)\le\Lambda_T$, the predictable quadratic-variation and
deterministic increment bounds above imply that, on $\mathcal T_T$,
simultaneously for all players,
\begin{align}\label{eq:online-global-Y-V-Freedman}
\sum_{t<T}d_\mu^{\pi^t}(S^t)
Y_i^{N_t(S^t)}(S^t)
&\le
c\omega M \sigma\sqrt{T\Lambda_T}
+
c\omega M \Lambda_T\sqrt{\frac{A_{\max}}{\zeta}},\cr 
\sum_{t<T}d_\mu^{\pi^t}(S^t)
V_i^{N_t(S^t)}(S^t)
&\le
c\omega^2\sigma
\sqrt{\frac{A_{\max}T\Lambda_T}{\zeta}}
+
c\omega^2\Lambda_T\frac{A_{\max}}{\zeta}.
\end{align}

\subsubsection*{Step 2: weighted telescoping and occupancy-weight drift}

We first control the weighted telescoping term that will arise from tracking error. For fixed
$i,s$, summation by parts and \eqref{eq:online-clipping-nonexpansive} give
\begin{align}\label{eq:telescop-k-e}
&\sum_{k:\tau_k(s)<T}\!\!d_\mu^k(s)
\bigl(\mathcal E_i^{k-1}(s)-\mathcal E_i^k(s)\bigr)\cr 
&\qquad\qquad\leq 
d_\mu^0(s)\mathcal E_i^{-1}(s)\mathbf1\{\tau_0(s)<T\}
+\!\!\!\!\!\!\sum_{k\ge1:\tau_k(s)<T}
\!\!\!\!\bigl(d_\mu^k(s)-d_\mu^{k-1}(s)\bigr)
\mathcal E_i^{k-1}(s)\cr 
&\qquad\qquad\le
4M^2d_\mu^0(s)\mathbf1\{\tau_0(s)<T\}
+4M^2\sum_{k\ge1:\tau_k(s)<T}
|d_\mu^k(s)-d_\mu^{k-1}(s)|.
\end{align}

For a fixed state $s$, consider an intervisit interval
$[\tau_{k-1}(s),\tau_k(s))$. During each such interval, the
local policy at $s$ is not updated, but the global policy profile can still
change as other states are visited and their policies are updated.
Consequently, even though $s$ itself is not visited, its occupancy
$d_\mu^{\pi^t}(s)$ can change over global time. Thus, the total change in the
occupancy of $s$ between two consecutive visits is the telescoping sum of all
global-time occupancy changes occurring in between. In particular, using the shorthand
$d_\mu^k(s):=d_\mu^{\pi^{\tau_k(s)}}(s)$ and the triangle inequality,
\begin{align*}
|d_\mu^k(s)-d_\mu^{k-1}(s)|
&=
\bigg|
\sum_{t=\tau_{k-1}(s)}^{\tau_k(s)-1}
\left(
d_\mu^{\pi^{t+1}}(s)-d_\mu^{\pi^t}(s)
\right)
\bigg|\le
\sum_{t=\tau_{k-1}(s)}^{\tau_k(s)-1}
\Big|
d_\mu^{\pi^{t+1}}(s)-d_\mu^{\pi^t}(s)
\Big|.
\end{align*}
Since the intervisit intervals are disjoint for each fixed state, by summing the above relation we get
\begin{align}\label{eq:online-weight-drift-telescope}
\sum_s\sum_{k\ge1:\tau_k(s)<T}
|d_\mu^k(s)-d_\mu^{k-1}(s)|
&\le
\sum_{t<T}\sum_s
\left|
d_\mu^{\pi^{t+1}}(s)
-d_\mu^{\pi^t}(s)
\right|=
2\sum_{t<T}
\|d_\mu^{\pi^{t+1}}-d_\mu^{\pi^t}\|_{\mathrm{TV}}.
\end{align}

At global time $t$, the two policy profiles $\pi^t$ and $\pi^{t+1}$
differ only at $S^t$. Indeed, setting $S^t=s$ and $N_t(s)=k$, we have
$N_{t+1}(s)=k+1$, and hence $\pi_j^t(\cdot\mid S^t)=\pi_j^k(\cdot\mid s), \pi_j^{t+1}(\cdot\mid S^t)=\pi_j^{k+1}(\cdot\mid s)$. Thus, the local policy-increment bound from
Corollary~\ref{cor:local-kl-projected-geometry} applies to the update at
global time $t$, i.e.,
\begin{align}\label{eq:policy-drift-tracking}
\|\pi_j^{t+1}(\cdot\mid S^t)-\pi_j^t(\cdot\mid S^t)\|_{\mathrm{TV}}
\le \frac{\eta M}{1-\gamma}.
\end{align}
Moreover, to bound the right-hand side of \eqref{eq:online-weight-drift-telescope}, we can write
\begin{align}\label{eq:online-total-occupancy-drift}
\sum_{t<T}
\|d_\mu^{\pi^{t+1}}-d_\mu^{\pi^t}\|_{\mathrm{TV}}
&\le
L_d\sum_{t<T}d_\mu^{\pi^t}(S^t)
\sum_j\|\pi_j^{t+1}(\cdot\mid S^t)
-\pi_j^t(\cdot\mid S^t)\|_{\mathrm{TV}}\cr
&\le
cL_d\sqrt n
\sum_{t<T}
d_\mu^{\pi^t}(S^t)
\left(
\sum_j\mathcal D_j^{N_t(S^t)}(S^t)
\right)^{1/2}\cr
&=
cL_d\sqrt n
\sum_{t<T}
\sqrt{d_\mu^{\pi^t}(S^t)}
\left(
 d_\mu^{\pi^t}(S^t)
 \sum_j\mathcal D_j^{N_t(S^t)}(S^t)
\right)^{1/2}\cr
&\le
cL_d\sqrt{nT}
\left(
\sum_{t<T}
d_\mu^{\pi^t}(S^t)
\sum_j\mathcal D_j^{N_t(S^t)}(S^t)
\right)^{1/2}\cr 
&=cL_d\sqrt{nT\mathfrak D_T},
\end{align}
where the first inequality follows from
\eqref{eq:online-localized-occupancy-sensitivity}, since
$\pi^t$ and $\pi^{t+1}$ differ only at the visited state $S^t$. The second inequality uses Pinsker's inequality $\|\Delta^k_j(s)\|_1\leq \sqrt{\mathcal{D}^k_j(s)}$, and then applying Cauchy--Schwarz over the players. Finally, the last
inequality follows from Cauchy--Schwarz over global time and using
$\sum_{t<T}d_\mu^{\pi^t}(S^t)\le T$.

Similarly, for the first-visit boundary, we can write
\begin{align}\label{eq:online-first-visit-weight}
\sum_{s:\tau_0(s)<T}d_\mu^0(s)
&=\sum_{s:\tau_0(s)<T}
\Big[
d_\mu^{\pi^0}(s)
+\sum_{t=0}^{\tau_0(s)-1}
\big(d_\mu^{\pi^{t+1}}(s)-d_\mu^{\pi^t}(s)\big)
\Big]\cr
&\le
\sum_s d_\mu^{\pi^0}(s)
+\sum_{t<T}\sum_s
\left|d_\mu^{\pi^{t+1}}(s)-d_\mu^{\pi^t}(s)\right|\cr 
&=1+2\sum_{t<T}
\|d_\mu^{\pi^{t+1}}-d_\mu^{\pi^t}\|_{\mathrm{TV}}.
\end{align}

Finally, by summing \eqref{eq:telescop-k-e} over all $i, s$, and using \eqref{eq:online-weight-drift-telescope}, \eqref{eq:online-total-occupancy-drift}, and \eqref{eq:online-first-visit-weight}, we obtain
\begin{align}\label{eq:sum-d-e-t}
\sum_i\sum_{(s,k):\tau_k(s)<T}\!d_\mu^k(s)
\bigl(\mathcal E_i^{k-1}(s)-\mathcal E_i^k(s)\bigr)\leq cnM^2+cL_d n^{3/2}M^2\sqrt{T\mathfrak D_T}.
\end{align}

We now apply this bound to the error recursion. Multiplying \eqref{eq:online-one-step-tracking-energy} at each actual visit $t=\tau_k(s)$ by $d_\mu^k(s)=d^{\pi^t}_{\mu}(S^t)$, rearranging its contraction
term, and summing over $i,s,k$ with $\tau_k(s)<T$ gives\footnote{ $\sum_{s}\sum_{k:\tau_k(s)<T}d_\mu^k(s)Y_i^k(s)=\sum_{t<T}d_\mu^{\pi^t}(S^t)Y_i^{N_t(S^t)}(S^t)$, since there is a one-to-one correspondence between pairs $(s,k)$ such that $\tau_k(s)<T$ and global times $t<T$, given by $t=\tau_k(s)$, or equivalently $S^t=s$ and $N_t(S^t)=k$.}
\begin{align*}
c_1\omega
&\sum_i\sum_{(s,k):\tau_k(s)<T}
d_\mu^k(s)\mathcal E_i^{k-1}(s)\cr 
&\le
\sum_i\sum_{(s,k):\tau_k(s)<T}
\!\!d_\mu^k(s)
\bigl(\mathcal E_i^{k-1}(s)-\mathcal E_i^k(s)\bigr)+cn\omega L_{\widehat{A}}^2
\sum_{(s,k):\tau_k(s)<T}d_\mu^k(s)\cr
&\quad+\frac{c}{\omega}
\sum_i\sum_{(s,k):\tau_k(s)<T}
\!\!d_\mu^k(s)
\sum_{a_i}\pi_i^k(a_i\mid s)
\bigl(\Delta A_i^k(s,a_i)\bigr)^2\cr
&\quad+\sum_i\sum_{t<T}
d_\mu^{\pi^t}(S^t)
Y_i^{N_t(S^t)}(S^t)+
\sum_i\sum_{t<T}
d_\mu^{\pi^t}(S^t)
V_i^{N_t(S^t)}(S^t)\cr
&\quad+
\omega^2\sum_i\sum_{t<T}d_\mu^{\pi^t}(S^t)\mathbb E_t\!\bigg[
\sum_{a_i}\pi_i^t(a_i\mid S^t)
\bigl(\xi_i^{N_t(S^t)}(S^t,a_i)\bigr)^2
\bigg].
\end{align*}
Now by definition \eqref{eq:online-occupancy-weighted-variance}, we have $d_\mu^{\pi^t}(S^t)\mathbb E_t\!\big[
\sum_{a_i}\pi_i^t(a_i\mid S^t)
\bigl(\xi_i^{N_t(S^t)}(S^t,a_i)\bigr)^2
\big]\!=\!\mathcal V_i^{N_t(S^t)}(S^t)
\le \sigma^2$. Moreover, by the global-time reindexing, $\sum_{(s,k):\tau_k(s)<T}d_\mu^k(s) =\sum_{t<T}d_\mu^{\pi^t}(S^t)
\le T$. Using these relations together with \eqref{eq:sum-d-e-t} in the above inequality and dividing both sides by $c_1\omega$, we obtain 
\begin{align}\label{eq:online-weighted-tracking-preliminary}
&\sum_i\sum_{(s,k):\tau_k(s)<T}
d_\mu^k(s)\mathcal E_i^{k-1}(s)\cr
&\quad\le
c\frac{nM^2}{\omega}
+c\frac{L_dn^{3/2}M^2}{\omega}\sqrt{T\mathfrak D_T}
+cnL_{\widehat{A}}^2T
+cn\omega\sigma^2T\cr
&\qquad+
\frac{c}{\omega^2}
\sum_i\sum_{(s,k):\tau_k(s)<T}
d_\mu^k(s)
\sum_{a_i}\pi_i^k(a_i\mid s)
\bigl(\Delta A_i^k(s,a_i)\bigr)^2\cr
&\qquad+
\frac{c}{\omega}
\sum_i\sum_{t<T}
d_\mu^{\pi^t}(S^t)
Y_i^{N_t(S^t)}(S^t)
+
\frac{c}{\omega}
\sum_i\sum_{t<T}
d_\mu^{\pi^t}(S^t)
V_i^{N_t(S^t)}(S^t).
\end{align}

It remains to control the target-drift term and then substitute the martingale bounds from Step~1.

\subsubsection*{Step 3: target drift on the bounded-intervisit event}
Assume now $\mathcal H_T(H_T)$. For $k\ge1$, by the definition of
$\Delta A_i^k(s)$ and the shorthand
$\bar A_i^k(s,\cdot)
=\bar A_i^{\pi^{\tau_k(s)}}(s,\cdot)$, telescoping \eqref{eq:online-advantage-sensitivity} over the global updates between two
successive visits to $s$ gives
\begin{align*}
\|\Delta A_i^k(s)\|_\infty
&=
\|\bar A_i^{k-1}(s,\cdot)
-\bar A_i^{k}(s,\cdot)\|_\infty\le
\sum_{t=\tau_{k-1}(s)}^{\tau_k(s)-1}
\left\|
\bar A_i^{\pi^{t+1}}(s,\cdot)
-\bar A_i^{\pi^t}(s,\cdot)
\right\|_\infty\\
&\le
L_{\bar A}\sum_{t=\tau_{k-1}(s)}^{\tau_k(s)-1}
\sum_j\max_x
\|\pi_j^{t+1}(\cdot\mid x)-\pi_j^t(\cdot\mid x)\|_{\mathrm{TV}}\\
&=
L_{\bar A}\sum_{t=\tau_{k-1}(s)}^{\tau_k(s)-1}
\sum_j
\|\pi_j^{t+1}(\cdot\mid S^t)-\pi_j^t(\cdot\mid S^t)\|_{\mathrm{TV}},
\end{align*}
where the final equality holds because at
each global time $t$, $\pi^t$ and $\pi^{t+1}$ differ only at the visited
state $S^t$. Since the interval length is at most $H_T$, Cauchy--Schwarz yields
\begin{align}\label{eq:online-target-drift-cs}
\!\!\sum_i\sum_{a_i}\pi_i^k(a_i\mid s)(\Delta A_i^k(s,a_i))^2
&\le nL_{\bar A}^2 H_T
\!\!\sum_{t=\tau_{k-1}(s)}^{\tau_k(s)-1}
\!\bigg(\sum_j\|\pi_j^{t+1}(\cdot\mid S^t)-
\pi_j^t(\cdot\mid S^t)\|_{\mathrm{TV}}\bigg)^2\!.
\end{align}
Multiplying \eqref{eq:online-target-drift-cs} by $d_\mu^k(s)$ and summing
over all pairs $(s,k)$ with $k\ge1$ and $\tau_k(s)<T$, and recalling that $\Delta A_i^0(s)=0$, we have
\begin{align}\label{eq:advantage-drift-bound}
&\sum_{(s,k):\tau_k(s)<T}
d_\mu^k(s)
\sum_i\sum_{a_i}\pi_i^k(a_i\mid s)
\bigl(\Delta A_i^k(s,a_i)\bigr)^2
\cr
&\quad\le
nL_{\bar A}^2 H_T
\!\!\!\!\!\sum_{(s,k):k\ge1,\tau_k(s)<T}
\ \sum_{t=\tau_{k-1}(s)}^{\tau_k(s)-1}
d_\mu^k(s)\bigg(\!
\sum_j
\|\pi_j^{t+1}(\cdot\mid S^t)
-\pi_j^t(\cdot\mid S^t)\|_{\mathrm{TV}}
\!\bigg)^2
\cr
&\quad=
nL_{\bar A}^2 H_T
\!\!\!\!\!\sum_{(s,k):k\ge1,\tau_k(s)<T}\ 
\sum_{t<T}
\mathbf 1\{\tau_{k-1}(s)\le t<\tau_k(s)\}
d_\mu^k(s)\bigg(\!
\sum_j
\|\pi_j^{t+1}(\cdot\mid S^t)
-\pi_j^t(\cdot\mid S^t)\|_{\mathrm{TV}}
\!\bigg)^2
\cr
&\quad=
nL_{\bar A}^2 H_T
\sum_{t<T}
\bigg(
\sum_j
\|\pi_j^{t+1}(\cdot\mid S^t)
-\pi_j^t(\cdot\mid S^t)\|_{\mathrm{TV}}
\bigg)^2
\sum_{\substack{(s,k):\,k\ge1,\tau_{k-1}(s)\le t<\tau_k(s)<T}}
d_\mu^k(s)
\cr
&\quad\le
nL_{\bar A}^2 H_T
\sum_{t<T}
\bigg(
\sum_j
\|\pi_j^{t+1}(\cdot\mid S^t)
-\pi_j^t(\cdot\mid S^t)\|_{\mathrm{TV}}
\bigg)^2
\sum_{s:\,\tau_{N_{t+1}(s)}(s)<T}
d_\mu^{N_{t+1}(s)}(s)\cr
&\quad\leq \frac{n^3L_{\bar A}^2 H_T M^2\eta^2}{(1-\gamma)^2}\sum_{t<T} \ 
\sum_{s:\,\tau_{N_{t+1}(s)}(s)<T}
d_\mu^{N_{t+1}(s)}(s)
\end{align}
Here, the first equality
simply rewrites the intervisit sum as a global-time sum using the indicator
of the event $\tau_{k-1}(s)\le t<\tau_k(s)$. The second equality then
switches the order of summation. The second inequality is obtained by noting that for each fixed $t$ and $s$, there is at
most one intervisit interval containing $t$, and its ending visit has local
index $k=N_{t+1}(s)$, and then enlarging the sum to include the case $N_{t+1}(s)=0$. Finally, the last inequality is obtained by using \eqref{eq:policy-drift-tracking}.  

Since
$d_\mu^{N_{t+1}(s)}(s)
:=d_\mu^{\pi^{\tau_{N_{t+1}(s)}(s)}}(s)$
denotes the occupancy of state $s$ under the global policy at its first
visit strictly after time $t$ (i.e., at the global time $\tau_{N_{t+1}(s)}(s)$), on $\mathcal H_T(H_T)$ every such visit
occurring before $T$ is within $H_T$ steps of $t$. Hence, telescoping from
time $t$ to these next visits and then enlarging the state sums gives
\begin{align*}
\sum_{s:\,\tau_{N_{t+1}(s)}(s)<T}
d_\mu^{N_{t+1}(s)}(s)
&=
\sum_{s:\,\tau_{N_{t+1}(s)}(s)<T}
\Big[
d_\mu^{\pi^t}(s)
+
\sum_{r=t}^{\tau_{N_{t+1}(s)}(s)-1}
\left(
d_\mu^{\pi^{r+1}}(s)-d_\mu^{\pi^r}(s)
\right)
\Big]\cr
&\le
\sum_{s:\,\tau_{N_{t+1}(s)}(s)<T}
\Big[
d_\mu^{\pi^t}(s)
+
\sum_{r=t}^{\tau_{N_{t+1}(s)}(s)-1}
\left|
d_\mu^{\pi^{r+1}}(s)-d_\mu^{\pi^r}(s)
\right|
\Big]\cr
&\le
\sum_s d_\mu^{\pi^t}(s)
+
\sum_{r=t}^{\min\{T-1,t+H_T-1\}}
\sum_s
\left|
d_\mu^{\pi^{r+1}}(s)-d_\mu^{\pi^r}(s)
\right|\cr
&=
1+
2\sum_{r=t}^{\min\{T-1,t+H_T-1\}}
\left\|
d_\mu^{\pi^{r+1}}-d_\mu^{\pi^r}
\right\|_{\mathrm{TV}}.
\end{align*}
Substituting the above relation into \eqref{eq:advantage-drift-bound}, we obtain
\begin{align}\label{eq:online-drift-charging}
&\sum_{(s,k):\tau_k(s)<T}d_\mu^k(s)
\sum_i\sum_{a_i}\pi_i^k(a_i\mid s)(\Delta A_i^k(s,a_i))^2\cr
&\quad\qquad\qquad\le \frac{n^3L_{\bar A}^2H_T\eta^2M^2}{(1-\gamma)^2}T+c\frac{n^3L_{\bar A}^2H_T\eta^2M^2}{(1-\gamma)^2} \sum_{t<T}\sum_{r=t}^{\min\{T-1, t+H_T-1\}}
\left\|
d_\mu^{\pi^{r+1}}-d_\mu^{\pi^r}
\right\|_{\mathrm{TV}}\cr
&\quad\qquad\qquad\le \frac{n^3L_{\bar A}^2H_T\eta^2M^2}{(1-\gamma)^2}T+c\frac{n^3L_{\bar A}^2H^2_T\eta^2M^2}{(1-\gamma)^2} \sum_{r<T}\left\|
d_\mu^{\pi^{r+1}}-d_\mu^{\pi^r}
\right\|_{\mathrm{TV}}
\cr
&\quad\qquad\qquad\le
\frac{n^3L_{\bar A}^2H_T\eta^2M^2}{(1-\gamma)^2}T
+c\frac{n^{7/2}L_{\bar A}^2L_d H_T^2\eta^2M^2}
{(1-\gamma)^2}\sqrt{T\mathfrak D_T}
\end{align}
where the second inequality is obtained by switching the order of summation and noting that each global time $r$ belongs to at most $H_T$ windowed occupancy-drift terms, while the last inequality follows from relation \eqref{eq:online-total-occupancy-drift}.

\subsubsection*{Step 4: summation and passage from lagged to current errors}

On the event $\mathcal T_T\cap\mathcal H_T(H_T)$, substitute \eqref{eq:online-global-Y-V-Freedman} and \eqref{eq:online-drift-charging} 
into the preliminary bound
\eqref{eq:online-weighted-tracking-preliminary}. This gives
\begin{align}\label{eq:online-lagged-tracking-closed}
&\sum_{t<T}d_\mu^{\pi^t}(S^t)\sum_i
\mathcal E_i^{N_t(S^t)-1}(S^t)
\notag\\
&\quad\le c\Bigg[
\frac{nM^2}{\omega}
+n\omega\sigma^2T
+nL_{\widehat{A}}^2T
+nM\sigma\sqrt{T\Lambda_T}
+\frac{nM\sqrt{A_{\max}}\,\Lambda_T}{\sqrt\zeta}
+n\omega \sigma\sqrt{\frac{A_{\max}T\Lambda_T}{\zeta}}\cr
&\qquad
+\frac{n\omega A_{\max}\Lambda_T}{\zeta}+\frac{n^3L_{\bar A}^2H_T\eta^2M^2}
{(1-\gamma)^2\omega^2}T
+\Big(\frac{L_dn^{3/2}M^2}{\omega}+\frac{n^{7/2}L_{\bar A}^2L_d H_T^2\eta^2M^2}
{(1-\gamma)^2\omega^2}\Big)
\sqrt{T\mathfrak D_T}
\Bigg].
\end{align}
Here, the global-time representation on the left follows from the
one-to-one correspondence between $(s,k)$ with $\tau_k(s)<T$ and global
times $t<T$, with the convention $\mathcal E_i^{-1}(s)$ at a first visit.

It remains to pass from the lagged to the current tracking errors. For
brevity, let
\[
L_T:=
\sum_{t<T}d_\mu^{\pi^t}(S^t)\sum_i
\mathcal E_i^{N_t(S^t)-1}(S^t),
\]
and let $R_T$ denote the sum of all the remainder terms obtained by
multiplying \eqref{eq:online-one-step-tracking-energy} by $d_\mu^k(s)$ at
each actual visit $t=\tau_k(s)$ and summing over all visits and players.
Then
\begin{equation}\label{eq:online-current-from-lagged}
\mathfrak E_T
\le
(1-c_1\omega)L_T+R_T,
\end{equation}
because the left-hand side reindexes as the current weighted tracking error
$\mathfrak E_T$, whereas the contraction term reindexes as the lagged
weighted tracking error $L_T$. The estimates in Steps~1--3 control the terms in $R_T$ before division by
the contraction factor $c_1\omega$. More precisely, the oracle-bias,
conditional-variance, target-drift, and martingale terms in $R_T$ are,
up to universal numerical constants, respectively $\omega$ times their
corresponding terms on the right-hand side of
\eqref{eq:online-lagged-tracking-closed}. The occupancy-drift and initial
boundary terms appearing in \eqref{eq:online-lagged-tracking-closed} arise
from the weighted telescoping term and hence do not appear in $R_T$.
Therefore, since $\omega\le1/2$, the bound on $L_T$ in
\eqref{eq:online-lagged-tracking-closed}, together with
\eqref{eq:online-current-from-lagged}, implies that
$\mathfrak E_T$ is bounded, up to a universal numerical constant, by the
right-hand side of \eqref{eq:online-lagged-tracking-closed}. Dividing by
$T$ and collecting terms proves the desired bound
\eqref{eq:online-E-closed}.
\end{proof}

\subsection{Proof of Lemma \ref{lem:online-coupling-transfer}}\label{app:coupling-lemma}
\begin{proof}
We first establish the bounded-intervisit event. At any global time
$\ell$, setting $x=S^\ell$ and $k=N_\ell(x)$, the only statewise policy
that changes from $\pi^\ell$ to $\pi^{\ell+1}$ is the policy stored at
$x$, and this change is precisely the local update
$\pi_i^k(\cdot\mid x)\to\pi_i^{k+1}(\cdot\mid x)$ for every $i$. Hence, by
Corollary~\ref{cor:local-kl-projected-geometry},
\begin{equation}\label{eq:online-actual-policy-movement}
\sum_{i=1}^n
\left\|
\pi_i^{N_\ell(S^\ell)+1}(\cdot\mid S^\ell)
-
\pi_i^{N_\ell(S^\ell)}(\cdot\mid S^\ell)
\right\|_{\mathrm{TV}}
\le \frac{n\eta M}{1-\gamma}.
\end{equation}

If the policy is fixed, the state process evolves as a time-homogeneous
Markov chain. Thus, under Assumption~\ref{ass:coverage}, we have a
uniform lower bound $p_{\min}$ on the probability of visiting any
prescribed state $s$ within $H_{\rm cov}$ steps, which in turn provides
control of the intervisit times to each state. However, the policies
generated by Algorithm~\ref{alg:fully-online-general-occupancy} are
time-varying, so the resulting state process is a
\emph{time-inhomogeneous} Markov chain, and the coverage assumption
cannot be applied directly. To transfer the coverage guarantee to the
actual dynamics, we couple the state process over each
$H_{\rm cov}$-step window with an auxiliary time-homogeneous process
whose policy is frozen at the beginning of the window. Since the policy changes slowly by only $O(\eta)$ at each iteration, the
transition sensitivity assumption ensures that the two processes remain close over the entire
window. Consequently, the probability of visiting any target state within
the window under the actual dynamics remains close to that under the
auxiliary dynamics. Repeating this argument over successive windows yields
the desired high-probability control of the intervisit times.

More formally, fix a global time $t$ and condition on $\mathcal F_t$.
Couple the actual trajectory during the next $H_{\rm cov}$ steps with an
auxiliary trajectory that starts from the same state $S^t$ but evolves
under the policy profile $\pi^t$ frozen at time $t$. The coupling is constructed recursively as follows. Whenever the two trajectories have followed the same state sequence through time $t+r$ and their common state at that time is $x$, we maximally couple their next-state distributions. Conditional on $\mathcal F_{t+r}$ and on the event that the frozen-policy trajectory has matched the actual trajectory through time $t+r$, with common state $x$, the next-state distributions of the actual and frozen-policy trajectories are $\bar P^{\pi^{t+r}}(\cdot\mid x)$ and $\bar P^{\pi^t}(\cdot\mid x)$, respectively. Hence, maximal coupling gives a conditional probability of separation at the next step equal to $\|
\bar P^{\pi^{t+r}}(\cdot\mid x)
-
\bar P^{\pi^t}(\cdot\mid x)\|_{\mathrm{TV}}
$. Using \eqref{eq:online-localized-transition-sensitivity} from Lemma \ref{lemm:occupancy-sensitivity},
\begin{equation}\label{eq:online-coupled-transition-tv}
\left\|
\bar P^{\pi^{t+r}}(\cdot\mid x)
-
\bar P^{\pi^t}(\cdot\mid x)
\right\|_{\mathrm{TV}}
\le \delta_P
\sum_{i=1}^n
\left\|
\pi_i^{t+r}(\cdot\mid x)
-
\pi_i^t(\cdot\mid x)
\right\|_{\mathrm{TV}}.
\end{equation}

Moreover, the policy stored at state $x$ changes only when the actual
trajectory visits $x$. Therefore, telescoping over the successive
updates made at $x$ between global times $t$ and $t+r-1$ and using the
triangle inequality yield
\begin{align}
\left\|
\pi_i^{t+r}(\cdot\mid x)
-
\pi_i^t(\cdot\mid x)
\right\|_{\mathrm{TV}}
&\le
\sum_{\ell=t}^{t+r-1}
\mathbf 1\{S^\ell=x\}
\left\|
\pi_i^{N_\ell(x)+1}(\cdot\mid x)
-
\pi_i^{N_\ell(x)}(\cdot\mid x)
\right\|_{\mathrm{TV}}.
\label{eq:online-coupled-policy-telescope}
\end{align}
Combining \eqref{eq:online-coupled-transition-tv}, \eqref{eq:online-coupled-policy-telescope} with
\eqref{eq:online-actual-policy-movement}, the conditional probability
of separation at the next step, given that the two trajectories have
followed the same state sequence through time $t+r$, is bounded by
\begin{align}
\left\|
\bar P^{\pi^{t+r}}(\cdot\mid x)
-
\bar P^{\pi^t}(\cdot\mid x)
\right\|_{\mathrm{TV}}
&\le
\delta_P
\sum_{\ell=t}^{t+r-1}
\mathbf 1\{S^\ell=x\}
\sum_{i=1}^n
\left\|
\pi_i^{N_\ell(x)+1}(\cdot\mid x)
-
\pi_i^{N_\ell(x)}(\cdot\mid x)
\right\|_{\mathrm{TV}}
\notag\\
&\le\delta_P\frac{n\eta M}{1-\gamma}
\sum_{\ell=t}^{t+r-1}
\mathbf 1\{S^\ell=x\}\cr 
&\le\frac{n\eta M\delta_P}{1-\gamma}r
\nonumber
\end{align}
Therefore, summing over the possible first separation times and using
the tower property together with the maximal-coupling identity, we can write
\begin{align}
&
\mathbb P_t
\Big(
\text{the actual and frozen trajectories separate within the next }
H_{\mathrm{cov}}
\text{ steps}
\Big)\cr
&=
\sum_{r=0}^{H_{\mathrm{cov}}-1}
\mathbb P_t
\Bigg(
\begin{array}{c}
\text{the two trajectories have followed the same state sequence}\\
\text{through time }t+r\text{ and separate at time }t+r+1
\end{array}
\Bigg)\cr
&=
\sum_{r=0}^{H_{\mathrm{cov}}-1}
\!\mathbb E_t
\Bigg[
\mathbf 1
\left\{\!\!
\begin{array}{c}
\text{the two trajectories have followed the}\\
\text{same state sequence through time }t+r
\end{array}
\!\!\right\}
\left\|
\bar P^{\pi^{t+r}}(\cdot\mid S^{t+r})
-
\bar P^{\pi^t}(\cdot\mid S^{t+r})
\right\|_{\mathrm{TV}}
\Bigg]\cr
&\le\frac{n\eta M\delta_P}{1-\gamma}
\sum_{r=0}^{H_{\rm cov}-1}r
\le
\frac{n \eta M \delta_P}{1-\gamma}H_{\rm cov}^2
\le
\frac{p_{\min}}2,
\label{eq:online-coupling-separation-window}
\end{align}
where the last inequality follows directly from
\eqref{eq:online-coupling-condition}.

Under the frozen policy $\pi^t$, every prescribed state $s\in \mathcal{S}$ is visited
within $H_{\rm cov}$ steps with probability at least $p_{\min}$.
Therefore, by \eqref{eq:online-coupling-separation-window}, under the
actual trajectory every prescribed state $s\in \mathcal{S}$ is also visited within
the same window with conditional probability at least $p_{\min}/2$.
Since this estimate is conditional on the entire history at the beginning
of the window and is uniform in the current policy, the argument can be
restarted after each unsuccessful block, with the next intervisit block counting only visits strictly after its left endpoint. Thus, for every
$s\in\mathcal S$, $k\ge0$, and $m\ge1$,
\begin{equation}\label{eq:online-intervisit-geometric-tail}
\mathbb P_{\tau_k(s)}\Big(
\tau_{k+1}(s)-\tau_k(s)>mH_{\rm cov}
\Big)
\le
\left(1-\frac{p_{\min}}2\right)^m
\le
e^{-mp_{\min}/2}.
\end{equation}
Starting from the initial state and applying the same block argument also
gives
\[
\mathbb P\left(
\tau_0(s)>mH_{\rm cov}
\right)
\le
e^{-mp_{\min}/2}.
\]
The bound \eqref{eq:online-intervisit-geometric-tail} controls each
intervisit interval individually. Since at each global time $t<T$ the actual trajectory occupies exactly one state, each realized time before $t$ can serve as the left endpoint of at most one intervisit interval. Hence, there are at most \(T\) such intervisit intervals in total. Together with the \(|\mathcal S|\) first hitting times, a union bound gives
\[
\mathbb P\left(
\max_{s\in\mathcal S}\tau_0(s)>mH_{\rm cov}
\ \text{or}\
\max_{s,k:\tau_k(s)<T}
\{\tau_{k+1}(s)-\tau_k(s)\}>mH_{\rm cov}
\right)
\le
(|\mathcal S|+T)e^{-mp_{\min}/2}.
\]
Taking $m=\lceil
\frac{2}{p_{\min}}
\log(\frac{2(|\mathcal S|+T)}{\rho})\rceil$ and recalling $H_T=mH_{\rm cov}$ proves $\mathbb P\bigl(\mathcal H_T(H_T)\bigr)
\ge1-\frac{\rho}{2}$. In particular, on $\mathcal H_T(H_T)$, every state has been visited by
global time $H_T$, and hence $N_t(s)\ge1$ for every $s\in \mathcal{S}$ and $t\ge H_T+1$. We work on $\mathcal H_T(H_T)$ throughout the rest of the proof.

We next use a delayed charging argument to bound the quantities \eqref{eq:online-delayed-D-charging} and \eqref{eq:online-delayed-E-charging} that are summed over all states by the corresponding quantities $\mathfrak D_T$ and $\mathfrak E_T$, which are evaluated only along the
realized online trajectory. For every pair
$(s,t)\in\mathcal S\times[H_T+1,T-1]$, let $\ell=\tau_{N_t(s)-1}(s)$ be the most recent visit to $s$ strictly before time $t$. We charge the
$(s,t)$ term to this global time $\ell$ on the realized trajectory. More precisely, for each global time $\ell$, define the charging set
\[
\mathcal C_\ell
:=
\left\{
(s,t)\in\mathcal S\!\times\![H_T+1,T-1]: \ 
\tau_{N_t(s)-1}(s)=\ell
\right\}.
\]
Thus, $\mathcal C_\ell$ consists of the pairs $(s,t)$ whose most recent
visit to $s$ strictly before time $t$ occurs at global time $\ell$. By
construction, if $(s,t)\in\mathcal C_\ell$, then $s=S^\ell$ and
$N_t(s)-1=N_\ell(S^\ell)$. Therefore,\footnote{Here,
$d_\mu^{N_t(s)-1}(s)$ is shorthand for
$d_\mu^{\pi^{\tau_{N_t(s)-1}(s)}}(s)$, i.e., the occupancy weight of state
$s$ under the global policy profile at its most recent visit strictly
before time $t$.}
\begin{align}
&\sum_{t=H_T+1}^{T-1}\sum_{s\in\mathcal S}
d_\mu^{N_t(s)-1}(s)
\sum_i\mathcal D_i^{N_t(s)-1}(s)
\notag\\
&\quad=
\sum_{\ell=0}^{T-1}
\sum_{(s,t)\in\mathcal C_\ell}
d_\mu^{N_t(s)-1}(s)
\sum_i\mathcal D_i^{N_t(s)-1}(s)
\notag\\
&\quad=
\sum_{\ell=0}^{T-1}
\sum_{\substack{
t:\,H_T+1\le t<T,\\
N_t(S^\ell)-1=N_\ell(S^\ell)}}
d_\mu^{N_\ell(S^\ell)}(S^\ell)
\sum_i\mathcal D_i^{N_\ell(S^\ell)}(S^\ell)
\notag\\
&\quad=
\sum_{\ell=0}^{T-1}
\left|
\left\{
t:\,H_T+1\le t<T,\;
N_t(S^\ell)-1=N_\ell(S^\ell)
\right\}
\right|
d_\mu^{N_\ell(S^\ell)}(S^\ell)
\sum_i\mathcal D_i^{N_\ell(S^\ell)}(S^\ell)
\notag\\
&\quad\le
H_T\sum_{\ell=0}^{T-1}
d_\mu^{N_\ell(S^\ell)}(S^\ell)
\sum_i\mathcal D_i^{N_\ell(S^\ell)}(S^\ell)
=
H_T\mathfrak D_T.
\label{eq:online-global-D-charging-main}
\end{align}
The first equality partitions the original pairs $(s,t)$ according to
their unique charging time $\ell$. In the second equality,
$s=S^\ell$ and $N_t(s)-1=N_\ell(S^\ell)$ for every
$(s,t)\in\mathcal C_\ell$, so the state coordinate is fixed and the
summand is precisely the corresponding online quantity at time $\ell$.
The third equality simply counts the times $t$ charged to $\ell$. These
times lie in the intervisit interval
$\bigl(\tau_{N_\ell(S^\ell)}(S^\ell),
\tau_{N_\ell(S^\ell)+1}(S^\ell)\bigr]$, which contains at most $H_T$
global times on $\mathcal H_T(H_T)$. This gives the inequality, while the
last equality follows from the definition of $\mathfrak D_T$. 

Now, we can decompose \eqref{eq:online-delayed-D-charging} as
\begin{align}
\!\!\!\!\!\!\!\!\!\sum_{t=H_T+1}^{T-1}\sum_s
d_\mu^{\pi^t}(s)
\sum_i\mathcal D_i^{N_t(s)-1}(s)&=
\sum_{t=H_T+1}^{T-1}\sum_s
d_\mu^{N_t(s)-1}(s)
\sum_i\mathcal D_i^{N_t(s)-1}(s)\cr 
&+
\sum_{t=H_T+1}^{T-1}\sum_s
\left(
d_\mu^{\pi^t}(s)-d_\mu^{N_t(s)-1}(s)
\right)
\sum_i\mathcal D_i^{N_t(s)-1}(s).
\label{eq:online-global-D-decomposition}
\end{align}
It remains to control the transport of the occupancy weights. We can write
\begin{align}
\sum_{t=H_T+1}^{T-1}\sum_{s\in\mathcal S}
\left|
d_\mu^{\pi^t}(s)-d_\mu^{N_t(s)-1}(s)
\right|&=
\sum_{t=H_T+1}^{T-1}\sum_{s\in\mathcal S}
\left|
d_\mu^{\pi^t}(s)
-d_\mu^{\pi^{\tau_{N_t(s)-1}(s)}}(s)
\right|
\notag\\
&\quad\le
\sum_{t=H_T+1}^{T-1}\sum_{s\in\mathcal S}
\sum_{\ell=\tau_{N_t(s)-1}(s)}^{t-1}
\left|
d_\mu^{\pi^{\ell+1}}(s)-d_\mu^{\pi^\ell}(s)
\right|
\notag\\
&\quad\le
\sum_{t=H_T+1}^{T-1}
\sum_{\ell=(t-H_T)_+}^{t-1}
\sum_{s\in\mathcal S}
\left|
d_\mu^{\pi^{\ell+1}}(s)-d_\mu^{\pi^\ell}(s)
\right|
\notag\\
&\quad\le
2H_T\sum_{\ell=0}^{T-1}
\left\|
d_\mu^{\pi^{\ell+1}}-d_\mu^{\pi^\ell}
\right\|_{\mathrm{TV}}
\le
2\frac{L_d n\eta M}{1-\gamma}H_TT.
\label{eq:online-delayed-weight-transport}
\end{align}
The first inequality follows
by telescoping the occupancy difference from the most recent visit to
$s$ up to time $t$. On $\mathcal H_T(H_T)$,
$t-\tau_{N_t(s)-1}(s)\le H_T$, so the corresponding $\ell$-sum is
contained in $[(t-H_T)_+,t-1]$, which gives the second inequality.
After interchanging the order of summation, each global increment
$\ell$ can appear for at most $H_T$ values of $t$, while
$\sum_s|d_\mu^{\pi^{\ell+1}}(s)-d_\mu^{\pi^\ell}(s)|
=2\|d_\mu^{\pi^{\ell+1}}-d_\mu^{\pi^\ell}\|_{\rm TV}$, yielding the
third inequality. The final inequality follows from the occupancy-sensitivity bound \eqref{eq:online-localized-occupancy-sensitivity} and the policy-movement bound in part (iv) of Lemma~\ref{lem:kl-projected-local-geometry}.

Moreover, using \eqref{eq:projected-increment-upper} in view of Corollary~\ref{cor:local-kl-projected-geometry} gives
\begin{equation}\label{eq:online-uniform-D-bound}
\sum_{i=1}^n\mathcal D_i^{N_t(s)-1}(s)
\le \frac{n\eta^2M^2}{(1-\gamma)^2}.
\end{equation} 
By substituting \eqref{eq:online-global-D-charging-main}, \eqref{eq:online-delayed-weight-transport}, and \eqref{eq:online-uniform-D-bound} into \eqref{eq:online-global-D-decomposition}, we obtain the desired bound \eqref{eq:online-delayed-D-charging}. An identical argument as above by replacing $\mathcal{D}_i$ with $\mathcal{E}_i$ and using the uniform bound $\sum_{i=1}^n \mathcal E_i^{N_t(s)-1}(s)
\le 4nM^2$, which is due to clipping, we obtain \eqref{eq:online-delayed-E-charging}.

Finally, we control the stale advantage term. Telescoping the intervening global policy updates and using the triangle inequality, for every $x\in\mathcal S$, we can write
\begin{align}
&\sum_{j=1}^n
\left\|
\pi_j^t(\cdot\mid x)
-
\pi_j^{\tau_{N_t(s)-1}(s)}(\cdot\mid x)
\right\|_{\mathrm{TV}}\le
\sum_{\ell=\tau_{N_t(s)-1}(s)}^{t-1}
\sum_{j=1}^n
\left\|
\pi_j^{\ell+1}(\cdot\mid x)
-
\pi_j^\ell(\cdot\mid x)
\right\|_{\mathrm{TV}}
\notag\\
&\qquad=
\sum_{\ell=\tau_{N_t(s)-1}(s)}^{t-1}
\mathbf 1\{S^\ell=x\}
\sum_{j=1}^n
\left\|
\pi_j^{N_\ell(x)+1}(\cdot\mid x)
-
\pi_j^{N_\ell(x)}(\cdot\mid x)
\right\|_{\mathrm{TV}}\le
\frac{nH_T\eta M}{1-\gamma}.
\label{eq:online-stale-policy-distance}
\end{align}
where the equality follows because the policy at state $x$ changes at global
time $\ell$ only when $S^\ell=x$, in which case for every $j\in [n]$ the local policy is
updated from $\pi_j^{\ell}(\cdot\mid x)=\pi_j^{N_\ell(x)}(\cdot\mid x)$ to
$\pi_j^{\ell+1}(\cdot\mid x)=\pi_j^{N_\ell(x)+1}(\cdot\mid x)$, and the last inequality follows from
\eqref{eq:online-actual-policy-movement}, which bounds each local update
by $n\eta M/(1-\gamma)$, together with
$t-\tau_{N_t(s)-1}(s)\le H_T$ on $\mathcal H_T(H_T)$.

Applying the advantage-sensitivity bound
\eqref{eq:online-advantage-sensitivity} to policy profiles $\pi^t$ and
$\pi^{\tau_{N_t(s)-1}(s)}$, recalling the shorthand
$\bar A_i^{N_t(s)-1}
:=\bar A_i^{\pi^{\tau_{N_t(s)-1}(s)}}$, and using
\eqref{eq:online-stale-policy-distance}, we obtain, for every player $i$,
\begin{align}\nonumber
&\left\|
\bar A_i^{\pi^t}(s,\cdot)
-
\bar A_i^{N_t(s)-1}(s,\cdot)
\right\|_\infty=\left\|
\bar A_i^{\pi^t}(s,\cdot)
-
\bar{A}_i^{\pi^{\tau_{N_t(s)-1}(s)}}(s,\cdot)
\right\|_\infty\cr 
&\qquad\qquad\leq L_{\bar{A}} \max_{x\in \mathcal{S}}\sum_{j=1}^n
\left\|
\pi_j^t(\cdot\mid x)
-
\pi_j^{\tau_{N_t(s)-1}(s)}(\cdot\mid x)
\right\|_{\mathrm{TV}}\le \frac{L_{\bar A} nH_T\eta M}{1-\gamma}.
\end{align}
Squaring this inequality, multiplying by $d_\mu^{\pi^t}(s)$, summing over $s$ and $i$, and using $\sum_s d_\mu^{\pi^t}(s)=1$, we obtain the desired bound
\begin{align}\nonumber
\left\{
\sum_s d_\mu^{\pi^t}(s)
\sum_{i=1}^n
\left\|
\bar A_i^{\pi^t}(s,\cdot)
-
\bar A_i^{N_t(s)-1}(s,\cdot)
\right\|_\infty^2
\right\}^{1/2}\le
\frac{L_{\bar A} n^{3/2}H_T\eta M}{1-\gamma}.&&&&&\hfill\qedhere
\end{align}
\end{proof}

\subsection{Proof of Theorem \ref{thm:fully-online-general-ne-regret}}\label{app:online-final-proof}

\begin{proof}
Recall that Algorithm~\ref{alg:fully-online-general-occupancy} chooses
$M=1+L_d$. We first consider the nontrivial regime in which
the three caps in \eqref{eq:online-final-tuning} are inactive,
$u_T\le1$, and $H_T\le T$.\footnote{The complementary cases will be handled at the end of the proof.} Thus,
\begin{equation}\label{eq:online-final-tuning-active}
\eta
=
c_\eta
\frac{1-\gamma}{nL_{\bar A}}
H_T^{-1/2}u_T^{3/5},
\qquad
\omega
=
c_\omega H_T^{1/4}u_T^{2/5},
\qquad
\zeta
=
c_\zeta
\max\left\{
H_T^{1/2}u_T^{2/15},
L_{\widehat A}^{2/3}
\right\}.
\end{equation}

\medskip
\noindent\textbf{Feasibility of the preceding lemmas.}
By construction, $\omega,\zeta\le1/4$. Moreover,
\[
\frac{\eta M}{1-\gamma}
=
\frac{c_\eta(1+L_d)}{nL_{\bar A}}
H_T^{-1/2}u_T^{3/5}
\le
c_0\omega
\]
for sufficiently small $c_\eta/c_\omega$, since $H_T\ge1$,
$u_T\le1$, and $(1+L_d)/L_{\bar A}=(1-\gamma)/4\le1/4$. Also,
\begin{align*}
\frac{\eta n}{1-\gamma}
\bigg(
L_{\bar A}
+
L_d\Big(1+\frac{\gamma\delta_P}{1-\gamma}\Big)
\bigg)
&=
\frac{c_\eta H_T^{-1/2}u_T^{3/5}}
{L_{\bar A}}
\left(
L_{\bar A}+L_d(1+L_d)
\right).
\end{align*}
Since $\frac{L_d(1+L_d)}{L_{\bar A}}=
\frac{(1-\gamma)L_d}{4}=
\frac{\gamma\delta_P}{4}
\le\frac14$, the preceding quantity is at most $5c_\eta/4$, and hence is at most
$c_0$ for sufficiently small $c_\eta$. Finally, by the definition of
$\eta$, we have $\eta\le\eta_{\rm cov}
=\frac{p_{\min}(1-\gamma)}
{2\delta_P n M H_{\rm cov}^2}$. Therefore the conditions of
Lemmas~\ref{lem:online-tracking},
\ref{lem:online-potential-improvement}, and
\ref{lem:online-coupling-transfer} all hold. Consequently, Lemma~\ref{lem:online-coupling-transfer} gives
$\mathbb P(\mathcal H_T)\ge1-\rho/2$, while
Lemma~\ref{lem:online-tracking} gives
$\mathbb P(\mathcal T_T)\ge1-\rho/2$. Hence
$\mathbb P(\mathcal H_T\cap\mathcal T_T)\ge1-\rho$, and throughout the
remainder of this case we work on this intersection event.

\medskip
\noindent\textbf{Tracking bound and potential-based closure.} As shown in Lemma~\ref{lemm:conditional-variance-online},
$\mathcal V_i^k(s)\le A_{\max}$ for every $i,s,k$ such that
$\tau_k(s)<\infty$. Thus
Lemma~\ref{lem:online-tracking}, with $\sigma^2=A_{\max}$, gives
\begin{align}\label{eq:online-E-final-rate}
\frac{\mathfrak E_T}{T}
\le c\Bigg[
&\frac{nM^2}{\omega T}
+nA_{\max}\omega
+nL_{\widehat A}^2
+nM\sqrt{A_{\max} x_T}+\frac{nM\sqrt{A_{\max}}\,x_T}{\sqrt\zeta}
+
n\omega A_{\max}\sqrt{\frac{x_T}{\zeta}}\notag\\
&+
\frac{n\omega A_{\max}x_T}{\zeta}
+\frac{n^3L_{\bar A}^2H_T\eta^2M^2}
{(1-\gamma)^2\omega^2}
+
\Big(\frac{n^{3/2}M^2L_d}{\omega}+\frac{n^{7/2}L_{\bar A}^2H_T^2\eta^2M^2L_d}
{(1-\gamma)^2\omega^2}\Big)
\sqrt{\frac{\mathfrak D_T}{T}}
\Bigg]\cr 
&:=\theta_1+\theta_2\sqrt{\frac{\mathfrak D_T}{T}},
\end{align}
where $\theta_1$ denotes the large additive term on the right-hand side, and $\theta_2$ denotes the coefficient of $(\mathfrak D_T/T)^{1/2}$. Moreover, Lemma~\ref{lem:online-potential-improvement} yields
\begin{equation}\label{eq:online-D-final-rate}
\frac{\mathfrak D_T}{T}
\le
c\left[
\frac{n\eta}{(1-\gamma)T}
+
n\eta\alpha
+
\frac{\eta^2}{(1-\gamma)^2}
\frac{\mathfrak E_T}{T}
\right].
\end{equation}
Substituting \eqref{eq:online-D-final-rate} into
\eqref{eq:online-E-final-rate}, and using
$\sqrt{x+y+z}\le\sqrt x+\sqrt y+\sqrt z$, we first obtain
\begin{align}
\frac{\mathfrak E_T}{T}
&\le
\theta_1
+
c\theta_2
\left(
\sqrt{\frac{n\eta}{(1-\gamma)T}}
+
\sqrt{n\eta\alpha}
+
\frac{\eta}{1-\gamma}
\sqrt{\frac{\mathfrak E_T}{T}}
\right)
\notag\\
&\le
\theta_1
+
c\theta_2
\left(
\sqrt{\frac{n\eta}{(1-\gamma)T}}
+
\sqrt{n\eta\alpha}
\right)
+
c\frac{\theta_2^2\eta^2}{(1-\gamma)^2}
+
\frac12\frac{\mathfrak E_T}{T},
\end{align}
where the second inequality follows from Young's inequality. Absorbing the
term $\frac12\mathfrak E_T/T$ into the left-hand side and adjusting the
universal constant $c$, we obtain
\begin{align}\label{eq:C_T-theta-eta}
\frac{\mathfrak E_T}{T}
\le
c\left[
\theta_1
+
\theta_2
\left(
\sqrt{\frac{n\eta}{(1-\gamma)T}}
+
\sqrt{n\eta\alpha}
\right)
+
\frac{\theta_2^2\eta^2}{(1-\gamma)^2}
\right].
\end{align}
By the definition of $\theta_2$ and $(a+b)^2\le 2a^2+2b^2$,
\begin{align}\label{eq:theta_eta}
\frac{\theta_2^2\eta^2}{(1-\gamma)^2}
\le
c\left[
\frac{n^3M^4L_d^2\eta^2}
{(1-\gamma)^2\omega^2}
+
\frac{n^7L_{\bar A}^4H_T^4\eta^6M^4L_d^2}
{(1-\gamma)^6\omega^4}
\right].
\end{align}
Therefore, substituting \eqref{eq:theta_eta} into \eqref{eq:C_T-theta-eta} and expanding $\theta_1$ and $\theta_2$, we obtain
\begin{align}\label{eq:online-E-final-rate-closed}
\frac{\mathfrak E_T}{T}
\le c\Bigg[
&\frac{nM^2}{\omega T}
+nA_{\max}\omega
+nL_{\widehat A}^2
+nM\sqrt{A_{\max} x_T}
+\frac{nM\sqrt{A_{\max}}\,x_T}{\sqrt\zeta}
+
n\omega A_{\max}\sqrt{\frac{x_T}{\zeta}}
\notag\\
&+
\frac{n\omega A_{\max}x_T}{\zeta}
+\frac{n^3L_{\bar A}^2H_T\eta^2M^2}
{(1-\gamma)^2\omega^2}+\Big(\frac{n^2M^2L_d}{\omega}+\frac{n^4L_{\bar A}^2H_T^2\eta^2M^2L_d}
{(1-\gamma)^2\omega^2}\Big)
\sqrt{\frac{\eta}{(1-\gamma)T}}
\notag\\
&+\Big(\frac{n^2M^2L_d}{\omega}+\frac{n^4L_{\bar A}^2H_T^2\eta^2M^2L_d}
{(1-\gamma)^2\omega^2}\Big)
\sqrt{\eta\alpha}+\frac{n^3M^4L_d^2\eta^2}
{(1-\gamma)^2\omega^2}
+
\frac{n^7L_{\bar A}^4L_d^2H_T^4\eta^6M^4}
{(1-\gamma)^6\omega^4}
\Bigg].
\end{align}

Using $M=1+L_d$,
$1/T\le x_T\le u_T\le1$, $H_T\ge1$,
$L_{\bar A}=4(1+L_d)/(1-\gamma)$, and $(1-\gamma)L_d\le1$,
we substitute the chosen parameters \eqref{eq:online-final-tuning-active} term by term into
\eqref{eq:online-E-final-rate-closed}. Then, the largest powers
of $H_T$ arise from the terms containing $H_T$ explicitly. Up to
universal constants, their parameter-dependent factors satisfy
\[
\begin{aligned}
H_T\frac{\eta^2}{\omega^2}
&=O\left(\frac{(1-\gamma)^2}{n^2L_{\bar A}^2}
H_T^{-1/2}u_T^{2/5}\right),\qquad
&H_T^4\frac{\eta^6}{\omega^4}
&=O\left(\frac{(1-\gamma)^6}{n^6L_{\bar A}^6}u_T^2\right),\\
H_T^2\frac{\eta^2}{\omega^2}
\sqrt{\frac{\eta}{(1-\gamma)T}}
&=O\left(\frac{(1-\gamma)^2}{n^{5/2}L_{\bar A}^{5/2}}
H_T^{1/4}u_T^{6/5}\right),\qquad
&H_T^2\frac{\eta^2}{\omega^2}\sqrt{\eta\alpha}
&=O\left(\frac{(1-\gamma)^{5/2}}{n^{5/2}L_{\bar A}^{5/2}}
H_T^{1/4}u_T^{6/5}\right).
\end{aligned}
\]
Since $H_T\ge1$ and $u_T\le1$, all these terms are controlled by the
target scale $H_T^{1/4}u_T^{2/5}$ after accounting for their
coefficients in \eqref{eq:online-E-final-rate-closed}. The remaining
terms have no larger $H_T$-dependence. Using again $M=1+L_d$,
$L_{\bar A}=4(1+L_d)/(1-\gamma)$, and $(1-\gamma)L_d\le1$, their
coefficients are dominated by $nA_{\max}$ or
$n^{3/2}(1+L_d)^2$. Consequently,
\begin{equation}\label{eq:online-E-combined-rate}
\frac{\mathfrak E_T}{T}
\le
c\left[
\left(nA_{\max}+n^{3/2}(1+L_d)^2\right)
H_T^{1/4}u_T^{2/5}
+nL_{\widehat A}^2
\right].
\end{equation}

\medskip
\noindent\textbf{NE-gap certificate.}
For $t\ge H_T+1$, every state has already been visited on
$\mathcal H_T$, so Lemma~\ref{lem:online-ne-gap} applies. Averaging its
bound, applying Jensen's inequality to the square-root terms, and then
using Lemma~\ref{lem:online-coupling-transfer} gives
\begin{align}\label{eq:online-regret-master}
\frac1T\sum_{t=0}^{T-1}&\operatorname{Gap}(\pi^t)
\le{}
\frac{c(\zeta+L_d)(1+L_d)}{1-\gamma}
+
\frac{c}{\eta}\sqrt{\frac{A_{\max}}{\zeta}}
\left(
H_T\frac{\mathfrak D_T}{T}
+
2\frac{L_d n^2M^3}{(1-\gamma)^3}
H_T\eta^3
\right)^{1/2}
\notag\\
&+
\frac{c}{1-\gamma}\sqrt{\frac{A_{\max}}{\zeta}}
\left(
H_T\frac{\mathfrak E_T}{T}
+
8\frac{L_d n^2M^3}{1-\gamma}
H_T\eta
\right)^{1/2}
+
\frac{L_{\bar A}n^{3/2}H_T\eta M}{(1-\gamma)^2}
+
\frac{H_T+1}{T}.
\end{align}

Substituting \eqref{eq:online-D-final-rate} and
\eqref{eq:online-E-combined-rate} into
\eqref{eq:online-regret-master}, and then using
$\sqrt{x+y}\leq\sqrt{x}+\sqrt{y}$ for each resulting square-root term gives\footnote{We note that the contribution generated by the $\mathfrak E_T/T$ term
in \eqref{eq:online-D-final-rate} is $\frac{c}{1-\gamma}
\sqrt{A_{\max}/\zeta}
\sqrt{H_T\mathfrak E_T/T}$, and hence can be absorbed into the same term already present in
\eqref{eq:online-regret-master}.}
\begin{align}
\label{eq:online-regret-expanded-final}
\frac1T\sum_{t=0}^{T-1}&\operatorname{Gap}(\pi^t)
\le{}
\frac{c(\zeta+L_d)(1+L_d)}{1-\gamma}+
\frac{c}{\eta}\sqrt{\frac{A_{\max}}{\zeta}}
\left(
\sqrt{\frac{H_Tn\eta}{(1-\gamma)T}}
+
\sqrt{H_Tn\eta\alpha}
\right)
\notag\\
&+
\frac{c}{\eta}\sqrt{\frac{A_{\max}}{\zeta}}
\left(
\frac{L_dn^2M^3H_T\eta^3}{(1-\gamma)^3}
\right)^{1/2}+
\frac{c\sqrt{A_{\max}}}{(1-\gamma)\sqrt{\zeta}}\,
H_T^{5/8}u_T^{1/5}
\sqrt{
nA_{\max}
+
n^{3/2}(1+L_d)^2
}
\notag\\
&+
\frac{c\sqrt{nA_{\max}H_T}}
{(1-\gamma)\sqrt{\zeta}}\,
L_{\widehat A}+
\frac{c}{1-\gamma}\sqrt{\frac{A_{\max}}{\zeta}}
\left(
\frac{L_dn^2M^3H_T\eta}{1-\gamma}
\right)^{1/2}+
\frac{cL_{\bar A}n^{3/2}H_T\eta M}
{(1-\gamma)^2}
+
\frac{H_T+1}{T}.
\end{align}
Finally, we bound each term in \eqref{eq:online-regret-expanded-final} using our parameter choices. By the choices of $\eta$, $\zeta$, and $u_T$,
\[
\zeta\ge cH_T^{1/2}u_T^{2/15},
\qquad
\zeta\ge cL_{\widehat A}^{2/3},
\qquad
\frac{1}{(1-\gamma)T}
\le\frac{x_T}{1-\gamma}\le u_T,
\qquad
\alpha\le u_T.
\]
First, using
$\zeta\ge cH_T^{1/2}u_T^{2/15}$ gives
\begin{align*}
&\frac{c\sqrt{A_{\max}}}{(1-\gamma)\sqrt{\zeta}}\,
H_T^{5/8}u_T^{1/5}
\sqrt{nA_{\max}+n^{3/2}(1+L_d)^2}
\\
&\qquad\le
\frac{c}{1-\gamma}
\left(
\sqrt n\,A_{\max}
+
n(1+L_d)\sqrt{A_{\max}}
\right)
\sqrt{H_T}\,u_T^{2/15},
\end{align*}
whereas $\zeta\ge cL_{\widehat A}^{2/3}$ gives
\[
\frac{c\sqrt{nA_{\max}H_T}}
{(1-\gamma)\sqrt{\zeta}}\,
L_{\widehat A}
\le
\frac{c\sqrt{nA_{\max}}}{1-\gamma}
\sqrt{H_T}\,L_{\widehat A}^{2/3}.
\]

For the next two terms, the definitions of $\eta$ and $u_T$, together
with the lower bound on $\zeta$, give
\[
\frac{c}{\eta}\sqrt{\frac{A_{\max}}{\zeta}}
\left(
\sqrt{\frac{H_Tn\eta}{(1-\gamma)T}}
+
\sqrt{H_Tn\eta\alpha}
\right)
\le
\frac{cn\sqrt{A_{\max}(1+L_d)}}{1-\gamma}
\sqrt{H_T}\,u_T^{2/15}.
\]
Similarly, the two terms involving $L_d$ in \eqref{eq:online-regret-expanded-final} are identical after simplification and satisfy
\[
\frac{cn\sqrt{L_dM^3A_{\max}H_T\eta}}{(1-\gamma)^{3/2}\sqrt{\zeta}}\le \frac{cn(1+L_d)\sqrt{A_{\max}H_T}}{1-\gamma}u_T^{2/15}.
\]
where we used $(1-\gamma)L_d\le1$, $H_T\ge1$, and $u_T\le1$. The drift term is bounded in the same way:
\[
\frac{cL_{\bar A}n^{3/2}H_T\eta M}{(1-\gamma)^2}
\le
\frac{c\sqrt n(1+L_d)}{1-\gamma}
\sqrt{H_T}\,u_T^{2/15}.
\]
Moreover, the choice of $\zeta$ gives
\[
\frac{c\zeta(1+L_d)}{1-\gamma}
\le
\frac{c(1+L_d)}{1-\gamma}
\sqrt{H_T}
\left(
u_T^{2/15}+L_{\widehat A}^{2/3}
\right),
\]
while the remaining part of the first term in
\eqref{eq:online-regret-expanded-final} is $\frac{cL_d(1+L_d)}{1-\gamma}$. Finally, since $1\le H_T\le T$ and $\Lambda_T\ge1$, $(H_T+1)/T
\le
2\sqrt{H_T}/\sqrt T
\le
2\sqrt{H_T}\,x_T^{2/15}$, so the burn-in term is absorbed into the $x_T^{2/15}$.

Substituting the preceding estimates into \eqref{eq:online-regret-expanded-final} while retaining only the dominant terms yields
\begin{align}\label{eq:online-before-final}
\frac1T\sum_{t=0}^{T-1}\operatorname{Gap}(\pi^t)
\le{}&
\widetilde O\left(
\left[
\frac{\sqrt n\,A_{\max}}{1-\gamma}
+
\frac{n(1+L_d)\sqrt{A_{\max}}}{1-\gamma}
\right]
\sqrt{H_T}\,u_T^{2/15}
\right)
\notag\\
&+
\widetilde O\left(
\frac{1+L_d+\sqrt{nA_{\max}}}{1-\gamma}
\sqrt{H_T}\,L_{\widehat A}^{2/3}
\right)
+
\frac{cL_d(1+L_d)}{1-\gamma}.
\end{align}
This, together with $u_T\!=\!\max\{\frac{x_T}{1-\gamma},\alpha\}\!\leq\! \frac{x_T}{1-\gamma}+\alpha$ and $\sqrt{H_T}\!=\!
\widetilde O\big(\!\sqrt{\frac{H_{\rm cov}}{p_{\min}}}\big)$ proves the desired bound.

\medskip
\noindent\textbf{Capped and trivial regimes.}
It remains to consider the cases excluded at the beginning of the proof. If $H_T>T$ or $u_T>1$, then, since $u_T\ge1/T$ and $H_T\ge1$, respectively, we have $\sqrt{H_T}u_T^{2/15}\ge1$, so the right-hand side of \eqref{eq:online-before-final} dominates the trivial upper bound $\frac{1}{1-\gamma}$ on the NE regret after increasing the universal constant if necessary. Therefore, in the remaining cases below, we may assume $H_T\le T$ and $u_T\le1$. If the cap in $\omega$ is active, then $c_\omega H_T^{1/4}u_T^{2/5}\ge\frac14$, which implies
$\sqrt{H_T}u_T^{2/15}\ge c$. If the cap in $\zeta$ is active because
$H_T^{1/2}u_T^{2/15}$ is large, the same conclusion holds; if it is
active because $L_{\widehat A}^{2/3}$ is large, then
$\sqrt{H_T}L_{\widehat A}^{2/3}\ge c$. Finally, suppose the cap in $\eta$ is active. Then
$$
c_\eta
\frac{1-\gamma}{nL_{\bar A}}
H_T^{-1/2}u_T^{3/5}
>\eta_{\rm cov}=\frac{p_{\min}(1-\gamma)}
{2\delta_P n(1+L_d)H_{\rm cov}^2},
$$
which implies

$$
\sqrt{H_T}\,u_T^{2/15}
>
c
H_T^{11/18}
\left(
\frac{p_{\min}L_{\bar A}}
{\delta_P(1+L_d)H_{\rm cov}^2}
\right)^{2/9}.
$$

Since $H_T\ge \frac{2H_{\rm cov}}{p_{\min}}\log 2$, $H_{\rm cov}\ge1$, $p_{\min}\le1$, $\delta_P\le1$, and $\frac{L_{\bar A}}{1+L_d}=\frac4{1-\gamma}\ge4$, the right-hand side is bounded below by a universal positive constant. Thus, in every capped regime, the right-hand side of \eqref{eq:online-before-final} dominates the trivial bound $\operatorname{Gap}(\pi)\le1/(1-\gamma)$ after increasing the universal constant if necessary. Hence, \eqref{eq:online-before-final} holds in all cases, completing the proof.
\end{proof}

\newpage 
\section{Omitted Proofs for Section \ref{sec:statewise-potential-sharp}}\label{app:state-wise-appendix}

Since most proofs in this section follow analyses similar to their counterparts in Sections~\ref{sec:Episodic} and~\ref{sec:fully-online-general-occupancy}, we sketch them here, focusing on the steps that differ; the remaining arguments follow almost verbatim from the corresponding earlier proofs. 

\subsection{Proof of Lemma \ref{lem:statewise-potential-scale-bounds}}

\begin{proof}
The potential-advantage bounds follow directly from the proof of
Lemma~\ref{lem:span-inner-product}. Indeed, since adding a constant to $\Phi$
leaves $\bar A_{\Phi,i}^\pi$ unchanged, we may shift $\Phi$ so that
$0\leq\Phi(\cdot)\leq R_\Phi$. Repeating the argument leading from
\eqref{eq:span-x} to the bound on
$\operatorname{span}(\bar A_i^\pi(s,\cdot))$, with $c_i$ replaced by $\Phi$,
gives
\[
\operatorname{span}\big(\bar A_{\Phi,i}^\pi(s,\cdot)\big)
\le
R_\Phi
+
\gamma\delta_P\frac{R_\Phi}{1-\gamma}
=
R_\Phi(1+L_d)
=
M_\Phi.
\]
Moreover, since
$\sum_{a_i}\pi_i(a_i\mid s)\bar A_{\Phi,i}^\pi(s,a_i)=0$, the final argument
in the proof of Lemma~\ref{lem:span-inner-product} also gives
$\|\bar A_{\Phi,i}^\pi(s,\cdot)\|_\infty\le \operatorname{span}\big(\bar A_{\Phi,i}^\pi(s,\cdot)\big)\leq M_\Phi$. Repeating the proof of the marginalized-advantage sensitivity Lemma \ref{lem:online-sensitivity-consequences} with one-stage range $R_\Phi$ gives \eqref{eq:statewise-potential-advantage-sensitivity}. Likewise, repeating the proof of Lemma~\ref{lem:mixed-policy-sensitivity} with the one-stage cost $c_i$ replaced by $\Phi$ multiplies its bound by $R_\Phi$, giving the expression for $L_{\Psi}$. Thus $L_\Psi$ replaces $L_V$ precisely where the mixed second-order sensitivity of the potential is used.
\end{proof}

\subsection{Proof of Lemma \ref{lem:statewise-episodic-potential-improvement}}

\begin{proof}
We follow the proof of Lemma~\ref{lem:kl-projected-potential-improvement}.
For the intermediate profiles $\widetilde\pi^{t,i}$ used there, define the
potential analogue of \eqref{eq:G_i-def} by

$$
G_{\Phi,i}^t
:=
\Psi(\pi^t)-\Psi(\pi_i^{t+1},\pi_{-i}^t)
=
\frac{1}{1-\gamma}
\sum_{s\in\mathcal S}
d_\mu^{(\pi_i^{t+1},\pi_{-i}^t)}(s)
\left\langle
\Delta_i^t(s),\bar A_{\Phi,i}^{\pi^t}(s)
\right\rangle,
$$

where the equality follows from
\eqref{eq:statewise-potential-PDL}. Consequently, in the derivation of
\eqref{eq:projected-counterfactual-bound}, the use of the $\alpha$-potential property is no longer needed. Applying instead the
potential version of Lemma~\ref{lem:mixed-policy-sensitivity} gives

$$
\Psi(\pi^t)-\Psi(\pi^{t+1})
\ge
\sum_{i=1}^nG_{\Phi,i}^t
-
\frac{L_\Psi n^2\eta^2B^2}{8(1-\gamma)^2},
$$

so, in particular, the term $-n\alpha$ in
\eqref{eq:projected-counterfactual-bound} disappears and $L_V$ is replaced
by $L_\Psi$.

Next, repeat \eqref{eq:projected-decomposition}--\eqref{eq:projected-weighted-noise}
with $\bar A_i^{\pi^t}$ and $\xi_i^t$ replaced by
$\bar A_{\Phi,i}^{\pi^t}$ and $\xi_{\Phi,i}^t$, respectively. These steps use
only the KL optimality condition, Lemma~\ref{lem:kl-projected-local-geometry},
the potential-oracle bias bound, Cauchy--Schwarz, and Young's inequality, and
are therefore unchanged. Thus, the analogue of
\eqref{eq:projected-Gi-local} holds with the same coefficients.

Finally, in passing from this local bound to the analogue of
\eqref{eq:projected-Gi-final}, the occupancy-change term is treated exactly
as in the display following \eqref{eq:projected-Gi-final}. Lemma~
\ref{lemm:occupancy-sensitivity} and the KL movement bound are unchanged,
while Lemma~\ref{lem:span-inner-product} together with 
\eqref{eq:statewise-potential-advantage-bound} is used so that the factor $1+L_d$ in that
display is replaced by $M_\Phi=R_{\Phi}(1+L_d)$. Hence the last term in \eqref{eq:projected-Gi-final} becomes $\frac{\eta^2B^2L_dM_\Phi}{(1-\gamma)^3}.$ The bias term is absorbed exactly as in
\eqref{eq:projected-alpha-bias-absorption}. Taking conditional expectation,
summing over $i$, and combining these bounds with the potential analogue of
\eqref{eq:projected-counterfactual-bound} yields
\eqref{eq:statewise-episodic-potential-improvement}.
\end{proof}

\subsection{Proof of Lemma \ref{lem:statewise-episodic-ne-gap}}

\begin{proof}
Fix player $i$ and let $\pi_i^{t,*}$ be a best response to $\pi_{-i}^t$.
By the $\alpha$-potential property \eqref{eq:statewise-alpha-final-conversion} and the exact potential
performance-difference identity \eqref{eq:statewise-potential-PDL}, the
analogue of \eqref{eq:ne-gap-first-occupancy-change} is
\begin{align*}
V_i(\pi^t)-V_i(\pi_i^{t,*},\pi_{-i}^t)
\le{}&
\alpha+
\frac{1}{1-\gamma}
\sum_s d_\mu^{(\pi_i^{t,*},\pi_{-i}^t)}(s)
\left\langle
\pi_i^t(\cdot\mid s)-\pi_i^{t,*}(\cdot\mid s),
\bar A_{\Phi,i}^{\pi^t}(s)
\right\rangle.
\end{align*}
Changing the occupancy measure to $d_\mu^{\pi^t}$ exactly as in
\eqref{eq:ne-gap-first-occupancy-change}, and using
Lemma~\ref{lemm:occupancy-sensitivity} together with the potential-advantage
span bound \eqref{eq:statewise-potential-advantage-bound}, replaces the
remainder $2L_d(1+L_d)/(1-\gamma)$ there by
$2L_dM_\Phi/(1-\gamma)$. From this point, repeat the derivation
\eqref{eq:D-A-A}--\eqref{eq:ne-gap-random-occupancy-CS}, using the same
truncated comparator
$q_i^{t,*}=(1-\zeta)\pi_i^{t,*}
+\frac{\zeta}{|\mathcal A_i|}\mathbf 1$.
The KL-optimality bound \eqref{eq:D-A-A} is unchanged. In
\eqref{eq:ne-gap-key}, replace $\bar A_i^{\pi^t}$ by
$\bar A_{\Phi,i}^{\pi^t}$ and use
\eqref{eq:statewise-potential-advantage-bound}; consequently, the factors
$(1+L_d)/2$ and $\zeta(1+L_d)$ become $M_\Phi/2$ and
$\zeta M_\Phi$, respectively. The potential-oracle bias contributes the same
$2L_{\widehat A}$ term, while the KL-movement term is unchanged.

Finally, the stochastic inner product in
\eqref{eq:ne-gap-random-occupancy-CS} is controlled exactly as in
\eqref{eq:ne-gap-noise-final}, with $\xi_i^t$ replaced by
$\xi_{\Phi,i}^t$ and $\mathcal V_i^t$ by $\mathcal V_{\Phi,i}^t$.
Combining these bounds as in the final step of the proof of
Lemma~\ref{lem:kl-projected-ne-gap}, dividing by $1-\gamma$, and maximizing
over $i$ gives \eqref{eq:statewise-episodic-NE-gap}.
\end{proof}

\subsection{Proof of Theorem \ref{thm:statewise-episodic-ne-regret}}

\begin{proof}
Since the proof closely follows that of Theorem~\ref{thm:high-prob-ne-regret}, we only sketch the main steps here. Set
\[
K_t:=\sum_{i=1}^n\mathbb E_t\!\left[
\sum_s d_\mu^{\pi^t}(s)\mathcal D_i^t(s)\right].
\]
By Lemma~\ref{lem:statewise-episodic-potential-improvement},
\eqref{eq:statewise-episodic-variance-bound}, and
$L_t\ge A_{\max}/\eta$, we have
$\max_i\mathcal V_{\Phi,i}^t\le\eta$. Moreover, the exact potential performance difference lemma
\eqref{eq:statewise-potential-PDL}, Lemma~\ref{lem:statewise-potential-scale-bounds},
and Lemma~\ref{lem:kl-projected-local-geometry} give
\[
|\Psi(\pi^{t+1})-\Psi(\pi^t)|
\le
\frac{M_\Phi}{2(1-\gamma)}
\sum_{i=1}^n\|\pi_i^{t+1}-\pi_i^t\|_{1,\infty}
\le
\frac{n\eta BM_\Phi}{2(1-\gamma)^2}.
\]
Thus the Azuma--Hoeffding and telescoping argument used in the proof of
Theorem~\ref{thm:high-prob-ne-regret}, now with the range
bound $|\Psi(\pi)-\Psi(\pi')|\le R_\Phi/(1-\gamma)$, yields, with probability at
least $1-\rho$,
\begin{align}\label{eq:statewise-episodic-K-average}
\frac1T\sum_{t=0}^{T-1}K_t
\lesssim{}&
\frac{R_\Phi\eta}{(1-\gamma)T}
+
\frac{nM_\Phi\eta^2}{(1-\gamma)^2}
\sqrt{\frac{\log(4/\rho)}{T}}
+
\frac{n\eta^2L_{\widehat A}^2}{(1-\gamma)^2}+
\frac{nC_\Phi\eta^3}{(1-\gamma)^2},
\end{align}
where $C_\Phi:=1+nL_\Psi+\frac{L_dM_\Phi}{1-\gamma}$, which is the analogue of \eqref{eq:hp-average-K} without any $\alpha$ term.

On the same event, combining Lemma~\ref{lem:statewise-episodic-ne-gap},
$\max_i\mathcal V_{\Phi,i}^t\le\eta$, Jensen's inequality, and
\eqref{eq:statewise-episodic-K-average} gives
\begin{align}\label{eq:statewise-episodic-gap-expanded}
\frac1T\sum_{t=0}^{T-1}\operatorname{Gap}(\pi^t)
\lesssim{}&
\alpha+
\left(
\frac{M_\Phi}{1-\gamma}
+
\frac1\eta\sqrt{\frac{A_{\max}}{\zeta}}
\right)
\Bigg[
\sqrt{\frac{R_\Phi\eta}{(1-\gamma)T}}
+
\frac{\sqrt n\,\eta L_{\widehat A}}{1-\gamma}
\notag\\
&\qquad+
\frac{\sqrt{nM_\Phi}\,\eta}{1-\gamma}
\left(\frac{\log(4/\rho)}{T}\right)^{1/4}
+
\frac{\sqrt{nC_\Phi}\,\eta^{3/2}}{1-\gamma}
\Bigg]
\notag\\
&+
\frac{L_{\widehat A}}{1-\gamma}
+
\frac1{1-\gamma}\sqrt{\frac{A_{\max}\eta}{\zeta}}
+
\frac{M_\Phi\zeta}{1-\gamma}
+
\frac{cM_\Phi L_d}{1-\gamma},
\end{align}
which is the analogue of master inequality \eqref{eq:hp-gap-expanded}; in particular, the second-order
constant $C_\Phi$ has been retained explicitly rather than absorbed into the
truncation balance.

The rest of the proof follows by substituting the tuned parameters and carrying out straightforward algebraic manipulations. Substitute $\eta=(1-\gamma)/(8\sqrt T)$. When the cap in
\eqref{eq:statewise-episodic-eta} is inactive,
\[
\zeta\asymp
\frac{n^{1/3}A_{\max}^{1/3}L_{\widehat A}^{2/3}}
{M_\Phi^{2/3}},
\qquad
\frac1{\sqrt\zeta}
\lesssim
\frac{M_\Phi^{1/3}}
{n^{1/6}A_{\max}^{1/6}L_{\widehat A}^{1/3}}.
\]
The two nonvanishing truncation/bias terms then satisfy
\[
\frac{\sqrt{nA_{\max}}L_{\widehat A}}
{(1-\gamma)\sqrt\zeta}
+
\frac{M_\Phi\zeta}{1-\gamma}
\lesssim
\frac{[nA_{\max}M_\Phi]^{1/3}}{1-\gamma}
L_{\widehat A}^{2/3},
\]
and the direct term $L_{\widehat A}/(1-\gamma)$ is also bounded by the same quantity in this regime. It remains to collect the decaying terms in
\eqref{eq:statewise-episodic-gap-expanded}. Using $M_\Phi=R_\Phi(1+L_d)$, $L_\Psi
=\frac{R_\Phi}{1-\gamma}(4+12L_d+8L_d^2)$, and $C_\Phi
=1+nL_\Psi+\frac{L_dM_\Phi}{1-\gamma}$, we have
\[
(1-\gamma)C_\Phi
=
(1-\gamma)
+
R_\Phi(1+L_d)
\bigl[4n(1+2L_d)+L_d\bigr]
\lesssim
1+nR_\Phi(1+L_d)^2.
\]
Consequently,
\[
\sqrt{(1-\gamma)C_\Phi}
\lesssim
1+\sqrt{nR_\Phi}(1+L_d),
\qquad
\sqrt{n(1-\gamma)C_\Phi}
\lesssim
\sqrt n\bigl[1+\sqrt{nR_\Phi}(1+L_d)\bigr].
\]
Substituting these bounds together with the above estimate for
$1/\sqrt\zeta$ into the decaying terms of
\eqref{eq:statewise-episodic-gap-expanded}, and using
$T\ge\log(4/\rho)$, shows that all such terms are bounded by
\[
\frac{c\,[1+\sqrt{nR_\Phi}(1+L_d)]}{1-\gamma}
\left(\left[nA_{\max}R_\Phi(1+L_d)L_{\widehat A}^{-1}\right]^{1/3}\!\!\!\!+
\sqrt n\,R_\Phi(1+L_d)(1+L_{\widehat A})\right)
\left(\frac{\log(4/\rho)}{T}\right)^{1/4}\!\!\!,
\]
where the coefficient of the above display gives the stated upper bound on $\mathfrak C_\Phi$. This proves \eqref{eq:statewise-episodic-final} when the cap is inactive. Now, if the cap is active, then
$n^{1/3}A_{\max}^{1/3}L_{\widehat A}^{2/3}/M_\Phi^{2/3}\gtrsim1$, and hence
\[
\frac{[nA_{\max}M_\Phi]^{1/3}}{1-\gamma}L_{\widehat A}^{2/3}
\gtrsim \frac{M_\Phi}{1-\gamma}
\ge \frac{R_\Phi}{1-\gamma}.
\]
On the other hand, \eqref{eq:statewise-alpha-final-conversion} and the range
bound on $\Psi$ imply the uniform estimate
$\operatorname{Gap}(\pi)\le R_\Phi/(1-\gamma)+\alpha$. Thus the claimed
bound also holds in the capped regime, after changing only the universal
constant. This completes the proof.
\end{proof}

\subsection{Proof of Lemma \ref{lem:statewise-online-bounds}}

\begin{proof}
For part~(i), repeat the proof of Lemma~\ref{lem:online-tracking} with
$\bar A_i,L_{\bar A},M$ replaced by
$\bar A_{\Phi,i},L_{\bar A_\Phi},M_\Phi$, respectively. The proof does not
use the $\alpha$-potential property, so the tracking recursion, martingale
estimates, window charging, and Young/Cauchy--Schwarz arguments are unchanged.

For part~(ii), repeat the proof of
Lemma~\ref{lem:online-potential-improvement}, including the same one-state
policy tiles and occupancy-sensitivity estimates. At each intermediate
unilateral update, use the exact identity
\eqref{eq:statewise-potential-PDL}; hence, the $-n\alpha$ term in the
original argument disappears. The remaining bounds are unchanged after
replacing $\bar A_i,L_{\bar A},M$ by
$\bar A_{\Phi,i},L_{\bar A_\Phi},M_\Phi$. In particular,
$L_d(1+L_d)$ becomes $L_dM_\Phi$, while $L_d$ itself is unchanged.
Summing \eqref{eq:statewise-online-potential-final} over $t$ and using
$|\Psi(\pi)-\Psi(\pi')|\le R_\Phi/(1-\gamma)$ gives
\eqref{eq:statewise-online-D-from-potential}.

For part~(iii), apply \eqref{eq:statewise-alpha-final-conversion} to a best
response and then use \eqref{eq:statewise-potential-PDL}. This introduces
the single additive $\alpha$ term. The remainder follows the proof of
Lemma~\ref{lem:online-ne-gap}, with $\bar A_i$ and its uniform magnitude/span
bound $1+L_d$ replaced by $\bar A_{\Phi,i}$ and $M_\Phi$, respectively.

Finally, part~(iv) follows from Lemma~\ref{lem:online-coupling-transfer} by
replacing $M,L_{\bar A},\bar A_i$ with
$M_\Phi,L_{\bar A_\Phi},\bar A_{\Phi,i}$. Indeed, $M$ is used there only to
control one-state policy movement and the magnitude of the tracked target,
while $L_{\bar A}$ controls stale-target drift. The coupling, occupancy
sensitivity, and bounded-delay arguments are otherwise unchanged.
\end{proof}

\subsection{Proof of Theorem \ref{thm:statewise-online-ne-regret}}

\begin{proof}
We follow the proof of Theorem~\ref{thm:fully-online-general-ne-regret},
using Lemma~\ref{lem:statewise-online-bounds}. The parameter choices in
\eqref{eq:statewise-online-final-tuning}, with sufficiently small constants,
ensure the conditions of that lemma. By parts~(i) and~(iv), with probability
at least $1-\rho$, the events $\mathcal T_T$ and $\mathcal H_T(H_T)$ hold simultaneously, and we work on this event below. By part~(ii) of Lemma~\ref{lem:statewise-online-bounds},
\begin{equation}\label{eq:statewise-online-D-sketch}
\frac{\mathfrak D_T}{T}
\le
c\left[
\frac{\eta R_\Phi}{(1-\gamma)T}
+
\frac{\eta^2}{(1-\gamma)^2}
\frac{\mathfrak E_{\Phi,T}}{T}
\right].
\end{equation}
Unlike \eqref{eq:online-D-proof-sketch}, this bound contains no $\alpha$
term. On the other hand, part~(i) bounds $\mathfrak E_{\Phi,T}/T$ in terms of
$(\mathfrak D_T/T)^{1/2}$. Substituting
\eqref{eq:statewise-online-D-sketch} into
\eqref{eq:statewise-online-E-closed} and applying Young's inequality as in
the proof of Theorem~\ref{thm:fully-online-general-ne-regret} closes the
tracking/movement feedback. Using
$L_{\bar A_\Phi}=4M_\Phi/(1-\gamma)$ and the tuning in
\eqref{eq:statewise-online-final-tuning} gives
\begin{equation}\label{eq:statewise-online-E-combined-rate}
\frac{\mathfrak E_{\Phi,T}}{T}
\le
c\left[
\left(nA_{\max}+n^{3/2}M_\Phi^2\right)
H_T^{1/4}u_T^{2/5}
+nL_{\widehat A}^2
\right].
\end{equation}

It remains to convert the global movement and tracking bounds into NE
regret. On $\mathcal H_T(H_T)$, every state has been visited by time
$H_T+1$. Hence, averaging part~(iii) of
Lemma~\ref{lem:statewise-online-bounds} over $t\ge H_T+1$, applying
Jensen's inequality, and then using part~(iv) of
Lemma~\ref{lem:statewise-online-bounds}, while using
$\operatorname{Gap}(\pi)\le R_\Phi/(1-\gamma)+\alpha$ for the first
$H_T+1$ iterations, gives the following:
\begin{align}\label{eq:statewise-online-regret-master}
\frac1T\sum_{t=0}^{T-1}\operatorname{Gap}(\pi^t)
&\le{}
\alpha+
\frac{cM_\Phi(\zeta+L_d)}{1-\gamma}
+
\frac{c}{\eta}\sqrt{\frac{A_{\max}}{\zeta}}
\left(
H_T\frac{\mathfrak D_T}{T}
+2\frac{L_dn^2M_\Phi^3}{(1-\gamma)^3}H_T\eta^3
\right)^{1/2}\cr
&\qquad+
\frac{c}{1-\gamma}\sqrt{\frac{A_{\max}}{\zeta}}
\left(
H_T\frac{\mathfrak E_{\Phi,T}}{T}
+8\frac{L_dn^2M_\Phi^3}{1-\gamma}H_T\eta
\right)^{1/2}\cr 
&\qquad+
\frac{cL_{\bar A_\Phi}n^{3/2}M_\Phi H_T\eta}
{(1-\gamma)^2}
+
\frac{H_T+1}{T}\frac{R_\Phi}{1-\gamma},
\end{align}
where we note that, in deriving \eqref{eq:statewise-online-regret-master}, the $\alpha$ contributions from the two time intervals, i.e., the burn-in interval $[1,H_T]$ and the subsequent interval $[H_T+1,T]$, sum to exactly $\alpha$. Substituting \eqref{eq:statewise-online-D-sketch} and
\eqref{eq:statewise-online-E-combined-rate} into
\eqref{eq:statewise-online-regret-master}, and using
$\sqrt{x+y}\le\sqrt x+\sqrt y$, gives the state-wise-potential analogue of
\eqref{eq:online-regret-expanded-final}. We now collect its terms.

The lower bound
$\zeta\gtrsim H_T^{1/2}u_T^{2/15}$ gives
\[
\frac{c}{1-\gamma}
\left(
\sqrt n\,A_{\max}
+nM_\Phi\sqrt{A_{\max}}
+\sqrt{nR_\Phi M_\Phi A_{\max}}
\right)
\sqrt{H_T}\,u_T^{2/15}.
\]
Since $M_\Phi=R_\Phi(1+L_d)$, $\sqrt{nR_\Phi M_\Phi A_{\max}}
\le nM_\Phi\sqrt{A_{\max}}$, so the preceding display is bounded by
\[
\frac{c}{1-\gamma}
\left(
\sqrt n\,A_{\max}
+nM_\Phi\sqrt{A_{\max}}
\right)
\sqrt{H_T}\,u_T^{2/15}.
\]
The second lower bound on $\zeta$ gives $\frac{1}{\sqrt\zeta}
\lesssim
\frac{M_\Phi^{1/3}}
{n^{1/6}A_{\max}^{1/6}L_{\widehat A}^{1/3}}$, and therefore the oracle-bias term and the corresponding truncation term
satisfy
\[
\frac{c\sqrt{nA_{\max}H_T}}
{(1-\gamma)\sqrt\zeta}L_{\widehat A}
+
\frac{cM_\Phi\zeta}{1-\gamma}
\lesssim
\frac{c(nA_{\max}M_\Phi)^{1/3}}{1-\gamma}
\sqrt{H_T}\,L_{\widehat A}^{2/3},
\]
where $H_T\ge1$ is used in the second term. The two $L_d$-dependent charging terms and the stale-advantage term are
bounded by the first decaying term exactly as in the proof of
Theorem~\ref{thm:fully-online-general-ne-regret}. The burn-in term
$\frac{H_T+1}{T}\frac{R_\Phi}{1-\gamma}$ is absorbed into the corresponding
decaying or trivial-regime bound. The only remaining nonvanishing occupancy term is
$cM_\Phi L_d/(1-\gamma)$. Hence
\begin{align}\label{eq:statewise-online-before-final}
\frac1T\sum_{t=0}^{T-1}\operatorname{Gap}(\pi^t)
\le{}&
\widetilde O\left(
\frac{\sqrt n\,A_{\max}
+nM_\Phi\sqrt{A_{\max}}}{1-\gamma}
\sqrt{H_T}\,u_T^{2/15}
\right)
\notag\\
&+
\widetilde O\left(
\frac{(nA_{\max}M_\Phi)^{1/3}}{1-\gamma}
\sqrt{H_T}\,L_{\widehat A}^{2/3}
\right)
+
\frac{cM_\Phi L_d}{1-\gamma}
+\alpha.
\end{align}
Finally,
$\sqrt{H_T}
=\widetilde O(\sqrt{H_{\rm cov}/p_{\min}})$, which gives the stated
$\mathfrak C_{\Phi,1}$ and $\mathfrak C_{\Phi,2}$.

The capped and trivial regimes are handled as in the proof of
Theorem~\ref{thm:fully-online-general-ne-regret}. In particular, the
state-wise representation gives the trivial bound $\operatorname{Gap}(\pi)
\le R_\Phi/(1-\gamma)+\alpha$. If the $\zeta$ cap is activated by the oracle-bias component, then $(nA_{\max}M_\Phi)^{1/3}L_{\widehat A}^{2/3}
\gtrsim M_\Phi\ge R_\Phi$, while activation of the remaining caps makes the first decaying term
dominate $R_\Phi/(1-\gamma)$ after increasing the universal constant.
Thus \eqref{eq:statewise-online-before-final}, and hence
\eqref{eq:statewise-online-final-regret}, holds in all regimes.
\end{proof}

\newpage
\section{Omitted Proofs for Section \ref{sec:IMCG-model}}

\subsection{Proof of Lemma \ref{thm:one-stage}}\label{app:estimate-oracle-IMCG}

\begin{proof}
Fix $\pi_{-i}$ and consider the marginalized MDP faced by player $i$
in which its stage cost $c_i(s,a)$ is replaced by the common Rosenthal
potential $\Phi(s,a)$; that is, the marginalized MDP faced by player $i$
when optimizing the potential value function
\[
\Psi^{\pi}(\mu):=\mathbb E_\pi
\!\bigg[
\sum_{t=0}^{\infty}
\gamma^t
\Phi(S^t,A^t)
\;\bigg|\;
S^0\sim \mu\bigg].
\]
Using \eqref{eq:bellman} and \eqref{eq:bellman-Q}, the Bellman equations
for the marginalized value function and $Q$-function corresponding to
this MDP give us
\begin{align*}
\Psi^\pi(s)
&= \bar{\Phi}^{\pi}(s)
+
\gamma
\mathbb E_{S\sim \bar{P}^\pi(\cdot\mid s)}
\left[
\Psi^\pi(S)
\right],\\
\bar{Q}_{\Phi,i}^\pi(s,a_i)
&=\bar{\Phi}^{\pi_{-i}}(s,a_i)
+\gamma
\mathbb E_{S\sim \bar{P}^{\pi_{-i}}_i(\cdot\mid s,a_i)}
\left[
\Psi^\pi(S)
\right].
\end{align*}
Subtracting the first identity from the second gives
\[
\bar{A}_{\Phi,i}^\pi(s,a_i)
=
\bar{\Phi}^{\pi_{-i}}(s,a_i)-\bar{\Phi}^{\pi}(s)
+\mathcal{R}_{\Phi,i}^\pi(s,a_i),
\]
where
\[
\mathcal{R}_{\Phi,i}^\pi(s,a_i)
:=
\gamma
\left[
\mathbb E_{S\sim
\bar{P}^{\pi_{-i}}_i(\cdot\mid s,a_i)}
\Psi^\pi(S)
-
\mathbb E_{S\sim
\bar{P}^\pi(\cdot\mid s)}
\Psi^\pi(S)
\right].
\]
Since we have already shown that $\Phi(s,a)\in[0,n]$, we have $\|\Psi^\pi\|_\infty\le\frac{n}{1-\gamma}$. Therefore,
\begin{align*}
|\mathcal{R}_{\Phi,i}^\pi(s,a_i)|
&\le
\frac{2n\gamma}{1-\gamma}
\left\|
\bar{P}^{\pi_{-i}}_i(\cdot\mid s,a_i)
-\bar{P}^\pi(\cdot\mid s)
\right\|_{\TV}\\
&=
\frac{2n\gamma}{1-\gamma}
\Big\|
\mathbb E_{A_i\sim\pi_i(\cdot\mid s)}
\left[
\bar{P}^{\pi_{-i}}_i(\cdot\mid s,a_i)
-
\bar{P}^{\pi_{-i}}_i(\cdot\mid s,A_i)
\right]
\Big\|_{\TV}\\
&\le
\frac{2n\gamma}{1-\gamma}
\mathbb E_{A_i\sim\pi_i(\cdot\mid s)}
\Big\|
\bar{P}^{\pi_{-i}}_i(\cdot\mid s,a_i)
-
\bar{P}^{\pi_{-i}}_i(\cdot\mid s,A_i)
\Big\|_{\TV}\\
&\le
\frac{4nq\gamma\delta}{1-\gamma},
\end{align*}
where the second inequality uses Jensen's inequality due to the
convexity of the total variation distance, and the last inequality
follows from Lemma~\ref{lemm:transition-sensitivity}. Hence,
\begin{align}\label{eq:margin-Phi-c-advantage}
\left|
\bar{A}_{\Phi,i}^\pi(s,a_i)
-
\big(
\bar{\Phi}^{\pi_{-i}}(s,a_i)
-\bar{\Phi}^{\pi}(s)
\big)
\right|
\le
\frac{4nq\gamma\delta}{1-\gamma}.
\end{align}

Moreover, as we showed in \eqref{eq:stastic-potential-identity}, for every
$a_i'\in\mathcal A_i$ and $a_{-i}\in\mathcal A_{-i}$,
\[
c_i(s,a_i,a_{-i})-c_i(s,a_i',a_{-i})
=
\Phi(s,a_i,a_{-i})-\Phi(s,a_i',a_{-i}).
\]
Taking expectation over
$A_{-i}\!\sim\!\pi_{-i}(\cdot\mid s)$ and then averaging the second action
$A_i'\!\sim\!\pi_i(\cdot\mid s)$ gives
\[
\mathbb E_{A_{-i}\sim\pi_{-i}(\cdot\mid s)}
\left[c_i(s,a_i,A_{-i})\right]
-
\mathbb E_{A\sim\pi(\cdot\mid s)}
\left[c_i(s,A)\right]
=
\bar\Phi^{\pi_{-i}}(s,a_i)-\bar\Phi^\pi(s).
\]
Thus, by \eqref{eq:marginalized-both-costs},
\[
\bar c_i^{\pi_{-i}}(s,a_i)-\bar c_i^\pi(s)
=
\bar\Phi^{\pi_{-i}}(s,a_i)-\bar\Phi^\pi(s).
\]
Substituting this identity into \eqref{eq:margin-Phi-c-advantage} gives
\eqref{eq:surrogate-mariginalized-potential} and completes the proof.
\end{proof}

\subsection{Proof of Lemma \ref{lem:imcg-potential-oracles}}\label{app:one-sample-episodic-IMCG-sample}
\begin{proof}
For part~(i), at the $r$th visit to $s$ in episode $t$, the potential advantage estimator
defined in \eqref{eq:hat-A-Phi-episodic}, with raw sample
$g_i^{\tau_{t,r}(s)}(s,A^{\tau_{t,r}(s)})
=c_i(s,A^{\tau_{t,r}(s)})$, satisfies
\begin{align*}
&\mathbb E_{\tau_{t,r}(s)}
\left[
\widehat A_{\Phi,i}^{\tau_{t,r}(s)}(s,a_i)
\right]=
\mathbb E_{\tau_{t,r}(s)}
\bigg[
c_i\bigl(s,A^{\tau_{t,r}(s)}\bigr)
\Big(
\frac{\mathbf 1\{A_i^{\tau_{t,r}(s)}=a_i\}}
{\pi_i^t(a_i\mid s)}-1
\Big)
\bigg]
\\
&=
\mathbb E_{A_{-i}\sim\pi_{-i}^t(\cdot\mid s)}
\left[c_i(s,a_i,A_{-i})\right]
-
\mathbb E_{A\sim\pi^t(\cdot\mid s)}
\left[c_i(s,A)\right]=\bar c_i^{\pi_{-i}^t}(s,a_i)-\bar c_i^{\pi^t}(s).
\end{align*}
Hence, by
Lemma~\ref{thm:one-stage},
\[
\left|
\mathbb E_{\tau_{t,r}(s)}
\left[
\widehat A_{\Phi,i}^{\tau_{t,r}(s)}(s,a_i)
\right]
-
\bar A_{\Phi,i}^{\pi^t}(s,a_i)
\right|
\le
\frac{4nq\gamma\delta}{1-\gamma}.
\]
Taking the maximum over $a_i$ gives precisely the bias requirement in
Assumption~\ref{ass:episodic-potential-oracle}. Moreover,
$|g_i^{\tau_{t,r}(s)}(s,A^{\tau_{t,r}(s)})|\le1$ almost surely since
$c_i\in[0,1]$. This proves part~(i).

For part~(ii), at global time $t=\tau_k(s)$, using the same raw sample
$g_i^t(s,A^t)=c_i(s,A^t)$ in the estimator defined for the fully online
setting gives, by the same importance-weighting calculation,
\begin{align*}
\mathbb E_{\tau_k(s)}
\left[
\widehat A_{\Phi,i}^k(s,a_i)
\right]
&=
\mathbb E_{A_{-i}\sim\pi_{-i}^t(\cdot\mid s)}
\left[c_i(s,a_i,A_{-i})\right]
-
\mathbb E_{A\sim\pi^t(\cdot\mid s)}
\left[c_i(s,A)\right]=
\bar c_i^{\pi_{-i}^t}(s,a_i)-\bar c_i^{\pi^t}(s).
\end{align*}
Therefore, Lemma~\ref{thm:one-stage} again yields
\[
\left\|
\mathbb E_{\tau_k(s)}
\left[
\widehat A_{\Phi,i}^k(s,\cdot)
\right]
-
\bar A_{\Phi,i}^{\pi^t}(s,\cdot)
\right\|_\infty
\le
\frac{4nq\gamma\delta}{1-\gamma}.
\]
Since $|g_i^t(s,A^t)|\le1$ almost surely, this is exactly
Assumption~\ref{ass:online-potential-oracle}, proving part~(ii).
\end{proof}

\end{document}